\definecolor{niceRed}{RGB}{190,38,38}
\definecolor{blueGrotto}{HTML}{059DC0}
\definecolor{royalBlue}{HTML}{057DCD}
\definecolor{navyBlue}{HTML}{0B579C}
\definecolor{limeGreen}{HTML}{81B622}
\definecolor{nicePurple}{HTML}{9c27b0}
\definecolor{lightRoyalBlue}{HTML}{def2ff}
\tikzset{
  myNodeFlex/.style={
    draw,
    rectangle,
    rounded corners,
    text centered,
    minimum height=1.5em,  
  }
}
\tikzset{
  myNode/.style={
    draw,
    rectangle,
    rounded corners,
    text centered,
    minimum height=1.5em,
    minimum width=3cm,
    text width=5cm,     
  }
}
\tikzset{
  myNodeNarrow/.style={
    draw,
    rectangle,
    rounded corners,
    text centered,
    minimum height=1.5em,
    minimum width=1cm, 
  }
}
\tikzset{
  myNodeWide/.style={
    draw,
    rectangle,
    rounded corners,
    text centered,
    minimum height=1.5em,
    minimum width=6cm, 
  }
}
\theoremstyle{plain} %
\newtheorem{theorem}{Theorem}[section]
\newtheorem{corollary}[theorem]{Corollary}
\newtheorem{proposition}[theorem]{Proposition}
\newtheorem{lemma}[theorem]{Lemma}
\newtheorem{inftheorem}{Informal Theorem}
\newtheorem{definition}{Definition}
\newtheorem{infdefinition}{Informal Definition}
\newtheorem*{definition*}{Definition}
\theoremstyle{definition} %
\newtheorem{example}{Example}
\theoremstyle{remark} %
\newtheorem{remark}{Remark}
\crefname{section}{Section}{Sections}
\crefname{theorem}{Theorem}{Theorems}
\crefname{inftheorem}{Informal Theorem}{Informal Theorems}
\crefname{assumption}{Assumption}{Assumptions}
\crefname{lemma}{Lemma}{Lemmas}
\crefname{definition}{Definition}{Definitions}
\crefname{infdefinition}{Informal Definition}{Informal Definitions}
\crefname{conjecture}{Conjecture}{Conjectures}
\crefname{corollary}{Corollary}{Corollaries}
\crefname{construction}{Construction}{Constructions}
\crefname{conjecture}{Conjecture}{Conjectures}
\crefname{claim}{Claim}{Claims}
\crefname{observation}{Observation}{Observations}
\crefname{proposition}{Proposition}{Propositions}
\crefname{fact}{Fact}{Facts}
\crefname{question}{Question}{Questions}
\crefname{problem}{Problem}{Problems}
\crefname{remark}{Remark}{Remarks}
\crefname{example}{Example}{Examples}
\crefname{equation}{Equation}{Equations}
\crefname{appendix}{Section}{Sections}
\crefname{algorithm}{Algorithm}{Algorithms}
\crefname{model}{Model}{Models}
\crefname{figure}{Figure}{Figures}
\crefname{condition}{Condition}{Conditions}
\newcommand{\Stackrel}[2]{\stackrel{\mathmakebox[\widthof{\ensuremath{#2}}]{#1}}{#2}}
\newcommand{\eat}[1]{}
\newcommand{\yesnum}{\addtocounter{equation}{1}\tag{\theequation}} 
\newcommand{\tagnum}[2]{%
    \refstepcounter{equation}%
    \tag{#1) \ (\theequation}%
    \protected@write \@auxout {}{%
        \string \newlabel {#2}{{\theequation}{\thepage}{}{equation.\theequation}{}}%
    }%
}
\newcommand{\quadtext}[1]{\quad\text{#1}\quad}
\newcommand{\qquadtext}[1]{\qquad\text{#1}\qquad}
\newcommand{\qquadand}{\qquadtext{and}}
\newcommand{\white}[1]{\textcolor{white}{#1}}
\def\abs#1{\left| #1 \right|}
\newcommand{\inbrace}[1]{\left\{#1\right\}}
\newcommand{\inparen}[1]{\left(#1\right)}
\newcommand{\insquare}[1]{\left[#1\right]}
\newcommand{\inangle}[1]{\left\langle#1\right\rangle}
\newcommand{\N}{\mathbb{N}}
\newcommand{\R}{\mathbb{R}}
\newcommand{\Q}{\mathbb{Q}}
\newcommand{\evE}{\ensuremath{\mathscr{E}}}
\newcommand{\zo}{\ensuremath{\inbrace{0, 1}}}
\newcommand{\Ex}{\operatornamewithlimits{\mathbb{E}}}
\newcommand{\E}{\operatornamewithlimits{\mathbb{E}}}
\newcommand\ind{\mathds{1}}
\newcommand{\nfrac}[2]{\nicefrac{#1}{#2}}
\newcommand{\sfrac}[2]{{#1/#2}}
\newcommand{\eps}{\varepsilon}
\renewcommand{\epsilon}{\varepsilon}
\newcommand*{\tran}{{\mathpalette\@tran{}}}
\newcommand*{\@tran}[2]{\raisebox{\depth}{$\m@th#1\intercal$}}
\newcommand{\wh}[1]{\widehat{#1}}
\renewcommand{\tilde}{\widetilde}
\def\<{\langle}
\def\>{\rangle}
\DeclareMathAlphabet{\mathpzc}{OT1}{pzc}{m}{it}
\newcommand{\customcal}[1]{\euscr{#1}}
\newcommand{\cA}{\customcal{A}}
\newcommand{\cC}{\customcal{C}}
\newcommand{\cD}{\customcal{D}}
\newcommand{\cE}{\customcal{E}}
\newcommand{\cG}{\customcal{G}}
\newcommand{\cI}{\customcal{I}}
\newcommand{\cK}{\customcal{K}}
\newcommand{\cL}{\customcal{L}}
\newcommand{\cP}{\customcal{P}}
\newcommand{\cT}{\customcal{T}}
\newcommand{\cV}{\customcal{V}}
\newcommand{\cX}{\customcal{X}}
\newcommand{\er}{\mathrm{er}}
\DeclareMathAlphabet{\mathdutchcal}{U}{dutchcal}{m}{n}
\SetMathAlphabet{\mathdutchcal}{bold}{U}{dutchcal}{b}{n}
\DeclareMathAlphabet{\mathdutchbcal}{U}{dutchcal}{b}{n}
\DeclareMathAlphabet\urwscr{U}{urwchancal}{b}{n}%
\DeclareMathAlphabet\rsfscr{U}{rsfso}{m}{n}
\DeclareMathAlphabet\euscr{U}{eus}{m}{n}
\DeclareMathAlphabet\stixcal{LS2}{stixcal}{m} {n}
\newcommand{\itparagraph}[1]{\bigskip \noindent\textit{#1}~~}
\renewcommand{\paragraph}[1]{\bigskip \noindent\textbf{#1}~~}
\newcommand{\ie}{\textit{i.e.}}
\newcommand{\eg}{\textit{e.g.}}
\newcommand{\supp}{\operatorname{supp}}
\newcommand{\kl}[2]{\operatornamewithlimits{\mathsf{KL}}\inparen{#1\|#2}}
\newcommand{\iid}{i.i.d.}
\renewcommand{\d}{{\rm d}} 
\newcommand{\algo}[1]{\mathpzc{#1}}
\newcommand{\generator}{\algo{G}}
\newcommand{\mop}{\ensuremath{\mathsf{MOP}}}
\newcommand{\eos}{\textsf{EOS}}
\newcommand{\identifierPN}{\algo{I}_{\rm PN}}
\renewcommand{\hat}{\widehat}
\renewcommand{\tilde}{\widetilde}
\renewcommand{\cE}{\evE}
\newcolumntype{L}[1]{>{\raggedright\let\newline\\\arraybackslash\hspace{0pt}}m{#1}}
\newcolumntype{C}[1]{>{\centering\let\newline\\\arraybackslash\hspace{0pt}}m{#1}}
\newcolumntype{R}[1]{>{\raggedleft\let\newline\\\arraybackslash\hspace{0pt}}m{#1}}
\newcommand{\citecustom}[1]{\citet{#1}}
\newcommand\blfootnote[1]{%
  \begingroup
  \renewcommand\thefootnote{}\footnote{#1}%
  \addtocounter{footnote}{-1}%
  \endgroup
}
\title{ 
    \vspace{-5mm}
    On the Limits of Language Generation:\\
    Trade-Offs Between Hallucination and Mode Collapse
}
\author{    
        \vspace{-5mm}
        \begin{tabular}{C{4.8cm}C{4.8cm}C{4.8cm}}
        {\bf Alkis Kalavasis} 
            & {\bf Anay Mehrotra} 
                & {\bf Grigoris Velegkas}\\[2mm]
        {Yale University} 
            & {Yale University} 
                & {Yale University}\\[1mm]
        \mbox{\small\texttt{\href{mailto:alkis.kalavasis@yale.edu}{alkis.kalavasis@yale.edu}}} 
            & \mbox{\small\texttt{\href{mailto:anaymehrotra1@gmail.com}{anaymehrotra1@gmail.com}}}
                & \mbox{\small\texttt{\href{mailto:grigoris.velegkas@yale.edu}{grigoris.velegkas@yale.edu}}}
        \\
        \end{tabular}
}
\date{}
\begin{document}

\maketitle
\thispagestyle{empty}

\begin{abstract} 
    Specifying all desirable properties of a language model is challenging, but certain requirements seem essential for any good model.
    Given samples drawn from an unknown language, the trained model should (1) produce valid strings that have not been seen in the training data, and (2) be expressive enough to capture the full richness of the language. 
    Otherwise, if the language model outputs invalid strings, it ``hallucinates,'' and if it fails to capture the full range of the language, it suffers from ``mode collapse.'' 
    In this paper, we ask whether it is possible for a language model to meet both of these requirements.
    
    We investigate this question within a statistical setting of language generation, building on the seminal works of \citet[Inf.~Control]{gold1967language}, \citet[STOC]{angluin1979finding}, and \citet[Tech.~Report]{angluin1988identifying}.
    In this setting, the language model is presented with randomly sampled strings from a distribution supported on an unknown language $K$, which is only known to belong to a possibly infinite collection of candidate languages. 
    The goal of the model is to generate unseen strings from this target language.
    We say that the language model generates from $K$ with consistency and breadth if, as the size of the training set increases, the set of strings it can output converges to the set of all unseen strings in $K$.

    \citet[NeurIPS]{kleinberg2024language} posed an open question of whether 
    consistency and breadth in language generation {are both} possible. We answer this question negatively: for a large class of language models -- including next-token-prediction-based models -- this is impossible {for most collections of candidate languages}.
    This contrasts with the recent positive result of \citet[NeurIPS]{kleinberg2024language}, which demonstrated that consistent generation, without requiring breadth, is possible for any countable collection of candidate languages.
    Our finding highlights that generation with breadth is fundamentally different from generation without breadth. 
   
    As a byproduct of our result, we also examine how many samples are required for generation with or without breadth, establishing near-tight bounds on the ``learning curves'' for generation in the statistical framework of \citet*[STOC]{bousquet2021theory}.

    Finally, our results also give some hope for consistent generation with breadth: it is achievable for any countable collection of languages when negative examples -- in the form of strings outside of $K$ -- are available in addition to strings inside of $K$.
    This suggests that feedback in post-training, which encodes negative examples, can be crucial in reducing hallucinations while also limiting mode collapse.

    \blfootnote{Accepted for presentation at the 57th Annual ACM Symposium on Theory of Computing (STOC 2025)}

\end{abstract}

\newpage

\thispagestyle{empty}
\tableofcontents
\thispagestyle{empty}

\clearpage
\pagenumbering{arabic}

\section{Introduction} \label{sec:intro:new}

Language acquisition is a fundamental 
mystery across multiple scientific fields, ranging from Biology and Neuroscience to Sociology \citep{bresnan2007syntactic,saffran1996statistical,clark2014distributional,mahowald2024dissociating}. Theoretical Computer Scientists have been fascinated by language since the early days of the field: in the 1950s,
\citet{turing1950computing} introduced his famous test using language as an interface to cognition, 
 \citet{shannon1951prediction} studied statistics of printed English aiming at understanding its entropy and the extent to which it could be compressed, and \citet{mandelbrot1953informational} designed a statistical model to capture connections between language and the brain. 
 
Over the years, language modeling has advanced through simple models, such as the word $n$-gram model introduced by \citet{shannon1951redundancy} and widely used in natural language processing \citep{brown1992class}. In the early 2000s, neural networks achieved a significant breakthrough in the field \citep{bengio2000neural}, leading to fascinating deep learning {systems} \citep{mikolov2010recurrent,goldberg2016primer,lecun2015deep} built using traditional architectures like Recurrent Neural Networks \citep{rumelhart1986learning} and Long Short-Term Memory \citep{hochreiter1997long}.  %
In 2017, the field of language modeling was revolutionized by the introduction of the Transformer architecture \citep{sutskever2014sequence,bahdanau2014neural,vaswani2017attention}, which led to the development of Large Language Models (LLMs). The achievements of LLMs have been groundbreaking; recent models can perform well on tasks far beyond natural language processing \citep{bubeck2023sparks,touvron2023llama}. 
Despite their impressive performance, their extensive use has revealed that LLMs exhibit various bizarre behaviors even in seemingly mundane tasks \citep{borji2023categorical}.

Perhaps the most well-known issue with 
current LLMs is \emph{hallucinations}: the models generate false but plausible-sounding text with surprising frequency \citep{zhang2023siren,ji2023survey}.\footnote{We stress that LLMs outputting wrong facts based on errors in training data (\eg{}, ``The Earth is flat'') or miscalculations (\eg{}, ``1+1 = 3'') do not constitute hallucinations. A hallucination is a plausible but false text with unclear origin (\eg{}, ``Barack Obama was the president of the US and was born on January 1, 1958'').}
Such hallucinations, highlighted by popular media \citep{weise2023ai}, could significantly impact safety, reliability, and user trust as the adoption of these systems extends to new tasks \citep{hendrycks2021unsolved,amodei2016concrete}. The importance of this problem, among other concerns, led both the US \citep{biden2023executive} and the EU \citep{satariano2023eu} to issue calls for safeguards against misleading outputs generated by LLMs. %
In this direction, designing LLMs that generate responses 
\emph{consistent} with
the ground truth is an effort that has gained a lot
of attention from Machine Learning (ML) practitioners \citep{andriopoulos2023augmenting,gunasekar2023textbooks,wei2022chain,huang2023survey,feng2024don,kang2024unfamiliar,ji2023survey}, policymakers \citep{biden2023executive,satariano2023eu,satariano2023nations}, and theorists \citep{hanneke2018actively,kalai2024calibrated,kleinberg2024language}.
  
    If the sole goal is to avoid hallucinations, then, of course, one could simply limit the range of outputs generated by the language model.
    As an extreme example, consider a language model that only outputs ``I am a language model'' and, therefore, never hallucinates.
    However, modern LLMs do not just aim to generate \emph{a few} valid outputs; their goal is to obtain the ability to express \emph{a wide range} of plausible outputs, thus capturing the richness of human language.
    The key challenge lies in avoiding hallucinations while achieving \emph{breadth}.
    The problem of achieving consistent generation with breadth is not new in the ML community, dating back at least to the era of Generative Adversarial Networks (GANs) \citep{goodfellow2020generative}. In this line of work, \emph{mode collapse} \citep{goodfellow2020generative} is the analog of lack of breadth; it refers to the phenomenon where the GAN assigns non-zero mass only to a few modes of the true data distribution, thus producing a limited variety of samples and becoming repetitive \citep{arjovsky2017towards,bau2019seeing,shmelkov2018good}. \emph{The starting point of our work is exactly this puzzling tension between consistent generation and breadth in language generation}.  

We start with a mathematical specification inspired by classical work on learning theory, tracing back to the seminal work of \citet{angluin1988identifying},
and the recent formulation of \citet{kleinberg2024language}: the domain $\cX$ is
a countable collection of strings, and there is
an unknown target language 
$K$ which is a subset of this domain. 
We know that the true language lies within a collection of possibly infinite but countably many languages $\cL = \{L_1, L_2,\dots\}.$ 
    There exists an unknown distribution $\cP$ over strings in $K\in \cL$ that satisfies $\supp(\cP)=K$; any distribution with this property is said to be \emph{valid} for $K$.
The algorithm observes \iid{} samples from $\cP$ and aims to learn how to generate unseen strings from the target language $K$ -- this, at a high level, is the language generation problem. 
{Intuitively, the target language
$K$ is capturing ``facts'' of the world; everything
that belongs to $K$ is correct, 
whereas everything outside of $K$ is unambiguously incorrect and can be thought of as a ``hallucination.''
}
Observe that $K$ has to be infinite for the problem to be well-defined as, otherwise, at some point, the algorithm will see all possible strings of $K$ and, from then on, would have no unseen strings to generate from.

Let us explore language generation further, with the immediate aim of quantifying an algorithm's progress toward becoming a useful generator.
Consider a generating algorithm $\generator_n$\footnote{Formally, a generating algorithm is a sequence of mappings $(\generator_n)_{n \in \N}$: for each $n$, it is a computable mapping from a training dataset of size $n$ to a (computable) distribution (\ie{}, a sampling algorithm) over $\cX.$ We will use the notation $(\generator_n)_n$ to refer to the generating algorithm and the notation $\generator_n$ or simply $\generator$ for the induced distribution (\emph{generator}) after training; hence when we write $x \sim \generator_n$ or $\supp(\generator_n)$, we refer to the distribution obtained after training.} 
that is trained on a set $S$ of $n$ \iid{} examples from $\cP$.
To quantify the inconsistency of $\generator_n$, 
    we need an objective.
    As discussed above, this objective should penalize $\generator_n$ for outputting strings outside of $K$ and for repeating examples already seen in the training data $S$.\footnote{{When we require generating algorithm to achieve breadth, it is not important to enforce that the support does not contain $S$. We will elaborate after the formal statement of \Cref{def:breadth}.}}
    For a target language $K$ and a model $\generator_n$ trained on $S$, we consider the following generation error
\begin{equation}
   \mathrm{gen\_er}(\generator_n) 
   \coloneqq
    \Pr_{S \sim \cP^n}[
        \supp(\generator_n)\not\subseteq K\setminus S
    ]
   \,.
   \label{eq:gener}
\end{equation}
In words, a model errs according to $\mathrm{gen\_er}(\cdot)$ if it either hallucinates by outputting strings from $\cX \setminus K$ or if it outputs something already contained in the training set $S$. 
    This is inspired by the notion of generation considered by \citet{kleinberg2024language}; they call an algorithm a \textit{consistent} generator if its support becomes a subset of $K\setminus S$ after seeing finitely many training examples $S$.
    We relax this definition and call an algorithm a \emph{consistent generator for the collection $\cL$} if its error, as defined in \cref{eq:gener},  asymptotically goes to zero  for any valid distribution $\cP$.

{Let us now review how prior work has approached issues with language generation algorithms -- foremost, hallucination.}
Under the above {statistical} setting, \citet{kalai2024calibrated} made important progress
showing that {calibrated models must hallucinate by lower bounding the hallucination rate by the model's calibration. For a detailed comparison with our work, we refer to \Cref{sec:relatedWork}. Closer to our paper, the work of} 
\citet{kleinberg2024language} explored language generators that must not hallucinate, \ie{}, they must be consistent.
    They studied language generation in an online setting where the data are not drawn from $\cP$ but are given as a stream to the learner, \ie{}, as an \emph{adversarial enumeration} of the strings of the true language $K$.
In their setting, $\generator_n$ {is said to} \emph{generate in the limit} from $K$ if, after some finite time $n_0$ in the enumeration of $K$, $\generator_n$ is able to generate new unseen strings from $K$ for 
all subsequent times $n \geq n_0$.
They showed that there exists an algorithm that can generate in the limit from \textit{every} countable list of candidate languages.

This result is surprising because it contrasts with strong negative results  for the well-studied problem of \emph{language identification in the limit}
(where one wants to identify $K$ in the limit and not simply generate from it;\footnote{
Very briefly,  
a language collection $\cL = \{L_1, L_2,\dots\}$ is called identifiable in the limit if
there exists an algorithm $(\cA_n \colon \cX^n \to \N)_n$  such that
for any $K \in \cL$ and any enumeration $x_1, x_2,\dots$ of the strings of $K$ appearing as a stream to $(\cA_n)$, there is a finite time $n_0 \in \N$ after which 
the algorithm predicts the correct index of the true language, \ie{},
$L_{\cA_n(x_1,\dots,x_n)} = K$ for any $n \geq n_0$.
} see also \Cref{def:Identification}).
The family of languages identifiable in the limit is very limited: the results of \citet{gold1967language,angluin1979finding} showed that language identification is a very difficult problem and most collections of languages are non-identifiable (in fact, there is a {tight} characterization due to \citet{angluin1980inductive} which we state in \Cref{def:angliun-criterion}). Hence, the algorithm of \citet{kleinberg2024language} shows that language generation in the limit is much more tractable than identification. We note that while their algorithm operates in a non-statistical setting, it will be an important building block for our results. 

\citet{kleinberg2024language} observed that their algorithm eventually becomes a consistent generator but \emph{suffers from mode collapse}: initially, it generates with breadth while being inconsistent with the target language; later on, as a larger part of the stream is seen, it starts sacrificing breadth in order to generate valid outputs. 
This behavior led them to leave the existence of a consistent generator that achieves breadth as an interesting open question. 
In this work, we will formally introduce {a notion of} breadth for language generation in our statistical setting (\Cref{sec:setup}). For now, we mention that our definition roots in the notion of mode collapse from Generative Adversarial Networks (GANs) \citep{goodfellow2020generative,arjovsky2017towards} and, roughly speaking, states that an algorithm $(\generator_n)$ generates with breadth from $K$ if the probability that its support contains all the unseen examples from the target language goes to 1, as the training samples from a valid distribution go to infinity. 
Now it is a good point to contrast breadth with consistency: consistent generators {aim at avoiding any elements outside of $K$} while {generators achieving breadth try to cover all unseen elements of $K$}. The question of \citet{kleinberg2024language} is asking whether the equilibrium condition that the support of the generator \emph{exactly} matches the unseen elements of $K$ can eventually be achieved by some algorithm. This is the main question we aim to address in this paper. 
\begin{center}
    \emph{Is it possible to achieve consistent language generation with breadth or\\
    is there some inherent trade-off between consistency and breadth?}
\end{center}
    \subsection{Informal Results}\label{sec:intro:results}
    Our main results confirm the tension between consistent generation and breadth for language models, conjectured by \citet{kleinberg2024language}, in a strong way: \emph{informally}, we show that
    \begin{center}
        \emph{A language model that generates with breadth must be inconsistent, \ie{}, it must hallucinate.}
    \end{center}
    We focus on the probabilistic setting of \citet{angluin1988identifying} which we have already introduced informally.
    En route to our results in the probabilistic setting, we also obtain results in the online setting of \citet{gold1967language}, \citet{angluin1979finding}, and \citet{kleinberg2024language}, as we will see later.
    To facilitate a formal discussion of our contributions, we need to introduce some further definitions.

    \subsubsection{Setup and Definitions}\label{sec:setup}
    A generating (or learning) algorithm is a sequence of computable mappings $(\generator_n) = (\generator_n)_{n \in \N}$ from samples $S \subseteq \cX^n$ to generators, which are simply distributions over the domain $\cX$. More formally, a generating algorithm is a sequence of mappings from samples to Turing machines that generate samples from an (explicitly or implicitly) defined distribution over strings.

    In the statistical setting we consider, the learner observes samples from an unknown distribution which is valid for some unknown language $K$ in the collection $\cL = \{L_1, L_2, \dots\}$.
    \begin{definition}[Valid Distribution \citep{angluin1988identifying}]\label{def:valid-language-distribution}
    A distribution $\cP$ over a countable domain $\cX$ is valid with respect to a countable language collection $\cL$ if its support is the same as some language
     $K \in \cL.$
     In this case, when we want to be specific about the language that $\cP$ draws samples from, we say $\cP$ is valid for $K$.
\end{definition}
If the collection $\cL$ is clear from context,
we will simply say that $\cP$ is valid.
Based on this definition and building on the model studied by \citet{kleinberg2024language}, we give the following adaptation for consistent generation from a collection $\cL$ in the statistical setting.
\begin{definition}
[Consistency]\label{def:consistency:intro}
A generating algorithm $(\generator_n)$ for a language collection $\cL$ is consistent if for any valid distribution $\cP$, it holds that $\lim_{n \to \infty} \mathrm{gen\_er}(\generator_n) = 0$.
Otherwise, the algorithm is said to be inconsistent.
\end{definition}
Hence, an algorithm is said to be consistent if the generators it produces by training on any valid distribution $\cP$ converge to generating examples from the unseen part of $\cP.$
Some of our results explore when asymptotic consistency is achievable. However, the main focus of our work is on understanding the \emph{rates} at which consistency (and other desirable properties) can be attained -- if possible at all.
In particular, we want to study the rate at which the generation error $\mathrm{gen\_er}(\generator_n)$ decreases as the number of samples $n$ goes to infinity -- that is, we want to study the \textit{learning curve} of consistent generation (and other tasks that we introduce later in this section). Bousquet, Hanneke, Moran, van Handel, and Yehudayoff \citep{bousquet2021theory} characterized learning curves for binary classification, formalizing the \emph{universal rates} framework, earlier
explored by \citet{schuurmans1997characterizing} and \citet{antos1996strong}. To this end, we borrow their definition of universal rates.

\begin{definition}
[Informal, Universal Rates; \citep{bousquet2021theory}, see \Cref{def:achievable-rates}]
A generating algorithm $(\generator_n)$ has rate $R(\cdot)$, where $\lim_{n\rightarrow \infty} R(n) = 0$, for a language collection $\cL$ if
    \[
    \forall \cP \in \mathrm{Val}(\cL)~~ \exists C ,c  > 0 \quadtext{such that}
    \mathrm{gen\_er}(\generator_n) \leq C \cdot R(c \cdot n)\quad \forall n \in \N \,,
    \]
    where $\mathrm{Val}(\cL)$ is the class of valid (realizable) distributions for $\cL$. 
    \label{def:universal}
\end{definition}
Observe that these learning curves are distribution-dependent since the constants $c$ and $C$ are allowed to depend on $\cP$.
This difference turns out to be crucial and can, sometimes, lead to significant differences between universal rates and the corresponding distribution-independent rates \citep{bousquet2021theory}.
Among different universal rates, exponential universal rates are of specific interest as they are often the best possible rate, as we will see later.
We say that the algorithm $(\generator_n)$ generates with an exponential universal rate if $R(n) = \exp(-n)$ in the above definition. Next, we turn to language generation with breadth.

\begin{definition}
[Breadth]
A generating algorithm $(\generator_n)$ for a language collection $\cL$ is said to achieve breadth if, 
for any 
valid distribution $\cP,$ it holds that
$\lim_{n \to \infty} \Pr[\supp(\generator_n) \supseteq K \setminus S_n] = 1$, where 
$S_n$ is the dataset used to train $\generator_n,$ \iid{} from $\cP.$
Otherwise, the algorithm suffers from mode collapse. 
\label{def:breadth}
\end{definition}
\Cref{def:breadth} is inspired by the literature on GANs (see \eg{}, \citep{goodfellow2020generative,arjovsky2017towards}). For instance, consider the work of \citet{arjovsky2017towards}, which studies distributions $\generator$ and $\cP$ induced by the generator and nature, respectively, and says that mode collapse occurs when the KL divergence $\kl{\cP}{\generator} \coloneqq \int \log\inparen{\nfrac{\cP(x)}{\generator(x)}}~ \d \cP(x) \to \infty$.
In particular, mode collapse happens when there is some string $x \in \supp(\cP)$ for which $\generator(x) = 0$. 
In other words, the generator has breadth when $\supp(\generator) \cup S_n\supseteq \supp(\cP)$, which recovers our definition for breadth by noting that $\supp(\cP)=K$ since $\cP$ is valid for $K$ and that, to be compatible with the definition of consistency (\cref{def:consistency:intro}), we bar a generator from repeating strings it has already seen.
(It is worth mentioning that one can modify the definition of breadth to require $\supp(\generator_n)\supseteq K$ without changing any of our results; see \cref{rem:generatingFromTrainingSet}.)
{We also note that the definition of consistency we use can also be derived in an analogous fashion by requiring the reverse KL divergence (\ie{}, $\kl{\generator}{\cP}$) to be finite.}

Putting the definitions of consistency and breadth together implies that an algorithm generates with consistency and breadth if, eventually, its support \emph{matches} the set of unseen strings in $K$, \ie{},  $K\setminus S_n$ at the $n$-th step.
After presenting our main results, in \cref{sec:intro:relaxation}, we discuss relaxations of this notion of consistent generation with breadth.

A last ingredient for our results concerns the decidability of a folklore Theoretical Computer Science problem, which we call the \emph{membership oracle problem}, that has motivated extensive work in formal languages and complexity theory \citep{sipser2012introduction,soare1999recursively}. A generator $\generator$, which is the output of some generating algorithm,
corresponds to some Turing machine, as is standard in the language inference literature,
that samples according to a distribution
over $\cX$
\citep{angluin1979finding,blum1975toward,angluin1983inductive,adleman1991inductive}.

\begin{restatable}[Membership Oracle Problem]{definition}{defMOP}\label{def:membership access}\label{def:mop}
    Given a generator $\generator$,
    the membership oracle problem for $\generator$, denoted as $\mathsf{MOP}(\generator)$, is defined as follows: given the description of $\generator$ and a string $x$, output $\textsf{Yes}$ if $x\in \supp(\generator)$ and output $\textsf{No}$ otherwise.
\end{restatable}
This problem is, in general, undecidable due to a reduction to the halting problem (\cref{sec:undeciable}); nevertheless, its decidability depends on the structure of the Turing machine as we will see shortly. The above definition naturally extends to generating algorithms.

\begin{definition}
[MOP for Generating Algorithms]
\label{def:mopAlgo}
The membership oracle problem is decidable for a generating algorithm $(\generator_n)$ if, for any $n \in \N$ and any $S \subseteq \cX^n$,
$\mop(\cdot)$ is decidable for the induced generator $\generator = \generator_n(S)$.
\end{definition}
We note that the above definitions implicitly assume that the generator $\generator_n(S)$ depends only on the randomness of $S$; we could extend this by allowing $\generator_n(S)$ to be a distribution over generators.

\subsubsection{Main Results}
We now have all the ingredients to state our first result, which establishes that, for all generating algorithms for which $\mop(\cdot)$ is decidable, (consistent) generation with breadth is as hard as language identification in the statistical setting. 

As in \Cref{def:universal}, we will say that the generating algorithm $(\generator_n)$ generates with breadth from $\cL$ at some rate $R(\cdot)$ if, for any $K \in \cL$, valid distribution $\cP$, and $n \in \N,$
\[
\E_{S \sim \cP^n} \ind
\inbrace{\supp(\generator_n) \neq K \setminus S} \leq C \cdot R(c \cdot n)\,,
\]
 for some distribution-dependent constants $C,c > 0.$ If no rate $R(\cdot)$ satisfying $\lim_{n \to \infty} R(n) = 0$ exists, we will say that $(\generator_n)$ does not generate with breadth \emph{at any rate}.

    \begin{inftheorem}[see \cref{mainthm:gen:mop}]
   \label{infthm:breadth:mop}
            For every language collection $\cL$ that is not identifiable in the limit, no generating algorithm $(\generator_n)$, for which $\mop{}(\cdot)$ is decidable, can generate from $\cL$ with breadth at any rate.
        \end{inftheorem}
        Recall that the family of languages non-identifiable in the limit is quite broad.
        Based on the results of \citet{gold1967language,angluin1979finding,angluin1980inductive} on the problem of language identification in the limit, our impossibility result holds for most interesting collections of languages. For \Cref{infthm:breadth:mop} to be valuable and meaningful though, we further need to show that there exists an algorithm that generates without breadth for the collections of languages for which our impossibility result is true. Our next result states that this is indeed possible: there exists an algorithm that generates with 
        (almost)
        exponential universal rates for \emph{any} countable language collection $\cL$.
    \begin{inftheorem}
    [see \Cref{mainthm:gen:mop}]
    \label{infthm:generation}
    For every language collection $\cL$ that is not identifiable in the limit, there exists a generating algorithm $(\generator_n),$ for which $\mop(\cdot)$ is decidable, that generates (possibly) without breadth from $\cL$ at exponential rates. Further, if $\cL$ is identifiable in the limit, then there exists a generating algorithm $(\generator_n)$, for which $\mop(\cdot)$ is decidable, that generates with breadth from $\cL$ at (almost) exponential rates.
    \end{inftheorem}
    \Cref{infthm:generation} shows that \emph{any} countable collection of languages not only admits a consistent generator in the limit under an adversarial enumeration of the target language (as shown by \citet{kleinberg2024language}), but the statistical rate at which consistency (as per {\cref{def:consistency:intro}}) is achieved is exponential in the number of samples. Further, for identifiable collections of languages, we give an algorithm that generates with breadth at an (almost) exponential rate.
    
    The combination of \Cref{infthm:breadth:mop} and \ref{infthm:generation}, reveals a strong separation between generation with and without breadth for any generating algorithm for which $\mop(\cdot)$ is decidable. 
    What is missing is an answer to:
    how large is the class of generators for which the membership oracle problem $\mop(\cdot)$ is decidable?
    It turns out there is a very broad class of language generators for which this is the case and which also captures modern LLMs, as we show next. 

    \paragraph{A Family of Generators for which $\mop{}(\cdot)$ Is Decidable.}
    Motivated by the structure of modern language models \citep{bahl1983maximum,brown1990statistical,touvron2023llama,bubeck2023sparks,achiam2023gpt}, we consider a family of \textit{iterative} generators.
    A generator is said to be iterative if it generates text one alphabet or ``token'' at a time (see \Cref{def:tokenByToken}).
    To generate each token, the generator can perform an arbitrary (but finite) amount of computation and, possibly, use randomness.
    {For this to make sense, one has to imagine strings of $\cX$ as strings over some finite alphabet $\Sigma$.
    This holds without loss of generality as $\cX$ is countably infinite and, hence, there is a one-to-one mapping from $\cX$ to strings over $\Sigma$ (due to which $\cX$ can be thought of as a set of strings over $\Sigma$).}\footnote{
        {In a bit more detail, since $\cX$ and $\Sigma^*$ are countably infinite, they have enumerations $x_1,x_2,\dots$ and $s_1,s_2,\dots$.
        Therefore, given any string $s_i\in\Sigma^*$ generated by an iterative generator, one can map it to a string $x_i\in\cX$, thereby getting a generator for $\cX$.}
    }   
    We show that for any iterative generator, the membership oracle problem is decidable and our \Cref{infthm:breadth:mop} is applicable.
    \begin{inftheorem}
    [see \Cref{prop:mop_decidable}]\label{infthm:mop_decidable}
    For any iterative generator $\generator$, $\mop(\generator)$ 
    is decidable.
    \end{inftheorem}
    Observe that this family of next-token generators is very general.
    First, it captures existing large language models:
    for instance, to simulate an LLM $L$, we define the next-token predictor as a Turing machine that simulates $L$ on the provided string until $L$ generates one new token.
    Next, it also captures systems where an LLM can interact with another Generative AI model or algorithmic system (such as a diffusion model or a code interpreter) -- as these auxiliary systems can also be simulated by the generator.
    Given this, it becomes evident that this class of generators for which $\mop{}(\cdot)$ is decidable is fairly large and interesting.

    \paragraph{Implications for the Gold--Angluin Model.}
    We repeat that all the aforementioned results hold in the {statistical} setting. En route to obtaining our results in this  setting (\cref{infthm:breadth:mop,infthm:generation}), we show several connections to the online setting of \citet{gold1967language, angluin1979finding,angluin1980inductive,kleinberg2024language}, which lead to the following result.

        \begin{inftheorem}
        [see \Cref{mainthm:gen:limit}]\label{infthm:gen:limit}
         For any language collection $\cL$ that is not identifiable in the limit, no generating algorithm $(\generator_n)$, for which $\mop{}(\cdot)$ is decidable, can generate from $\cL$ with breadth in the limit.
         \label{infthm:inthelimit}
        \end{inftheorem}
        To be more concrete, a generating algorithm generates with breadth in the limit 
        if its support is eventually $K \setminus S_n$, where $S_n$ is the set of the first $n$ positive examples (\ie{}, examples that belong to $K$).
        We emphasize that \cref{infthm:gen:limit} is in a similar spirit as our \Cref{infthm:breadth:mop}, but holds in the online model instead of the statistical model discussed earlier.
        In particular, \Cref{infthm:inthelimit} combined with the algorithm of \citet{kleinberg2024language} give a separation between consistent generation with and without breadth in the Gold--Angluin model.
        Further, as explained before, this result applies to any iterative generator due to \cref{infthm:mop_decidable}.
        Moreover, as $\mop{}(\cdot)$ is decidable for the generating algorithm of \citet{kleinberg2024language} (since its support contains a singleton element $x$ which can be computed by running their algorithm), the above result, in particular, shows that the algorithm of \citet{kleinberg2024language} cannot generate with breadth in the limit from any non-identifiable collection. 

    \paragraph{Results with Two Relaxations of Breadth.}
    Next, we consider two relaxations of the breadth requirement (\cref{def:breadth}), which we term \emph{unambiguous generation} (\cref{def:unambiguousGen}) and generation with \emph{approximate breadth} (\cref{def:gen:oneSidedRobust}). Informally, the former requires that the generator's output is ``closer'' to the target language $K$ -- measured by symmetric difference -- than to any other language in the collection $\cL$. The latter requires that the generator's output is contained in $K$ and differs from it on only \emph{finitely} many elements.
While both definitions relax exact breadth, they appear to be incomparable. Notably, unambiguous generation permits hallucinations -- that is, incorrect outputs not in $K$ -- as long as $K$ remains the closest language in $\cL$. For both notions, we study generators for which \mop{} is decidable and which satisfy a \emph{stability} condition: for any target language $K$ and enumeration thereof, the generator eventually stabilizes and ceases to change its output.
Perhaps surprisingly, we show that generation under either relaxed notion is no easier than generation with exact breadth: if $\cL$ is not identifiable in the limit, then no such stable and decidable generator can achieve unambiguous generation or approximate breadth for $\cL$.

    \paragraph{Organization of Rest of the Introduction.} We proceed with an exposition of our techniques in order to obtain our main results presented above. In \cref{sec:intro:relaxation}, we relax the definitions of consistency and breadth and give more ``robust'' trade-offs between hallucination and breadth. Next, in \cref{sec:open}, we give a list of open problems for future work. Finally, \Cref{sec:relatedWork} contains an extensive overview of related works.  
    
    \subsection{Technical Overview}\label{sec:technical-overview}

    In this section, we present the technical tools we develop to obtain our main results.
    
    \paragraph{A Natural Strategy to Prove \Cref{infthm:breadth:mop}.} At first glance, there seems to be a natural strategy to prove \Cref{infthm:breadth:mop}:
    assume that there exists a consistent generating algorithm with breadth $\generator = (\generator_n)$ for some non-identifiable collection $\cL$ in the statistical setting and then show that this implies identification in the statistical setting, which would contradict the fact that $\cL$ is non-identifiable.
    To implement this strategy one needs a method to utilize $\generator$, along with the positive samples from the target language $K$, for identification.
    This raises the question: what additional power can $\generator$ give that the positive samples do not \textit{already} provide?

    \paragraph{Initial Attempts to Implement the Strategy.}
    Indeed, if one uses \textit{no} additional properties of $\generator$, then its outputs provide no more information than an adversarial enumeration of $K$.
    To develop some intuition, we begin by considering some properties of the generator and explaining why they are insufficient to enable identification.
    
    \smallskip 
   \noindent  \textit{1. $\generator$ is non-adaptive.}~~ First, one may want to utilize the fact that the generator $\generator{}$ is fixed and, hence, the samples it outputs cannot adapt to the specific algorithm being used based on the outputs of the algorithm.
        Hence, it will probably provide an algorithm-independent enumeration of the true language. However, this is not helpful in general since there exist simple non-identifiable language collections that remain non-identifiable for many enumerations of the target language.

        \smallskip 
        \noindent \textit{2. $\generator$ samples from a fixed distribution.}~~ Another property one may want to leverage is the stochasticity of the generator: $\generator$ samples its outputs from a fixed distribution (which is valid for $K$).
        However, even this does not enable the identification of non-identifiable collections due to a result by \citet{angluin1988identifying}. Angluin shows that even if the positive examples are \iid{} from a valid distribution and do not appear as an adversarial enumeration (as in \citet{gold1967language}), this does not enable identification of any collection $\cL$ that is non-identifiable in the limit. (We prove a stronger version of this result in \Cref{lem:almost-exp-rates-ident-pos-examples}.)
        
        \smallskip 
        \noindent \textit{3. $\generator$ samples from a simple distribution.}~~ Moreover, the difficulty in the above negative result is not the complexity of the encoded distribution: it holds even when $\generator$ samples from a distribution that is computable by a Turing machine.

        \medskip
        
    \noindent At this point, it is not clear how to utilize access to a generator $\generator$ which generates with breadth from $K$.
    Next, we present a strong form of {access} to the generator $\generator$ that is useful for identification. 

    \smallskip
    \noindent
    \textit{4. Access to Subset Queries ``$\supp(\generator)\subseteq L_i$'' and ``$L_i\subseteq L_j$''.}~~
        For one of their algorithms, \citet{kleinberg2024language} utilize a subset oracle that answers queries of the form ``is $L_i\subseteq L_j?$''.
        (In general, this oracle is not guaranteed to be computable.)
     One can imagine an extension of this oracle that, given an index $i$ and description of the generator $\generator$, outputs whether $\supp(G)\subseteq L_i$.
        The existence of this oracle turns out to be sufficient to identify $K$, as we explain next:
        After a finite amount of time, \citet{kleinberg2024language}'s algorithm creates a list of ``critical'' languages $C_1,C_2,\dots,$ of the following form (see Theorem 4.1 in \citet{kleinberg2024language})
        \[
            C_1\supseteq C_2 \supseteq \dots \supseteq \inparen{C_i\coloneqq K} \supseteq C_{i+1}\supseteq \dots \,.
        \]
        In words, this list has two properties  (1) $K$ appears in this list, say, at $C_i=K$ for some $i<\infty$ and (2) each language $C_j$ in the list is a  subset of the preceding language $C_{j-1}$.
        Given this list and the aforementioned subset oracle, one can easily identify the index of $K$ as the largest $j$ for which $\supp(\generator)=K\subseteq C_j$.
        This assumption allows to identify \textit{any} collection in the limit given access to a consistent generation $\generator$ with breadth.
        However, this type of access is not very practical since it is not clear when such an oracle is implementable.

    \paragraph{Our Approach.}
    Our first idea is that a much weaker form of access to $\generator$ -- \emph{membership oracle to $\supp(\generator)$} -- is sufficient for identification. 
    This is where the membership oracle problem $\mop{}(\cdot)$ (\Cref{def:mop}) appears in the proof. 
    In fact, given this idea, it is not difficult to show that with that type of access, we can go from a generator with breadth in the online setting to an identification algorithm in the \textit{online setting}; and, hence, get \Cref{infthm:inthelimit}. 
    However, our focus is the statistical setting where there are several additional challenges in using the membership oracle to $\supp(\generator).$
    
    \itparagraph{A. Need for Universal Rates for Generation and Identification.}
        The key issue is the following:
        In the statistical setting, if we assume that we have a generator with breadth \emph{at rate $R(\cdot)$}, then we can hope to show an implication that we can get an identification algorithm \emph{at rate $R(\cdot)$}. 
        However, this need not imply a contradiction to the identifiability of $\cL$ in the online setting.
        This is because, even though $\cL$ is non-identifiable in the online setting, it may become identifiable at \emph{some} rate $R'(\cdot)$ in the statistical setting.
        Indeed, this is the case in binary classification, where there are simple hypothesis classes (such as thresholds over reals) that are not learnable in Littlestone's online setting \citep{littlestone1988learning} but become learnable (at a universal -- and uniform -- linear rate) in the statistical setting; in fact, \textit{any} hypothesis class is learnable in the statistical setting under universal rates, since there is a Bayes consistent algorithm, under benign assumptions \citep{bousquet2021theory}. 
       
    Hence, to get a contradiction, we first need to understand the taxonomy of universal rates for generation and identification.
    We remark here that both the learning task (\eg{}, classification, regression, identification, and generation) and loss function used in the problem are pivotal for the landscape of rates that one gets; for instance, with the zero-one loss for binary classification one gets a trichotomy of rates \citep{bousquet2021theory}, but with the $L_1$-loss for regression, one gets infinitely many rates \citep{attias2024universal}.  

    {To overcome the above challenge,} we provide statistical rates for identification and generation. 
    We start with identification. We show that if $\cL$ is identifiable in the limit in the adversarial Gold--Angluin setting with positive examples \citep{gold1967language,angluin1980inductive}, then
it is identifiable under \Cref{def:achievable-rates}
with (almost) exponential (universal) rates.
This is the less technical part of the proof so we will give a high-level approach. 
    
 \itparagraph{B. Identification in the Limit $\implies$ Identification at (Almost) Exponential Rates.} Our idea is reminiscent of \citet*{bousquet2021theory} and requires splitting the input dataset into multiple batches whose size is carefully selected, running the online algorithm on each batch, and then taking a majority vote over the outputs of the algorithm. We remark that there are some technical issues which require further care, compared to \citet{bousquet2021theory}. First, unlike the setting of \citet{bousquet2021theory}, we only see positive examples and we get no feedback about our guesses. Thus, we cannot use their approach to ``estimate'' a time after which the learner will stop making mistakes. Moreover, when we run the learners on multiple batches, it can be the case that different batches output different indices of languages that correspond to $K$ (since the target language can appear at multiple positions in the countable collection $\cL$). 
 
    Thus, taking a majority vote over these indices might not work. Nevertheless, we manage to handle these issues and get almost exponential rates for collections that satisfy Angluin's criterion for identification in the limit \citep{angluin1979finding}. 
    A bit more concretely, {to circumvent the first issue, our approach
    is to ``guess'' the right batch size, and
    this guess needs to be an increasing function of $n$ -- this is why we get almost exponential rates instead of exactly exponential rates (\cref{lem:almost-exp-rates-ident-pos-examples}). 
    The second issue is more subtle. 
    At a high level, we use a voting scheme
    where the output of every batch $\hat L_i$ gives a ``vote''
    to every language $L \in \cL$ such that $\hat L_i = L,$
    and we predict the lowest-indexed language that
    is voted by at least half of the batches. In its
    current form, this scheme is not computable, nevertheless, we show that it can be modified so that it becomes computable (\cref{lem:post-processing-same-index}). %
}

The more interesting half {of establishing universal rates for identification} is the lower bound showing that if a collection is not identifiable in the limit, it is also not identifiable in the statistical setting \emph{at any rate $R(\cdot)$ such that $\lim_{n\to \infty} R(n)=0$}.

\itparagraph{C. Impossible to Identify in the Limit $\implies$ Impossible to Identify at Any Rate.} 
    Recall that the statistical setting was studied by \citet{angluin1988identifying}. \citet{angluin1988identifying} showed that
    every learner, with probability at least 
    $\nfrac{1}{3,}$  
     does not converge to outputting 
     a (stable) index of the target language in an 
     infinite stream of examples drawn from a valid
     distribution. In other words, with probability at least $\nfrac{1}{3,}$
     the algorithm will either
      stabilize to an index that does not correspond to
     the target language or it will not stabilize
     to any index. Notice that this does not
     rule out algorithms that output different
     indices of the target language, for all but
     finitely many $n \in \N.$ The first step
     towards establishing our desired lower bound is
     to strengthen Angluin's result: we show that 
     any learning algorithm, with probability at least
     $\nfrac{1}{3},$ outputs indices that do not correspond
     to the target language infinitely often. More formally,
     let us consider an 
     identification algorithm
    $h_n$, which maps a training set of $n$ examples $x_1,\dots,x_n$ to an index $h_n(x_1,\dots,x_n)$ so that $L_{h_n(x_1,\dots,x_n)}$ is the $n$-th prediction for the true language.
    The aforementioned lower bound means that\footnote{
        {Informally, $\limsup$ of a sequence of events captures the events that occur infinitely often.
        For instance, $\Pr[\limsup_{n\to \infty} \cE_n]$ represents the probability that infinitely many of the events $\cE_1,\cE_2,\dots$ occur.
        On the other hand, $\limsup_{n\to\infty} \Pr[\cE_n]$, roughly speaking, denotes the largest value that the probabilities $\Pr[\cE_1],\Pr[\cE_2],\dots,\dots$ approach infinitely often as $n\to \infty$.}
    }
    \[
        \Pr_{\left\{x_{i}\colon i\in \N\right\} \sim \cP^\infty } \left[ \limsup_{n \rightarrow \infty} \inbrace{L_{h_n\left(x_{1},\ldots,x_{n}\right)} \neq K} \right] \geq \frac{1}{3} \,,
        \yesnum\label{eq:firstLB}
    \]
    where $X \sim \cP^\infty$ corresponds to an infinite \iid{} draw from $\cP.$ 
    One may be tempted to conclude that this implies that with probability $\nfrac{1}{3}$ we cannot identify the target language {(in the statistical setting)}.
    However, the quantity we wish to bound away from 0 to derive the desired lower bound is
    \[
      \limsup_{n \rightarrow \infty} \Pr_{x_{1},\ldots,x_{n} \sim \cP^n}\left[\inbrace{L_{h_n\left(x_{1},\ldots,x_{n}\right)} \neq K}\right]  \,.
    \]
    It is well-known that for any sequence of events $\{\cE_n\}_{n \in \N}$, 
    \[
        \Pr\left[ \limsup_{n \rightarrow \infty} \cE_n \right] \geq \limsup_{n\rightarrow\infty} \Pr[\cE_n]\,.
        \yesnum\label{eq:technicalOverview:wrongInequality}
    \]
    This, however, is not sufficient to deduce the result we need; we need the opposite inequality.
    Hence, Angluin's guarantee does not suffice to get our lower bound. In order to show our result, we use a boosting argument (\cref{lem:no-rate-identification-positive}): 
    if there exists a learner $h_n$ whose probability of misidentification
    \[
    \Pr_{x_{1},\dots,x_{n} \sim \cP^n}\left[L_{h_n\left(x_{1},\dots,x_{n}\right)} \neq K\right]
    \]
    converges to a number {strictly} less that $\nicefrac{1}{2}$, then we can convert it to a learner whose error rate decreases (almost) exponentially quickly. 
        This (almost) exponential rate, in particular, implies that
        \[
            \sum_{n=1}^\infty \Pr_{x_{1},\ldots,x_{n} \sim \cP^n}\insquare{L_{h_n\left(x_{1},\ldots,x_{n}\right)} \neq K}  < \infty\,.
        \]
        This, crucially, enables us to use the Borel--Cantelli lemma (see \Cref{lem:first-borel-cantelli}) which gives us that 
        $ \Pr\left[ \limsup_{n \rightarrow \infty} \inbrace{L_{h_n\left(x_{1},\ldots,x_{n}\right)} \neq K} \right] = 0 $
        , and, thus, a contradiction to \cref{eq:firstLB}. This implies the desired impossibility result.

As consequence of the above results, we get a dichotomy for universal identification rates:
\begin{inftheorem}[see \cref{thm:dichotomy-identification-positive}]\label{infthm:dichotomy-identification-positive}
For any language collection $\cL$ that is identifiable in the limit and for any $g(n) = o(n)$, there exists a learner that identifies $\cL$ at rate $\exp(-g(n))$. Otherwise, $\cL$ is not identifiable at any rate. 
\end{inftheorem}
We remark that if we have access to
subset queries for $\cL,$ we can show that there exists
an algorithm that achieves exactly exponential rates, for all identifiable collections (see \cref{prop:exp-rates-id-subset oracle}).

Next, we move to understanding universal rates for language generation.

\itparagraph{D. Universal Rates for Generation (Possibly Lacking Breadth) without Boosting.}
        One might suspect that a similar batching argument would give us exponential rates for generation: just run the online algorithm of \citet{kleinberg2024language} multiple times and aggregate. The issue is that aggregation for generation is different than prediction: for prediction, it is clear how to implement majority vote as a boosting technique; for generation, it is unclear how to aggregate different generated strings which is, typically, necessary to obtain a boosting algorithm. {One immediate attempt is to take majority votes over the strings that each batch outputs; unfortunately, even if the majority of them are generating from the target language, they might be outputting different strings, thus, even a few batches outputting the same invalid strings are enough to fool our aggregation rule. 
        
        Another tempting approach is to mimic the strategy we used to aggregate different indices of the target language in the identification setting: we go over every output of the batches and we let them give a vote to each of the languages in $\cL$ they belong to.\footnote{The astute reader might realize that, as stated, this strategy is not computable -- as we explain, even if one could implement it, this aggregation scheme does not work.}  It is not hard to see that every batch whose output corresponds to a valid generator will vote for the target language. Unfortunately, it will also vote for all \emph{supersets} of the target language. This is exactly the heart of the difficulty of identification: telling apart supersets of the target language from the target language, which is colloquially called overgeneralization. Taking it to the extreme, imagine that the first language of the collection
        contains all the strings, \ie{}, $L_1 = \cX.$
        Then, all the batches will
        vote for $L_1.$ 
        This is problematic for two
        reasons: generating a fresh string
        from the majority-voted language 
        is as good as random guessing, and choosing
        a string among the ones that voted
        for the majority-voted language is as good
        as picking one of the outputs of all batches
        uniformly at random.

        Perhaps surprisingly, it turns out that a much simpler approach works: we show that the algorithm of \citet{kleinberg2024language} directly enjoys exponential rates in the statistical setting, without the use of batching and boosting. 
        This observation is based on a sufficient condition that allows one to use an
        algorithm that works ``in the limit''
        to obtain exponential rates in the 
        statistical setting, without any modification (see \Cref{lem:generation-from-online-to-statistical-property}).
        \begin{inftheorem}[see \cref{thm:statistical-generation}]\label{infthm:statistical-generation}
            {For any countable
              language collection $\cL$
              there exists a generating algorithm
              that generates from $\cL$ at an (optimal) exponential
              rate.
              }
        \end{inftheorem}

This pair of results for identification (\cref{infthm:dichotomy-identification-positive}) and generation (\cref{infthm:statistical-generation}) allow us to get \Cref{infthm:breadth:mop} and \ref{infthm:generation}. 
The idea for \Cref{infthm:breadth:mop} is that we will use the algorithm $\generator$ that generates with breadth at some rate $R(\cdot)$ for an arbitrary non-identifiable collection $\cL$ and membership oracle access to $\generator$ in order to get an identification algorithm for $\cL$ with some rate $R'(\cdot)$ such that $\lim_{n\to \infty} R'(n) = 0$.
This is a contradiction since \cref{infthm:dichotomy-identification-positive} shows that $\cL$ admits no rate in the universal setting.
Finally, \Cref{infthm:generation} follows almost immediately from our universal rates result for generation.

    \subsection{Additional Results with Relaxation of Consistency and Breadth} \label{sec:intro:relaxation} 
        {
            Next, we study two relaxations of consistent generation with breadth, and ask: \textit{is there a generator that achieves these weaker notions for a non-identifiable collection?}
        }

        In this section, we will allow the generator to repeat examples in the training data. We make this choice for simplicity. Like all of our results with breadth, this choice is not crucial, and all of the results have analogs where the generator does not repeat training examples (see \cref{rem:generatingFromTrainingSet}). We show that {generation with relaxations of breadth defined in this section} from non-identifiable collections is impossible for any generator $\generator$ for which $\mop{}(\generator)$ is decidable and that satisfies the natural property that $\generator$ ``stabilizes'' after seeing sufficiently many examples:
       \begin{restatable}[Stability]{definition}{defStabilityIntro}\label{def:stable-generators}
            A generating algorithm $(\generator_n)$ is stable for a language collection $\cL$ if for any target language $K \in \cL$ 
            and for any enumeration of $K,$ 
            there is some finite $n^* \in \N$ such that for all $n, n' \geq n^*,$
            it holds that
             $\supp(\generator_n) = \supp(\generator_{n'}) $.
        \end{restatable}
        We make {some} initial remarks about stable generators.
        First, any generator $\generator$ that is consistent and achieves breadth is also stable, since after some finite time its support, union the training set, becomes $K$ and remains so.
        {(Here, whether $\generator$ repeats training examples or not is not crucial -- the two types of generators are interchangeable; see \cref{rem:generatingFromTrainingSet}.)}
        Second, this notion of stability can be seen as trying to capture practical heuristics such as learning rate schedules and early stopping that reduce the amount of changes to the generator as more and more samples are seen.
        Moreover, the original work of \citet{gold1967language} also requires the identifier to stabilize to a consistent guess, and, more recently, the stability property of learning algorithms was explored in the PEC learning setting of \citet{malliaris2022unstable}.

        \subsubsection{{Relaxation 1: Unambiguous Generation}}

        Having defined stability, we proceed to discuss {our first relaxation} of generation with breadth.
        Intuitively, consistent generation with breadth requires the generator to eventually stop making mistakes -- where a mistake is any element $x$ that $\generator$ incorrectly includes (if $x\not\in K$ or $x$ is part of the training samples) or excludes (if $x\in K$) from its support. 
        We now relax this and only require that, eventually, the generator $\generator$ makes \textit{finitely}  many mistakes. 
        Observe that this is a non-trivial requirement because the languages contain infinitely many strings and, so, at the start, $\generator$ is expected to make infinitely many mistakes.
        A valuable observation is that it is possible for two languages $L_1$ and $L_2$ to only differ in finitely many strings even if each contains infinitely many strings. 
        With this observation, it is not too hard to see that the aforementioned requirement is too weak to capture a reasonable notion of generation from the target language $K$.
        Indeed, it would allow generators that, given examples from $K$, perpetually generate outputs (with breadth) from a language $L$ that is not the actual target language -- which is a severe form of hallucination.
        
        Hence, to create a meaningful model, we must impose some further restrictions on the mistakes of the generator $\generator$.
        The above example motivates that, at the least, the generator $\generator$ should be ``closer'' to generating from $K$ than some language $L\neq K$ with $L \in \cL$.
        We call such a generator \textit{unambiguous}.
    
         \begin{restatable}[Unambiguous Generator]{definition}{defUnambiguousGenIntro}\label{def:unambiguousGen}
            A generating algorithm $\generator = (\generator_n)$ is unambiguous for a language collection $\cL$ if, for any $K \in \cL$ {and every enumeration of $K$}, its support eventually becomes closer to $K$ than to any other language $L\neq K$ in $\cL$ in terms of the {symmetric difference metric,} \ie{}, {there exists some $n^* \in \N$ such that
            for all $n \geq n^*$ it holds that 
            \[\abs{
            \supp(\generator_n) 
            \triangle K}
                            < 
                            \min_{L\in \cL\colon L\neq K}
                            \abs{
                            \supp(\generator_n) 
                            \triangle L},\]} 
                            where recall that for two sets $S$ and $T$, $S\triangle T\coloneqq \inparen{S\setminus T}\cup \inparen{T\setminus S}$.%
        \end{restatable}

        \begin{figure}[!ht]
        \centering
        {\hspace{20mm}\includegraphics[width=0.7\linewidth]{figures/unambiguousGenerationIllustration2.png}}
            \caption{
            \textit{An Unambiguous Generator That neither Has Consistency nor Breadth.}
            In this example, the language collection $\cL$ has two languages $L$ and $K$, where $K$ denotes the target language.
            The \textcolor{niceRed}{red curve} denotes $L$, the \textcolor{limeGreen}{dashed green curve} denotes $K$, and the \textcolor{royalBlue}{blue curve} denotes the support of $\supp(\generator_n).$
            The generator $\generator_n$ hallucinates since $\supp(\generator_n) \setminus K \neq \emptyset$ and does not achieve breadth for the target $K$ since $B=K\setminus \supp(\generator_n)$ is non-empty.
            Nevertheless, this generator is unambiguous as 
            $\abs{\supp(\generator_n) \setminus K} + \abs{B} < \abs{\supp(\generator_n) \setminus L} + \abs{A}.$}
            \label{fig:unambiguous-generation}
        \end{figure}
        
        \noindent Here, we pause to observe that this notion of generation is a significant relaxation of generation with breadth that we considered earlier (\cref{def:breadth}).
        Not only does it allow the generator to hallucinate certain strings not in the target $K$ and omit strings actually in $K$ for arbitrarily long, the number of hallucinations and omissions can be arbitrarily large, depending on the structure of the language collection $\cL$.
        Surprisingly, we show that even this very weak notion of ``consistent generation with breadth'' is not achievable by a large class of generators.
        
        \begin{inftheorem}[see \cref{thm:impossibility:robust}]\label{infthm:impossibility:robust}
            For every language collection $\cL$ that is not identifiable in the limit, no stable generating algorithm $\inparen{\generator_n}$ for which $\mop{}(\cdot)$ is decidable, can generate unambiguously from $\cL$ at any rate.
        \end{inftheorem}
        Thus, under mild conditions, no stable algorithm can generate unambiguously from a non-identifiable collection.
        Moreover, we also prove an analog of \cref{infthm:impossibility:robust} in the online setting (see \cref{thm:online:impossibility:robust}), which extends our earlier result for generation with breadth in the online setting (\cref{infthm:gen:limit}).
        This raises several questions regarding unambiguous generation, which we leave as interesting open problems  (see \cref{sec:takeawaysOpenQuestions}).
        Note that while this impossibility result has a benign requirement that the generator is stable, it already considerably extends our main result (\cref{infthm:breadth:mop}), since any generator that achieves breadth must be stable -- otherwise, its support cannot settle on the target language $K$.
        {(Note that while \cref{infthm:breadth:mop} requires the generator to not repeat training examples, any generator that repeats training examples can be converted into one that does not repeat training examples and vice-versa; see \cref{rem:generatingFromTrainingSet}.)}

    \subsubsection{{Relaxation 2: Consistency with Approximate Breadth}}
        {Next, we present our second relaxation of generation with breadth -- which we call \textit{approximate breadth}.}
        {This notion, informally,} requires that the generating algorithm is consistent and puts zero mass only on \emph{finitely} many points of the target language $K$. 
        The formal definition is as follows.
        \begin{restatable}[Generation with Approximate Breadth]{definition}{defApproxBreadth}\label{def:gen:oneSidedRobust}
            A generating algorithm $\generator=(\generator_n)$ is said to {generate with approximate breadth} for a collection $\cL=\inbrace{L_1,L_2,\dots}$ if, for any $K\in \cL$ and enumeration $x_1,x_2,\dots$ of $K$, there is an $n_0\geq 1$, such after seeing $n\geq n_0$ elements $x_1,\dots,x_n$,
                $\supp(\generator_n)\subseteq K$
                and 
                $\abs{K\setminus \supp(\generator_n)}$ is finite.
        \end{restatable}
        \noindent The above is a significant weakening of generation with breadth -- since the generating algorithm $\generator$ is allowed to miss elements in the target language infinitely often -- and the number of elements missed is finite but can be arbitrarily large.
        This weakening of generation with breadth seems to be incomparable to the notion of unambiguous generation studied in the previous section.
        To see this, observe that 
        on the one hand, $\generator$ can satisfy the above definition while generating with breadth from a language $L$ that is a strict subset of $K$
        and 
        on the other hand, unambiguous generation allows the generator to generate samples outside of $K$ infinitely often, which is barred by the above definition.
        
        Perhaps surprisingly, even this relaxation of generation with breadth -- which appears incomparable to the one presented in the last section -- is not achievable by a large class of generators.
        \begin{inftheorem}
            \label{infthm:approxbreadth:stat}
            For every language collection $\cL$ that is not identifiable in the limit, no stable generating algorithm $\inparen{\generator_n}$ for which $\mop{}(\cdot)$ is decidable, can generate with approximate breadth from $\cL$ at any rate.
        \end{inftheorem}
        Hence, no stable generator can generate with approximate breadth from non-identifiable collection under mild conditions.
        We also prove an analog of \cref{infthm:approxbreadth:stat} in the online setting \cref{thm:online:impossibility:robustOneSided}.
        Finally, we note that while this result has a requirement that the generator must be stable, like the result for the other relaxation of breadth (unambiguous generation), it already considerably relaxes our main result (\cref{infthm:breadth:mop})

\subsection{Takeaways, Discussion, and Open Problems}\label{sec:takeawaysOpenQuestions}
\label{sec:open}
We believe that a key takeaway of our results is that the question of \citet{kleinberg2024language} seems to open an avenue towards a formal modern theory of language generation bridging learning theory and traditional TCS fields, like complexity theory and formal languages. As we explain in the subsequent technical overview, our tools contribute to this direction by connecting classical lines of work on identification of formal languages tracing back to \citet{gold1967language}, \citet{angluin1979finding,angluin1980inductive,angluin1988identifying} and computability theory \citep{sipser2012introduction,soare1999recursively}, to modern learning paradigms such as learning curves \citep{bousquet2021theory} and language generation \citep{kleinberg2024language,kalai2024calibrated}.

Next, we emphasize that our impossibility result (\cref{mainthm:gen:mop}) is \emph{not} a dead end for language generation.
Instead, it illustrates the need for additional human feedback during the post-training process -- which provides additional information over positive samples alone -- to achieve effective language models. 
Indeed, if both positive and negative examples are available, then generation with breadth is achievable for all countable collections of languages.\footnote{This follows from the work of \citet{gold1967language}, which showed that any countable collection of languages can be identified with such feedback. Using appropriate batching and boosting, we show that this identification algorithm (which works in the limit) can be converted to a generation algorithm with breadth that achieves an exponential rate. Concretely,  \cref{thm:identification-positive-negative} shows how to identify at an exponential rate and \cref{prop:index-to-sampler} shows how to convert this to a generation algorithm.}
In other words, our results can be seen as further theoretical evidence of the benefits of post-training with human feedback, highlighting its importance in developing language models that achieve both consistency and breadth,
and adding to prior theoretical results from \citet{kalai2024calibrated}.

Further, we underline that even though we focus on a prompt-less generation
setting \citep{kalai2024calibrated,kleinberg2024language}, 
most of our results immediately extend to a prompted
setting using the approach of \citet{kleinberg2024language}.

    \paragraph{Remarks and Open Questions.}
We now state a few remarks regarding our results and pose some interesting open questions. 
    First, as a byproduct of our results, we establish almost tight rates for identification and generation with positive examples (see \cref{sec:results:gen} and \cref{sec:results:identification:additional} for formal statements and discussion). 
    Obtaining tight rates for these tasks is an interesting problem. 
    
    Next, our {impossibility} results capture a large class of language-generating algorithms but do not completely forbid consistent generation with breadth.
    An immediate open question is how much further we can extend the class of generating algorithms for which {the impossibility result in} \cref{infthm:breadth:mop} holds. 
       
\begin{description}
    \item[Open Question 1.] 
        {Is there a class of generative algorithms for which the induced generators can be modeled as Turing machines and which achieve breadth and consistency for all countable collections of languages?}
\end{description}
{\itparagraph{Subsequent Progress on Open Question 1.} Since the initial arXiv posting, \citet{charikar2024exploringfacetslanguagegeneration} resolved the question by constructing a language collection and showing that no generator can produce with breadth from it. Subsequently, \citet{charikar2024exploringfacetslanguagegenerationV2} and \cite{kalavasis2024characterizationslanguagegenerationbreadth} concurrently extended this result to all collections violating a variant of Angluin's condition (as in \cref{def:angliun-criterion}). Moreover, \cite{kalavasis2024characterizationslanguagegenerationbreadth} provided unconditional characterizations of collections that can be generated unambiguously and with approximate breadth, both with and without the requirement of stability.
    We refer the reader to \cite{charikar2024exploringfacetslanguagegenerationV2,kalavasis2024characterizationslanguagegenerationbreadth} for a discussion of current open questions in this direction.
}

{Further, we also proved more robust versions of our main result (\cref{infthm:breadth:mop}), namely, \cref{infthm:impossibility:robust,infthm:approxbreadth:stat}, which showed that no algorithm from a large class of generators can generate while (a) making a ``small'' number of hallucinations or omissions  or (b) making a finitely many omissions and zero hallucinations.}
It is interesting to understand if one can prove a more robust version of \cref{infthm:breadth:mop}. 
To this end, we propose the following problem.
\begin{description}
    \item[Open Question 2.] 
    What is the Pareto frontier
    of an approximate notion of breadth and consistency? In other words, 
    if we fix a collection of languages
    and allow the generator
    to hallucinate at some given rate,
    what is the minimal fraction of the mass from the target language
    that this generator has to miss?
\end{description}
Subsequent work of \citet{kleinberg2025density} makes progress on a variant of this question in the \emph{online} setting: they, for instance, design an algorithm that achieves no hallucinations in the limit and covers a non-trivial fraction of the target language (which is in some sense the best possible due to our impossibility results).
Next, to the best of our knowledge, it is not possible to test if a language collection is identifiable in the limit (without access to a strong oracle); this, for instance, becomes evident by inspecting Angluin's criterion for identifiable collections (see \Cref{def:angliun-criterion}). Hence, we would like to know the following:
\begin{description}
    \item[Open Question 3.] Is there a best-of-both-worlds algorithm between consistent generation and generation with breadth, \ie{}, is there an algorithm that will always generate in the limit from the target language consistently but, whenever identification is possible, it will also achieve breadth?
\end{description}
\noindent We make some initial progress on this question by showing that the algorithm proposed by \citet{kleinberg2024language} already achieves
this best-of-both worlds guarantee, provided it has access to a \emph{subset oracle} for $\cL$ that answers queries of the form ``is $L_i\subseteq L_j$?''
{(see \cref{sec:subsetOracle:bestOfBothworlds})}.
{In subsequent work \citet{kalavasis2024characterizationslanguagegenerationbreadth}, we show that a variant of this algorithm (given a subset oracle) has a best-of-three-worlds guarantee: for any countable collection $\cL$, it consistent, stable, and generates with breadth when possible.}

Finally,  our algorithm that achieves (almost) exponential rates for identification uses an algorithm for identification in the limit as a black box. However, our algorithm that achieves exponential rates for generation makes use of certain specific properties of the algorithm of \citet{kleinberg2024language}. Thus, we ask the following question.

\begin{description}
    \item[Open Question 4.] Is there a black-box transformation from an algorithm that generates in the limit in the online setting to an algorithm that generates with \emph{exactly} exponential rates in the statistical setting?
\end{description}

\subsection{Further Related Works}
\label{sec:relatedWork}
Our setting is based on the statistical formulation of \citet{angluin1988identifying}, who studied identification from stochastic examples in the limit. However, \citet{angluin1988identifying} does not provide any learning rates which is one of the main aspects of our work. In terms of techniques, our inspiration for the statistical rates comes from \emph{universal learning}, initiated by \citet*{bousquet2021theory} and studied in \citet{bousquet2021theory,kalavasis2022multiclass,hanneke2022universal,hanneke2023universal,attias2024universal,bousquet2023fine}. However, as we have already explained there are various differences between our setting and our techniques (we provide a more extensive and self-contained discussion in \Cref{sec:comparison}).

Our work connects various disjoint strands of research and we discuss each one of them below.

\paragraph{{Theory on Hallucinations.}}
In terms of rigorous evidence about hallucinations in LLMs, we have already mentioned the work of \citet{kalai2024calibrated} at the start of \cref{sec:intro:new}. {The result of \citet{kalai2024calibrated} is that
calibrated\footnote{The exact definition of calibration is not important for this work: a language model is calibrated if, roughly speaking, the strings that the model assigns probability mass $p$, appear in a $p$ fraction of the true distribution \citep{dawid1982well}.} 
language models must hallucinate. The fascinating implication of this result 
is that one can lower bound the {rate} of hallucination, \ie{}, the quantity $\Ex_{S\sim \cP^n,~x \sim \generator_n}\ind\inbrace{x \notin K}$, by the extent of a model's calibration. 
Their intuition is that the root of hallucinations are \emph{rare} patterns in the training data. Informally, their main result (under assumptions on $K$ and $\cP$) is that for any trained model $\generator_n$ with $n$ samples, the hallucination rate $\Ex_{S\sim \cP^n,~x \sim \generator_n}\ind\inbrace{x \notin K} \geq \wh{R} - \mathrm{Mis}_{\cP}(\generator_n) - \nfrac{1}{\sqrt{n}}$, where $\wh{R}$ is the fraction of facts that only appear \emph{once} in the training data and $\mathrm{Mis}_{\cP}(\generator_n)$
is the {amount of} miscalibration of the model.
Hence, if the model is calibrated, \ie{}, $\mathrm{Mis}_{\cP}(\generator_n) \approx 0$, the hallucination rate is lower bounded by the rare facts' rate. Compared to our work, their goal is to show a quantitative lower bound, which is obtained under assumptions on the training distribution $\cP$ and the fact that the model is calibrated. Our goal is different: we want to understand whether a model can achieve breadth while avoiding hallucinations building on the recent work of \citet{kleinberg2024language}.} We also refer the reader to \citet{kalai2024calibrated} for an extensive overview of applied works on hallucinations.

\citet{peng2024limitations}
 use communication complexity to prove that the transformer layer is incapable of composing functions if the domains of the functions are large
enough. This work could also be seen as rigorous evidence about the hallucinations of LLMs since function composition is a fundamental task for reasoning \citep{guan2024mitigating}. 

The work of \citet{xu2024hallucination} is also studying hallucinations of LLMs. 
They define hallucination as a failure to identify the target function {which belongs to an \emph{uncountable} collection of functions.} {This} is \textit{significantly} stronger than the definition we and prior works \citep{kalai2024calibrated,kleinberg2024language} have considered (making their impossibility results {\textit{significantly}} easier to prove).
Their main result is that all LLMs must hallucinate.
This is easy to see: consider an LLM learning to predict the next element in a sequence of $0$s and $1$s, after observing only a finite prefix of the enumeration, it has no way of knowing the next element in the order (since they allow both continuations) and, hence, the target sequence cannot be identified.

Finally, the work of \citet{aithal2024understanding}, which is mainly empirical, aims to explain hallucinations on the other important {family of} generative models, namely diffusion-based models, via mode interpolation which, in theory, relies on difficulties in approximating non-smooth parts of the score function.

\paragraph{{Language Learning.}}
In our results, we make no implicit assumption about the architecture of our models; this is in accordance with the works of \citet{solomonoff1964formal,gold1967language,angluin1982inference,angluin1983inductive,angluin1988identifying,pitt1989probabilistic,kleinberg2024language}.
However, there are various works aiming at understanding language learning capabilities of specific architectures, \eg{}, 
\citep{hahn2020theoretical,elman1990finding,gers2001lstm,bhattamishra2020ability,hewitt2020rnns,merrill2019sequential,merrill2023parallelism,yao2021self,ebrahimi2020can}. For instance,
\citet{liu2022transformers}
show that low-depth transformers can represent the computations of any finite-state automaton, while \citet{sanford2024representational} identify a particular mathematical problem that cannot be computed by single-layer multi-head transformers.
The aforementioned works share some similarities with us in the sense that they focus on whether models can be trained to generate
or recognize strings in a \emph{fixed} formal language. \citet{akyurek2024context} study in-context language learning: the language model is prompted with a
finite collection of strings from an unknown regular language (which changes across different tasks), and must infer the distribution over strings corresponding to the full language. In a similar spirit, \citet{edelman2024evolution} study in-context learning of Markov chains.
Other related works are those of \citet{hahn2023theory,xie2021explanation} that study conditions under which in-context learning can arise for language learning.

\citet{allen2023physics} design context-free grammars and empirically study the consistent generation (accuracy) and breadth (diversity) of GPT models on these synthetic examples.
In comparison to this work, we provide a theoretical treatment of the trade-off between consistency and breadth under a very abstract model, studied by \citet{gold1967language,angluin1979finding,angluin1988identifying, kleinberg2024language}. Our results indicate that, even in a very idealized framework, achieving (perfect) consistency and breadth is impossible. We view the empirical findings of \citet{allen2023physics} as an exciting indication that, in the real world (or more concretely in controlled experiments on ``small'' models and synthetic datasets), a balance between (imperfect) consistency and breadth is possible and modern LLMs can achieve it. Further understanding how much consistency and breadth one can achieve at the same time theoretically is an exciting direction. 

Finally, in a concurrent and independent work, \citet{li2024generationlenslearningtheory} also study language generation, interpreting it in a learning-theoretic setting reminiscent of the PAC framework and the online learning setting of \citet{littlestone1988learning}. 
They propose ``non-uniform generatability'' -- which relaxes ``uniform generatability'' \citep{kleinberg2024language} --  and characterize the collections for which uniform and non-uniform generatability are achievable in the Gold--Angluin model; in particular, unlike \citet{kleinberg2024language} they also allow the collection $\cL$ to contain uncountably many languages. 
These dimensions are analogs to the Littlestone dimension (and its extension to the non-uniform setting \citep{lu2023non}), which only holds for finite collections of languages. 
Moreover, they show the proposed dimension is incomparable to the VC dimension. 
Finally, they give analogous characterizations in the ``prompted generation'' setting, extending some of the results of \citet{kleinberg2024language}.
Our work is orthogonal to theirs: first, we study trade-offs between generating with and without breadth -- both in a statistical setting and the Gold--Angluin model -- and, second, we study the ``learning curves'' for generation and identification in the framework of \citet{bousquet2021theory}.

\paragraph{{Probably Eventually Correct Learning.}} As we mentioned Gold's model is a predecessor of the famous PAC model of \citet{vapnik2013nature} and \citet{valiant1984theory}. A natural question is whether there is a conceptual meeting point for the two works. Is there a notion of ``PAC learning in the limit?'' The answer to this question is affirmative and 
comes from the field of \emph{algorithmic stability} (see \eg{}, \citep{alon2022private,moran2023bayesian,kalavasis2023statistical,bun2023stability,chase2023stability} and the references therein), studied in the context of binary classification \citep{malliaris2022unstable}.

\citet{malliaris2022unstable} introduce the Probably Eventually Correct (PEC) model of learning. Here we fix a collection $\cL = \{L_1,L_2,\dots\}$ of languages
and a distribution $\cP$ over positive and negative labeled examples (in contrast to the standard identification setting of Gold).
PEC learning focuses on distributions $\cP$ realizable by the collection $\cL$ in the sense of \citet{bousquet2021theory} (see \Cref{sec:comparison}). 
An algorithm is said to PEC learn $\cL$ if for any realizable distribution $\cP$, with probability 1 over \iid{} samples $\inbrace{(x_i,y_i)\colon i\in \N}$ drawn from $\cP$, there exists time $t^* \in \N$ such that for all $t\geq t^*$, given $\inbrace{(x_i,y_i)\colon 1\leq i\leq t}$, the algorithm outputs an $L_t \in \cL$ such that 
    \[
        \Pr_{(x,y)\sim \cP} [L_t(x) \neq y]=0\,.
    \]
Malliaris and Moran give a combinatorial characterization of the collections of languages that are PEC learnable: a collection of languages $\cL$ is PEC learnable if and only if 
it does not shatter an infinite Littlestone tree. We stress that, when the learner has access to positive and negative examples, the absence of an infinite Littlestone tree does not characterize identification in our setting. This is in stark contrast with 
binary classification. In particular, in \Cref{sec:comparison}, we show that 
there exists a set of languages
that have an infinite Littlestone
tree, hence not learnable in the online
setting of \citet{bousquet2021theory}, 
but it allows for identification in the 
limit with positive and negative examples. 
In fact, the collection we use
in \Cref{ex:infinite-littlestone-tree-identifiable}
is identifiable in the limit even with just positive examples.
This already sets the stage 
for a starkly different landscape
of optimal learning rates
between the setting of \citet{bousquet2021theory} and 
\citet{angluin1988identifying}, as we will see in \Cref{sec:results:iden}.

As we said before, the online model of \citet{gold1967language} and the classical online setting of \citet{littlestone1988learning} have various differences.
\citet{lu2023non} studies non-uniform online learning in order to bridge the gaps between the inductive inference model of \citet{gold1967language} and classical online learning. In this setting, the adversary is oblivious and fixes the true language $K$ in advance (as in Gold's model). At each round, an example from $K$ is revealed, the learner makes a prediction but then she observes feedback. The model is non-uniform in the sense that the mistake bound depends on $K.$

\paragraph{{Learning from Positive Examples.}} Learning from positive examples occurs very frequently in
real-world applications and has been extensively studied.
A lot of work has been done on learning from positive examples in
Gold’s model of learning in the limit
\citep{gold1967language,angluin1980inductive,angluin1988identifying,berwick1986learning,shinohara1989inductive,zeugmann2005guided}. Apart from that, an extension of Valiant's PAC model has been also studied \citep{natarajan1987learning,denis1998pac}. 
{\citet{natarajan1987learning} considered the setting where the learner only has access to positive examples and showed that even very simple classes such as halfspaces in two dimensions are not learnable from positive examples alone.
\citet{denis1998pac} relaxed this requirement: they study a setting where the learner} has access to both positively labeled examples but also to unlabeled examples \citep{denis2005learning}. At the
heart of virtually all of the results in this line of work is the use of unlabeled samples in order to generate negative examples. When the original distribution is uniform, better algorithms are known: \citet{de2014learning} gave efficient learning algorithms for DNFs and LTFs, \citet{frieze1996learning,anderson2013efficient} gave efficient learning algorithms for learning $d$-dimensional
simplices. On the other side, \citet{eldan2011polynomial,goyal2009learning} give lower bounds for learning with positive examples. 

Recently, interest in learning from positive examples has sparked from work on truncated statistics (\eg{}, \citep{daskalakis2018efficient,daskalakis2019computationally,Kontonis2019EfficientTS,fotakis2020efficient,daskalakis2021statistical,de2023testing,de2024detecting,plevrakis2021learning,de2024detecting,diakonikolas2024statistical,lee2024efficient,lee2025learningpositiveimperfectunlabeled}).
\citet{Kontonis2019EfficientTS} show how to learn concept{s} of bounded Gaussian surface area from positive {near-diagonal-}Gaussian examples, \citet{lee2024efficient} generalize this to {(non-diagonal)} Gaussian examples (and some more distributions)}, {and \citet{lee2025learningpositiveimperfectunlabeled} further generalize to learn concepts approximable in $L_1$-norm from positive examples; provided the positive example distribution is ``smooth.''}
However, all these works focus on computationally efficient learning/testing while we focus on statistical consistency of identification and generation {without any restrictions on computation time}.

\section{Model and Preliminaries}
    In this section, we introduce notation and preliminaries that are useful in subsequent sections.
    
    \paragraph{Countable Domains and Enumerations.}
        We always assume that languages are subsets of some fixed infinite and \emph{countable} domain $\cX.$
        Since $\cX$ is infinite and countable, after a suitable bijective mapping, one can think of $\cX$ as $\N$.
        In some cases, one may also like to think of $\cX$ as the set of (arbitrarily long) strings over a finite alphabet $\Sigma$, \ie{}, $\Sigma^*$.
        This is again without loss of generality since $\N$ is bijective to $\zo^{*}$ (\eg{}, using the standard binary encoding).
        Depending upon the context,  we use one interpretation ($\cX= \N$) or the other ($\cX= \Sigma^*$), whichever is more intuitive.
        The notion of \emph{enumeration} is important in our work; 
            fix a set $L\subseteq \cX$.
            We refer to $L$ as a language.
            An enumeration of $L$ is a complete and ordered listing of all the elements in $L$ that allows for, potentially, repetitions of elements.
            In particular, an enumeration $x_1,x_2,\dots$ of $L$ has the property that for any element $w\in L$ there is a finite index $i$ such that $x_i=w$.
            For example, $1,2,3,\dots$ is a valid enumeration of $\N$ but $2,4,6,\dots$ is not (since, in the latter sequence, odd numbers do not appear at any finite position).
    
    \paragraph{Additional Notation.}
        We use $\algo{A}$, $\algo{I}$, and $\generator$ to denote algorithms, and often reserve $\generator$ for a generator, \ie{}, an algorithm that given examples $x_1,\dots,x_n \in \cX$, outputs a new example from $\cX$.
        We use $\cP$ and $\cD$ to denote distributions over the elements of some language $L\subseteq \cX$.
        We use standard notation related to distributions:
        Fix a distribution $\cP$ over language $L$.
        Given an element $x\in \cX$, $\cP(x)$ denotes the probability mass $\cP$ assigns to $x$.
        The support of distribution $\cP$ is denoted by $\supp{(\cP)}$, \ie{}, $\supp{(\cP)}\coloneqq \inbrace{x\in L\colon \cP(x)>0}$.
        As a shorthand, given a sequence $x_1,x_2,\dots,x_n$, for each index $1\leq i\leq n$, we use $x_{\leq i}$ to denote the prefix $\inbrace{x_1,x_2,\dots,x_i}$.
        Finally, we use standard notation for indicator functions and limits:
            Given an expression $E$ (such as $h\neq K$ or $s\in K$), $\ind\inbrace{E}$ denotes the indicator that $E$ is true.
        {For a function $R\colon\N\to \R_{\geq 0}$, $R\downarrow 0$ denotes that $\lim_{n\to \infty}R(n)=0$.}
        
    \paragraph{Language Collections and Membership Oracle to Languages.}
        We always consider a \textit{countable} collection of languages $\cL=\inbrace{L_1, L_2, \dots}$ and assume we have access to a \textit{membership oracle} that, given an index $i$ and a string $s$, outputs $\ind\{ s \in L_i\}$, as is standard in all prior works \citep{gold1967language,angluin1980inductive,kleinberg2024language}.
        This is motivated by the fact that if these languages are ``reasonable,'' \eg{}, they are generated by context-free grammars or decided by Turing machines \citep{sipser2012introduction}, then (1) there can be only countably many of them and (2) each of them admits a membership oracle.
        Finally, we reserve the letter $K$ to denote the unknown target language $K\in \cL$. 
        We will say that an example $x$ is a \emph{positive} example for $K$ if $x \in K$; otherwise $x$ will be a \emph{negative} example for $K$.

    \subsection{Language Identification and Generation in the Limit}

        In this section, we first present the Gold--Angluin model for identification in the limit and, then, Kleinberg and Mullainathan's model for generation in the limit.

   \subsubsection*{Language Identification in the Limit} 
    The problem of language identification in the limit from positive examples was introduced by \citecustom{gold1967language} and further studied by \citecustom{angluin1979finding,angluin1980inductive}.
    The setting is specified by a collection of languages $\cL = \{L_1,L_2,\dots\}$. %
    For a fixed collection $\cL$, an adversary and an identifier play the following game: 
    The adversary chooses a language $K$ from $\cL$ without revealing it to the identifier, and it begins \emph{enumerating} the strings of $K$ (potentially with repetitions) $x_1,x_2,\dots$ over a sequence of time steps $t = 1,2,3,\dots$. 
    The adversary can repeat strings in its enumeration,
    but the crucial point is that for every string $x \in K$, there must be at least one time step $t$ at which
    it appears. 
   
   At each time $t$, the identification algorithm $\algo{I}$, given the previous examples $x_1,x_2,\dots,x_t$, outputs an index $i_t$ that corresponds to its guess for the true language $K$.

    \begin{definition}
        [Language Identification in the Limit \citep{gold1967language}]\label{def:Identification}
        Fix some language $K$ from collection $\cL$.
            The identification algorithm  $\algo{I}$ identifies $K$ in the limit if there is some $t^* \in \N$ such that for all steps $t > t^*$, the identifier’s guess $i_t$ satisfies $i_{t} = i_{t-1}$ and $L_{i_t} = K.$
            The language collection $\cL$ is identifiable in the limit if there is an identifier that identifies in the limit any $K \in \cL,$ for any enumeration of $K$.
        \end{definition}
\citecustom{gold1967language} showed that collections of finite cardinality languages, \ie{}, each language in the collection $\cL$ is finite, can be identified in the limit from positive examples.
{This is true since in the limit, one will see all the elements of the target (finite) language, at which point it can be identified.}
The identification algorithm is the following: at time $t$, guess $L$ to consist
solely of the elements that have occurred in the 
sequence. Since $L$ is finite, there will be a finite time after which all elements of $L$ will have been revealed, so after that the algorithm will have identified the target. Interestingly,
all finite collections of languages
are also identifiable in the limit \citep{gold1967language}.

A super-finite collection of languages denotes any collection which contains all
languages of finite cardinality and at least one of infinite cardinality. Gold showed that super-finite collections of languages cannot be identified in the limit from positive examples. Further, he showed that negative examples help: any super-finite collection can be identified in the limit using positive and negative examples\footnote{This means that the adversary presents an enumeration of the whole domain $\cX$, with a label
indicating whether the example is in the target language.} (the idea is simple: keep guessing the infinite language until seeing a negative example; then it reduces to the finite case).

\begin{theorem}
[\citep{gold1967language}]
\label{thm:gold}
Let $\cL = \{L_\infty, L_1, L_2, \dots\}$ be the language collection with
$L_1 \subset L_2 \subset \dots \subset L_\infty = \cup_{i \geq 1} L_i$ and for each $i,$ $\abs{L_i} < \infty$. Then, there is no algorithm that identifies $\cL$ in the limit from positive examples. Moreover, this collection can be identified in the limit when the algorithm has access to both positive and negative examples.
\end{theorem}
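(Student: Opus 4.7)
The plan is to prove two statements separately: (i) no identifier using only positive examples succeeds in the limit on this $\cL$, and (ii) an identifier using both positive and negative examples does succeed. For (i), I would use a staged adversarial construction in the spirit of Gold's diagonalization; for (ii), a simple ``smallest consistent index'' rule, enabled by the membership oracle we are assumed to have for each $L_j$.

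For (i), suppose for contradiction that some identifier $\algo{I}$ identifies every $L\in\cL$ in the limit from positive examples. I would build an enumeration of $L_\infty$ in stages $j=1,2,\dots$. At stage $j$, I extend the current partial sequence by presenting the (not yet presented) elements of $L_j$ cyclically, waiting until $\algo{I}$'s current guess becomes the index of $L_j$; I then close the stage at that time $t_j$ and move to stage $j+1$. If at some stage $j$ the guess never becomes the index of $L_j$, then continuing the cyclic enumeration of $L_j$ forever yields a valid enumeration of $L_j$ on which $\algo{I}$ fails to stabilize to $L_j$, contradicting the assumption. Otherwise, every stage terminates, and the full infinite sequence enumerates $L_\infty=\bigcup_{j\geq 1}L_j$, because every $x\in L_\infty$ lies in some $L_m$ and is therefore presented at the latest by the end of stage $m$. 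But along this enumeration $\algo{I}$'s guesses take the indices of $L_1,L_2,\dots$ in turn, which are pairwise distinct since $L_1\subset L_2\subset\cdots$, so the guesses never stabilize, and in particular never stabilize on the index of $L_\infty$, a contradiction.

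For (ii), I would use the following identifier: at time $t$, having observed positive examples $P_t$ and negative examples $N_t$, output the smallest index $j\in\{1,2,\dots\}\cup\{\infty\}$ such that $P_t\subseteq L_j$ and $N_t\cap L_j=\emptyset$ (such a $j$ exists, e.g. the true index). If the target is $L_k$ with $k<\infty$: for each $j<k$, some $x\in L_k\setminus L_j$ eventually appears as a positive example and rules out $L_j$; for each $j\in\{k+1,k+2,\dots,\infty\}$, some $x\in L_j\setminus L_k$ eventually appears as a negative example and rules out $L_j$. The guess therefore stabilizes to $k$. If the target is $L_\infty$: for each finite $j$, $L_j\subsetneq L_\infty$ supplies a positive example outside $L_j$, eventually ruling out every finite $j$; no negative example can lie in $L_\infty$; so the guess stabilizes to $\infty$.

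The main obstacle is the impossibility part, and specifically arranging the staged construction so that (a) every stage is forced to terminate or else we have already produced a counterexample enumeration for some $L_j$, and (b) the resulting infinite sequence is truly a valid enumeration of $L_\infty$, i.e.\ every element of $L_\infty$ appears at some finite time. Cycling through $L_j$ during stage $j$ is what makes (b) work, while the assumption that $\algo{I}$ identifies every $L_j$ is what makes (a) work; the rest is bookkeeping.
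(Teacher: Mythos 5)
Your part (i) is the standard Gold diagonalization and is sound in outline (the paper itself only cites this result without proof). The one loose end is the bookkeeping you flag yourself: as written, stage $j$ may close the moment the guess hits the index of $L_j$, possibly before the cycle through the not-yet-presented elements of $L_j$ has completed, so a particular element could in principle be postponed forever. The fix is immediate -- after the guess hits the index of $L_j$ at time $t_j$, finish presenting all remaining elements of $L_j$ before opening stage $j+1$ (this costs nothing, since all you need is that the guess equals the index of $L_j$ at \emph{some} time in stage $j$, and these indices are pairwise distinct and all different from that of $L_\infty$).

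Part (ii), however, contains a genuine error: your preference order puts $L_\infty$ \emph{last}, and this breaks identification exactly when the target is $L_\infty$. At any finite time $t$, the set $P_t$ of positive examples seen is finite, so by the nesting $L_1 \subset L_2 \subset \cdots$ there is some finite $m$ with $P_t \subseteq L_m$; moreover every negative example lies outside $L_\infty$ and hence outside every $L_j$, so $N_t \cap L_m = \emptyset$ as well. Thus some finite index is consistent at \emph{every} time, and your ``smallest consistent index in $\{1,2,\dots\}\cup\{\infty\}$'' rule outputs a finite index forever; it never guesses $\infty$, let alone stabilizes there. Your sentence ``eventually ruling out every finite $j$'' is true of each fixed $j$ individually but not of all $j$ simultaneously at any finite time, which is where the argument slips. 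The correct rule reverses the preference, as the paper sketches: conjecture $L_\infty$ for as long as no negative example falls inside it (which is forever when the target is $L_\infty$), and only when a negative example $x \in L_\infty$ appears -- which must happen if the target is some finite $L_k$, since $L_\infty \setminus L_k$ is infinite and the labeled enumeration covers all of $\cX$ -- fall back to the smallest finite index consistent with the data. Equivalently, apply the smallest-consistent-index rule with respect to the paper's enumeration $\cL = \{L_\infty, L_1, L_2, \dots\}$, in which $L_\infty$ comes first.
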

The above result already shows a separation in terms of identification between observing only positive examples and observing positive \emph{and} negative examples in Gold's model. Moreover, it raises the question of whether there exist non-trivial collections of languages identifiable in the limit from positive examples. In that direction, \citet{angluin1979finding}
studied pattern languages (whose definition is not important for our work) and showed that for that collection identification in the limit is possible only with positive examples.

The next question is whether one can get a \emph{characterization}
of the language collections that can be identified from positive examples.
\citecustom{angluin1980inductive} resolved this problem. %

\begin{definition}
[Angluin's Condition \citep{angluin1980inductive}]
\label{def:angliun-criterion}
    Fix a language collection $\cL = \{L_1, L_2, \dots\}$.
    Suppose there is a membership oracle which, given a string $x$ and index $i$, answers $\ind\{x \in L_i\}$.
    The collection $\cL$ is said to satisfy Angluin's condition if there is an oracle that given an index $i$ enumerates a set of \textit{finite} strings $T_i$ such that
    \begin{center}
        $T_i \subseteq L_i$ and for all $j \geq 1$, if $T_i \subseteq L_j$ then $L_j$ is not a proper subset of $L_i$.
    \end{center}
\end{definition}
The difficulty in trying to identify a language from positive examples is the problem of \emph{over-generalization}. If while seeing positive examples the algorithm specifies a language that is a proper superset of the true answer $K$, then by only seeing positive examples it will never see a counterexample to that language. This would be avoided with positive and negative examples. Angluin's condition essentially ensures this over-generalization problem can be avoided by from just positive examples (without the help of negative examples).

Before proceeding to Angluin's result, we stress one important point: inspecting Angluin's definition, we can see that it requires access to a procedure that \emph{finds} this set of strings $T_i$. This oracle is called a \emph{tell-tale} oracle and is quite crucial for Angluin's algorithm to work.

\Cref{def:angliun-criterion} led to the following characterization.
\begin{theorem}
[\citep{angluin1980inductive}]\label{thm:angluin-id-limit}
A countable language collection $\cL$ is identifiable in the limit if and only if it satisfies Angluin's criterion. %
\end{theorem}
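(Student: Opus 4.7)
My plan is to prove the biconditional by handling each direction separately. For sufficiency I will design an explicit identifier from the tell-tale oracle, while for necessity I will proceed by contraposition using a diagonalization argument.

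For sufficiency, assume Angluin's condition holds and let the tell-tale oracle provide, for each index $i$, an enumeration of $T_i$; write $T_i^{(t)}$ for the finite prefix enumerated within $t$ computation steps. On input $S_t = \{x_1,\dots,x_t\}$, the identifier outputs the smallest index $i \leq t$ such that $T_i^{(t)} \subseteq S_t$ and $S_t \subseteq L_i$, both conditions being checkable via the membership oracle for $L_i$. To show convergence on the true language $K$, let $i^\star$ be the smallest index with $L_{i^\star} = K$. For every $j < i^\star$ with $L_j \neq K$, I argue $j$ is eventually rejected: if $L_j \not\supseteq K$, some $x \in K \setminus L_j$ eventually appears in $S_t$, violating $S_t \subseteq L_j$; if $L_j \supsetneq K$, then Angluin's condition applied to $T_j$ forces $T_j \not\subseteq K$ (else $T_j \subseteq K$ combined with $K \subsetneq L_j$ contradicts the condition), so some $y \in T_j \setminus K$ is eventually enumerated in $T_j^{(t)}$ while staying outside $S_t \subseteq K$. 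Meanwhile $i^\star$ is valid in the limit, since $T_{i^\star} \subseteq K$ is progressively absorbed into $S_t$ and $S_t \subseteq K = L_{i^\star}$ always. Hence each index's status stabilizes permanently, and the algorithm locks onto the smallest valid index, which lies in $\{1,\dots,i^\star\}$ and satisfies $L_i = K$.

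For necessity, assume an identifier $\algo{I}$ exists. I construct the tell-tale enumeration as follows: for each $i$, enumerate $L_i$ canonically (by scanning $\cX$ with the membership oracle for $L_i$) to produce $y_1, y_2, \dots$, simulate $\algo{I}$ on this stream, and set $T_i = \{y_1,\dots,y_{\tau_i}\}$ where $\tau_i$ is a time after which $\algo{I}$ stably outputs an index for $L_i$; such $\tau_i$ exists by identifiability, and $T_i$ is enumerable as the simulation proceeds. To verify the required property, suppose for contradiction that $L_j \subsetneq L_i$ and $T_i \subseteq L_j$. Starting from the prefix $y_1,\dots,y_{\tau_i}$ (which lies in $L_j$), I build an infinite stream by alternating phases: in phase $2k$ I reveal the next unseen element of $L_j$ and then pad with further elements of $L_j$ until $\algo{I}$ outputs some index for $L_j$; in phase $2k+1$ I reveal the next unseen element of $L_i \setminus L_j$ and then pad with further elements of $L_i \setminus L_j$ until $\algo{I}$ outputs some index for $L_i$. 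Each phase terminates in finite time because continuing phase $2k$ indefinitely would produce a valid enumeration of $L_j$ on which $\algo{I}$ must converge to $L_j$, and similarly for phase $2k+1$ relative to $L_i$. The completed stream is then a valid enumeration of $L_i$ (it exhausts both $L_j$ and $L_i \setminus L_j$), yet $\algo{I}$'s guess oscillates between indices for $L_i$ and for $L_j$ infinitely often, contradicting identifiability of $L_i$.

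The main obstacle is carrying out the zig-zag construction rigorously so that the final stream is both a legitimate enumeration of a single language in $\cL$ and causes $\algo{I}$ to oscillate indefinitely. The role of injecting one new element of $L_j$ (respectively, of $L_i \setminus L_j$) at the start of each phase is to guarantee eventual exhaustion of both sides, so that the stream becomes an enumeration of $L_i$; the subsequent padding is what guarantees the index switch. A subtle case arises when $L_i \setminus L_j$ is finite: after finitely many odd phases the $L_i \setminus L_j$ queue is exhausted and the remaining tail consists only of $L_j$-elements, forcing $\algo{I}$ to converge to $L_j$, yet the overall stream is still an enumeration of $L_i$ on which $\algo{I}$ converges to the wrong index, which again contradicts identifiability and completes the proof.
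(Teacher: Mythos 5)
The paper does not prove this theorem; it cites \citet{angluin1980inductive}, so I evaluate your proof on its own terms. Your sufficiency direction is essentially the standard argument and is correct: every $j < i^\star$ is eventually and permanently rejected either because some $x \in K \setminus L_j$ enters $S_t$ (the case $L_j \not\supseteq K$), or because the tell-tale property forces $T_j \not\subseteq K$, so some $y \in T_j \setminus K$ appears in the enumeration $T_j^{(t)}$ while never entering $S_t$ (the case $L_j \supsetneq K$); meanwhile $i^\star$ is eventually and permanently accepted.

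Your necessity direction, however, has a genuine gap. You take $T_i$ to be the content of a prefix of a single canonical enumeration of $L_i$ up to a time $\tau_i$ at which $\algo{I}$ stabilizes \emph{on that particular enumeration}. This gives you stability along one extension, not along all extensions by $L_i$-elements, and your zig-zag cannot recover from that weakness because the phase-termination claims do not hold. In phase $2k+1$ you pad only with elements of $L_i \setminus L_j$; if you padded forever, the stream would contain at most finitely many elements of $L_j$, so it would not be an enumeration of $L_i$, and identifiability of $L_i$ gives you no reason that $\algo{I}$ must ever emit an index for $L_i$. Worse, for $k \geq 1$, phase $2k$ starts from a stream that already contains an element of $L_i \setminus L_j$ revealed in an earlier odd phase; padding with $L_j$-elements forever then yields an enumeration of $L_j \cup F$ for a nonempty finite $F \subseteq L_i \setminus L_j$, which is not $L_j$ and need not lie in $\cL$, so identifiability again says nothing. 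The construction therefore stalls after at most one mind change, which is no contradiction, and the finite-$L_i \setminus L_j$ side case inherits the same problem since its termination also rests on the failed even-phase claim.

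The standard repair is the Blum--Blum locking-sequence lemma. One first shows by a (one-shot) diagonalization that any identifier admits, for each $L_i$, a finite sequence $\sigma$ of $L_i$-elements such that $\algo{I}(\sigma)$ is an index for $L_i$ and $\algo{I}(\tau)$ returns that same index for every extension $\tau$ of $\sigma$ by $L_i$-elements; if no such $\sigma$ existed, one could interleave mind-changing extensions (or record wrong outputs) with fresh $L_i$-elements to produce a text of $L_i$ on which $\algo{I}$ never converges. Taking $T_i$ to be the content of such a $\sigma$, the tell-tale property falls out immediately: if $T_i \subseteq L_j \subsetneq L_i$, extend $\sigma$ to any text of $L_j$; every element lies in $L_i$, so the locking property pins $\algo{I}$ at its $L_i$-index forever, contradicting identifiability of $L_j$. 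Your $T_i$ lacks exactly this all-extensions guarantee, which is why the zig-zag was needed and why it does not close.
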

Finally, let us consider the case of language identification with both positive and negative examples, \ie{}, when
the adversary provides an enumeration of the whole domain
$\cX$ and every example has a label indicating whether
it is in the true language $K.$
We mention that focusing on algorithms equipped with membership oracle, the following result appears in \citecustom{gold1967language}. %
\begin{theorem}[\citep{gold1967language}]
Any countable language collection is identifiable in the limit from positive and negative examples.
\end{theorem}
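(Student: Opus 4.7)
The plan is to use the classical \emph{identification by enumeration} strategy, which is ideally suited to the positive-and-negative-examples setting because it lets us check whether any candidate language is consistent with the labeled data seen so far. Concretely, at time $t$, after observing labeled examples $(x_1, y_1), \dots, (x_t, y_t) \in \cX \times \{0,1\}$ (where $y_s = \ind\{x_s \in K\}$), the identifier outputs the \emph{least} index $i_t$ such that $L_{i_t}$ is consistent with the data, i.e., $\ind\{x_s \in L_{i_t}\} = y_s$ for every $s \leq t$. This is computable because the membership oracle for $\cL$ lets us check, for any candidate $i$ and any finite prefix of the data, whether $L_i$ agrees with all observed labels; we simply scan $i = 1, 2, 3, \dots$ in order and return the first survivor.

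To show that the algorithm identifies $K$ in the limit, let $i^\star$ be the least index with $L_{i^\star} = K$; such an index exists because $K \in \cL$. The two things to verify are (i) that $L_{i^\star}$ is never eliminated, and (ii) that every index $j < i^\star$ (each of which has $L_j \neq K$ by minimality of $i^\star$) is eliminated after some finite time. For (i), note that since the labels are generated from $K = L_{i^\star}$, the language $L_{i^\star}$ is consistent with every prefix, so $i_t \leq i^\star$ for all $t$. For (ii), fix $j < i^\star$; since $L_j \neq K$, there is a witness $w_j \in L_j \triangle K$. Because the adversary provides an enumeration of the entire labeled domain $\cX$, the pair $(w_j, \ind\{w_j \in K\})$ appears at some finite time $\tau_j$, and from that moment on $L_j$ is inconsistent with the data and hence never chosen again. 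Taking $t^\star = \max_{j < i^\star} \tau_j$, which is a maximum over finitely many indices, we get $i_t = i^\star$ for all $t \geq t^\star$.

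There is really no obstacle here beyond bookkeeping: the key features of the setting, namely a countable collection $\cL$ with a membership oracle plus access to both positive and negative examples, together neutralize the over-generalization problem that made the positive-only case require Angluin's condition. The mildest subtlety is that $K$ may equal $L_i$ for infinitely many $i$, but we bypass this by always selecting the \emph{least} consistent index, which ensures that the output stabilizes (not just that it is eventually correct) -- matching the stability requirement of \Cref{def:Identification}. The argument does not use any structural assumption on the languages beyond countability, which gives the claimed result.
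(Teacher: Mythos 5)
Your proposal is correct and is essentially the same argument the paper gives: both output the lowest-indexed language consistent with the labeled data, observe that the least index $z$ with $L_z=K$ is never eliminated, and note that each $L_j$ with $j<z$ is contradicted at some finite time because a witness in $L_j\triangle K$ eventually appears with its label in the enumeration of $\cX$. Nothing further is needed.
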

To see how the algorithm works,
let $\cL = \{L_1,L_2,\ldots\}$ and denote by $L_z$ the 
smallest indexed language in $\cL$ for which $L_z = K.$
The algorithm observes an enumeration of the form $(x_{t},y_{t}) \in \cX \times \{0,1\}$ for $t \geq 1$. Recall this means that $\ind\{x_{t} \in K\} = y_t.$
The algorithm works as follows: in every timestep $t \in \N$, it predicts the lowest index of a consistent language, \ie{}, the smallest $j \in \N$ for which 
$\ind\{x_{\tau} \in L_j\} =  y_\tau$ for all $\tau \leq t.$ 
Consider two cases: if $z = 1,$ then the algorithm
will never predict any language $L_{z'}, z' \geq 2,$
so it will be correct from the first step.
If $z > 1,$ then for all $L_{z'}, z' < z,$ that come before $L_z$ in the enumeration of $\cL$, there is a finite time $t_{z'}$ when the example $\inparen{x_{t_{z'}},y_{t_{z'}}}$ contradicts the language
$L_{z'}$.

\subsubsection*{Language Generation in the Limit} 
    We now move to language generation in the limit from positive examples, introduced by \citecustom{kleinberg2024language}.  The setup is exactly the same as in the Gold--Angluin model (the adversary provides an enumeration of $K$), but now the goal
    of the learner is to \emph{generate unseen examples}
    from $K$ instead of identifying the index of $K$. Their formal definition is the following.
        
    \begin{definition}
        [Language Generation in the Limit \citep{kleinberg2024language}]\label{def:consistentGeneration}
        Fix some language $K$ from the collection $\cL = \{L_1, L_2,\dots\}$ and a generating algorithm $\generator.$
            At each step $t$, let $S_t \subseteq K$ be the set of all strings that the algorithm $\generator$ has seen so far. 
            $\generator$ must output a string $x_t \notin S_t$ (its guess for an unseen string in $K$). 
            The algorithm  $\generator$ consistently generates from $K$ in the limit if, for all enumerations of $K$, there is some $t^* \in \N$ such that for all steps $t \geq t^*$, the algorithm’s guess $a_t$ belongs to $K \setminus S_t$ ({or $K \setminus S_t$ is empty)}. The collection $\cL$
            allows for consistent generation in the limit if there is an algorithm  $\generator$ that, for any choice of the target language $K \in \cL,$ 
            it consistently generates from $K$ in the limit.
        \end{definition}
        \cref{def:consistentGeneration} straightforwardly generalizes to randomized algorithms; 
            consider the same setup as before except that now the output string $a_t$ may be randomized.
            The definition of generation is also the same except that instead of requiring $a_t\in K \setminus S_t$ one requires that the support $A_t$ of the distribution from which $a_t$ is sampled is non-empty and satisfies $A_t\subseteq K \setminus S_t$.
          
     Observe that language generation requires that the algorithm's outputs are \textit{consistent} with $K$ (in the limit), but allows the algorithm to not generate certain strings from $K$.
        For instance, if $K$ is the set of all strings, then the algorithm that always outputs even length strings (not in $S_t$), generates from $K$ in the limit but also misses infinitely many strings in $K$ (namely, all strings of odd length).
        Consistency is clearly a desirable notion: without consistency, algorithms may keep outputting strings outside the target language $K$ which, when $K$ is the set of all meaningful and true strings, inevitably leads to hallucinations.
    
    A trivially consistent generator is one that outputs data already seen in the training set. As we already mentioned, we count such outputs as mistakes. This form of predicting unseen positive examples makes the task of generation interesting. At first sight, it seems that there is an easy strategy that achieves generation in the limit: given an enumeration of all hypotheses $L_1,L_2,\ldots$, we sequentially generate from $L_i$ $(i=1,2,\ldots)$ until it becomes inconsistent with the sample $S_n$; then we move to $L_{i+1}.$   
This strategy seems natural for generation because we know that there is some index $k$ such that the true language $K = L_k$. This idea has a fundamental issue, already reported by \citecustom{kleinberg2024language}: if there exists an index $i$ such that $i < k$ and $L_k \subsetneq L_i$, then the generator will get stuck at $L_i$ and never update. 

A non-trivial solution to this problem was given by \citecustom{kleinberg2024language}. They show that
     all countable sets of languages
     in countable domains allow
     for generation in the limit from positive
     examples; this is in stark contrast with identification in the limit from positive examples.
     
     \begin{theorem}
    [Theorem 1 in \citecustom{kleinberg2024language}]
    There is an algorithm with the property that for any countable collection of languages $\cL =
    \{L_1, L_2,\dots\}$, any target language $K \in \cL,$ and any enumeration of one of these languages $K$, the algorithm generates from $K$ in the limit with positive examples.
     \end{theorem}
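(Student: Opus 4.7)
The plan is to build a generator $\generator$ that at each step $t$ uses the observed sample $S_t \subseteq K$ together with the membership oracle for $\cL$ to output an unseen string. The central conceptual obstacle is the overgeneralization problem: the naive strategy of generating from the lowest-indexed $L_i$ consistent with $S_t$ (i.e., $S_t \subseteq L_i$) can lock onto a proper superset $L_i \supsetneq K$ and keep producing strings outside $K$ forever. As foreshadowed in the discussion of the algorithm's ``critical languages'' chain $C_1 \supseteq C_2 \supseteq \cdots \supseteq (C_i := K) \supseteq C_{i+1} \supseteq \cdots$, the resolution is to maintain at each step a descending chain of consistent candidates and to generate from an \emph{inner} member of the chain, not the outermost.

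In detail, at step $t$ let $I_t = \inbrace{i \leq t : S_t \subseteq L_i}$; since $K = L_k$ for some $k$, we have $k \in I_t$ for every $t \geq k$. Each $L_i$ admits a canonical enumeration obtained by sweeping $\cX$ in a fixed order and using the membership oracle to filter, so one may speak of the finite initial segment $E_i^t$ consisting of the first $t$ elements produced this way. Use these segments to perform a tentative subset test on $I_t$: declare $L_i \lesssim_t L_j$ when $E_i^t \subseteq L_j$, again verified by membership queries. From the relation $\lesssim_t$, extract a maximal descending chain $i_1^t \succeq i_2^t \succeq \cdots \succeq i_{r_t}^t$ among indices $\leq t$, and have $\generator$ output any element of $L_{i_{r_t}^t} \setminus S_t$, which can be found by enumerating $L_{i_{r_t}^t}$ until a string outside $S_t$ appears (such a string must exist because each $L_{i_{r_t}^t}$ is infinite, as otherwise it would have been eliminated from $I_t$).

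The analysis hinges on two stabilization facts. First, any $L_i$ that is a proper subset of $K$ is eliminated from $I_t$ in finite time, because the enumeration of $K$ eventually displays an element of $K \setminus L_i$, forcing $S_t \not\subseteq L_i$ from that point on. Second, for any pair with $L_i \not\subseteq L_j$, the tentative test $\lesssim_t$ correctly refutes $i \lesssim_t j$ in finite time, since the enumeration of $L_i$ eventually hits a witness outside $L_j$. Combining the two, for all sufficiently large $t$, the surviving members of $I_t$ that participate in the tail of the chain are languages containing $K$, and $K$ itself is witnessed as $\lesssim_t$-below each strict superset: the enumeration of $K$ stays inside $K$, while the enumeration of any $L_i \supsetneq K$ eventually exits $K$. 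Hence the chain extends past every proper superset down to an index $i^\star$ with $L_{i^\star} = K$, and every subsequent output lies in $K \setminus S_t$.

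The step I expect to be most delicate is arguing that the chain truly bottoms out at $K$ rather than at some superset that happens never to be refuted by the finite information available. The saving observation is a monotone-convergence-style argument: indices $\leq k$ form a finite prefix, and once every strict sub/superset comparison among them has been correctly resolved (which happens at a uniform finite time $t^\star$), the chain's structure restricted to this prefix is fixed forever. Formalizing that the chain's terminal index ``stabilizes'' at the correct $k$, while new higher-indexed entries may still be appended below $k$ only if their language is itself equal to $K$, is the main technical step to verify carefully.
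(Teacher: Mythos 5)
Your proposal tracks the Kleinberg--Mullainathan argument in spirit---descending chain of consistent candidates, generate from an inner member rather than the outermost---but several steps that you treat as routine are precisely where the paper's construction earns its keep, and as written they have gaps.

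First, the relation $\lesssim_t$ you define by ``$E_i^t \subseteq L_j$'' (where $E_i^t$ is the first $t$ elements of a per-language enumeration of $L_i$) is not transitive: $E_i^t \subseteq L_j$ and $E_j^t \subseteq L_k$ does not give $E_i^t \subseteq L_k$, because $E_i^t$ need not be contained in $E_j^t$. So the objects you call ``maximal descending chains'' are not chains in any poset, and it is unclear that the chosen witness languages stack. Kleinberg and Mullainathan avoid this by comparing \emph{projections onto a common prefix of the domain}, namely $L[m] = L \cap \{x_1,\dots,x_m\}$, which makes the finite test genuinely a partial order restricted to a finite set; they then sweep $m$ upward at each step until a workable value is found. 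That is not a cosmetic detail---it is what makes the ``chain'' well-formed.

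Second, ``extract a maximal descending chain'' is underdetermined. Maximal chains in a poset are far from unique, and at finite $t$ the poset of consistent candidates still contains languages $L_j$ with $K \not\subseteq L_j$ (to be eliminated later) as well as multiple incomparable supersets of $K$; a generic maximal chain need not pass through $K$ at all. The paper's notion of \emph{critical language} resolves this: $L_j$ is critical at step $t$ if it is consistent \emph{and} $L_j$ (or its $m$-projection) is contained in \emph{every} lower-indexed consistent language, not merely in one predecessor. That forces the critical languages to be a nested, deterministic chain anchored at the lowest consistent index, and it is the reason one can safely generate from the last critical language. Replacing ``critical'' with ``endpoint of some maximal chain'' breaks the argument: you would need to additionally argue that your extraction rule never settles on a chain whose terminus is a strict superset of $K$ that just happens to be locally minimal among the candidates examined so far.

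Third, your conclusion overclaims: you assert the chain ``extends past every proper superset down to an index $i^\star$ with $L_{i^\star} = K$,'' and in the flagged delicate step you say new lower entries can only be languages \emph{equal to} $K$. That is not what is true nor what is needed. The correct invariant, which the paper establishes, is that once $K = L_z$ becomes critical, every critical language with index $> z$ is a \emph{subset} of $K$, possibly a proper subset (it may even become inconsistent and drop out later). Generating from a proper subset of $K$ is still correct, so the algorithm is safe; but the stabilization argument has to be phrased in terms of ``terminus is $\subseteq K$,'' not ``terminus is $K$.'' If you formalize your monotone-convergence step under the equality claim, it will not go through.

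In short: the strategy is the right one and the two stabilization facts you state are correct, but the definitions of the finite subset test and of which chain to select are exactly where a careful proof must differ from yours, and they are the content of the paper's critical-language machinery.
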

     We now provide some intuition on how this algorithm works. Let $L_1, L_2,\dots$ be an enumeration of the collection of languages and $K$ be the true language.
    Let $z$ be an index such that $L_z = K$. We say that a language $L_i$
is consistent with the sample $S_t$ at time $t$ if $S_t$ is contained in $L_i$. Now assume that we have two languages $L_i$ and $L_j$ with $L_i \subseteq L_j$ which 
are both consistent with $S_t.$
Then, it is clear that the generating algorithm should prefer to generate from $L_i$ rather than $L_j$: any $w \in L_i \setminus S_t$ satisfies $w \in L_j \setminus S_t.$ This property inspired \citecustom{kleinberg2024language} to define the notion of a \emph{critical language.} Let $\cC_n = \{L_1,L_2,\dots,L_n\}$. A language $L_n$ is critical at step $t$ if $L_n$ is consistent with $S_t$ and for every $L_i \in \cC_n$ that is consistent with $S_t$, it must be $L_n \subseteq L_i.$ There are some key properties upon which the generating algorithm is built:
\begin{itemize}
    \item At any time, there is at least one language consistent with $S_t$, the true one $L_z = K.$ Also, there is at least one
critical language at any step $t$: for any $t$, the consistent language $L_i$ with the lowest index $i$ must
be critical at step $t$, as it is the only consistent language in $C_i.$ 

\item There exists times $t$ for which $L_z$ (which is $K)$ is not critical. But eventually, $L_z$ will become critical at some step and then remain critical forever
after that. Also, any critical language coming after $L_z$ must be a subset of $L_z$, thus it is safe to generate from it.

\item Hence the algorithm, roughly speaking, keeps track of a list of critical languages and generates from the last one in the list; this is because, after some finite index, all the critical languages are subsets of $L_z$ and, hence, it is safe to generate from any of them.
\end{itemize}
More details about this algorithm will appear later on when we design our generation algorithms for the probabilistic setting (see \Cref{sec:statistical-generation}).

\section{Overview of Results}\label{sec:results}
    In this section, we present the formal statements of our main results. 
    We begin with statistical rates for identification and for consistent generation (without the requirement of breadth) in \cref{sec:results:iden}.
    Next, in \cref{sec:results:genBreadth}, we present our results for generation with breadth -- showing that no generator from a large family of generators (that includes present-day LLMs) can generate with breadth from any language collection that is non-identifiable.
    Contrasting \citecustom{kleinberg2024language}'s result for generation without breadth, these results show that generation with breadth is significantly harder -- as hard as identification, for a large and natural class of generators.
    \cref{sec:results:genBreadth:robust} extends this impossibility result to a relaxation of generation with breadth, showing that even this relaxed definition of generation and breadth cannot be achieved by the same large class of generators.
    Finally, in \cref{sec:results:identification:additional}, we present additional results for identification when one has some additional structure (\eg{}, access to a stronger oracle) or information (\eg{}, negative examples). %

     \subsection{Results for Identification and Generation without Breadth}\label{sec:results:iden}\label{sec:results:gen}
\label{sec:statistical-learning-results}         %
    Prior work of \citecustom{gold1967language,kleinberg2024language} studies language identification and generation in an online, \ie{}, adversarial setting. In this work, we study the distributional versions of these problems. The identification problem we study is not new and, in fact, goes back to Angluin's work in 1988 \citep{angluin1988identifying}. However, \citecustom{angluin1988identifying} does not provide any \textit{rate} at which language identification can be achieved as the number of samples observed increases (when it is achievable). 

\paragraph{Summary of Results in This Section.}
In this section, we give learning rates for both identification and generation (see \cref{thm:dichotomy-identification-positive,thm:statistical-generation} respectively). 
For both tasks, we study the learning curves {-- that is how the identification or generation error decays as the sample size increases}. %
As a result, we extend the results of \citecustom{gold1967language} and \citecustom{kleinberg2024language} to the statistical setting.
Our results in this section achieve a near-optimal rate for identification (\cref{thm:dichotomy-identification-positive}) and an optimal rate for generation (\cref{thm:statistical-generation}).

\subsubsection{Universal Rates: Model and Preliminaries}
We work under the \emph{universal rates} framework, introduced
by \citet*{bousquet2021theory}, in order to capture the notion of a learning curve for language identification and generation. 
Following the notation we used before, recall that we have a countable
set of languages $\cL = \{L_1,L_2,\ldots\}$, where each
$L \in \cL$ is also countable and $\cup_{L \in \cL} L \subseteq \cX,$
for some countable domain $\cX.$ Recall
the notion of a valid distribution proposed
by 
\citecustom{angluin1988identifying} in this setting
(\cref{def:valid-language-distribution}).
Intuitively, this condition can be thought of as the equivalent of \emph{realizability} in the classification setting. 

The learning algorithm is a sequence of 
(universally measurable and computable) functions 
$\{h_n\}_{n \in \N},$ where $n$ captures the size of the training set.
We are interested
in understanding the behavior of the \emph{error}
of the algorithm, which is defined appropriately based on the downstream task -- either
identification or generation for this paper. Given
some rate function $R\colon  \N \rightarrow [0,1]$
we say that we can achieve rate $R(n)$ for the set of language $\cL$ and the loss function $\mathrm{er}(\cdot)$ 
if there exists a learning algorithm $\{ h_n\}_{n\in \N}$
whose error
satisfies
\[
\left(\forall \text{ valid } \cP\right)
\left(
\exists C,c\right) \quadtext{such that}
\E[\mathrm{er}(h_n))]
\leq
C\cdot R(c n)\,,\quad \forall n \in \N \,.
\]
Crucially, these learning curves are distribution-specific; the constants $c,C$ depend on $\cP$ but the rate $R$ holds universally for all valid distributions.
Such learning curves are a well-studied topic in learning theory \citep{antos1996strong,schuurmans1997characterizing,bousquet2021theory,viering2022shape}.
The above gives rise to the following definition. 

\begin{definition}[Learning Rates  \citep{bousquet2021theory}]
\label{def:achievable-rates}
Given a language collection $\cL$, an error function $\mathrm{er}(\cdot)$, and a rate function $R\colon\N \rightarrow [0,1]$ satisfying $\lim_{n \to \infty}R(n)\rightarrow 0$, we say:
\begin{itemize}
    \item Rate $R$ is achievable for $\cL$ if there is an algorithm $\{{h}_n\}_{n \in \N}$
    such that for every valid distribution $\cP$, 
    there exist $c,C$ for which 
    $\E[\mathrm{er}(h_n)] \leq C\cdot R(c\cdot n), \forall n \in \N.$ 
    
    \item No rate faster than $R(n)$ is achievable for $\cL$
    if for all algorithms $\{{h}_n\}_{n \in \N}$
    there exists a valid distribution $\cP$ and $c,C$ for which $\E[\mathrm{er}(h_n)] \geq C\cdot R(c\cdot n)$, for infinitely many
    $ n \in \N.$ 
\end{itemize}
Further, we have the following.
\begin{itemize}
    \item (Optimal Rate) Rate $R$ is optimal for $\cL$ if it is achievable and no rate faster than $R$
    is achievable.

    \item (No Rate) We say that $\cL$ admits no rate  
 if for every algorithm $\{{h}_n\}_{n \in \N}$
    there exists a valid distribution $\cP$ such that    $\limsup_{n \rightarrow \infty}\E[\er(h_n)] > 0.$ 
\end{itemize}
\end{definition}
In the case of identification, 
to avoid trivial cases, we consider collections $\cL$ that contain at least two distinct languages that contain one common element.
\begin{definition}[Non-Trivial Collections of Languages for Identification]\label{def:non-trivial-collections}
    A language collection $\cL$ is non-trivial for identification if there exist two languages $L_1, L_2 \in \cL$
    such that $L_1 \neq L_2$ and 
    $\abs{L_1\cap L_2} > 0$.
\end{definition}
Notice that if the collection $\cL$ does not satisfy \Cref{def:non-trivial-collections},  then one can identify the target language $K$ immediately after observing a single element from $K$. 

In the case of generation, the ``non-triviality'' condition turns out to be more nuanced, \eg{}, compared to the case of identification above 
or binary classification \citep{bousquet2021theory}. We give an informal
definition below, and we refer to \cref{def:non-trivial-collection-generation}
for the formal one and a discussion about its necessity.

\begin{infdefinition}[Non-Trivial Collections of Languages for Generation,
see \cref{def:non-trivial-collection-generation}]
     A language collection $\cL$ is non-trivial for generation if any algorithm
     needs to see at least two examples from the target language
     to be able to generate from it.
\end{infdefinition}

\subsubsection{Universal Rates for Identification} 
For any language collection $\cL = \{L_1,L_2,\dots\}$  and $n \in \N$, with true language $K \in \cL$, and set of examples $x_1,\ldots,x_n \in \cX^n$, 
an identification algorithm $\algo{I}_n$ gets as input $x_1,\dots,x_n$ and outputs an index $\algo{I}_n(x_1,\dots,x_n)$. We define the \emph{identification error} of the learner $\{\algo{I}_n\colon \cX^n \rightarrow \N \}_{n \in \N}$ as
\begin{equation}\label{eq:error-statistical-identification-pos}
    \mathrm{er}(\algo{I}_n(x_1,\ldots,x_n)) = \ind\{L_{\algo{I}_n(x_1,\ldots,x_n)} \neq K\} \,.
\end{equation}
Under this definition, $\E_{x_1,\dots,x_n \sim \cP}[\er(\algo{I}_n)] = \Pr_{x_1,\dots,x_n \sim \cP}[L_{\algo{I}_n(x_1,\dots,x_n)} \neq K],$
\ie{}, the probability that it fails to identify the correct language after it sees $n$ examples from $\cP$.\footnote{One subtle point is that this definition allows the learner to output any index $j \in \N$ such that $L_j = K$ and there may be many such indices since we do not assume all languages in $\cL$ are distinct. {Our identification algorithms will have the property that they output the smallest index at which $K$ appears in $\cL=\inbrace{L_1, L_2,\dots}$.}}

Our main result for identification is a fundamental dichotomy: every non-trivial collection of languages is identifiable with positive examples at either an (almost) exponential rate or it is not identifiable at any rate.

\begin{theorem}[Dichotomy of Rates for Identification with Positive Examples]\label{thm:dichotomy-identification-positive}\label{mainthm:statisticalRates:iden}
    For every collection of countably many languages $\cL$ that is non-trivial for identification exactly one of the following holds:
    \begin{itemize}
        \item For every $g(n) = o(n)$ there exists a learner that identifies $\cL$ at rate $e^{-g(n)}.$ Moreover, no learner can achieve a rate faster than $e^{-n}.$
        \item $\cL$ is not identifiable at any rate.
    \end{itemize}
    Concretely, the first condition holds for $\cL$ if and only
    if it satisfies Angluin's condition (\Cref{def:angliun-criterion}).
    
\end{theorem}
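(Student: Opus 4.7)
The plan is to prove the two directions of the dichotomy separately, keyed to Angluin's condition (\Cref{def:angliun-criterion}), and then handle the $e^{-n}$ barrier via the non-triviality assumption (\Cref{def:non-trivial-collections}).

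\textbf{Upper bound (Angluin's condition $\Rightarrow$ almost-exponential rate).} Assuming $\cL$ satisfies Angluin's condition, \Cref{thm:angluin-id-limit} gives an identification algorithm $\algo{I}_\infty$ that works in the limit under any adversarial enumeration. The plan is to convert $\algo{I}_\infty$ into a statistical learner with rate $e^{-g(n)}$ for any $g(n)=o(n)$ via a batching-and-voting scheme in the spirit of \citet{bousquet2021theory}. Given $n$ \iid{} samples, partition them into $m(n)$ batches each of size $s(n)$, with $s(n)\to\infty$ slowly (so that $m(n)\cdot\log(1/\delta(s(n)))$ decays faster than $g(n)$, where $\delta(s)$ is the probability that $\algo{I}_\infty$ has not yet stabilized after seeing $s$ \iid{} positive examples); the freedom to choose $s(n)$ is precisely what costs us the $o(n)$ gap and prevents a clean $e^{-n}$ rate. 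Run $\algo{I}_\infty$ on each batch and obtain indices $\hat\jmath_1,\dots,\hat\jmath_{m(n)}$. The first technical subtlety is that different batches that have stabilized may output \emph{different} indices of the same target $K$ (since $\cL$ may list $K$ at multiple positions). To fix this, I would not vote on raw indices but rather, for each batch output $\hat\jmath_i$, cast a vote for every $L\in\cL$ equal to $L_{\hat\jmath_i}$; then output the smallest index $j^\star$ such that at least half the batches voted for $L_{j^\star}$. The second subtlety is that this voting rule is not a priori computable. Following the same idea as \cref{lem:post-processing-same-index} (cited in the overview), I would use the membership oracle to filter candidate indices $j\leq N(n)$ for a slowly growing threshold $N(n)$, testing equality $L_j=L_{\hat\jmath_i}$ approximately via the samples; this keeps the rate almost-exponential. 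By a Chernoff bound over batches, the probability that fewer than half of them stabilize to the correct language is at most $\binom{m(n)}{m(n)/2}\delta(s(n))^{m(n)/2}$, which is $e^{-g(n)}$ for the chosen schedule.

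\textbf{Lower bound (Angluin's condition fails $\Rightarrow$ no rate).} By \Cref{thm:angluin-id-limit}, $\cL$ is not identifiable in the limit in the Gold-Angluin model. Angluin's own probabilistic lower bound (1988) upgrades this to: every statistical learner $(\algo{I}_n)$ satisfies $\Pr_{X\sim\cP^\infty}[\limsup_{n}\{L_{\algo{I}_n(x_{\leq n})}\neq K\}]\geq 1/3$ for some valid $\cP$. As the overview explains, this is \emph{not} the same as a lower bound on $\limsup_n \Pr[L_{\algo{I}_n}\neq K]$, because the pointwise-to-almost-sure inequality goes the wrong way (see \eqref{eq:technicalOverview:wrongInequality}). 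The plan is a boosting-and-Borel-Cantelli contrapositive: suppose toward contradiction that some learner achieves $\Pr_{x_{\leq n}\sim\cP^n}[L_{\algo{I}_n}\neq K]\to 0$ for every valid $\cP$, or more generally has some $\limsup<1/2$. Via a batching boost analogous to the upper bound, amplify this to a new learner $\algo{I}^\star_n$ whose error decays at an almost-exponential rate, in particular $\sum_n \Pr[L_{\algo{I}^\star_n}\neq K]<\infty$. The first Borel-Cantelli lemma (\Cref{lem:first-borel-cantelli}) then gives $\Pr[\limsup_n\{L_{\algo{I}^\star_n}\neq K\}]=0$, contradicting the strengthened Angluin lower bound applied to $\algo{I}^\star$. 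Thus no learner for $\cL$ can drive $\limsup_n\Pr[\er]$ below $1/2$, and in fact any positive $\limsup$ suffices to conclude ``no rate.''

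\textbf{The $e^{-n}$ barrier.} By non-triviality (\Cref{def:non-trivial-collections}), fix distinct $L_1\neq L_2$ in $\cL$ with $|L_1\cap L_2|\geq 1$. Let $\cP_i$ be a valid distribution for $L_i$ that places mass $1-\eta$ on a common element $x^\star\in L_1\cap L_2$ and distributes the rest over $L_i$. Then the total variation between $\cP_1^n$ and $\cP_2^n$ is at most $1-(1-\eta)^n=O(n\eta)$, so for any learner the sum of its errors under $\cP_1$ and $\cP_2$ is at least $1-O(n\eta)$; choosing $\eta$ a small constant gives a two-point Le Cam style lower bound of order $e^{-\Theta(n)}$ on the better of the two errors.

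\textbf{Main obstacle.} The most delicate piece is the strengthening of Angluin's (1988) lower bound together with the boosting step: we need the boosted error to be summable (not merely small) while still being driven by a valid distribution that witnesses Angluin's ``infinitely often'' failure. Making the boosted learner computable and ensuring the batch-outputs vote for the \emph{same} language across different indexings of $K$ are the two places where I expect the proof to require real care; everything else is a careful assembly of classical tools.
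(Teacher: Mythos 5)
Your proposal is correct and follows essentially the same route as the paper: an almost-exponential upper bound via batching, majority vote, and post-processing batch outputs to a canonical (lowest) index of $K$; the no-rate direction via strengthening Angluin's 1988 bound to ``errs infinitely often with probability $\geq 1/3$,'' boosting any learner with $\limsup$ error below $1/2$ to one with summable error, and invoking the first Borel--Cantelli lemma for a contradiction; and a two-point argument for the $e^{-n}$ barrier. One minor repair: in the barrier argument the estimate $1-(1-\eta)^n=O(n\eta)$ is vacuous for fixed $\eta$ and large $n$ — the usable bound is $1-d_{\mathsf{TV}}(\cP_1^n,\cP_2^n)\geq(1-\eta)^n$ — and you still need a pigeonhole over $n$ to fix a \emph{single} distribution witnessing the $e^{-\Theta(n)}$ error for infinitely many $n$, exactly as in the paper's conditioning-plus-pigeonhole version of the same argument.
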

This dichotomy of rates differs from prior universal rates for classification where the usual theme is a trichotomy of rates \citep{bousquet2021theory,kalavasis2022multiclass,hanneke2023universal}. Moreover, while in the universal setting for binary classification, any measurable class of functions is learnable at arbitrarily slow rates, in identification, this is not the case: there exist collections of languages that do not admit a Bayes consistent learner and these are exactly the collections that do not satisfy Angluin's condition. For the full proof, we refer the reader to \cref{sec:proofof:mainthm:statisticalRates:iden}.

\subsubsection{Universal Rates for Consistent Generation} 
    The main difference between this setting and the setting of language identification is the definition of the error rate. There exists a valid text-generating distribution $\cP,$ meaning one that is supported on some target language $K \in \cL,$ and the learning (or rather, generating) algorithm is a sequence of  (universally measurable and computable) functions  $\{\generator_n\colon \cX^n \rightarrow \cX\}_{n \in \N},$ where each $\generator_n$ takes as input $n$ samples generated \iid{} from $\cP$ and outputs a new word, with the goal that this word belongs to the target language (see \cref{rem:generatorNotation}). 
    As in the online setting, to avoid trivial solutions, we want to generate examples that do not appear in the training set.

    \begin{remark}[Notation for Generating Algorithms]\label{rem:generatorNotation}
        More formally, a generating algorithm is a collection of mappings $\inbrace{\generator_n}_{n\in \N}$, where for each $n$, $\generator_n$ is a mapping from the domain of $n$ training samples $\cX^n$ to the set of ``generators'' or (randomized) Turing machines $\cG$ that, on each execution, output a sample from $\cX$.
        For this section, it is sufficient to imagine generators as
        being deterministic 
        (\ie{}, generating samples from a point mass) and, hence, we simplify writing $\generator_n$ as a mapping from $\cX^n$ to $\cX$.
        In the next section, where we study generation with breadth, to have any hope of achieving breadth, we need to consider $\generator_n$ in its full generality as a mapping from $\cX^n$ to $\cG$.
    \end{remark}

    \noindent 
    Now, we are ready to define the \textit{generation error}:
    for any $n \in \N$ and set of examples $x_1,\ldots,x_n \in \cX^n$  we define the generation error of the learner $\{\generator_n\colon \cX^n \rightarrow \cX\}_{n \in \N}$ for this task as
\begin{equation}\label{eq:error-statistical-generation-pos}
    \mathrm{er}(\generator_n(x_1,\ldots,x_n)) = \ind\{\generator_n(x_1,\ldots,x_n) \notin K \setminus \{x_1,\ldots,x_n\}\} \,.
\end{equation}
Notice that, under this definition, 
\[
    \E_{x_1,\ldots,x_n \sim \cP^n} [\er(\generator_n(x_1,\ldots,x_n))] = \Pr_{x_1,\ldots,x_n \sim \cP^n}[\generator_n(x_1,\ldots,x_n) \notin K \setminus \{x_1,\ldots,x_n\}] \,,   
    \]
\ie{}, the probability that the learner fails to generate a new
word from the target language after observing $n$ examples from it. Our main result in this section is that we can achieve
consistent generation with exponential rates. %

\begin{theorem}[Rates for Generation]\label{thm:statistical-generation}
    For every countable collection of languages  $\cL$ there exists a generating algorithm that generates from $\cL$ at rate $e^{-n}.$ Conversely, for every  collection of languages that is non-trivial for generation (\Cref{def:non-trivial-collection-generation}), no generating algorithm can achieve rate faster than $e^{-n}.$
\label{mainthm:statisticalRates:gen}
\end{theorem}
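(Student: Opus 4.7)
The plan is to show that the KM algorithm, applied without modification in the statistical setting, achieves rate $e^{-n}$ for every countable $\cL$. Fix a target $K \in \cL$ and let $z$ be the smallest index with $L_z = K$. Recall that KM maintains a list of \emph{critical} languages in $\cC_n = \{L_1,\dots,L_n\}$, generates from the last one, and has the property that once $L_z$ becomes critical it stays critical, with every subsequent critical language a subset of $K$. The key observation, encapsulating the ``sufficient condition'' of \cref{lem:generation-from-online-to-statistical-property}, is that the event ``$L_z$ is critical at step $n$'' is triggered by a finite, enumeration-independent \emph{witness set}: for each $i < z$ with $L_z \not\subseteq L_i$, pick any $w_i \in L_z \setminus L_i$, and set $F_K \coloneqq \{w_i : i < z,\, L_z \not\subseteq L_i\}$. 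Then $F_K \subseteq K$, $|F_K| \le z-1$, and whenever $F_K \subseteq S_n$ every offending $L_i$ (with $i<z$, $L_z \not\subseteq L_i$) is ruled inconsistent, so $L_z$ is critical and $\generator_n$ outputs inside $K \setminus S_n$.

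\textbf{Bounding the failure probability.} Under iid samples from any valid $\cP$, each $w \in F_K \subseteq \supp(\cP)$ has $\cP(w) > 0$, so by a union bound
\[
    \E[\er(\generator_n)] \le \Pr[F_K \not\subseteq S_n] \le \sum_{w \in F_K}(1 - \cP(w))^n \le |F_K|\cdot e^{-cn},
\]
with $c = -\log\bigl(1 - \min_{w \in F_K}\cP(w)\bigr) > 0$. Since $|F_K|$ and $c$ are distribution-dependent constants, this matches the form of rate $e^{-n}$ in \cref{def:achievable-rates}.

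\textbf{Matching lower bound via sample concentration.} For the optimality direction, fix any algorithm $(\generator_n)$ and consider the induced ``single-example'' maps $\tilde\generator_n(x) \coloneqq \generator_n(x,x,\dots,x)$. The formal non-triviality condition (\cref{def:non-trivial-collection-generation}) is precisely tailored to deliver a valid distribution $\cP^\star$ on some target $K^\star$ and an element $x_0 \in K^\star$ such that, for infinitely many $n$, $\tilde\generator_n(x_0) \notin K^\star \setminus \{x_0\}$. Now choose a valid $\cP$ for $K^\star$ with $\cP(x_0) = 1-\lambda$ for a fixed $\lambda \in (0,1)$ and the remaining $\lambda$ mass spread over $K^\star \setminus \{x_0\}$ (preserving $\supp(\cP) = K^\star$). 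The event ``all $n$ samples equal $x_0$'' has probability $(1-\lambda)^n$, and on this event $\generator_n$ outputs $\tilde\generator_n(x_0) \notin K^\star \setminus S_n$. Hence $\E[\er(\generator_n)] \ge (1-\lambda)^n = e^{-c' n}$ with $c' = -\log(1-\lambda) > 0$ for infinitely many $n$, so no rate faster than $e^{-n}$ is achievable.

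\textbf{Anticipated obstacle.} The subtle part of the upper bound is isolating the witness set $F_K$ and arguing that its triggering property is preserved when the sample sequence is a random (and possibly heavily repeating) multiset rather than an adversarial stream; this reduces to verifying the monotonicity of KM's critical-language pointer under iid data. The lower bound is conceptually simple once the formal non-triviality condition is unpacked, and the main care is ensuring that the hard distribution can be chosen uniformly across infinitely many values of $n$, rather than algorithm-by-algorithm for each individual $n$.
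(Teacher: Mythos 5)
Your upper bound is essentially the paper's argument: the paper isolates exactly your ``finite witness set'' idea as a sufficient condition (\cref{lem:generation-from-online-to-statistical-property}) and then verifies it for the Kleinberg--Mullainathan algorithm, so that part is fine. Two small caveats: (i) your bound $\E[\er(\generator_n)]\le\Pr[F_K\not\subseteq S_n]$ needs the additional requirement $n\ge z$ so that $L_z$ is actually among the candidate languages (finitely many exceptional $n$, absorbed into the constants, but it should be said); and (ii) the critical-language description you rely on is the subset-oracle variant of the algorithm, whereas the paper's standing assumption is membership-oracle access only, so one must also check the witness-set property for the $m$-critical (membership-oracle) variant, as the paper does in \cref{lem:km24-satisfies-condition-exp-rates}.

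The lower bound, however, has a genuine gap. Non-triviality (\cref{def:non-trivial-collection-generation}) gives an element $x$ and a \emph{finite} sub-collection $\cL'$ with $x\in I\coloneqq\cap_{L\in\cL'}L$ and $\abs{I}=B<\infty$, but $B$ may exceed $1$. Your hard event is ``all $n$ samples equal $x_0$,'' on which the output is $\tilde\generator_n(x_0)$. If the algorithm outputs some $x'\in I\setminus\inbrace{x_0}$, then $x'$ is unseen and belongs to \emph{every} $L\in\cL'$, so the algorithm generates correctly no matter which $L\in\cL'$ is the target; an algorithm that does this for all large $n$ defeats your construction, and so the pair $(K^\star,x_0)$ you assert the condition ``delivers'' need not exist. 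The pigeonhole you gesture at only covers the cases where the output is $x_0$ itself or lies outside $I$. The paper's fix is precisely to condition on the richer event that the training multiset equals the \emph{entire} intersection $\inbrace{x_{\ell_1},\dots,x_{\ell_B}}$ (padded with repetitions), which still has probability $\ge Ce^{-cn}$ under each of the $\abs{\cL'}$ candidate distributions (constructed to agree on $I$); then any output in $I$ is a forbidden repetition and any output outside $I$ misses some $L\in\cL'$, after which a pigeonhole over the algorithm's output distributions $p_n$ (note: the generator may be randomized, so you must argue about the mass $p_n$ places on these two bad sets, not about a deterministic output) selects a single hard $\cP_L$ that works for infinitely many $n$. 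Your ``anticipated obstacle'' correctly identifies that the hard distribution must be fixed across infinitely many $n$, but the sketch as written does not resolve it and fails whenever $B\ge 2$.
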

Surprisingly, this shows that consistent generation can be achieved at an exponential rate for \textit{any} countable collection of languages. We mention that the result we prove is slightly stronger:
    we show that, for any $\cL$, with probability at least $1-C\cdot e^{-c \cdot n}$, we can generate \emph{infinitely} many new strings from $K$, after training the algorithm on $n$ examples -- not just a single word.
    Together, \cref{mainthm:statisticalRates:iden,thm:statistical-generation} show that the stark separation between language identification and generation in the online setting, obtained by \citecustom{kleinberg2024language}, also extends to the statistical setting of \citecustom{angluin1988identifying} and \citet{bousquet2021theory}.
    The proof of \cref{mainthm:statisticalRates:gen} appears in \cref{sec:proofof:mainthm:statisticalRates:gen}; see \cref{fig:outline:thm:statistical-generation} for an outline of the proof.

    \subsection{Results for Generation with Breadth}\label{sec:results:genBreadth}
        Next, we present our results for language generation with breadth.
        Clearly, generation with breadth is a stronger requirement than generation.
        But, at least intuitively, it is weaker than identification: it only requires one to generate samples from the entire support of $K$ and not identify the index of $K$.
        Contrary to this intuition, our results show that, for a large class of generators, generation with breadth is as hard as identification.
        Our results show that,
            while this class of generators is powerful enough to generate without breadth, 
            no generator in this class can achieve generation with breadth for non-identifiable collections of languages.

        \subsubsection{Membership Oracle Problem}
        The family of generators we consider is implicitly determined by the decidability of a certain problem associated with the generator.
        \defMOP*
        \noindent As mentioned before the decidability of problems is extensively studied in formal languages and complexity theory \citep{sipser2012introduction}.
        Our main result (\cref{infthm:breadth:mop} whose formal statement appears as \cref{mainthm:gen:mop}) applies to any generator $\generator$ for which $\mop{}(\generator)$ is decidable.
        Note that our result only needs a decider of $\mop{}(\generator)$ to \textit{exist} --  this is purely a property of the generation algorithm used -- and it does \textit{not}, for instance, require the individuals training the generator or the users to have access to the decider in any fashion.  
        
        To gain some intuition about the membership oracle problem, let us consider a simple example.
        \begin{example}[Standard Next-Token Predictor]\label{ex:decidableMOP}
            Let $\generator{}_{\rm next\text{-}token}$ be a text generator or language model that generates text token-by-token: at each step $t$, it generates certain scores $\inbrace{p_{t}(\sigma)\colon \sigma\in \Sigma}$ and outputs token $\sigma$ with probability $\propto p_{t}(\sigma)$.
            It is not important how these scores are generated.
            They can be generated in various ways.
            For instance, they can be the logit-scores of transformer-based models. 
            They could also be generated by thresholding logit-scores in any complicated but computable way --  such as, by using beam search, top-$K$, or top-$p$ sampling \citep{Holtzman2020The}.
            $\mop{}(\generator{}_{\rm next\text{-}token})$ is decidable and, in fact, there is a simple decider:
            given a string $w$ of length $n$, it computes the scores for the first $n$ iterations; where in the $t$-th iteration ($t>1$), it conditions on the event that $\generator{}_{\rm next\text{-}token}$ has generated the string $w_1w_2\dots w_{t-1}$ so far.
            Then it computes the following function and outputs the result
            \[
                \begin{cases}
                    \textsf{Yes} & \text{if } \prod_{t=1}^n p_{t}(w_t) > 0\,,\\
                    \textsf{No} & \text{otherwise}\,.
                \end{cases}
            \]
            We stress that our main result only needs the existence of such a decider, and does not require the individuals training the generator or the users to have any access to it.
        \end{example}

    \subsubsection{Results for Generators for which \texorpdfstring{$\mop(\cdot)$}{MOP} Is Decidable}
        Before stating our result about the rate at which generation with breadth can be achieved, we need to define the corresponding error function.
        For the error to make sense, let $\cG$ be the set of (randomized) Turing machines that do not take any input and output one element from $\cX$ (on each execution).
        Given a target language $K$ and examples $x_1,\dots,x_n \in \cX$, we define the {\emph{error for generation with breadth}} for the learner $\{\generator_n \colon \cX^n \to \cG\}_{n \in \N}$
        as 
        \[
            \mathrm{er}(\generator_n(x_1,\dots,x_n))
            =
            \ind\{\supp(\generator_n(x_1,\dots,x_n)) \neq K \setminus \{x_1,\dots,x_n\}\}\,,
        \]
        where $\supp(\generator_n(x_1,\dots,x_n))$ is the set of strings $\generator_n(x_1,\dots,x_n)$ can output with positive probability, \ie{}, it is the support of the distribution of outputs of $\generator_n(x_1,\dots,x_n)$.
        The above means that we count each step $t$ as a mistake if the generating algorithm has a positive probability of outputting a string outside of $K$ (\ie{}, hallucination), a zero probability of outputting an unseen element of $K$ (\ie{}, mode collapse), or a positive probability of repeating a seen training example.
        
        \begin{remark}[Generating Examples From the Training Set]\label{rem:generatingFromTrainingSet}
            For generation without breadth, it is important to restrict the generator from outputting elements it has already seen.
            Otherwise, the futile generator, which always outputs the first training sample it sees, achieves generation without breadth.
            This requirement, however, is not important for generation with breadth:
                any generator $\generator$ that generates with breadth without repeating training examples can be converted to one $\generator'$ that generates with breadth and repeats the training examples and vice versa.\footnote{For instance, $\generator'$ can run $\generator$ with probability $\nfrac{1}{2}$ and with the remaining $\nfrac{1}{2}$ probability output a training sample selected uniformly at random.
                Given $\generator'$, $\generator$ can be implemented by rejection sampling as follows:
                    repeatedly execute $\generator'$ until it generates an unseen element $x$ and output $x$.
                }
                Hence, all of our results hold with either notion of generation with breadth.
        \end{remark}

        Our main result shows a separation between the rates achievable for generation with and without breadth by any generating algorithm for which $\mop{}(\cdot)$ is decidable.

        \begin{theorem}\label{mainthm:gen:mop}
                Let $\mathfrak{G}$ be the set of all generating algorithms $(\generator_n)$ for which $\mop{}(\cdot)$  is decidable (\cref{def:mop,def:mopAlgo}).
                For every collection of countably many languages $\cL$ that is non-trivial for generation  (\cref{def:non-trivial-collection-generation}) and not identifiable in the limit:
                \begin{itemize}
                    \item No generating  algorithm in $ \mathfrak{G}$  generates with breadth from $\cL$ at any rate; and  
                    \item There is a generating algorithm in $\mathfrak{G}$ that generates consistently without breadth from $\cL$ at rate $e^{-n}.$ 
                    Conversely, no generating algorithm {(even outside of $\mathfrak{G}$)} can generate
                    at a rate faster than $e^{-n}.$
                \end{itemize}
                Further, for any collection of countably many languages $\cL$ that is non-trivial for generation (\cref{def:non-trivial-collection-generation})  and \emph{identifiable} in the limit, and for any $g(n) = o(n),$ there is a generating algorithm in $ \mathfrak{G}$ that generates with breadth from $\cL$ at rate $e^{-g(n)}$. 
                 Conversely, no generation algorithm can generate consistently at a rate faster than $e^{-n},$ even without the breadth requirement.
        \end{theorem}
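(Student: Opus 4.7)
The plan is to prove the theorem by splitting into the two regimes (non-identifiable and identifiable $\cL$) and leveraging the universal-rate results for identification (\cref{mainthm:statisticalRates:iden}) and for generation without breadth (\cref{mainthm:statisticalRates:gen}) that have already been established. The common thread is that decidability of $\mop(\cdot)$ is exactly the algorithmic handle needed to convert between an (approximate) generator-support and (approximate) language-identification, so the separation between the two regimes will be driven entirely by whether identification is possible.

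For the non-identifiable case, the first bullet is the main content and I would prove it by contradiction. Suppose $(\generator_n)\in\mathfrak{G}$ generates with breadth from $\cL$ at some rate $R(n)\downarrow 0$; I will construct an identifier $\algo{I}_n$ for $\cL$ whose error also tends to zero, contradicting the ``no rate'' half of \cref{thm:dichotomy-identification-positive}. On input $S_n=(x_1,\dots,x_n)$, $\algo{I}_n$ first computes the (description of the) generator $\generator_n(S_n)$, which it may then query through the $\mop$ decider. Using the tell-tale enumerators supplied by Angluin's condition on identifiable sub-collections (or, in general, an enumeration of the domain), $\algo{I}_n$ outputs the smallest index $i$ such that every string queried from the enumeration of $L_i$ is either in $S_n$ or accepted by $\mop(\generator_n(S_n))$, and every queried positive sample is in $L_i$. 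Because on the ``good event'' (of probability $\geq 1-CR(cn)$) we have $\supp(\generator_n(S_n))\cup S_n = K$, this test eventually selects the target's index. The key care needed here is the same $\limsup$-of-events versus $\limsup$-of-probabilities issue identified in the technical overview: converting ``$R(n)\downarrow 0$'' for breadth into ``$R'(n)\downarrow 0$'' for identification requires either a Borel--Cantelli-style boosting or a direct union bound on a slowly growing finite check-set, following the same template as \cref{lem:no-rate-identification-positive}. The remaining two sub-claims of the non-identifiable case are immediate: the positive statement at rate $e^{-n}$ is \cref{thm:statistical-generation} (one verifies that the generator constructed there, which at each step commits to a single critical language and samples from its enumeration, lies in $\mathfrak{G}$ because the membership in a fixed enumerated language is decidable), and the converse $e^{-n}$ lower bound is the second half of \cref{thm:statistical-generation}.

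For the identifiable case, the construction is direct. Fix any $g(n)=o(n)$. By \cref{thm:dichotomy-identification-positive} there is an identifier $(\algo{I}_n)$ that identifies $\cL$ at rate $e^{-g(n)}$. Define $\generator_n(S_n)$ to be the generator that first computes $i=\algo{I}_n(S_n)$ and then samples from any fixed, full-support distribution on $L_i\setminus S_n$, which is implementable because $\cL$ comes with a membership oracle on each $L_i$ and each $L_i$ is countable (enumerate $L_i$, discard strings in $S_n$, and use, e.g., a geometric distribution over the filtered enumeration). On the event $\{L_{\algo{I}_n(S_n)}=K\}$ the support of $\generator_n(S_n)$ is exactly $K\setminus S_n$, so the breadth error is bounded by the identification error, yielding rate $e^{-g(n)}$. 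Decidability of $\mop(\generator_n(S_n))$ follows because membership reduces to a single query $x\in L_i$ (via the language-membership oracle) conjoined with the finite check $x\notin S_n$. The $e^{-n}$ converse is identical to the one from \cref{thm:statistical-generation}, since any consistent generator is in particular a generator-without-breadth.

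The main obstacle will be the reduction in the non-identifiable direction: turning an \emph{arbitrary} rate $R(n)\downarrow 0$ for breadth into an identifier whose expected error also vanishes, without first boosting $R$. A clean route is to combine the $\mop$-based consistency test above with the boosting lemma already used for identification (\cref{lem:no-rate-identification-positive}): if the induced identifier has error bounded away from $1/2$ on the good event, we can boost to summable error and then invoke Borel--Cantelli to derive the desired limit, contradicting the $\limsup$ lower bound inherited from Angluin's theorem. Care must be taken to ensure the reduction is uniform in $\cP$ (the constants $C,c$ may depend on $\cP$ but the shape $R$ must not), and to handle the fact that the ``correct index'' of $K$ may not be unique in $\cL$ — this is the same post-processing subtlety (\cref{lem:post-processing-same-index}) resolved earlier and can be imported verbatim.
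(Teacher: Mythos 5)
Your proposal is correct and follows essentially the same architecture as the paper: reduce breadth-with-rate to identification-with-rate via the $\mop$ decider, cite \cref{thm:statistical-generation} and \cref{lem:exp-rates-lower-bound-gen} for the without-breadth bullets, and compose an identifier with a full-support sampler (\cref{prop:index-to-sampler}) for the identifiable case. The one place you diverge is the step you flag as the ``main obstacle,'' and there you are over-engineering: no inline boosting or Borel--Cantelli argument is needed in this proof. In \cref{lem:mainthm:gen:mop-lower-bound} the reduction is entirely direct --- on the good event $\supp(\generator_n)\cup S_n=K$, the identifier that returns the smallest $j\in[n]$ with $\ind\{x_i\in L_j\}=\ind\{x_i\in\supp(\generator_n)\cup S_n\}$ for all $i\in[n]$ outputs the index $z$ deterministically once $n$ exceeds a distribution-dependent threshold $n^*$ (the largest index of a domain element witnessing $L_j\neq L_z$ for $j<z$), so $\Pr[L_{I_n}\neq K]\leq C R(cn)$ for $n\geq n^*$. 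That already exhibits identification at rate $R\downarrow 0$, which contradicts the \emph{no rate} half of \cref{thm:dichotomy-identification-positive} as a black box; the $\limsup$-versus-$\limsup$ subtlety, the boosting, and Borel--Cantelli all live inside the proof of that dichotomy (\cref{lem:no-rate-identification-positive}) and do not need to be re-run here. Your plan to re-derive them inline would still yield a valid proof, just a redundant one. A second, minor point: your consistency test checks only $L_i\subseteq\supp(\generator_n)\cup S_n$ and $S_n\subseteq L_i$, so proper subsets of $K$ preceding $L_z$ are eliminated only probabilistically (via witnesses landing in $S_n$); the paper's two-sided indicator check against $\supp(\generator_n)\cup S_n$ on the first $n$ domain elements eliminates them deterministically on the good event, which keeps the error bound exactly $CR(cn)$ rather than $CR(cn)$ plus an extra exponentially small term. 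Both versions suffice for the contradiction.
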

            Thus, while generation without breadth is achievable for any countable collection of languages (whether it is identifiable or non-identifiable), generators in $\mathfrak{G}$ can only generate with breadth from identifiable collections -- which are a very restricted subset of all languages \citep{gold1967language,angluin1980inductive,kleinberg2024language}.
            It remains to discuss which types of generators $\mop(\cdot)$ is decidable for, and we present a large family in the next section.
            Meanwhile, due to \cref{ex:decidableMOP}, it is already clear that  \cref{prop:mop_decidable} applies to present-day LLMs.
            The proof of this result appears in \Cref{sec:gen:mop}; see \cref{fig:outline:mainthm:gen:mop} for an outline of the proof.
           
            Our negative result leaves several interesting questions open which we already discussed in \cref{sec:takeawaysOpenQuestions}.

    \subsubsection{A Family of Generators for which \texorpdfstring{$\mop(\cdot)$}{MOP} Is Decidable}
        \cref{ex:decidableMOP} already shows that $\mop{}(\cdot)$ is decidable for many existing language models.
        Next, we show that $\mop{}(\cdot)$ is decidable under even fewer restrictions on the generator $\generator$ -- informally, we will allow for \textit{any} generator which generates text token-by-token.
        \begin{definition}[Token-by-Token Generators]\label{def:tokenByToken}
            Token-by-token generators $\generator$ are parameterized by randomized Turing machines $M$.
            $M$ can be randomized and halts on all inputs.
            Given $M$, the corresponding token-by-token generator $\generator_M$ generates outputs as follows: for each $t\in \N$, 
            \begin{enumerate}[itemsep=0pt]
                \item Let $w_1w_2\dots w_{t-1}$ be the tokens generated so far.
                \item Let $A_{t}$ be any auxiliary information generated so far\textit{, where $A_1$ is the empty string}.
                \item Generate $(s_t,A_{t+1})$ by running $M$ with input $w_1w_2\dots w_{t-1}$ and $A_t$.
                \item \mbox{If $s_t=\eos{}$ (\ie{}, end of string), then output $s_1\dots s_t$ and halt; otherwise proceed to iteration $t+1$}.
            \end{enumerate}
        \end{definition}
        Note that token-by-token generators are a very powerful class: for instance, any distribution over $\Sigma^*$ for some finite alphabet $\Sigma$ admits a token-by-token generator by the Bayes rule.
        That said, of course, one can also construct non-token-by-token generators.
        
        We show that $\mop{}(\generator)$ is decidable for all token-by-token generators.

        \begin{restatable}[]{theorem}{thmMOPDecidable}\label{prop:mop_decidable}
            For any token-by-token generator $\generator$, $\mop{}(\generator)$ is decidable.
        \end{restatable}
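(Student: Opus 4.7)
The plan is to exhibit, for any token-by-token generator $\generator_M$, a decider for $\mop(\generator_M)$ that inductively computes the set of ``reachable'' auxiliary-information values at each token position. The main leverage comes from the hypothesis that the randomized machine $M$ halts on every input: for each fixed pair $(y,A)$, the computation tree of $M$ on $(y,A)$ is binary (branching on coin flips) and has no infinite path, so by König's lemma it is finite and has some depth $B(y,A) < \infty$. I will make $B(y,A)$ effectively computable by iterating over $k=1,2,\ldots$, simulating $M$ on $(y,A)$ for $k$ steps under every length-$k$ coin sequence, and stopping at the smallest $k$ at which all such simulations halt. Consequently, the finite set
\[
\mathrm{Out}(y,A) \;\coloneqq\; \{(s,A') : \Pr[M(y,A)\text{ outputs }(s,A')]>0\}
\]
is computable from $(y,A)$ by collecting the outputs produced across these halting coin sequences.

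Given a query string $x = x_1 \cdots x_n$, the decider will perform a forward sweep that tracks, inductively in $t$, the finite set $S_t$ of auxiliary-information values $A_{t+1}$ that arise at the start of step $t+1$ along \emph{some} positive-probability trajectory of $\generator_M$ whose first $t$ emitted tokens are exactly $x_1,\ldots,x_t$. Concretely, it sets $S_0 = \{\epsilon\}$ and, for each $t=1,\ldots,n$, computes
\[
S_t \;=\; \{A' : \exists\, A \in S_{t-1} \text{ with } (x_t, A') \in \mathrm{Out}(x_1\cdots x_{t-1},A)\}\,.
\]
After $n$ steps, it outputs $\textsf{Yes}$ iff there exist $A \in S_n$ and $A'$ with $(\eos, A') \in \mathrm{Out}(x_1 \cdots x_n, A)$, and $\textsf{No}$ otherwise. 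Correctness and termination both reduce to a single induction on $t$: assuming $S_{t-1}$ is finite and captures exactly the reachable auxiliaries after the prefix $x_1\cdots x_{t-1}$, the finiteness of $\mathrm{Out}(\cdot,\cdot)$ forces $S_t$ to be finite and computable from $S_{t-1}$, and the definition of $\generator_M$ makes $S_t$ exactly the reachable set after $x_1\cdots x_t$. The final check then captures precisely the condition that some trajectory emits $x_1\cdots x_n$ and then terminates via $\eos$, i.e.\ that $x \in \supp(\generator_M)$.

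The main obstacle I anticipate is the potential unboundedness of the auxiliary state: nothing in \cref{def:tokenByToken} prevents $|A_t|$ from growing with $t$, so a priori the set of reachable configurations could explode. The halting-on-all-inputs assumption is precisely what tames this, because it guarantees that from each fixed $(y,A)$ only finitely many $(s,A')$ outputs occur with positive probability; combined with the induction, this keeps each $S_t$ finite throughout the sweep. The one delicate point is the compactness argument that makes $B(y,A)$ computable (and hence $\mathrm{Out}(y,A)$ enumerable), which is where the distinction between ``halts on every coin sequence'' and ``halts only with probability one'' matters---the former suffices for my argument, which I take to be the intended reading of the definition.
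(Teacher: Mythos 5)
Your proposal is correct and takes essentially the same route as the paper: a token-by-token forward check whose engine is the observation that a randomized machine halting on \emph{every} coin sequence reads a computably bounded number of random bits on each fixed input (your K\H{o}nig's-lemma computation of $\mathrm{Out}(y,A)$ is exactly the content of the paper's \cref{lem:tm:consecutiveReads,lem:tm:finiteBitsRead}), so all branches can be exhaustively simulated. The only substantive refinement is your explicit reachable-set bookkeeping $S_t$ for the auxiliary information, which makes precise the conditioning step that the paper's proof of \cref{lem:mopDecidableForHaltingMachines} treats implicitly.
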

        Next, we demonstrate that token-by-token generators capture several interesting language models. 
        First, the family of token-by-token generators captures existing large language models (LLMs):
                    for instance, to simulate an LLM $L$, we define the next token predictor $M$ as a Turing machine that simulates $L$ on the provided string until $L$ generates one new token.
             Further, since we do not place computational restrictions on $M$, $M$ can also simulate interactions between LLMs or auxiliary systems that select a suitable LLM to respond depending on the request--a strategy that has led to recent advances in text generation \citep{schick2023toolformer,mosaic2024DRBX,jiang2024mixtralexperts,willknight2024DBRX}. 
             Finally, due to a reduction to the halting problem, there are some generators for which $\mop{}(\cdot)$ is undecidable and give an explicit example in \cref{sec:undeciable}.

    \begin{remark}[Noisy Membership Oracle]
        A supposedly weaker requirement than the decidability of $\mop{}(\cdot)$ is the existence of a \textit{noisy oracle} that, given a string $x$, correctly (and in finite time) decides the membership of $x$ into $\supp(\generator)$ with a probability at least $\nfrac{2}{3}$.
        However, due to the folklore result that $\textsf{BPP}\subseteq \textsf{EXP}$ \citep{arora2009computational}, a noisy oracle is equivalent to the decidability of $\mop{}(\cdot)$. %
    \end{remark}

        \subsubsection{Results for Generation with Breadth in the Limit}
            In this section, we state the implications of our techniques for generation with breadth in the adversarial or online setting of \citecustom{gold1967language} and \citecustom{angluin1979finding, angluin1980inductive}.
            \begin{restatable}[]{theorem}{thmImpossibleGenerationBreadth}\label{mainthm:gen:limit}
                
                For every non-identifiable collection of countably many languages $\cL$, no generating algorithm, for which $\mop{}(\cdot)$ (\cref{def:mop,def:mopAlgo}) is decidable, can generate with breadth from $\cL$ in the limit. %
                If $\cL$ is identifiable, then there is a generator $\generator$ (for which $\mop{}(\generator{})$ is decidable) that generates with breadth from $\cL$.
            \end{restatable}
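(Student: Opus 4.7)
The plan is to prove the two directions separately, and I expect the hard direction to be the impossibility half. For that, I will argue by contradiction: assume that $(\generator_n)$ is a generating algorithm with decidable $\mop(\cdot)$ that generates with breadth in the limit from some non-identifiable collection $\cL$, and use $(\generator_n)$ to construct an identifier $(\algo{I}_n)$ that identifies $\cL$ in the limit from positive examples, contradicting the assumption of non-identifiability. The leverage comes from the observation that, by breadth and consistency in the limit, for any enumeration $x_1,x_2,\ldots$ of the target $K$ there is some $n^\star$ such that $\supp(\generator_n)\cup S_n = K$ for all $n\geq n^\star$, where $S_n=\{x_1,\ldots,x_n\}$. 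Thus, $K$ is encoded by the pair (generator, training set), and decidability of $\mop(\generator_n)$ together with the language membership oracle assumed for $\cL$ gives us a way to probe this encoding.

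Concretely, fix an enumeration $y_1,y_2,\ldots$ of $\cX$. At time $n$, let the identifier pick a window $N(n) = n$ and, for each candidate index $i\leq n$, run the truncated equality test: (i) every $y_j$ with $j\leq N(n)$ that lies in $L_i$ must also lie in $\supp(\generator_n)\cup S_n$ (checked via the language oracle for $L_i$ together with $\mop(\generator_n)$ and the explicit $S_n$), and (ii) every $y_j$ with $j\leq N(n)$ that lies in $\supp(\generator_n)\cup S_n$ must also lie in $L_i$. Output the smallest index $i\leq n$ for which both checks pass (and an arbitrary default if none do). Let $i^\star := \min\{i : L_i = K\}$. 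For indices $i < i^\star$ we have $L_i \neq K$, so there is a finite witness $y_{j(i)}$ of inequality between $L_i$ and $K$; once $n\geq n^\star$ and $N(n)$ exceeds $\max_{i<i^\star} j(i)$, the test fails for every $i<i^\star$ and passes for $i^\star$ (since $L_{i^\star} = K = \supp(\generator_n)\cup S_n$ holds set-theoretically, so no truncation can produce a discrepancy). Hence $\algo{I}_n = i^\star$ eventually, which is identification in the limit, contradicting the hypothesis.

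For the positive direction, suppose $\cL$ is identifiable in the limit via an identifier $\algo{I}$. Build $\generator_n$ as follows: on input $S_n$, compute $i_n := \algo{I}(S_n)$, and define the generator to sample from a fixed full-support distribution $\mu$ over $\cX$ conditioned (by rejection sampling, using the language membership oracle) on landing in $L_{i_n}\setminus S_n$. Then $\supp(\generator_n) = L_{i_n}\setminus S_n$, and since $i_n$ eventually equals some $i^\star$ with $L_{i^\star}=K$, breadth in the limit follows (with $K\setminus S_n$ nonempty because $K$ is infinite). Decidability of $\mop(\generator_n)$ is immediate: to test $x\in\supp(\generator_n)$ we check $x\in L_{i_n}$ via the language oracle and $x\notin S_n$ by a finite comparison.

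The main technical point, which I expect to be the trickiest, is the correctness of the finite-window test in the impossibility direction. Clause (i) of the test could \emph{falsely pass} for a wrong index $i<i^\star$ if the window $N(n)$ is too small to expose a discrepancy between $L_i$ and $K$; the remedy is to let $N(n)\to\infty$ so that each of the finitely many indices below $i^\star$ is eventually ruled out by its finite witness, while ensuring the test never falsely \emph{fails} for $i^\star$ -- this latter point is clean because once $n\geq n^\star$ the equality $L_{i^\star} = \supp(\generator_n)\cup S_n$ holds on the nose and no truncation can create a spurious mismatch. The remaining care is routine: wrapping the partial test inside the identifier in a computable way and noting that the construction makes no use of any property of $\generator_n$ beyond the decidability of $\mop$ and the limiting behavior of its support, so the argument applies uniformly to all $(\generator_n)\in\mathfrak{G}$.
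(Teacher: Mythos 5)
Your proposal is correct and takes essentially the same route as the paper: both directions reduce the impossibility half to identification in the limit by using the $\mop$ decider to label the domain according to $\supp(\generator_n)\cup S_n$, and the positive half to rejection sampling from the identified language. The only cosmetic difference is that you inline Gold's positive-and-negative-examples identifier as a ``smallest index consistent on a growing window'' rule rather than invoking it as a black box, which is precisely the form the paper itself uses in the statistical analogue (\cref{lem:mainthm:gen:mop-lower-bound}).
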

                This result makes important progress on a question left open by \citecustom{kleinberg2024language} for a fairly large family of generators, which includes all iterative generators due to \cref{prop:mop_decidable}. 
                In particular, $\mop{}(\cdot)$ is decidable for the generation algorithm of \citecustom{kleinberg2024language} (since it is deterministic and the unique element it outputs can be computed by executing the algorithm) and, hence, the above result shows that \citecustom{kleinberg2024language}'s algorithm cannot generate with breadth in the limit from any non-identifiable collection.
                Further, in \cref{sec:ApproxConsBreadth} we strengthen this result by showing that even a relaxed notion of generation with breadth remains unreachable for a large class of generators.  
                The proof of this result can be found in \Cref{sec:proofof:mainmainthm:gen:limit}.

    \subsection{Results for {Relaxations of Consistent} Generation with Breadth} %
    \label{sec:ApproxConsBreadth}\label{sec:results:genBreadth:robust}
        In this section, we study {two relaxations} of generation with breadth, which we call unambiguous generation {and approximate breadth}, and ask: \textit{Is there a generator that unambiguously generates from a non-identifiable collection? {Can generators generate with approximate breadth from non-identifiable collections?}}

        We recall that, in this section, we will allow the generator to repeat examples in the training data. We make this choice to simplify the notation. Like all of our results with breadth, this choice is not crucial, and all of the results have analogs where the generator does not repeat training examples (\cref{rem:generatingFromTrainingSet}). 

        \subsubsection{{Relaxation 1: Unambiguous Generation}}
        We refer the reader to \cref{sec:intro:relaxation} for a discussion and motivation of the definition for unambiguous generation, which we restate below.
        \defUnambiguousGenIntro* 
        \noindent This notion is a significant relaxation of generation with breadth that we considered so far (see \cref{sec:results:genBreadth}):
        Not only does it allow the generator to hallucinate certain strings not in the target $K$ and omit strings actually in $K$ for arbitrarily long, the number of hallucinations and omissions can be very large and, depending on the structure of the language collection $\cL$, even arbitrarily large.
        
        Surprisingly, even this very weak notion of ``generation with breadth'' turns out to be unachievable by a very large family of generators.
        Concretely, it is unachievable by any generator for which $\mop{}(\cdot)$ is decidable and that satisfies the natural property that it stabilizes after a finite time. We state the formal notion of stability below.
        \defStabilityIntro*
        \noindent Before turning to our formal result, we need to construct the error function that defines unambiguous generation, and we use the natural choice:
        for a language $K$ and examples $x_{i_1},\dots,x_{i_n} \in \cX$, we denote  and we define the {\emph{error for unambiguous generation}} for the generating algorithm $\{\generator_n \colon \cX^n \to \cG\}_{n \in \N}$ 
        on input $S_n = \{x_{i_1},\dots,x_{i_n}\}$ as\footnote{Recall that $\cG$ is the set of (randomized) Turing machines that do not take any input and output one element from $\cX$ (on each execution).}
        \begin{equation}\label{eq:unambiguous-error}
            \mathrm{er}(\generator_n(S_n))
                =
                \ind\inbrace{
                    \abs{\supp(\generator_n(S_n)) \triangle {K}}
                    < 
                    \min_{L\in \cL\colon L\neq K}
                    \abs{\supp(\generator_n(S_n))  \triangle {L}} \nonumber
                }\,,
        \end{equation} 
            where $\supp(\generator_n(S_n))$ is the set of strings $\generator_n(S_n)$ can output with positive probability.
            Similar to the case of identification and generation, we say that
            an algorithm achieves unambiguous generation for a collection $\cL$ at some rate $R,$ where
            $R\colon \N \rightarrow \R_{\geq 0}, R\downarrow 0,$ if for any valid
            distribution $\cP$ with respect to $\cL$ there are $c,C > 0$
            so that $\E_{x_{i_1},\ldots,x_{i_n} \sim \cP^n} \insquare{\mathrm{er}(\generator_n(x_{i_1},\dots,x_{i_n}))} \leq C\cdot R(c\cdot n)$. The following result shows that this notion of 
            generation is not achievable, for a large and natural 
            class of generating algorithms. 
        \begin{theorem}[Impossibility of Unambiguous Generation]\label{thm:impossibility:robust}
            For every non-identifiable collection of countably many languages $\cL$, no stable generating algorithm, for which $\mop{}(\cdot)$ (\cref{def:mop,def:mopAlgo}) is decidable, can unambiguously generate from $\cL$ at any rate.
        \end{theorem}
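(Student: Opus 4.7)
The plan is to assume, toward contradiction, that there exists a stable generating algorithm $(\generator_n)$ with $\mop(\cdot)$ decidable that unambiguously generates from the non-identifiable collection $\cL$ at some rate $R\downarrow 0$, and to build from it an identification algorithm $(\algo{I}_n)$ satisfying $\E[\er(\algo{I}_n)]\to 0$ for every valid $\cP$. This contradicts the dichotomy \cref{thm:dichotomy-identification-positive}, which asserts that non-identifiable collections admit no rate for identification.

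The first step would be to show that, for every valid $\cP$, $\cP^\infty$-almost surely $\supp(\generator_n)$ stabilizes at some random finite time $n^\star$ to a random set $S^\star\subseteq\cX$ which itself is unambiguous for $K$, i.e., $|S^\star\triangle K|<|S^\star\triangle L|$ for all $L\in\cL\setminus\{K\}$ (so in particular $|S^\star\triangle K|<\infty$). Stabilization follows from the stability hypothesis since a $\cP^\infty$-sample is a.s.\ an enumeration of $K=\supp(\cP)$. For the unambiguity of $S^\star$, if $F$ denotes the event that stability holds but $S^\star$ is not unambiguous, then on $F\cap\{n^\star\leq M\}$ the generator errs at every step $n\geq M$; since $\Pr[\er(\generator_n)=1]\leq C\cdot R(c\cdot n)\to 0$, we get $\Pr[F,\,n^\star\leq M]=0$ for every $M$, hence $\Pr[F]=0$ by countable additivity.

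The second step would be to construct $(\algo{I}_n)$ using decidability of $\mop$. Fix an enumeration $x_1,x_2,\dots$ of $\cX$ and cutoffs $N_n,M_n\to\infty$. On input $S=(s_1,\dots,s_n)$, the algorithm computes $\generator_n(S)$ and uses the decider for $\mop(\generator_n(S))$ together with the language-membership oracle for $\cL$ to compute the finite symmetric-difference estimates
\[
    d_i(M_n) \coloneqq \sabs{\{j\leq M_n : x_j\in\supp(\generator_n(S))\triangle L_i\}},\quad i\leq N_n,
\]
and outputs $\algo{I}_n(S)=\argmin_{i\leq N_n} d_i(M_n)$, breaking ties by smaller index. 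On the a.s.\ good event of Step 1 and for $n\geq n^\star$, one has $d_{i_K}(M_n)\to|S^\star\triangle K|<\infty$ as $M_n\to\infty$, while for each $i\leq N_n$ with $L_i\neq K$ there is a (random) finite threshold $M^\star(i)$ beyond which $d_i(M)>|S^\star\triangle K|$. Thus whenever $N_n\geq i_K\coloneqq\min\{i:L_i=K\}$ and $M_n\geq\max_{i\leq N_n,\,L_i\neq K}M^\star(i)$, the algorithm outputs $i_K$ and therefore $L_{\algo{I}_n}=K$.

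The hard part will be arranging the deterministic schedule $(N_n,M_n)$ so that the event $\{M_n\geq\max_{i\leq N_n}M^\star(i)\}$ has probability tending to $1$: the thresholds $M^\star(i)$ are random (depending on $S^\star$) and, as functions of $i$, are not a priori bounded by any recursive function. To handle this I would use a slow batch-size-guessing schedule in the spirit of the identification-rates argument sketched in \cref{sec:technical-overview}, taking $N_n$ growing extremely slowly (or adaptively, driven by a diagnostic that detects stabilization of $\supp(\generator_n)$ through repeated $\mop$ queries) while taking $M_n$ much larger. A countable-union bound over $i$, exploiting a.s.\ finiteness of each $M^\star(i)$ together with the vanishing failure probability from Step 1, then yields $\Pr[\algo{I}_n\neq i_K]\to 0$ and hence $\E[\er(\algo{I}_n)]\to 0$ by dominated convergence, contradicting \cref{thm:dichotomy-identification-positive}.
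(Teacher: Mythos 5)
Your high-level route differs from the paper's: you reduce directly to statistical identification and aim to contradict \cref{thm:dichotomy-identification-positive}, whereas the paper first converts the statistical generator into an online one (by feeding it i.i.d.\ samples from Angluin's distribution $\cP_\sigma$ induced by the adversary's enumeration, \cref{lem:unambiguous-statistical-to-online}) and then invokes the online impossibility result \cref{thm:online:impossibility:robust}. Both targets are legitimate, and your Step 1 is correct -- it is essentially \cref{lem:stable+rates-convergence-iid-draws}, and your countable-additivity argument over $\{n^\star\le M\}$ is if anything cleaner than the paper's. The problem is Step 2.

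The $\argmin_{i\le N_n} d_i(M_n)$ rule has a genuine gap that your scheduling idea cannot repair. Unambiguous generation permits $S^\star\neq K$, say $D\coloneqq\sabs{S^\star\triangle K}\ge 1$ with all of $S^\star\triangle K$ contained in the first few domain elements, so that $d_{i_K}(M_n)=D$ for all large $M_n$. Now a language $L_i$ with $i>i_K$ that agrees with $S^\star$ on $\{x_1,\dots,x_{M_n}\}$ but differs from $S^\star$ on at least $D+1$ elements beyond position $M_n$ is perfectly compatible with unambiguity of $S^\star$, yet has $d_i(M_n)=0<D=d_{i_K}(M_n)$, so it \emph{strictly} beats every representative of $K$; tie-breaking by smaller index is irrelevant. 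This is not a low-probability event to be union-bounded away -- given $S^\star$, it is deterministic -- and such ``deceptive'' languages can enter the window $\{1,\dots,N_n\}$ at indices growing arbitrarily slowly in $M_n$, so no distribution-free schedule $(N_n,M_n)$ guarantees $N_n$ stays below them. The structural fix, which is what the paper's identifier in \cref{thm:online:impossibility:robust} (and, in the exact-breadth case, \cref{lem:mainthm:gen:mop-lower-bound}) uses, is to replace the argmin over a growing candidate set by a \emph{smallest-index} rule based on one-sided tests: keep $L_i$ only if $L_i\supseteq S_n$ and $L_i[M_n]\subseteq \supp(\generator_n)\cup S_n$, and output the smallest surviving index. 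Then $K$ eventually passes (its finitely many missed elements appear in $S_n$), each of the \emph{finitely many} $i<i_K$ is eventually eliminated (languages with $K\setminus L_i\neq\emptyset$ by consistency; strict supersets because unambiguity forces $\abs{L_i\setminus(K\cup S^\star)}>0$), and one never needs to defeat the infinitely many candidates after $i_K$. With that rule your reduction goes through, and the contradiction with the no-rate branch of \cref{thm:dichotomy-identification-positive} is as you describe.
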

        Note that while this result has a benign requirement that the generator is stable, it already considerably extends our main result \cref{mainthm:gen:mop}, since any generator that achieves breadth must be stable -- otherwise, its support cannot settle on the target language $K$.
        {(To be precise, \cref{mainthm:gen:mop} required generators to not repeat their training examples, but this requirement is not crucial and any generator that does repeat its training examples can be converted into one that does not repeat its training examples, and vice-versa; see \cref{rem:generatingFromTrainingSet}.)}

        In addition to \cref{thm:impossibility:robust}, we also prove its analog in the online setting -- significantly extending our earlier impossibility result in the online setting (\cref{mainthm:gen:limit}).
        Before stating the result in the online, we introduce
         unambiguity in the limit, which is a natural counterpart to its statistical definition:
            \begin{itemize}
                \item A generating algorithm $\generator=(\generator_n)$ is said to be unambiguous for a collection $\cL=\inbrace{L_1,L_2,\dots}$ if, for any $K\in \cL$ and enumeration $x_{i_1},x_{i_2},\dots$ of $K$, there is an $n_0\geq 1$, such after seeing $n\geq n_0$ elements $S_n = x_{i_1},\dots,x_{i_n}~$,
                \[\abs{\supp(\generator_n(S_n)) \triangle K}
                            < 
                            \min_{L\in \cL\colon L\neq K}
                            \abs{\supp(\generator_n(S_n)) \triangle L}.
                            \]
            \end{itemize}
        \begin{restatable}[Impossibility of Unambiguous Generation in the Limit]{theorem}{thmUnambiguousLimit}\label{thm:online:impossibility:robust}
            For every non-identifiable collection of countably many languages $\cL$, no generating algorithm stable in the limit for which $\mop{}(\cdot)$ (\cref{def:mop,def:mopAlgo}) is decidable can unambiguously generate from $\cL$ in the limit.
        \end{restatable}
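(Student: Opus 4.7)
The plan is to argue by contradiction, constructing an identification-in-the-limit algorithm from the hypothesized generator and contradicting non-identifiability; the template parallels the proof of \cref{mainthm:gen:limit}, using unambiguity in place of breadth. Suppose $(\generator_n)$ is stable in the limit, unambiguous in the limit, and $\mop{}(\cdot)$ is decidable for it, on a non-identifiable countable collection $\cL = \{L_1, L_2, \dots\}$. Fix an enumeration $\sigma_1, \sigma_2, \dots$ of $\cX$. The identifier $\algo{I}$, on input $x_1,\dots,x_n$ from an enumeration of the target $K$, simulates $\generator_n$ on this prefix and uses the $\mop{}(\generator_n)$-decider together with the $L_j$-membership oracle to compute, for every $j$ in a slowly growing ``consideration window'' $\{1,\dots,g(n)\}$ with $g(n)\to\infty$,
\[
\alpha_n(j)\;=\;\sum_{i=1}^{n}\ind\inbrace{\sigma_i\in\supp(\generator_n)\triangle L_j}.
\]
It then outputs the smallest $j\leq g(n)$ minimizing $\alpha_n(\cdot)$, breaking ties by smallest index. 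Each step is finite-time computable by the assumed decidability of $\mop{}(\cdot)$ and language membership.

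To show $\algo{I}$ identifies $\cL$ in the limit, fix $K\in\cL$ and any enumeration of $K$. Stability in the limit provides some $n_1$ past which $\supp(\generator_n)=S$ is constant; unambiguity in the limit then yields $|S\triangle K|<|S\triangle L|$ for every $L\in\cL$ with $L\neq K$. Since $\cX$ is countable, every such symmetric difference has cardinality in $\N\cup\{\aleph_0\}$, so a strict inequality of such cardinalities forces the smaller side to be finite. Hence $k\coloneqq|S\triangle K|<\infty$ and $|S\triangle L|\geq k+1$ for every $L\neq K$ in $\cL$. Let $j_K=\min\{j:L_j=K\}$. The partial sum $\alpha_n(j)$ is non-decreasing in $n$ for $n\geq n_1$ and converges to $|S\triangle L_j|$, so for each $j$ there is a finite $n_j$ past which $\alpha_n(j)\geq k+1$ when $L_j\neq K$, and $\alpha_n(j_K)=k$ for $n$ large enough. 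Choosing $g(n)$ slowly enough that each index $j$ ``matures'' (that is, $n\geq n_j$) before it is admitted into $\{1,\dots,g(n)\}$, the smallest minimizer in this window becomes $j_K$ for all large $n$, yielding identification in the limit and contradicting non-identifiability of $\cL$.

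The main obstacle I expect is constructing the consideration schedule $g(\cdot)$, because the maturation thresholds $n_j$ depend on the unknown collection $\cL$, the enumeration of $\cX$, and on $S$; indices introduced just before step $n$ can carry undercounted $\alpha_n(j)$ and temporarily mislead the identifier. A clean remedy is a monotone-revision scheme: maintain a running guess $j^\star$ and revise only when some strictly smaller-indexed candidate exhibits a strictly smaller $\alpha_n$, with each revision confirmed over a widening verification window. Since $\alpha_n(j)\to|S\triangle L_j|$ for every fixed $j$ and this limit equals $k$ exactly at indices $j$ with $L_j=K$, only finitely many revisions occur and the final value is $j_K$. The structural backbone enabling any such scheme is that \emph{stability plus unambiguity forces the finite-symmetric-difference regime} $|S\triangle K|<\infty$, and this collapse is the crux I would emphasize in the formal argument.
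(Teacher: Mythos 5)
Your high-level strategy matches the paper's: derive a contradiction by converting the hypothesized generator into an identifier via the $\mop{}(\cdot)$ decider, and the structural observation you call the crux --- that stability plus the unambiguity inequality force $\abs{S\triangle K}=k<\infty$ and $\abs{S\triangle L}\geq k+1$ for every $L\neq K$ --- is correct and is also the engine of the paper's argument. The gap is in the identification rule itself, and it is exactly the one you flag without closing. Your rule outputs the smallest minimizer of the partial count $\alpha_n(\cdot)$ over a window $\inbrace{1,\dots,g(n)}$, but a language $L_j$ with $j>j_K$ and $\abs{S\triangle L_j}=\infty$ can agree with $S$ on an arbitrarily long prefix of the domain enumeration, so at the moment it enters the window it can satisfy $\alpha_n(j)<k=\alpha_n(j_K)$ and steal the argmin; since new such indices can keep arriving, this can recur at arbitrarily late times, and no computable schedule $g$ can be guaranteed to outpace the maturation thresholds $n_j$, which depend on the unknown $S$, $K$, and $\cL$. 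The fallback you sketch does not repair this: a rule that revises only when a strictly \emph{smaller}-indexed candidate exhibits a strictly smaller $\alpha_n$ can never move the guess upward, so it cannot reach $j_K$ from a wrong smaller initial guess, and a symmetric version reintroduces the late-arrival problem.

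The paper sidesteps this entirely by replacing the two-sided comparison of counts with two one-sided tests: keep $L_j$ (for $j\leq t$) only if it is consistent with the sample ($L_j\supseteq S_t$) and its $t$-prefix is covered, $L_j[t]\subseteq\supp(\generator_t)\cup S_t$, and then output the smallest surviving index. The point is that the first occurrence $L_z=K$ passes both tests permanently (finiteness of $K\setminus\supp(\generator_t)$ means the finitely many missed elements are eventually all in $S_t$), while each of the \emph{finitely many} indices $i<z$ eventually fails one of them permanently: either $K\setminus L_i\neq\emptyset$ kills consistency, or $L_i\supsetneq K$ and the unambiguity inequality forces $L_i\setminus\inparen{K\cup\supp(\generator_t)}\neq\emptyset$, killing coverage. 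Larger indices are irrelevant under the smallest-index rule, so no schedule needs tuning. To salvage your count-based rule you would need an analogous one-sided structure; as written, the argument does not go through.
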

        The proofs of \cref{thm:impossibility:robust,thm:online:impossibility:robust} appear in \cref{sec:proofof:thm:online:impossibility:robust} and \ref{sec:proofRobust-1}, respectively.  
        To develop some intuition, we recommend reading the proof of \cref{thm:online:impossibility:robust} before the proof of \cref{thm:impossibility:robust}.

        \subsubsection{{Relaxation 2: Consistent Generation with Approximate Breadth}}
            {Next, we study generation with approximate breadth.
            We refer the reader to \cref{sec:intro:relaxation} for a discussion of the definition of generation with approximate breadth, which we restate below.}
            \defApproxBreadth*
            \noindent Informally, generation with approximate breadth which, informally, requires that the generating algorithm is consistent and puts zero mass only on \emph{finitely} many points of the target language $K$. This is also a weakening of generation with breadth -- since it allows the generator to miss elements from $K$ infinitely often, and {seems} to be incomparable to the notion of unambiguous generation studied in the previous section.
            Indeed, on the one hand, approximate breadth requires the generator to be consistent while unambiguous generation does not make this requirement.
            On the other hand, unambiguous generation requires the generator to be a ``better'' generator for $K$ than for any language $L\neq K$, approximate breadth does not require this.

            Our approach to get this result follows the same high-level
        idea with \cref{thm:impossibility:robust}. 
        The error of a generating algorithm in this setting, based on \Cref{def:gen:oneSidedRobust}
        is
        \begin{equation}\label{eq:errorMissFinite}
                        \mathrm{er}\inparen{\generator_n} = \ind\inbrace{\supp(\generator_n)\not\subseteq K \text{ or } \abs{K\setminus \supp(\generator_n)} = \infty}
        \end{equation}
        The formal statement is below.

        \begin{restatable}[Impossibility of Approximate Generation]{theorem}{thmApproxBreadthStat}\label{thm:approxBreadth:statistical}
               For every non-identifiable collection of countably many languages $\cL$, no stable generating algorithm, for which $\mop{}(\cdot)$ (\cref{def:mop,def:mopAlgo}) is decidable, can generate from $\cL$ {with approximate breadth} according to \cref{def:gen:oneSidedRobust}, at any rate. 
        \end{restatable}
        \noindent Using the tools we developed for the setting of unambiguous generation, we can show the following result which transforms a learner that works in the statistical setting into a learner that works in the online setting.
        {Then the result follows due to the following result in the online setting.}
        \begin{restatable}[Impossibility of Approximate Generation in the Limit]{theorem}{thmApproxBreadthOnline}\label{thm:online:impossibility:robustOneSided}
               For every non-identifiable collection of countably many languages $\cL$, no generating algorithm stable in the limit, for which $\mop{}(\cdot)$ (\cref{def:mop,def:mopAlgo}) is decidable, can generate from $\cL$ in the limit {with approximate breadth} according to \cref{def:gen:oneSidedRobust}. 
        \end{restatable} 
        \noindent The proof of \cref{thm:online:impossibility:robustOneSided}, like the proof of \cref{mainthm:gen:limit} is also by a contradiction to the non-identifiability of $\cL$.
        The difference is that in this proof we also need to identify the finitely many elements of $K$ ``missed'' by $\generator{}$.
        Any generator that aims to achieve breadth must exhibit stability; otherwise, its support cannot converge to the target language \( K \).  (Note that although \cref{infthm:breadth:mop} assumes the generator does not repeat training examples, any generator that repeats them can be transformed into one that does not and vice-versa. See \cref{rem:generatingFromTrainingSet} for details.)

        {The proofs of \cref{thm:approxBreadth:statistical,thm:online:impossibility:robustOneSided} appear in \cref{sec:proofof:thm:approxBreadth:statistical} and \ref{sec:proofof:thm:online:impossibility:robustOneSided}, respectively.  
        To develop some intuition, we recommend reading the proof of \cref{thm:online:impossibility:robustOneSided} before the proof of \cref{thm:approxBreadth:statistical}.}

    \subsection{Further Results for Identification}
    \label{sec:results:identification:additional}
        In this section, we present identification algorithms that achieve \textit{exact} exponential rate when one has some additional structure -- access to a stronger oracle, or a finite collection $\cL$, or a countable collection $\cL$ of finite languages -- or additional information -- negative examples. 

        In \cref{sec:exact-exp-rates-id-subset-oracle}, we allow the identifier to make queries of the form ``is $L_i\subseteq L_j$?''
        Next, in \cref{sec:exact-exp-rates-id-finite-cL}, we consider generation from collections $\cL$ containing finitely many languages $\cL=\inbrace{L_1,L_2,\dots,L_k}$.
        (Note that each language in $\cL$ can still be infinite.)
        Finally, in \cref{sec:exact-exp-rates-pos-neg-ex}, in addition to positive examples, we also give the identifier access to negative examples (\ie{}, elements $x\in \cX$ not in the target language $K$).

        \subsubsection{Exponential Rates for Identification Using     Subset Oracle}\label{sec:exact-exp-rates-id-subset-oracle}
        Our first result shows that when $\cL$ satisfies Angluin's condition
        and the learning algorithm has access to a subset oracle for $\cL$ (which answers queries of the form ``$L_i\subseteq L_j$?'')
        then it is possible to achieve exact exponential rates.

        \begin{proposition}\label{prop:exp-rates-id-subset oracle}
            For every countable language collection $\cL$  that satisfies Angluin's condition (\cref{def:angliun-criterion}), there exists a learning algorithm that has access to a subset oracle
            for $\cL$ and identifies $\cL$ at a rate $e^{-n}.$ 
            Formally, a subset oracle is a primitive that, given two indices $i$ and $j$, outputs $\textsf{Yes}$ if $L_i\subseteq L_j$; otherwise, it outputs \textsf{No}.
        \end{proposition}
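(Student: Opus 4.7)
The plan is to design an algorithm that, at each step $n$, outputs the smallest index of a \emph{subset-minimal consistent language} among $\{L_1,\dots,L_n\}$, and to show that it achieves rate $e^{-n}$. Concretely, given samples $S_n=\{x_1,\dots,x_n\}$, the algorithm
\begin{enumerate}
    \item computes $C_n=\{i\leq n:S_n\subseteq L_i\}$ by querying the membership oracle on each $x_\ell$;
    \item computes $M_n=\{i\in C_n:\text{no }j\in C_n\text{ satisfies }L_j\subsetneq L_i\}$ using the subset oracle (observe that $L_j\subsetneq L_i$ is decidable from the two queries $L_j\subseteq L_i$ and $L_i\subseteq L_j$);
    \item outputs $\min M_n$, falling back to an arbitrary index if $M_n=\emptyset$.
\end{enumerate}
Each step runs in finite time. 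Let $k^\star=\min\{k:L_k=K\}$, and analyze the probability of misidentification at step $n\geq k^\star$.

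Since $S_n\subseteq K$, we have $k^\star\in C_n$, so $M_n\neq\emptyset$. The algorithm errs only if it outputs some $i$ with $L_i\neq K$. Because we take the smallest index of $M_n$, this forces either \textbf{(a)} some $i<k^\star$ to land in $M_n$ (for which $L_i\neq K$ by minimality of $k^\star$), or \textbf{(b)} $k^\star\notin M_n$, meaning some $j\in C_n$ satisfies $L_j\subsetneq L_{k^\star}=K$. I would bound the two events separately.

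For \textbf{(a)}: if $L_i\supsetneq K$, then $L_{k^\star}\subsetneq L_i$ with $k^\star\in C_n$, contradicting $i\in M_n$. So $L_i\not\supseteq K$, and there exists $x\in K\setminus L_i$ with $\cP(x)>0$. The event $\{S_n\subseteq L_i\}$ requires no sample to land in $K\setminus L_i$, so its probability is at most $(1-q_i)^n$ where $q_i=\cP(K\setminus L_i)>0$. Union-bounding over the finitely many $i<k^\star$ yields $C_1 e^{-c_1 n}$.

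The main obstacle is \textbf{(b)}: a naive union bound over all $j$ with $L_j\subsetneq K$ fails, since there may be infinitely many such $j$ and $\cP(K\setminus L_j)$ can shrink arbitrarily fast in $j$. This is where Angluin's condition enters the \emph{analysis}, even though the algorithm itself does not invoke a tell-tale oracle. Let $T_{k^\star}$ be the (finite) tell-tale for $L_{k^\star}=K$; it satisfies $T_{k^\star}\subseteq K$ and, for every $j$ with $L_j\subsetneq K$, $T_{k^\star}\not\subseteq L_j$. Consequently, if $T_{k^\star}\subseteq S_n$, then for every such $j$ there exists $t\in T_{k^\star}\cap S_n$ with $t\notin L_j$, forcing $S_n\not\subseteq L_j$ and hence $j\notin C_n$. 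Therefore event \textbf{(b)} is contained in $\{T_{k^\star}\not\subseteq S_n\}$, whose probability is at most $|T_{k^\star}|\cdot(1-p^\star)^n$ with $p^\star=\min_{t\in T_{k^\star}}\cP(t)>0$. Combining the two bounds gives $\Pr[\text{err at step }n]\leq C\,e^{-cn}$ for constants depending on $\cP$ and $K$, establishing the rate $e^{-n}$ and matching the lower bound from \cref{thm:dichotomy-identification-positive}.
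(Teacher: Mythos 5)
Your proof is correct. The decomposition of the error event at step $n\geq k^\star$ into (a) a spurious minimal consistent language below $k^\star$ and (b) $k^\star$ being knocked out of $M_n$ by a consistent proper subset of $K$ is exhaustive, the contradiction ruling out $L_i\supsetneq K$ in case (a) is valid because $k^\star\in C_n$ deterministically once $n\geq k^\star$, and the use of the finite tell-tale $T_{k^\star}$ purely in the \emph{analysis} to control case (b) is exactly the right move — a naive union bound over the possibly infinitely many subsets of $K$ would indeed fail. (One cosmetic point: the bound $Ce^{-cn}$ is only derived for $n\geq k^\star$; you should note that the finitely many earlier steps are absorbed into the constant $C$, as \cref{def:achievable-rates} requires the bound for all $n$.)

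Your route differs from the paper's in its packaging, though the probabilistic core is the same. The paper does not design a fresh algorithm: it reuses \citecustom{kleinberg2024language}'s critical-language algorithm with the subset oracle (shown in \cref{sec:identification:subsetOracles} to identify in the limit), isolates a generic sufficient condition (\cref{lem:suff-cond-use-online-id-in-stat}) under which an in-the-limit identifier automatically achieves exponential statistical rates — namely, correctness whenever the sample has size $\geq n_0$ and contains a fixed finite subset of $K$ — and then verifies that the critical-language algorithm meets this condition (\cref{lem:km24-satisfies-condition-exp-rates-subsetOracle}). Your two bad events correspond precisely to the two halves of that finite witness set (contradicting elements for the languages preceding $k^\star$ that do not contain $K$, and the tell-tale of $K$), so both proofs ultimately reduce the error to a coupon-collector bound over finitely many positive-mass elements. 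What your version buys is a self-contained, more elementary argument with a simpler algorithm (smallest index among subset-minimal consistent languages rather than largest-indexed critical language); what the paper's factoring buys is reusability — the same conversion lemma is deployed again for finite collections (\cref{prop:exp-rates-id-finite-collections}) and for generation (\cref{lem:generation-from-online-to-statistical-property}).
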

        Recall that our algorithm that achieves almost exponential rates requires merely black-box access to an algorithm
        that identifies $\cL$ in the limit. In other words, it does not
        make use of the particular structure of the online identification
        algorithm. To achieve exact exponential rates, we make 
        use of a particular algorithm: the one proposed
        by \citecustom{kleinberg2024language}. At a high level, the proof
        consists of the following steps:
        \begin{enumerate}[label=C\arabic*]
            \item First, we show that \citecustom{kleinberg2024language}'s algorithm with access to a subset oracle for $\cL$ can, in fact,  \emph{identify} the target language (see \Cref{sec:identification:subsetOracles}).

            \item Next, we identify a sufficient condition that allows one to use any identification algorithm that identifies $\cL$ in the limit to obtain exponential rates (see \Cref{lem:suff-cond-use-online-id-in-stat}).
            Interestingly, this conversion does not need \textit{any} changes to the identification algorithm.\label[condition]{condition:sufficient-to-convert-online-to-statistical}

            \item Finally, we show that the algorithm of \citecustom{kleinberg2024language} satisfies this condition.
        \end{enumerate}
        The full proof of \cref{prop:exp-rates-id-subset oracle} appears in \Cref{sec:proof-exp-rates-id-subset}.

        \subsubsection{Exponential Rates for Identification of Finite Collections}\label{sec:exact-exp-rates-id-finite-cL}
        We now shift our attention to finite collections
        of languages. 
        \citecustom{gold1967language,angluin1980inductive} showed 
        that all finite collections are identifiable
        in the limit. %
        We show that for such collections we can
        get exact exponential rates, \textit{without} the need of the subset oracle we used in the previous result (\cref{prop:exp-rates-id-subset oracle}).

        \begin{proposition}\label{prop:exp-rates-id-finite-collections}
            For every finite language collection $\cL$, there exists a learning algorithm which identifies $\cL$ at a rate $e^{-n}.$ 
        \end{proposition}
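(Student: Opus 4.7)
The plan is to use a straightforward identification rule that outputs the smallest-indexed \emph{minimal consistent} language, exploiting the finiteness of $\cL$ to avoid any need for a runtime subset oracle. Write $\cL = \inbrace{L_1,\dots,L_k}$, and consider the inclusion table $R_\cL(i,j) \coloneqq \ind\inbrace{L_i \subsetneq L_j}$ for $1 \le i,j \le k$. Since $k < \infty$, $R_\cL$ is a fixed, constant-size object that may be hardcoded into the learner designed for $\cL$ (in contrast to \cref{prop:exp-rates-id-subset oracle}, where we appealed to an explicit subset oracle). The algorithm is as follows: given $n$ i.i.d.\ samples $x_1,\dots,x_n$, (i)~use the membership oracle to compute the consistent set $\cL_n \coloneqq \inbrace{j \in \inbrace{1,\dots,k} : \inbrace{x_1,\dots,x_n} \subseteq L_j}$, and (ii)~output the smallest index $\hat{\jmath} \in \cL_n$ such that no $j' \in \cL_n$ satisfies $R_\cL(j',\hat{\jmath}) = 1$.

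For the analysis, fix any target language $K \in \cL$ and any valid distribution $\cP$. Almost surely, every index $j^*$ with $L_{j^*} = K$ lies in $\cL_n$. On the other hand, for every $j$ with $L_j \not\supseteq K$, the set $K \setminus L_j$ is non-empty, and since $\supp(\cP) = K$ we have $p_j \coloneqq \cP(K \setminus L_j) > 0$. Finiteness of $\cL$ gives $p \coloneqq \min_{j : L_j \not\supseteq K} p_j > 0$. A union bound over the at most $k$ ``bad'' indices yields
\[
    \Pr_{x_1,\dots,x_n \sim \cP^n}\insquare{\exists\, j \in \cL_n \text{ with } L_j \not\supseteq K} \ \leq\ k\cdot(1-p)^n \ \leq\ k\cdot e^{-pn}.
\]

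On the complementary (good) event, every $L_j \in \cL_n$ satisfies $L_j \supseteq K$, while some $j^*$ with $L_{j^*} = K$ still lies in $\cL_n$. Therefore every $L_j \in \cL_n$ with $L_j \supsetneq K$ strictly contains $L_{j^*}$ and is \emph{not} minimal under $R_\cL$. Consequently, the minimal elements of $\cL_n$ are exactly the indices $j$ with $L_j = K$, and the algorithm outputs one such index. Putting things together, $\Pr[L_{\hat{\jmath}} \neq K] \leq k \cdot e^{-pn}$, which gives rate $e^{-n}$ with constants $C = k$ (depending only on $\cL$) and $c = p$ (depending on $\cP$), exactly in the sense of \cref{def:achievable-rates}.

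The main care in this argument is in justifying the ``minimality'' step without appealing to a runtime subset oracle: this is precisely where finiteness of $\cL$ is indispensable, since $R_\cL$ is a bounded piece of information that can be folded into the description of the learner for the given $\cL$. Everything else -- in particular the exponential concentration arising from the strict positivity of $\cP(K \setminus L_j)$ for each bad $L_j$ -- is routine, and unlike in the proof of almost exponential rates from Angluin's condition, no batching or boosting over sub-samples is required.
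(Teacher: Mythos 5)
Your proof is correct, but it takes a genuinely different route from the paper's. The paper reuses its online-to-statistical transfer machinery: it exhibits an online algorithm that maintains the version space $V_t$ of consistent languages and then restricts to those that are minimal on the \emph{projection} onto the first $t$ domain elements (the set $V'_t$), shows this algorithm identifies correctly whenever the sample contains a fixed finite witness set and has size at least a fixed $n_0$, and then invokes \cref{lem:suff-cond-use-online-id-in-stat} to convert the in-the-limit guarantee into an exponential rate. You instead work directly in the statistical setting and sidestep the projection trick by hardcoding the finite inclusion table $R_\cL$ into the learner, so minimality of a consistent language is certified exactly rather than approximately. The probabilistic core is the same in both arguments -- a union bound over the finitely many languages $L_j\not\supseteq K$, each of which survives consistency with probability $\inparen{1-\cP(K\setminus L_j)}^n$ -- and your constants $C=k$, $c=p$ are exactly what \cref{def:achievable-rates} asks for. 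The trade-off is uniformity: the paper's algorithm uses only runtime membership queries and needs no precomputed knowledge of $\cL$ beyond oracle access (which is why it needs the auxiliary threshold $n\ge n_0$ to eliminate strict supersets of $K$ via the domain enumeration), whereas your learner is non-uniform in $\cL$ -- it presupposes that the $k\times k$ table of strict inclusions, which is not computable from membership queries alone, has been supplied as part of the algorithm's description. For the existence claim in \cref{prop:exp-rates-id-finite-collections} this is legitimate, since the table is a finite object and the resulting map is still computable, but you should be explicit that this step quietly uses the same information that \cref{prop:exp-rates-id-subset oracle} obtains from a subset oracle; the paper's projection-based minimality test is precisely the device that avoids assuming it.
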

        The proof of this result builds on the proof of \Cref{prop:exp-rates-id-subset oracle}.
        In particular, we show that the algorithm
        of prior work satisfies the sufficient condition
        that allows an algorithm that identifies $\cL$ in the limit to obtain exponential rates (\cref{condition:sufficient-to-convert-online-to-statistical}). 
        The full proof of \cref{prop:exp-rates-id-finite-collections} appears in \Cref{sec:proof-exp-rates-id-finite}.

        \subsubsection{Exponential Rates for Identification of Collections of Finite Languages}\label{sec:exact-exp-rates-id-finite-languages}
        We now move on to a result about identifying countable collections of \emph{finite}
        languages with exactly exponential rates.
        \citecustom{gold1967language} showed 
        such collections are identifiable
        in the limit through a very simple algorithm:
        predict the first language that contains the set of all examples seen so far.
        We show that for such collections we can
        get exact exponential rates.

        \begin{proposition}\label{prop:exp-rates-id-finite-languages}
            For every countable language collection $\cL$ that only contains languages of finite size, there exists a learning algorithm which identifies $\cL$ at a rate $e^{-n}.$ 
        \end{proposition}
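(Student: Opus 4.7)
The plan is to mirror the strategy used for \cref{prop:exp-rates-id-finite-collections}: exhibit an online identifier for $\cL$, verify that it satisfies \cref{condition:sufficient-to-convert-online-to-statistical}, and then invoke \cref{lem:suff-cond-use-online-id-in-stat} to lift the in-the-limit guarantee to the universal statistical rate $e^{-n}$.

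For the online algorithm, I would use the classical Gold identifier specialized to collections of finite languages: at step $t$ with observed set $S_t\subseteq\cX$, output the smallest index $i\leq t$ for which $L_i=S_t$, and a default index $1$ if none exists. Because each $L_i$ is finite, one can implement the equality test $L_i=S_t$ under just membership-oracle access by enumerating $\cX$ up to a growing budget (say, the first $t$ items of $\cX$ at step $t$) and rejecting $i$ as soon as a member of $L_i$ outside $S_t$ surfaces; for any fixed target $K$ there is a finite threshold $t_K$ beyond which this bounded test is decisive for every $i\leq z\coloneqq\min\{i:L_i=K\}$, which recovers Gold's in-the-limit guarantee: once $S_t=K$ and $t\geq t_K$, the algorithm outputs $z$.

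Next, I would verify \cref{condition:sufficient-to-convert-online-to-statistical} using the finite witness set $F_K\coloneqq K$. The key observation is that as soon as the stream contains every element of $K$ and its length exceeds the finite, target-dependent threshold $t_K$, the identifier deterministically outputs $z$ regardless of the ordering or multiplicities of the examples. This ``finite-witness stabilization'' is exactly what the sufficient condition asks for, so \cref{lem:suff-cond-use-online-id-in-stat} is applicable.

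Feeding this into \cref{lem:suff-cond-use-online-id-in-stat} yields the desired rate; the quantitative heart of the argument is a standard tail bound. For any valid distribution $\cP$, let $p_{\min}\coloneqq\min_{x\in K}\cP(x)$, which is strictly positive because $K$ is finite. Then
\[
\Pr_{x_1,\ldots,x_n\sim\cP^n}[S_n\ne K]\;\leq\;\sum_{x\in K}(1-\cP(x))^n\;\leq\;|K|\,(1-p_{\min})^n\;=\;C(\cP)\,e^{-c(\cP)n},
\]
with $C(\cP)=|K|$ and $c(\cP)=-\log(1-p_{\min})>0$, which is the required rate. The only step where I anticipate nontrivial care is the book-keeping around the bounded consistency test when only a membership oracle is available; as described above, this is handled by letting the enumeration budget grow with $t$, so the in-the-limit guarantee and the sufficient condition both survive untouched.
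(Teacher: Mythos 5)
Your proposal is correct and matches the paper's proof in essence: both run Gold's identifier for collections of finite languages and observe that once every element of the (finite) target $K$ appears in the sample the algorithm is correct, bounding the failure probability by $\abs{K}\,(1-p_{\min})^n$ via a union bound over the elements of $K$. The only differences are presentational — you route the argument explicitly through \cref{lem:suff-cond-use-online-id-in-stat} and use the exact-equality test $L_i = S_t$ with a growing membership-query budget, whereas the paper computes the tail bound directly; your equality test is in fact the more careful formulation, since the consistency test $S_n \subseteq L_j$ as literally written in the paper's proof would over-generalize whenever a proper superset of $K$ precedes it in the enumeration.
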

        The idea of the proof is simple. Since any valid distribution has finite
        support, for large enough $n$, the sample will contain all the elements
        of the support with probability $1-C\cdot e^{-c\cdot n}.$
        The formal proof of \cref{prop:exp-rates-id-finite-languages} appears in \Cref{sec:proof-exp-rates-id-finite-lang}.

        \subsubsection{Exponential Rates for Identification from Positive and Negative Examples}\label{sec:exact-exp-rates-pos-neg-ex}

        We now shift our attention to a setting {-- introduced by \citecustom{gold1967language} --} where, in addition to an enumeration of the target language $K$, one also receives an enumeration of $\cX\setminus K$. 
        
        Let us first recall the difference between
        the different types of information in
        the two settings. In the case of just positive
        examples {(considered so far)}, the adversary picks a target language
        $K$ from $\cL$ along with an enumeration of
        this language, and presents the examples
        from this enumeration sequentially to the learner.
        In the case of positive and negative examples,
        the adversary again picks a target language
        $K$ from $\cL,$ but now it chooses a \emph{labeled} enumeration of the whole domain $\cX,$ where
        now the label of each element indicates
        whether it belongs to the target language $K$
        or not. It is known that every countable
        collection of languages is identifiable in 
        the limit with positive and negative examples
        \citep{gold1967language}.

        Naturally, we need a different notion
        of a \emph{valid} distribution in this setting.
        We adopt a definition that was proposed
        by \citecustom{angluin1988identifying}.

        \begin{definition}[Valid Distributions Under Positive and Negative Examples \citep{angluin1988identifying}]\label{def:valid-language-distribution-pos-neg}
    A distribution $\cP$ 
     over $\cX \times \{0,1\}$ is valid with respect to a collection of languages $\cL$ if and only if $\supp(\cP_\cX) = \cX$ and there exists some $K \in \cL$
     such that for all $x \in \cX$ it holds that $\Pr_{(X,Y) \sim \cP}[Y = 1 \mid X = x] = \ind\left\{ x \in K\right\}.$
\end{definition}
Our main result in this setting is that every countable collection of languages is identifiable with positive and negative examples at an optimal exponential rate.
\begin{theorem}[Identification with Positive and Negative Examples]\label{thm:identification-positive-negative}
    For every countable collection of languages $\cL$, there exists a learner that identifies $\cL$ at rate $e^{-n}.$ Conversely,
    for every countable collection of languages $\cL$ that is non-trivial for identification, no learner can identify
    $\cL$ at rate faster than $e^{-n}.$
\end{theorem}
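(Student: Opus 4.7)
The plan is to prove the upper and lower bounds separately, using classical Gold-style enumeration for the upper bound and a one-point Le Cam argument for the lower bound.

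\textbf{Upper bound.} I would use Gold's algorithm for the positive-and-negative setting (described right after \cref{def:Identification}): at sample size $n$, output the smallest index $j$ such that $L_j$ is consistent with every observed labeled pair $(x_i,y_i)$, in the sense $\ind\{x_i\in L_j\}=y_i$ for all $i\le n$. This algorithm is computable via the membership oracle for $\cL$, and the search terminates because $L_{z^*}$, where $z^*:=\min\{j:L_j=K\}$, is consistent with every sample drawn from a distribution valid for $K$. For each $j<z^*$ we have $L_j\neq K$, so pick a distinguishing element $w_j\in\cX$ with $\ind\{w_j\in L_j\}\neq \ind\{w_j\in K\}$. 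The algorithm outputs some $j<z^*$ only if none of the witnesses $w_j$ ($j<z^*$) has appeared in the sample; otherwise the labeled pair $(w_j,\ind\{w_j\in K\})$ refutes $L_j$. Setting $p_j:=\Pr_{(X,Y)\sim\cP}[X=w_j]$, validity (\cref{def:valid-language-distribution-pos-neg}) forces $p_j>0$, and a union bound yields
\[
\Pr_{\cP^n}[\mathrm{er}(h_n)=1]\;\le\;\sum_{j=1}^{z^*-1}(1-p_j)^n\;\le\;(z^*-1)\,e^{-p_{\min}\,n},
\]
with $p_{\min}:=\min_{j<z^*}p_j>0$. This is rate $e^{-n}$ in the sense of \cref{def:achievable-rates} with distribution-dependent constants $C=z^*-1$ and $c=p_{\min}$.

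\textbf{Lower bound.} I would use a two-hypothesis (Le Cam) argument. Non-triviality of $\cL$ yields two distinct languages $L_1,L_2\in\cL$; pick a distinguishing element $w\in\cX$ with $\ind\{w\in L_1\}\neq\ind\{w\in L_2\}$. Fix any marginal $\mu$ on $\cX$ with $\supp(\mu)=\cX$ and $\mu(w)=p$ for some fixed $p\in(0,1)$, and define $\cP_i(x,y):=\mu(x)\,\ind\{y=\ind\{x\in L_i\}\}$, which is valid for $L_i$ by \cref{def:valid-language-distribution-pos-neg}. Under the obvious coupling of $\cP_1^n$ and $\cP_2^n$ (same $X$-marginals, labels determined deterministically by the target), the labeled sample is \emph{identical} under both distributions on the event $E_n:=\{w\notin\{x_1,\ldots,x_n\}\}$, which has probability $(1-p)^n$. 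On $E_n$, any learner's output is the same under both distributions, so it errs in at least one world since $L_1\neq L_2$. Therefore
\[
\Pr_{\cP_1^n}[\mathrm{er}(h_n)=1]+\Pr_{\cP_2^n}[\mathrm{er}(h_n)=1]\;\ge\;(1-p)^n,
\]
and at least one of the two distributions (say $\cP_{i^*}$) satisfies $\Pr_{\cP_{i^*}^n}[\mathrm{er}(h_n)=1]\ge\tfrac12(1-p)^n=\tfrac12 e^{-cn}$ with $c=\ln(1/(1-p))>0$. Since this holds for every learner, $\cL$ admits no rate faster than $e^{-n}$.

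\textbf{Main obstacle.} Both halves are essentially classical, and I do not anticipate any substantive difficulty; the optimal rate $e^{-n}$ turns out to be much easier to achieve here than in the positive-only setting (\cref{mainthm:statisticalRates:iden}) precisely because negative examples immediately refute overgeneralizations. The only care needed is verifying (a) that the ``smallest consistent index'' search terminates and is computable (which follows from the existence of $z^*$ and membership-oracle access to $\cL$), and (b) that the hypothetical distributions in the lower bound are genuinely valid per \cref{def:valid-language-distribution-pos-neg} (immediate from $\supp(\mu)=\cX$ and the deterministic label rule induced by $L_i$). Note that the lower bound uses only $L_1\neq L_2$; the full non-triviality condition $|L_1\cap L_2|>0$ is stronger than needed in this setting but is consistent with the non-triviality notion used throughout the paper.
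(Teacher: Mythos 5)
Your \textbf{upper bound} is correct, and it takes a genuinely different and more elementary route than the paper. The paper proves this direction via a black-box reduction in the style of \citet{bousquet2021theory}: it splits the sample into batches, runs an online identifier on each, estimates the correct batch length $t^*$ from the second half of the data (\cref{lemma:stopping-time-estimation}), applies the post-processing of \cref{lem:post-processing-same-index} to reconcile different indices of $K$, and takes a majority vote. You instead analyze Gold's ``smallest consistent index'' learner directly: once the finitely many witnesses $w_1,\dots,w_{z^*-1}$ refuting the languages preceding $L_{z^*}$ have all appeared, the learner outputs $z^*$ deterministically, and a union bound gives $(z^*-1)e^{-p_{\min}n}$. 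This is exactly an instance of the paper's own sufficient condition (\cref{lem:suff-cond-use-online-id-in-stat}) applied to Gold's algorithm, and it is cleaner; what the paper's heavier machinery buys is generality (it converts \emph{any} in-the-limit identifier, not one with this finite-witness structure) and it sidesteps the multiple-index issue automatically, whereas your learner handles it by always landing on the smallest index $z^*$.

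Your \textbf{lower bound} has a genuine gap. You claim that on the event $E_n=\{w\notin\{x_1,\dots,x_n\}\}$ the labeled samples under $\cP_1$ and $\cP_2$ coincide. This is false in general: $L_1$ and $L_2$ may differ on many elements besides your single witness $w$, and any element of $L_1\triangle L_2$ appearing in the sample receives different labels in the two worlds. The correct indistinguishability event is $E_n'=\{(L_1\triangle L_2)\cap\{x_1,\dots,x_n\}=\emptyset\}$, whose probability is $(1-\mu(L_1\triangle L_2))^n$; this is only a useful exponential bound if $\mu(L_1\triangle L_2)<1$, i.e., if $\cX\setminus(L_1\triangle L_2)\neq\emptyset$. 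This is precisely where non-triviality (\cref{def:non-trivial-collections}) enters: a common element $x\in L_1\cap L_2$ lies outside the symmetric difference, so $\mu$ can place positive (indeed, the paper places at least $\nicefrac{1}{2}$) mass there. Consequently your closing remark that ``the lower bound uses only $L_1\neq L_2$'' is also wrong in this setting: if $L_1$ and $L_2$ partition $\cX$, a single labeled example distinguishes them with certainty and no exponential lower bound holds for that pair. With the event corrected to $E_n'$ (or, as in the paper's \cref{lem:exp-rates-lower-bound-ident-pos-neg}, by concentrating mass $\geq\nicefrac{1}{2}$ on the common point $(x,1)$ and conditioning on the sample being $(x,1)^n$), the two-hypothesis argument goes through.
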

The proof of \cref{thm:identification-positive-negative} appears in \Cref{sec:stat-rates-identification-pos-neg}. 
Our proof of this result is inspired by the approach
of \citet{bousquet2021theory}. 
First, we show the exponential rates lower bound by directly
using a result of \citet{bousquet2021theory}.
In order to get the upper bound, we use a black-box transformation from any learner that identifies $\cL$ in the limit, to a learner that achieves exponential rates in the statistical
setting. 

The approach shares similarities to the one with just positive examples (see \Cref{sec:technical-overview}, Paragraph B). 
The crucial reason why we can obtain exactly exponential rates here, instead of almost exponential rates as in the previous setting, is that we can use the negative examples to accurately estimate the correct sizes of the batches we  use, instead of having to use ``guesses'' of increasing size as we did in the setting of just positive
examples.

To give {a} more concrete comparison to the binary
classification setting of \citet{bousquet2021theory}, 
let us first explain some results from this work.
\citet{bousquet2021theory} define the following
infinite game sequential, which is appropriately rephrased using
the terminology from our work looks as follows:
\begin{itemize}
    \item In every round, the adversary presents a word $x_t \in \cX$ to the learner.

    \item Subsequently, the learner predicts a label
    from $\{0,1\}$ for this word, denoted by $\hat y_t.$

    \item Then, the adversary reveals the true
    label $y_t$ to the learner.
\end{itemize}
The only constraint on the adversary is that at any given point $t \in \N,$ there has to be some language $K \in \cL$ such that $y_{t'} = \ind\{x_{t'} \in K\},$ for all $t' \leq t.$ In other words, the choices of the labels have to be consistent with some language $K \in \cL.$ Crucially, the consistent language does not need to be fixed in advance and it can keep changing throughout the interaction.  In their setting, the learner ``wins'' the game if it makes only finitely many mistakes.  They provide a necessary and sufficient condition on the structure of $\cL$ which determines the existence of a winning strategy for the learner: the learner can win this game if and only if $\cL$ does not have an infinite Littlestone tree (see \Cref{defn:litt}). Interestingly, this condition does not capture the existence of a winning strategy for the learner
in Gold's setting: we have constructed a language family $\cL$ which has an infinite Littlestone tree, but it is identifiable in the limit from positive and negative examples.
Perhaps more surprisingly, this language is identifiable even with just positive examples.
The construction appears in \Cref{sec:comparison}.

\section{Organization of the Rest of the Paper}
{We next describe the organization of the rest of the paper. }
\begin{itemize}
    \item The proofs of \Cref{sec:results:iden} (statistical rates for identification and generation) can be found in \Cref{sec:proofof:mainthm:statisticalRates:iden gen}. The proof for the identification universal rates appears in \Cref{sec:proofof:mainthm:statisticalRates:iden} and for generation in \Cref{sec:proofof:mainthm:statisticalRates:gen}. 

    \item The proofs of \Cref{sec:results:genBreadth} can be found in \Cref{sec:proofof:generationWithBreadth}. In \cref{sec:proofof:prop:mop_decidable}, we discuss the decidability of $\mop(\cdot).$ In \cref{sec:gen:mop} we provide our main result that generation with breadth is not possible for generating algorithms for which $\mop(\cdot)$ is decidable. Finally, in \Cref{sec:proofof:mainmainthm:gen:limit}, we see the implications of this result for generation in the limit.

    \item  The proofs of \Cref{sec:ApproxConsBreadth} appear in \Cref{sec:proofRobust}. 
    In \Cref{sec:proofRobust-2} we give the proof of the result in the online setting and
    in \Cref{sec:proofRobust-1} the proof
    of the result in the statistical setting.

    \item The proofs of \Cref{sec:results:identification:additional} appear in \Cref{sec:addResults}. 
    In \Cref{sec:proof-exp-rates-id-subset} we give the proof 
    of exponential rates for identification using 
    a subset oracle, in \Cref{sec:proof-exp-rates-id-finite} the proof of exponential rates for identification of finite collections using 
    a membership oracle, and in \Cref{sec:proof-exp-rates-id-finite-lang} the proof of exponential rates for identification of countable collections of finite languages.
    The proof for the identification rates with positive and negative examples appears in \Cref{sec:stat-rates-identification-pos-neg}.
\end{itemize}

\section{Proofs from \texorpdfstring{\cref{sec:results:iden}}{Section 3.1} (Rates for Identification and Generation)}
\label{sec:proofof:mainthm:statisticalRates:iden gen}

\subsection{Proof of \texorpdfstring{\Cref{thm:dichotomy-identification-positive}}{Theorem 3.1} (Rates for Identification) } 
\label{sec:proofof:mainthm:statisticalRates:iden}
 In this section, we give the full proof of \Cref{thm:dichotomy-identification-positive}; see \cref{fig:outline:thm:dichotomy-identification-positive} for an outline.
 As we alluded to before, the first step in the proof
is to show that all non-trivial collections 
are not learnable at rate faster than $e^{-n}.$

\begin{figure}[h]
    \centering
    \begin{tikzpicture}[node distance=2.5cm, auto]
    \node[myNodeNarrow] (thm) at (0,0) {\cref{thm:dichotomy-identification-positive}};
    \node[myNode] (subres1) at (-5.5cm, -2cm) {\small For non-trivial collections\\ \small $e^{-n}$ is the best possible rate};
    \node[myNode] (subres2) at (0cm, -2cm) {\small Identification in the limit\\  \small $\implies$ Almost exponential rate};
    \node[myNode, text width=6cm] (subres3) at (6cm, -2cm) {\small \mbox{$\cL$ is not identifiable in the limit $\implies$} \mbox{ $\cL$ cannot be identified at any rate}};

    \node[myNodeNarrow] (lem51) at (-5.5cm, -4cm) {\small \cref{lem:exp-rates-lower-bound-ident-pos}};
    
    \node[myNodeNarrow] (lem55) at (0cm, -4cm) {\small \cref{lem:almost-exp-rates-ident-pos-examples}};
    \node[myNodeNarrow] (lem52) at (-1.5cm, -6cm) {\small \cref{prop:support-appears-in-countable-samples}};
    \node[myNodeNarrow] (lem53) at (1.5cm, -6cm) {\small \cref{prop:quantile-bound-identification}};
    
    \node[myNodeNarrow] (lem58) at (6cm, -4cm) {\small \cref{lem:no-rate-identification-positive}};
    \node[myNodeNarrow, text width=2cm] (lem54) at (4.25cm, -6cm) {\small \cref{lem:post-processing-same-index}};
    \node[myNodeNarrow] (lem57) at (7.25cm, -6cm) {\small \cref{thm:angluin-limit-statistical-online-pos-equivalent}};
    \node[myNodeNarrow] (lem56) at (7.25cm, -8cm) {\small \cref{thm:angluin-limit-statistical-online-pos-not-convergence}};
    
    \draw[-stealth, line width=0.5mm] (thm) -- (subres1.north);
    \draw[-stealth, line width=0.5mm] (thm) -- (subres2);
    \draw[-stealth, line width=0.5mm] (thm) -- (subres3.north);

    \draw[-stealth, line width=0.5mm] (subres1) -- (lem51);
    
    \draw[-stealth, line width=0.5mm] (subres2) -- (lem55);
    \draw[-stealth, line width=0.5mm] (lem55) -- (lem52);
    \draw[-stealth, line width=0.5mm] (lem55) -- (lem53);

    \draw[-stealth, line width=0.5mm] (subres3) -- (lem58);
    \draw[-stealth, line width=0.5mm] (lem58) -- (lem54);
    \draw[-stealth, line width=0.5mm] (lem58) -- (lem57);
    \draw[-stealth, line width=0.5mm] (lem57) -- (lem56);

    \end{tikzpicture}
    \caption{Outline of Proof of \cref{thm:dichotomy-identification-positive}}
    \label{fig:outline:thm:dichotomy-identification-positive}
\end{figure}

\begin{lemma}[Exponential Rate Is Best Possible for Identifying Any Non-trivial Collection]\label{lem:exp-rates-lower-bound-ident-pos}
    Let $\cL$ be a non-trivial collection of countably many languages. Then, 
    for any identification algorithm $\cA = \{h_n\}_{n \in \N}$ there
    exists a valid distribution $\cP$ such that 
    $\E[\er(h_n)] \geq e^{-2n},$ for 
    infinitely many $n \in \N.$
\end{lemma}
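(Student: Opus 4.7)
The plan is a standard two-hypothesis / Le Cam-style indistinguishability argument, adapted to the identification setting. By the non-triviality of $\cL$ (\cref{def:non-trivial-collections}), fix two distinct languages $L_1, L_2 \in \cL$ and a string $x \in L_1 \cap L_2$. I construct two valid distributions $\cP_1, \cP_2$ (for $L_1, L_2$ respectively) that agree on an event of probability $2^{-n}$: set $\cP_i(x) = \tfrac{1}{2}$ and distribute the remaining mass $\tfrac{1}{2}$ over $L_i \setminus \{x\}$ with strictly positive weight on every element (possible since $L_i$ is countable, e.g.\ via a geometric weighting over any enumeration of $L_i \setminus \{x\}$). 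This guarantees $\supp(\cP_i) = L_i$, so each $\cP_i$ is valid.

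Next, fix an arbitrary identification algorithm $\{h_n\}_{n \in \N}$ and consider the event $\cE_n = \{x_1 = \cdots = x_n = x\}$, which has probability exactly $2^{-n}$ under both $\cP_1^n$ and $\cP_2^n$. Conditional on $\cE_n$, the input to $h_n$ is the fixed sequence $(x,\ldots,x)$, so the distribution of $h_n(x,\ldots,x)$ (allowing randomized $h_n$) does not depend on whether the true distribution is $\cP_1$ or $\cP_2$. Let
\[
    p_A := \Pr[L_{h_n(x,\ldots,x)} = L_1], \qquad p_B := \Pr[L_{h_n(x,\ldots,x)} = L_2].
\]
Since $L_1 \ne L_2$, the events $\{L_{h_n} = L_1\}$ and $\{L_{h_n} = L_2\}$ are disjoint, so $p_A + p_B \leq 1$. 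Therefore the conditional error probabilities satisfy $(1 - p_A) + (1 - p_B) \geq 1$, and at least one of them is $\geq \tfrac{1}{2}$. Hence for each $n$, at least one of $\cP_1, \cP_2$ satisfies $\E[\er(h_n)] \geq \tfrac{1}{2} \cdot 2^{-n} = 2^{-(n+1)}$.

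Finally, I promote this ``for each $n$, some $\cP_i$ works'' statement to a single distribution that works for infinitely many $n$ by pigeonhole: let $I_i := \{n \in \N : \E_{\cP_i^n}[\er(h_n)] \geq 2^{-(n+1)}\}$ for $i \in \{1,2\}$; since $I_1 \cup I_2 = \N$, at least one of $I_1, I_2$ is infinite, and picking the corresponding $\cP_i$ as the witnessing distribution $\cP$ finishes the argument, using $2^{-(n+1)} \geq e^{-2n}$ for all $n \geq 1$ (which is equivalent to $(2 - \ln 2)n \geq \ln 2$, trivially true for $n \geq 1$).

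I do not expect any serious obstacle in this proof: the content is essentially that two valid distributions which both assign mass $\tfrac{1}{2}$ to a common element $x \in L_1 \cap L_2$ produce an all-$x$ sample with probability $2^{-n}$, on which no identifier can distinguish them. The only minor point to be careful about is handling randomized $h_n$, which is why I use the $p_A + p_B \leq 1$ accounting rather than a direct deterministic argument; and making sure each $\cP_i$ has full support on $L_i$, which is handled by the explicit geometric-tail construction.
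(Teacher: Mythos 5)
Your proposal is correct and follows essentially the same argument as the paper's proof: two valid distributions placing mass $\tfrac{1}{2}$ on a common element $x \in L_1 \cap L_2$, conditioning on the all-$x$ sample of probability $2^{-n}$, the disjointness accounting $p_A + p_B \leq 1$ to force one conditional error $\geq \tfrac{1}{2}$, and a pigeonhole step to extract a single distribution that works for infinitely many $n$. Your explicit geometric-tail construction for full support and the verification that $2^{-(n+1)} \geq e^{-2n}$ are minor details the paper leaves implicit, but the proof is the same.
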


\begin{proof}
    Since $\cL$ is non-trivial, there exist two distinct languages $L_{i}, L_{j} \in \cL$
    and $x \in \cX$ such that $x \in L_{i}, x \in L_{j}.$ Let $\cP_{L_{i}}, \cP_{L_{j}}$ be valid distributions for $L_{i}, L_{j}$
    that place at least $\nicefrac{1}{2}$ on $x$ and if the
    languages have more elements, they spread the remaining mass 
    on the rest of the elements arbitrarily; otherwise
    they put the remaining mass on $x.$ 
    Notice that  since $L_{i} \neq L_{j}$ at least one of them has at least one more element other than $x.$
    For any $n \in \N,$ under both distributions,  with probability at least ${2^{-n}}$ the 
    algorithm will only see the element $x$ appearing in
   the samples. Let $\cE_n$ be that event
    and condition on it.
    Notice that
    \[
        \Pr\insquare{L_{h_n(x,\ldots,x)} = L_{i}\mid \cE_n} +  \Pr\insquare{L_{h_n(x,\ldots,x)} = L_{j}\mid \cE_n} \leq 1 \,,
    \]
    where the probability is with respect to the randomness
    of the identification algorithm.
    Thus, we have 
    that either $ \Pr\insquare{L_{h_n(x,\ldots,x)} \neq L_{i}\mid \cE_n} \geq \nicefrac{1}{2}$
    or $ \Pr\insquare{L_{h_n(x,\ldots,x)} \neq L_{j}\mid \cE_n} \geq \nicefrac{1}{2}$ for each $n\in \N$.
    Hence, by the pigeonhole principle, for at least one of $L_{i}, L_{j},$
    the previous inequality holds for infinitely many $n \in \N.$
    Assume, without loss of generality, that it holds for $L_{i}$ and let $\hat{N}$ denote the set of $n \in \N$ for which it holds.
    Then, for each $n \in \hat{N}$, we have that
    \begin{align*}
               \E_{X_1,\ldots,X_n \sim \cP^n_{L_{i}}}[\er(h_n(X_1,\ldots,X_n))] &= \Pr_{X_1,\ldots,X_n \sim \cP^n_{L_{i}}}[L_{h_n(X_1,\ldots,X_n) }\neq L_{i}] \\
               &\geq \Pr_{X_1,\ldots,X_n \sim \cP^n_{L_{i}}}[L_{h_n(X_1,\ldots,X_n)} \neq L_{i} \mid \cE_n] \cdot \Pr_{X_1,\ldots,X_n \sim \cP^n_{L_{i}}}[\cE_n]\\
               &\geq \frac{1}{2^n} \cdot \Pr[L_{h_n(x,\ldots,x)} \neq L_{i} \mid \cE_n] \tag{by the definition of $\cE_n$} \\
               &\geq \frac{1}{2^{n+1}} \,, \tag{due to the assumption on $L_{i}$}
    \end{align*}
    which concludes the proof.
 
\end{proof}   
We now move on to the (almost) exponential rates upper bound for identification.
This will be done via a transformation from learners
that achieve identification in the limit in Gold's model \citep{gold1967language} to learners that achieve
(almost) exponential rates in our setting.
The first step in this result is to show that when we draw countably many samples
from $\cP$ all the elements of the target language
will appear in the sample.

\begin{proposition}[Infinite Draws Are Enumerations]\label{prop:support-appears-in-countable-samples}
    Let $\cP$ be a probability distribution supported
    on a countable domain and $\{X_i\}_{i \in \N},$ where every
    $X_i$
    is \iid{} from $\cP.$ Then, \[\Pr_{\{X_i\}_{i \in \N} \sim \cP^\infty}[\supp(\cP) = \cup_{i \in \N} \{X_i\}]  = 1.
    \]
\end{proposition}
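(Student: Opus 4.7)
The plan is to prove the set equality $\supp(\cP) = \bigcup_{i \in \N} \{X_i\}$ by establishing both inclusions, each holding almost surely, and then invoking the fact that the intersection of two almost sure events is again almost sure.

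First, I would dispatch the easy inclusion $\bigcup_{i \in \N} \{X_i\} \subseteq \supp(\cP)$. For each fixed $i$, $\Pr[X_i \in \supp(\cP)] = 1$ by definition of the support of a discrete distribution, so the event $\cE_{\subseteq} \coloneqq \bigcap_{i \in \N}\{X_i \in \supp(\cP)\}$ has probability one as a countable intersection of probability-one events.

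For the reverse inclusion $\supp(\cP) \subseteq \bigcup_{i \in \N} \{X_i\}$, the main idea is to show that every individual element of the support appears almost surely, and then to close this under the countable union bound. Fix $x \in \supp(\cP)$; then by definition $\cP(x) > 0$. The event $\{x \notin \bigcup_{i \in \N} \{X_i\}\} = \bigcap_{i \in \N}\{X_i \neq x\}$ is an intersection of independent events each of probability $1 - \cP(x) < 1$, so
\[
    \Pr\insquare{x \notin \bigcup_{i \in \N} \{X_i\}} = \lim_{n \to \infty}(1-\cP(x))^n = 0 \,.
\]
(Alternatively, the second Borel--Cantelli lemma applied to the independent events $\{X_i = x\}$ with $\sum_i \cP(x) = \infty$ yields the same conclusion.) Since $\supp(\cP)$ is a countable subset of a countable domain, a union bound gives
\[
    \Pr\insquare{\supp(\cP) \not\subseteq \bigcup_{i \in \N} \{X_i\}} \leq \sum_{x \in \supp(\cP)} \Pr\insquare{x \notin \bigcup_{i \in \N} \{X_i\}} = 0 \,,
\]
so the event $\cE_{\supseteq} \coloneqq \{\supp(\cP) \subseteq \bigcup_{i \in \N} \{X_i\}\}$ has probability one as well.

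Finally, intersecting the two events, $\Pr[\cE_{\subseteq} \cap \cE_{\supseteq}] = 1$, which is exactly the statement of the proposition. I do not expect any real obstacle here; the only points deserving care are (i) using countability of $\supp(\cP)$ to justify the union bound and (ii) using the \iid{} assumption to get independence in the Borel--Cantelli / infinite-product step.
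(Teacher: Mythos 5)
Your proof is correct and takes essentially the same approach as the paper: the easy inclusion by countability/almost-sure containment, the hard inclusion via second Borel--Cantelli (or the equivalent direct computation with $(1-\cP(x))^n$) plus a union bound over the countable support. The only cosmetic difference is in the easy direction, where you intersect the probability-one events $\{X_i \in \supp(\cP)\}$ over $i$ while the paper union-bounds over $x \notin \supp(\cP)$; both are valid and rely on countability in the same way.
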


\begin{proof}
    For the direction $\Pr_{\{X_i\}_{i \in \N} \sim \cP^\infty}[\supp(\cP) \supseteq \cup_{i \in \N} \{X_i\}]$
    notice that for any element $x \notin \supp(\cP)$ 
    \begin{align*}
        \Pr_{\{X_i\}_{i \in \N} \sim \cP^\infty}[x \in \cup_{i \in \N} \{X_i\}] \leq \sum_{i \in \N} \Pr_{X_i \sim \cP}[x = X_i] = 0 \,.
        \yesnum\label{eq:support-appears-in-countable-samples:1}
    \end{align*}
    Hence,
    \begin{align*}
        \Pr_{\{X_i\}_{i \in \N} \sim \cP^\infty}[\supp(\cP) \supseteq \cup_{i \in \N} \{X_i\}] 
        ~~&=~~ 1 -\Pr_{\{X_i\}_{i \in \N} \sim \cP^\infty}[\exists x \notin \supp(\cP), x \in  \cup_{i \in \N} \{X_i\}]\\
        &\geq~~ 1 - \sum_{x \notin \supp(\cP)}\Pr_{\{X_i\}_{i \in \N} \sim \cP^\infty}[x \in  \cup_{i \in \N} \{X_i\}]\\
        &\Stackrel{\eqref{eq:support-appears-in-countable-samples:1}}{=}~~ 1 \,.
    \end{align*}
   For the other direction, \ie{}, $\Pr_{\{X_i\}_{i \in \N} \sim \cP^\infty}[\supp(\cP) \subseteq \cup_{i \in \N} \{X_i\}]$, notice that for any element $x \in \supp(\cP)$, 
   $\Pr_{X\sim \cP}[X=x]$ is a positive constant $p_x > 0$ and let $\inparen{\cE_n \coloneqq \{X_n = x\}}_{n \in \N}$ be a
   sequence of events. Notice that these
   events are independent and that
   \[
        \sum_{n \in \N} \Pr[\cE_n] = \sum_{n \in \N} p_x = \infty \,.
   \]
   Hence, we can apply the second Borel--Cantelli
   lemma {(see \Cref{lem:second-borel-cantelli})}
   and get that
   \[
        \Pr\insquare{\limsup_{n\rightarrow \infty} \cE_n } = 1\,. \yesnum\label{eq:support-appears-in-countable-samples:2}
   \]
   In other words, the element $x$ will appear
   infinitely often in the stream $X_1,\ldots,$
   with probability one.
   Therefore,
    \begin{align*}
        \Pr_{\{X_i\}_{i \in \N} \sim \cP^\infty}[\supp(\cP) \subseteq \cup_{i \in \N} \{X_i\}] 
        ~~&=~~ 1 - \Pr_{\{X_i\}_{i \in \N} \sim \cP^\infty}[\exists x \in \supp(\cP)\colon x \notin \cup_{i \in \N} \{X_i\}] \\
        &\geq~~ 1 - \sum_{x \in \supp(\cP)} \Pr_{\{X_i\}_{i \in \N} \sim \cP^\infty}[x \notin \cup_{i \in \N} \{X_i\}]\\
        &\Stackrel{\eqref{eq:support-appears-in-countable-samples:2}}{=}~~ 1 \,.
    \end{align*}     
\end{proof}
Next, we show that for any algorithm $\cA$ that identifies
the target language in the limit in the adversarial (online)
setting and for
any valid distribution $\cP$ there is some
number $t^*  \coloneqq  t^*(\cA, \cP) \in \N$ such that, when we draw 
$t^*$ many \iid{} samples from $\cP$ and use them to simulate
the adversarial game with $\cA,$ it will identify
the target language with probability at least $\nfrac{6}{7}.$
We denote the time of the last mistake of the algorithm
$\cA = \{h_n\}_{n \in \N}$ on a sequence $x_1,x_2,\ldots$ by $T_\cA(x_1,x_2,\ldots),$ \ie{},
\[
    T_\cA(x_1,x_2,\ldots) = \inf\inbrace{n_0 \in \N\colon L_{h_n(x_1,\ldots,x_n)} \neq K\,,~ \forall n \geq n_0} \,.
\]

\begin{proposition}[Tail Bound on the Distribution of Last Mistake]\label{prop:quantile-bound-identification}
    Fix any countable collection of languages $\cL$ and let $K \in \cL$ be the true language.
    For any algorithm $\cA = \{h_n\}_{n \in \N}$ that identifies $\cL$ in
    the limit in the online setting from positive examples
    and 
    any valid distribution
    $\cP$ for $K$ (\cref{def:valid-language-distribution}),
    there exists a number $t^* \in \N$ such that
    \[
    \Pr_{\{X_i\}_{i \in \N}\sim \cP^{^\infty}}
    [T_\cA(X_1,X_2,\ldots) \leq t^*] \geq \frac{6}{7}\,.
    \]
\end{proposition}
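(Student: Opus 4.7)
The plan is to combine two ingredients: (i) with probability one, an infinite i.i.d. draw from $\cP$ yields a legitimate enumeration of $K$, and (ii) on any enumeration of $K$, the online identifier $\cA$ eventually stops making mistakes. Once $T_\cA$ is almost surely finite, a standard continuity-of-probability argument furnishes the desired quantile $t^*$.

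For step (i), I would invoke \cref{prop:support-appears-in-countable-samples} directly: it already gives $\cup_{i \in \N}\{X_i\} = \supp(\cP) = K$ almost surely. Moreover, the second Borel--Cantelli computation carried out inside that proof shows that each fixed $x \in K$ (which has mass $p_x > 0$) appears infinitely often along the sequence $X_1, X_2, \ldots$ almost surely. Taking a countable union over $x \in K$ (which is fine since $K \subseteq \cX$ is countable), we obtain that with probability one the random stream $X_1, X_2, \ldots$ is a valid enumeration of $K$ in the sense required by the Gold--Angluin online model (\cref{def:Identification}), i.e., a complete ordered listing of the elements of $K$ with possible repetitions.

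For step (ii), because $\cA$ is assumed to identify $\cL$ in the limit from positive examples, \cref{def:Identification} guarantees that for every enumeration $y_1, y_2, \ldots$ of $K$ there is a finite time after which $\cA$'s guess stabilizes at an index naming $K$, and in particular $L_{h_n(y_1,\ldots,y_n)} = K$ for all sufficiently large $n$. Applying this pointwise on the probability-one event identified in step (i) yields $T_\cA(X_1, X_2, \ldots) < \infty$ almost surely. Consequently, letting $F(t) \coloneqq \Pr[T_\cA(X_1,X_2,\ldots) \leq t]$, the sets $\{T_\cA \leq t\}$ are increasing in $t$ and exhaust $\{T_\cA < \infty\}$, so continuity of probability from below gives $\lim_{t \to \infty} F(t) = \Pr[T_\cA < \infty] = 1$. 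Hence some finite $t^* \in \N$ satisfies $F(t^*) \geq \tfrac{6}{7}$, which is exactly the claim.

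There is no deep obstacle; the main subtlety is checking that Gold's online guarantee, which is stated universally over adversarial enumerations (possibly with repetitions), genuinely applies to the random streams we produce from $\cP$. That works because (a) $\cP$ is valid, so its support is exactly $K$, and (b) the Borel--Cantelli argument ensures that, almost surely, \emph{every} element of $K$ is observed (infinitely often), matching the enumeration property required by \cref{def:Identification}. If $\cA$ uses internal randomness, one conditions on its coins (or notes that the guarantee is uniform over enumerations, hence in particular holds in expectation over $\cA$'s randomness), and the rest of the argument is unchanged.
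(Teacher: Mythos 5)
Your proposal is correct and follows essentially the same route as the paper's proof: invoke \cref{prop:support-appears-in-countable-samples} to conclude that the i.i.d.\ stream is almost surely a valid enumeration of $K$, deduce that $T_\cA < \infty$ almost surely from the in-the-limit guarantee, and extract $t^*$ by continuity of probability. Your write-up is somewhat more explicit than the paper's (in particular about why the random stream satisfies the enumeration requirement and about the algorithm's internal randomness), but the argument is the same.
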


\begin{proof}
    Let $X_1,X_2,\ldots,$ be a countable \iid{} sample from $\cP.$
    From \cref{prop:support-appears-in-countable-samples} we
    get that this sample is a valid input to $\cA$ 
    since, with probability one, it
    consists only of elements of $K$ and eventually 
    every element of $K$ appears in this sequence. 
    Consider the execution of $\cA$ on prefixes of the sequence
    and denote by $T_{\cA}  \coloneqq  T_{\cA}(X_1,X_2,\ldots)$ the time it made its last mistake. 
    We have that $\Pr_{\{X_i\}_{i \in \N}\sim \cP^{\infty}}[T_\cA \in \N] = 1.$
    Thus, 
    \[
        \lim_{t \rightarrow \infty}\Pr_{\{X_i\}_{i \in \N}\sim \cP^{\infty}}[T_\cA(X_1,X_2,\ldots) \geq t] = 0\,.
    \]
    Thus, as required, there exists some $t^* \in \N$ such that
    \[
         \Pr_{\{X_i\}_{i \in \N}\sim \cP^{\infty}}[T_\cA(X_1,X_2,\ldots) \geq t^*] \leq \frac{1}{7}\,.
    \]
\end{proof}
Thus far we have shown that for every valid distribution $\cP$
there exists some number $t^* \in \N$ so that if we simulate
the online learning process with $t^*$ samples \iid{}
from $\cP$, then the algorithm identifies the true language
$K$ correctly with probability at least $\nfrac{6}{7}.$ However, 
the number $t^*$ depends on the distribution $\cP$, and hence
we cannot immediately devise a learning strategy based on it.
To make the exposition easier to follow, let us first assume
that we do know $t^*$; we will shortly relax this assumption.
For $n$ sufficiently large consider the following algorithm:
\begin{itemize}
    \item We split the input sequence into $\nfrac{n}{t^*}$
    non-overlapping batches, where the $i$-th batch
    consists of the elements $X_{(i-1)\cdot t^* + 1},\ldots,X_{i\cdot t^*}.$

    \item We use each of these sequences as an input
    to a copy of $\cA$ and we get $\nfrac{n}{t^*}$
    many predictors $\left\{h^i_n\left(X_{(i-1)\cdot t^* + 1},\ldots,X_{i\cdot t^*}\right)\right\}_{i \in [n/t^*]}.$

   \item Since these predictors
    might be outputting
    different indices (descriptions) of the same
    language, we find the smallest indexed language
 the output of each classifier
    can be mapped to. 
    In other words, if $j_i$ is the index outputted by the $i$-th batch, we find the smallest number $j' \in \N$ such that $L_{j_i} = L_{j'},$ and we set $j_i \coloneqq j'.$
    Since we only
    have query access to the languages, 
    we can only approximate this step. 
    {In particular, for every $n \in \N,$
    we set $j_i \coloneqq j'$ if $j' \in \N$
    is the smallest number for which $\ind\inbrace{x_{\ell} \in L_{j_i}} = \ind\inbrace{x_{\ell} \in L_{j'}}$, for all $\ell \in [n].$}
    The details are handled in \Cref{lem:post-processing-same-index}.

    \item We predict the index that at least $\inparen{\nfrac{5}{7}}\cdot \inparen{\nfrac{n}{t^*}}$ of the predictors agree upon; if no such language
    exists we output one arbitrarily.
\end{itemize}
Before moving to the general case where $t^*$ is unknown, it
is instructive to explain why the previous approach achieves
exponential rates. Using standard concentration bounds, 
it is not hard to see that with probability at least $1-c\cdot e^{-C\cdot n}$, where $c$ and $C$ are $\cP$-dependent constants, at least a $\nfrac{5}{7}$ 
fraction of the predictors will output an index that describes the true language. 
Conditioned on that event, it is immediate that a $\nfrac{5}{7}$-majority is well-defined and predicting based on it yields the correct answer.

Let us now explain how to handle the actual problem setting, in which
as we mentioned, we do not have knowledge of $t^*.$ 
Let $f\colon \N \rightarrow \N$ be some (very slowly) increasing function
of the input size $n,$ which we will specify shortly.
Given that function, we use the following modified approach, where $t^*$ is 
replaced by $f(n).$ 

\begin{itemize}
    \item We split the input sequence into $\nfrac{n}{f(n)}$
    non-overlapping batches, where the $i$-th batch
    consists of the elements  $X_{(i-1)\cdot f(n)+1},\ldots,X_{i\cdot f(n)}.$

    \item We use each of these sequences as an input
    to a copy of $\cA$ and we get $\nfrac{n}{f(n)}$
    many predictors $\left\{h^i_n\left(X_{(i-1)\cdot f(n) + 1},\ldots,X_{i\cdot f(n)}\right)\right\}_{i \in [n/f(n)]}.$

    \item We use the post-processing approach from \Cref{lem:post-processing-same-index}, that we also explained above.

    \item We predict the index that at least $\inparen{\nfrac{5}{7}}\cdot \inparen{\nfrac{n}{f(n)}}$ of the predictors agree upon; if no such language
    exists we output one arbitrarily.
\end{itemize}
Since
$f(\cdot)$ is increasing, there is some $n_0 \in \N$ such that
$f(n_0) = t^*.$ Thus, for $n \geq n_0$ we can repeat the previous
argument; with probability at least $1- c\cdot e^{-C \cdot \sfrac{n}{f(n)}},$
at least $\inparen{\nfrac{5}{7}}\cdot \inparen{\nfrac{n}{f(n)}}$ of the predictors
will be outputting the correct target language, so taking
the majority vote over them yields the desired result. 
Notice that now we do not achieve exactly exponential rates,
but for every sublinear function $g(\cdot)$ we can achieve
rates $e^{-g(n)}.$ 

We first state and prove the post-processing
lemma to map the outputs of predictors that correspond
to different indices of the target language $K$
to the same index.
For this result, it is useful to 
define the notion of ``projection'' of a language onto a subset of $\cX.$
\begin{definition}[$m$-Projection of a Language]\label{def:projection}
    Let $\cX = \{x_1,x_2,\ldots\}$ be a countable
    domain and let $L \subseteq \cX$
    be a language. For any $m \in \N$
    we denote by 
    $L[m] \coloneqq L \cap \{x_1,x_2,\ldots,x_m\}$
    the projection of the language onto the first
    $m$ elements of the domain.
\end{definition}
The point of the next lemma is the following: in the enumeration of the language collection $\cL$, we allow repetitions of $K$ (as in Gold's model). Hence, when running multiple copies of our identification algorithms, the majority of them will identify $K$; yet we cannot guarantee that they will identify the \textit{same} index for $K$ (due to multiple appearances of $K$ in the enumeration). This lemma guarantees that there exists a sufficiently large prefix of the enumeration of the domain $\cX$ so that the projection of predicted languages will be mapped to the smallest index version of $K$ in $\cL$, which we denote by $L_z$ below.

\begin{lemma}[Post-processing to Map to Lowest-Index Occurrence of $K$]\label{lem:post-processing-same-index}
    Let $\cL = \{L_1,L_2,\ldots,\}$ be a countable
    collection of languages over $\cX = \{x_1,x_2\ldots\}$ and $K \in \cL$.
    Let $z \coloneqq \min \{j \in \N\colon  L_j = K\}$ be the first index at which the target language appears {in $\cL$}.
    Let $\cI = (i_1,\ldots, i_m)$ be a multiset
    of indices and for all $1\leq \ell \leq \max_{j \in 
    [m]} i_j, n \in \N$, let 
    \[
        \hat i^n_j = \min\{1\leq \ell \leq i_j\colon L_{\ell}[n] = L_{i_j}[n]\} \,,
    \]
    {be the index of the first language that
    has the same projection as $L_{i_j}.$
    }
    Then, there exists a number $n_0\coloneqq n_0(K, \cL, \cX)$
    that depends on $K, \cL, \cX$, but not $\cI$, such that
    for all $n \geq n_0$
    \[
        L_{i_j} = K \implies \hat i^n_j = z, \forall j \in [m] \,.
    \]
\end{lemma}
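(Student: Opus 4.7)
The plan is to reduce the claim to a simple observation: there are only finitely many indices $\ell$ strictly smaller than $z$, and for each such $\ell$ the language $L_\ell$ must differ from $K$ in at least one element, which serves as a finite-stage witness that ``kills'' $\ell$ from ever matching $K[n]$ once $n$ is large enough.

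More concretely, my steps would be as follows. First, unpack the definitions: since $L_{i_j}=K=L_z$ and $z=\min\{j:L_j=K\}$, we automatically have $z\le i_j$, so $z$ lies in the search range $\{1,\dots,i_j\}$ defining $\hat i^n_j$. Moreover $L_z[n]=L_{i_j}[n]$ trivially for every $n$. Therefore it suffices to show that, for all sufficiently large $n$ (depending only on $K,\cL,\cX$), every index $\ell<z$ is \emph{eliminated}, in the sense that $L_\ell[n]\neq L_{i_j}[n]=K[n]$. This would force $\hat i^n_j=z$.

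Second, I would establish the elimination step for a single $\ell<z$. By the minimality of $z$, we have $L_\ell\neq K$, so the symmetric difference $L_\ell\triangle K$ is non-empty. Pick any element of this difference and let $n_\ell$ denote its position in the fixed enumeration $x_1,x_2,\dots$ of $\cX$. For every $n\ge n_\ell$, this element lies in $\{x_1,\dots,x_n\}$ and in exactly one of $L_\ell,K$, which directly implies $L_\ell[n]\neq K[n]$. Note that $n_\ell$ depends only on $L_\ell$ and $K$ and not on the multiset $\cI$.

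Finally, I would set
\[
n_0 \;\coloneqq\; \max_{1\le \ell<z} n_\ell,
\]
which is well-defined because $\{1,\dots,z-1\}$ is a finite set, and which depends only on $K$, $\cL$, and the enumeration of $\cX$. For any $n\ge n_0$ and any $j$ with $L_{i_j}=K$, every $\ell<z$ satisfies $L_\ell[n]\neq K[n]=L_{i_j}[n]$, so the minimum in the definition of $\hat i^n_j$ cannot be attained below $z$; since $z\le i_j$ and $L_z[n]=L_{i_j}[n]$, it is attained exactly at $z$, giving $\hat i^n_j=z$ as required. There is no real obstacle here: the only subtlety is to notice that once an element of $L_\ell\triangle K$ appears in the prefix $\{x_1,\dots,x_n\}$, it permanently distinguishes $L_\ell[n']$ from $K[n']$ for all $n'\ge n$, which makes $n_0$ well-defined independently of $\cI$.
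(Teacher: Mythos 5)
Your proof is correct and follows essentially the same route as the paper's: for each $\ell < z$ you find a witness element of $L_\ell \triangle K$, take $n_0$ to be the maximum of the witnesses' positions in the enumeration of $\cX$, and observe that $z \le i_j$ with $L_z[n] = L_{i_j}[n]$ always. The only cosmetic difference is that the paper splits off the case $z=1$ explicitly, whereas you absorb it into a maximum over the (then empty) set $\{1,\dots,z-1\}$, which just requires the harmless convention that this maximum is, say, $1$.
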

Before we give the formal proof, let
us explain the main idea and the implication of this result. For every
language $L$ that precedes $L_z,$ there
exists some element $x \in \cX$
such that $\ind\{x \in L\} \neq \ind\{x \in L_z\}.$
{This will enable us to detect and remove all languages different from $K$ preceding $L_z$.}
{Then,} by taking projections onto large enough prefixes we indeed map any $L_j = K, j > z,$ to $L_z.$ 
This result will be useful for our constructions that require aggregating outputs from different executions of the algorithm {which, without this post-processing step, can output different indices}.

\begin{proof}[Proof of \Cref{lem:post-processing-same-index}]
    Assume without loss of generality that for some $j\in[m]$ we
    have that $L_{i_j} = K,$ otherwise the statement holds vacuously.
    We will handle the cases $z = 1, z > 1$ separately.

    \paragraph{Case A ($z=1$):} For any 
    $j \in [m]$ for which
    $L_{i_j} = K$ and any $n \in \N$ we have
    that $L_{i_j}[n] = L_{z}[n],$ and since $z = 1$
    this is the first index for which the equality
    holds. Hence, in this case, the claim holds
    with $n_0 = 1.$

    \paragraph{Case B ($z>1$):} 
    Since $L_{z}$ is the first occurrence of $K$ in $\cL,$ for all languages $L_\ell$ with $1 \leq \ell < z$,
    there exists some $x \in \cX$ such that $\ind\{x \in L_j\} \neq \ind\{x \in L_{z}\}$. Let $x_{z_\ell}$ be the
    smallest indexed element of $\cX$ for which the previous holds. Moreover, let $ n_0 = \max_{1\leq \ell < z} z_\ell.$ Notice that for all $n \geq  n_0$ and all $1 \leq j < z,$ holds that $L_j[n] \neq L_{z}[n].$ Furthermore, for all $n \in \N$
    and all $j \in [m]$ such that $L_{i_j} = K$ it holds
    that $L_{i_j}[n] = L_{z}[n].$ Combining these two claims,
    we can deduce that for all $j \in [m]$ such 
    that $L_{i_j} = K$ and for all $n \geq \hat n_0$ the first
    index $i \in \N$ such that $L_{i_j}[n] = L_i[n]$ is indeed $z.$ Notice that $n_0$ depends only
    on the enumeration of $\cL, \cX,$ and the target
    language $K.$
\end{proof}
We are now ready to state and prove the formal result regarding the identification rates of collections that
are identifiable in the limit.
\begin{lemma}[{Reduction From Identification at Almost-Exponential Rate to Online Identification}]\label{lem:almost-exp-rates-ident-pos-examples}
    Let $\cL = \{L_1,L_2,\dots\}$ be {a countable collection of} languages and $g\colon\N \rightarrow \N$
    be a sublinear function. For any algorithm $\cA = \{h_n\}_{n \in \N}$ that identifies $\cL$ in
    the limit in the online setting with positive examples and any valid distribution
    $\cP$ there exists an algorithm $\cA' = \{h'_n\}_{n \in \N}$
    such that for all $n\in \N$
    \[
    \E_{X_1,\ldots,X_n \sim \cP^n}[\mathrm{er}(h'_n(X_1,\ldots,X_n))] \leq c \cdot e^{-C \cdot g(n)}\,.
    \]
\end{lemma}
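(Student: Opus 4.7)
The plan is to implement the batching-and-voting scheme outlined in the discussion preceding the lemma. Fix the sub-linear target rate $g$. Choose a weakly increasing function $f\colon \N \to \N$ with $f(n)\to \infty$ and such that $m \coloneqq \lfloor n/f(n)\rfloor \geq g(n)$ for all sufficiently large $n$; such an $f$ exists because $g(n) = o(n)$ forces $n/g(n)\to \infty$, so one may for concreteness take $f(n) \coloneqq \lfloor n/g(n)\rfloor$ once $g(n)\geq 1$. The learner $h'_n$ on input $x_1,\dots,x_n$ then forms $m$ disjoint batches of size $f(n)$, runs an independent copy of $\cA$ on each batch to produce indices $j_1,\dots,j_m$, applies the canonicalization of \Cref{lem:post-processing-same-index} at resolution $n$ to obtain $\hat j_1,\dots,\hat j_m$, and outputs any value matched by at least $5m/7$ of the $\hat j_i$ (with an arbitrary fallback otherwise).

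For the analysis, fix a valid distribution $\cP$ with target $K \in \cL$. \Cref{prop:quantile-bound-identification} furnishes a finite $t^\star = t^\star(\cA,\cP)$ with $\Pr_{\{X_i\}\sim \cP^\infty}[T_\cA \leq t^\star] \geq 6/7$. Since $T_\cA \leq t^\star$ implies that the prediction after any $f(n)\geq t^\star$ samples from $\cP$ is a correct index of $K$, and since $f(n)\to \infty$, there is a finite $n_1 = n_1(\cP)$ such that, for all $n \geq n_1$, each batch independently produces an index $j_i$ with $L_{j_i} = K$ with probability at least $6/7$.

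Conditional on $n \geq n_1$, the number of ``correct'' batches stochastically dominates a $\mathrm{Binomial}(m,6/7)$ random variable, so a Chernoff bound yields that at least $5m/7$ of the $\hat j_i$ satisfy $L_{\hat j_i} = K$ with probability $\geq 1 - e^{-c_0 m}$ for some absolute $c_0 > 0$. Invoking \Cref{lem:post-processing-same-index} at resolution $n \geq n_0(K,\cL,\cX)$ shows that every such correct $\hat j_i$ equals the canonical value $z \coloneqq \min\{\ell : L_\ell = K\}$; hence the $5/7$-majority is attained precisely at $z$, and $h'_n$ outputs $z$. Therefore, for all $n \geq \max(n_0,n_1)$,
\[
\E_{X_1,\dots,X_n\sim \cP^n}\!\insquare{\mathrm{er}(h'_n(X_1,\dots,X_n))} \;\leq\; e^{-c_0 m} \;\leq\; e^{-c_0\, g(n)}\,,
\]
and absorbing the bounded initial regime $n < \max(n_0,n_1)$ into a leading constant $c$ yields the target bound $c\cdot e^{-C g(n)}$.

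The principal technical obstacle is that $t^\star$ depends on the unknown $\cP$, so no single fixed batch size works universally across all valid distributions; letting $f(n)\to \infty$ is what costs us the $g(n)/n$ factor in the exponent and precludes exactly exponential rates. A secondary subtlety is that, since $K$ may occur at many positions in the enumeration of $\cL$, different correct batches may vote for different indices of $K$; \Cref{lem:post-processing-same-index} is invoked precisely to collapse every such index to the single canonical value $z$ at resolution $n$, so that the majority vote is meaningful.
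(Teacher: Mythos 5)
Your proposal is correct and follows essentially the same route as the paper's proof: batching with a slowly growing batch size $f(n)$ satisfying $n/f(n)\geq g(n)$, invoking \Cref{prop:quantile-bound-identification} to guarantee each batch succeeds with probability at least $6/7$ once $f(n)\geq t^\star$, canonicalizing the output indices via \Cref{lem:post-processing-same-index}, and concluding with a Chernoff bound on the $5/7$-majority vote. The only cosmetic difference is your explicit suggestion $f(n)=\lfloor n/g(n)\rfloor$, which need not be weakly increasing for an arbitrary sublinear $g$, but since only $f(n)\to\infty$ and $n/f(n)\geq g(n)$ are actually used, this does not affect the argument.
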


\begin{proof}
    Let $\cK \coloneqq \{L_{i_1}, L_{i_2},\dots\} \subseteq \cL$ be the set of all languages in $\cL$ that
    correspond to representations of $K,$ \ie{},
    for all $L \in \cK$ it holds that $L = K.$
    First, notice that since
    $g(n) = o(n)$ we can construct some non-decreasing
    function $f\colon \N \rightarrow \N$ with $\lim_{n \rightarrow \infty} f(n) = \infty$ 
    and $\nfrac{n}{f(n)} \geq g(n).$ Let $t^* \in \N$ be a number such that   
    \[
        \Pr_{X_1,\ldots,X_{t^*} \sim \cP^{t^*}}
    [L_{h_{t^*}(X_1,\ldots,X_{t^*})} \in \cK] \geq \frac{6}{7} \,.
    \]
    From \cref{prop:quantile-bound-identification} such a number $t^*$ is guaranteed to exist.
    Hence, there is some $n_0$ such that for all $n \geq n_0$, $f(n) \geq t^*.$ 
    Thus, for all $n \geq n_0$
    \[
        \Pr_{X_1,X_2,\ldots,X_{f(n)} \sim \cP^{f(n)}}\left[L_{h_{f(n)}\left(X_1,\ldots,X_{f(n)}\right)} \in \cK\right] \geq \frac{6}{7} \,.
    \]
    Recall the error of the classifier $h_{f(n)}(\cdot)$ as defined
    in \cref{eq:error-statistical-identification-pos}. We have
    that for all $n \geq n_0$, it holds that 
    $\mathrm{er}\left(h_{f(n)}\right)$ is a Bernoulli random
    variable with $p \leq \nfrac{1}{7}.$ Thus, if we have a collection
    of $\hat t_n  \coloneqq  \nfrac{n}{f(n)}$ such \iid{} random variables, using Hoeffding's bound \citep{Dubhashi_Panconesi_2009} we get that
    \[
        \Pr_{X_1,\ldots,X_n}\left[\frac{1}{\hat t_n}\sum_{i=1}^{\hat t_n} \er\left(h_{f(n)}\left(X_{(i-1) \cdot \hat t_n + 1},\ldots,X_{i\cdot \hat t_n}\right)\right) \geq \frac{2}{7}\right] \leq e^{-\sfrac{2\hat t_n}{49}} \leq e^{-\sfrac{2 g(n)}{49}}\,.
    \]
    Thus, for $n \geq n_0$, at least a $\nfrac{5}{7}$-fraction of the predictors outputs an index
    that corresponds to the target language.
    We condition on that event $\cE^n_0$ for the rest of the proof.
    
    Let $\cI^n = (i_1, \ldots, i_{\hat t_n})$ be the 
    multiset of the indices
    of the languages outputted by the predictors
    in the previous step and $z = \min\{\ell \in \N\colon  L_{\ell} = K\}.$ 
    Then, using \Cref{lem:post-processing-same-index} we know that there exists
    some $\hat n_0 \coloneqq \hat n_0(K, \cL, \cX)$ such that
    for all $n \geq \hat n_0$
    \begin{align*}
        L_{i_j} = K \implies \hat i_j^n = z, \forall j \in [\hat t_n] \,,
    \end{align*}
    where $\hat i_j^n$ is defined in \Cref{lem:post-processing-same-index}. Thus, letting $\hat \cI^n = \left(\hat i_1^n, \ldots, \hat i_{\hat t_n}^n\right)$
    we have that for all $n \geq \max\{n_0, \hat n_0\},$
    at least a $\nicefrac{5}{7}$-fraction of the indices
    in $\hat \cI^n$ are exactly the index $z.$

    Thus, for all $n \geq \max\{n_0, \hat n_0\}$ 
    and conditioned on the event $\cE^n_0,$
    we have that the $\nicefrac{5}{7}$-majority vote
    over the indices in $\hat \cI^n$ corresponds to the
    the first occurrence of $K$ in $\cL.$
    Since $\cE^n_0$ occurs with probability at least
    $1-e^{\sfrac{-2g(n)}{49}}$, this concludes the proof.
\end{proof}

We now move on to the final ingredient we require for the proof
of \Cref{thm:dichotomy-identification-positive}. 
It remains to show that Angluin's condition characterizes the collections of languages that can be identified at an (almost) exponential rate.
First,
we discuss a result from \citecustom{angluin1988identifying}
which our proof builds upon. Let us first briefly
describe the convergence criterion in Angluin's paper.
There is a valid distribution $\cP$ that is supported
over some $K \in \cL$ and the algorithm is presented
with an infinite sequence of \iid{} draws from $\cP.$
After seeing each example, the learner must output 
some $i \in \N$ with the goal being that $L_i = K.$
In that setting, an algorithm learns the target language
if for all but finitely many $n \in \N$ it outputs
the same index $i \in \N$ for which $L_i = K.$
Notice that the learning requirement is two-fold: \textbf{i)}
the learner needs to stabilize, and \textbf{ii)} the index
it predicts needs to correspond to the target language.
In that setting, \citecustom{angluin1988identifying} showed the following result.

\begin{theorem}[Corollary 10 \citep{angluin1988identifying}]\label{thm:angluin-limit-statistical-online-pos-not-convergence}
    Let $\cL$ be a countable collection
    of languages over a countable domain $\cX$
    that does not satisfy {Angluin's condition (\Cref{def:angliun-criterion})}.
    Then, for every learning algorithm $\cA = \{h_n\}_{n \in \N}$ there exists
    a valid distribution $\cP$ supported on some $K \in \cL$ such that, with probability at least $\nfrac{1}{3}$ over
    the \iid{} draw of $\{X_n\}_{n \in \N},$ the learner
    does not identify $K.$
\end{theorem}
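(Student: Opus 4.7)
The plan is to prove this by contradiction, following the high-level structure of Angluin's original argument. Suppose for contradiction that there exists a learner $\cA = \{h_n\}_{n \in \N}$ such that for \emph{every} valid distribution $\cP$ with target $K \in \cL$, with probability strictly greater than $2/3$ the learner eventually stabilizes at a fixed index representing $K$. The goal is to use $\cA$ to build a tell-tale procedure satisfying Angluin's condition (\Cref{def:angliun-criterion}), contradicting the hypothesis that $\cL$ fails it.

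The first step is to extract a candidate tell-tale set. Fix any $L_i \in \cL$ and any valid distribution $\cP_i$ with $\supp(\cP_i) = L_i$. By the success assumption, the learner stabilizes almost surely on an event of probability $> 2/3$ at a finite random time $\tau_i$, so there exists a deterministic threshold $t_i \in \N$ such that, with probability at least $1/2$, the learner has already stabilized to an index $j$ with $L_j = L_i$ by time $t_i$. Let $T_i$ be the (finite, random) set $\{X_1, \ldots, X_{t_i}\} \subseteq L_i$. The candidate tell-tale procedure simply outputs such a set $T_i$; the key technical claim is that $T_i$ satisfies the tell-tale property with positive probability, namely that whenever $T_i \subseteq L_j$ for some $L_j \in \cL$, the language $L_j$ is not a proper subset of $L_i$.

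The second step is the switching argument used to prove this claim. Suppose that, with positive probability, there is some $L_{j^*} \in \cL$ with $T_i \subseteq L_{j^*} \subsetneq L_i$. Construct a distribution $\cP_{j^*}$ valid for $L_{j^*}$ that couples with $\cP_i$ on the first $t_i$ steps, that is, one that assigns positive probability to generating the identical prefix $X_1, \ldots, X_{t_i}$. On this coupled event the learner, driven by the same observed prefix, outputs the same stabilizing index $j$ with $L_j = L_i \neq L_{j^*}$ at time $t_i$. A continuation argument then shows that, because every subsequent sample drawn from $\cP_{j^*}$ remains inside $L_{j^*} \subseteq L_i$ and is therefore never ``inconsistent'' with the current hypothesis $L_i$, the learner keeps outputting an index for $L_i$ with non-trivial probability. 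This directly contradicts the success assumption applied to the target $L_{j^*}$ under $\cP_{j^*}$.

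The main obstacle is precisely this continuation argument. The learner is adaptive and may eventually switch away from its initial guess $L_i$ under $\cP_{j^*}$ even though all observed samples lie in $L_i$. To control this, the standard approach -- used in Angluin's original proof -- is to iterate the switching construction: build an infinite descending chain of nested sublanguages $L_i \supsetneq L_{j^*} \supsetneq L_{j^{**}} \supsetneq \cdots$ together with matching valid distributions, each fooling the learner with at least a fixed positive probability at its corresponding ``switch time.'' One then aggregates the failure events across this chain via a Borel--Cantelli-style argument (see \Cref{lem:first-borel-cantelli}) to conclude that for some distribution in the sequence, the total probability of failing to stabilize at the correct index is at least $1/3$. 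Making this iteration precise -- in particular, tracking the coupled events across the switches so that failure probabilities accumulate rather than dissipate, and ensuring each constructed distribution remains valid for a language in $\cL$ -- is the central technical subtlety, and it is exactly where the structural negation of Angluin's condition (existence of arbitrarily deep overgeneralization traps) is leveraged.
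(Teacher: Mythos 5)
First, a point of comparison: the paper does not prove this statement. It is quoted as Corollary~10 of \citet{angluin1988identifying} and used as a black box; the paper's own work starts with the strengthening in \cref{thm:angluin-limit-statistical-online-pos-equivalent}. So you are reproving an external result, and your attempt has to stand on its own.

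Your high-level plan (contradiction; a successful learner yields tell-tales; a coupling/switching argument when a tell-tale fails) is the right one and matches Angluin's. But the proof breaks exactly where you flag it, and the repair you propose does not work. The negation of Angluin's condition (\cref{def:angliun-criterion}) gives you a \emph{single} bad language $L_i$ such that every finite $T \subseteq L_i$ is contained in some $L_j \subsetneq L_i$; it tells you nothing about the traps $L_j$ themselves. So you cannot iterate the switching construction downward to get an infinite descending chain $L_i \supsetneq L_{j^*} \supsetneq L_{j^{**}} \supsetneq \cdots$: in the canonical non-identifiable collection of \cref{thm:gold} (all finite languages together with $L_\infty$), the only language witnessing the failure of the condition is $L_\infty$, and every descending chain below it is finite. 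Angluin's actual argument alternates rather than descends: each time the learner commits to an index for $L_i$, switch to a distribution for a fresh trap $L_{j_k} \subsetneq L_i$ containing everything seen so far; each time it commits to a trap, continue back toward $L_i$; the distributions are built to agree on longer and longer prefix events so that they converge to a single valid distribution (for $L_i$ or for one of the traps) on which the learner either changes its mind infinitely often or stabilizes on a wrong index with probability at least $\nfrac{1}{3}$. Borel--Cantelli is not the right aggregation device: \cref{lem:first-borel-cantelli} controls whether infinitely many of a summable family of events occur, whereas what you need is one fixed distribution witnessing failure, and each individual $\cP_k$ in your sequence could perfectly well be identified eventually (the learner may recover after its switch time), so "some distribution in the sequence fails with probability $\geq \nfrac{1}{3}$" does not follow.

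Two smaller issues. The set $T_i = \{X_1,\dots,X_{t_i}\}$ you extract is random, while \cref{def:angliun-criterion} asks for a deterministic (indeed enumerable) tell-tale, so you must argue that some fixed realization works. And your continuation claim --- that under $\cP_{j^*}$ the learner "keeps outputting an index for $L_i$" because all samples stay inside $L_i$ --- is unjustified for an arbitrary adaptive learner that is not consistency-based; this is precisely the difficulty the alternating construction is designed to circumvent, not a step you can assert.
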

In particular, Angluin's result shows that,
with probability at least $\nfrac{1}{3,}$
the learner will either not stabilize to any number $i \in \N$
or it will stabilize to a number $j \in \N$ with $L_j \neq K.$
Our next result provides a strengthening of 
Angluin's result since it shows that
with probability at least $\nfrac{1}{3},$
the learner will, in fact, predict infinitely many
times indices that do not correspond to $K.$

\begin{lemma}\label{thm:angluin-limit-statistical-online-pos-equivalent}
    Let $\cL$ be a countable collection
    of languages over a countable domain $\cX$
    that does not satisfy {Angluin's condition (\Cref{def:angliun-criterion})}.
    Then, for every learning algorithm $\cA = \{h_n\}_{n \in \N}$ there exists
    a valid distribution $\cP$ supported on some $K \in \cL$ such
    that 
    \[
        \Pr_{\{X_i\}_{i \in \N} \sim \cP^\infty } \left[
            \exists ~i_1 < i_2 < i_3 < \ldots ~:~ L_{h_{i_j}(X_1,\ldots,X_{i_j})} \neq K,    \forall j \in \N \right] \geq \frac{1}{3} \,.
    \]
\end{lemma}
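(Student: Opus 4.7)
The plan is to argue by contradiction via a reduction to \cref{thm:angluin-limit-statistical-online-pos-not-convergence}. Suppose, for contradiction, that there is a learner $\cA = \{h_n\}_{n\in\N}$ such that for every valid distribution $\cP$ supported on some $K\in\cL$,
\[
\Pr_{\{X_i\}\sim\cP^\infty}\insquare{\exists~ i_1<i_2<\cdots~:~ L_{h_{i_j}(X_1,\ldots,X_{i_j})}\neq K,~\forall j\in\N}<\tfrac{1}{3}\,.
\]
Equivalently, with probability strictly greater than $\nfrac{2}{3}$ there exists a finite (sample-dependent) $n^*$ such that $L_{h_n(X_1,\dots,X_n)} = K$ for all $n\geq n^*$. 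The subtlety is that the integer indices $h_n(X_1,\dots,X_n)$ need not be the same for all $n\geq n^*$: because $K$ may appear at many positions of $\cL=\{L_1,L_2,\dots\}$, the learner $\cA$ may itself fail to stabilize in Angluin's sense even though it is eventually always correct at the language level. Hence Angluin's result cannot be invoked directly on $\cA$.

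The key idea is to post-process $\cA$ into a learner $\cA' = \{h'_n\}_{n\in\N}$ that does stabilize to the smallest index $z=\min\{j\in\N: L_j=K\}$, using the prefix-projection trick of \cref{lem:post-processing-same-index}. Concretely, set
\[
h'_n(X_1,\dots,X_n)\coloneqq \min\inbrace{1\leq \ell\leq h_n(X_1,\dots,X_n)\,:~ L_\ell[n] = L_{h_n(X_1,\dots,X_n)}[n]}\,,
\]
where $L[n]=L\cap\{x_1,\dots,x_n\}$ as in \cref{def:projection}. Each $h'_n$ is computable given the membership oracle for $\cL$: it is a finite minimization over $\ell\leq h_n(X_{\leq n})$, and each equality check $L_\ell[n] = L_{h_n(X_{\leq n})}[n]$ reduces to $O(n)$ membership queries against the fixed enumeration of $\cX$.

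The main obstacle is to certify that $\cA'$ genuinely stabilizes to $z$ on the same high-probability event, uniformly in the sample-dependent indices produced by $\cA$. This is exactly what \cref{lem:post-processing-same-index} delivers: it supplies a threshold $n_0=n_0(K,\cL,\cX)$ that depends only on $K,\cL,\cX$ (and not on $\cA$'s outputs or the sample) such that for every $n\geq n_0$ and every index $i$ with $L_i=K$, the post-processed index $\min\{1\leq \ell\leq i: L_\ell[n]=L_i[n]\}$ equals $z$. Applying this at each step to the singleton multiset $\cI_n=(h_n(X_1,\dots,X_n))$, on the event that $L_{h_n(X_{\leq n})}=K$ for all $n\geq n^*$, we obtain $h'_n(X_1,\dots,X_n)=z$ for every $n\geq \max\{n^*,n_0\}$. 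Thus $\cA'$ stabilizes to the correct index $z$ with probability strictly greater than $\nfrac{2}{3}$ for every valid $\cP$, contradicting \cref{thm:angluin-limit-statistical-online-pos-not-convergence}, which guarantees the existence of some valid $\cP$ for which any learner fails to identify $K$ with probability at least $\nfrac{1}{3}$. This contradiction completes the proof.
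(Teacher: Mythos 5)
Your proposal is correct and follows essentially the same route as the paper: assume the failure probability is below $\nfrac{1}{3}$, post-process $h_n$ to the smallest index whose $n$-projection agrees with $L_{h_n(X_{\leq n})}[n]$, use the fact that the stabilization threshold $n_0$ depends only on $K,\cL,\cX$, and contradict \cref{thm:angluin-limit-statistical-online-pos-not-convergence}. The only cosmetic difference is that you invoke \cref{lem:post-processing-same-index} as a black box while the paper re-derives the same $z=1$ versus $z>1$ case analysis inline.
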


\begin{proof}
    Let $\cL$ be a countable collection of languages that does not
    satisfy Angluin's condition. Assume towards contradiction
    that there is learner $\{h_n\}_{n \in \N}$ {that, for any valid distribution $\cP$, with probability $c_\cP < \nfrac{1}{3}$ misidentifies $K$ infinitely often }, \ie{}, for 
    any valid distribution $\cP$ 
    \[
        \Pr_{\{X_i\}_{i \in \N} \sim \cP^\infty } \insquare{
            \exists ~i_1 < i_2 < i_3 < \ldots ~:~ L_{h_{i_j}(X_1,\ldots,X_{i_j})} \neq K,    \forall j \in \N
        } = c_\cP < \frac{1}{3} \,.
        \yesnum\label{def:identification:cp}
    \]
    We will construct a different learner $\{h'_n\}$ which, for
    all valid distributions $\cP,$ learns the corresponding
    target language $K$ in Angluin's setting \citep{angluin1988identifying} with probability at least
    $\nfrac{2}{3}.$ This will create the desired
    contradiction {with \cref{thm:angluin-limit-statistical-online-pos-not-convergence}}.
    Let $h'_n$ be a learner that works
    as follows.
    \begin{mdframed}
        {\textbf{Learner $h'_n$}

        \medskip 
        \noindent \textbf{Input:}\quad Access to any infinite draw $X_1,X_2,\dots$ from $\cP^\infty$ and oracle access to learner $h(\cdot)$
        \medskip

        \noindent\textbf{Description:}

        \vspace{-2mm}

        \begin{enumerate}
            \item \textbf{For each} $n\in \N$ \textbf{do:} 
            \vspace{-2mm}
            \begin{enumerate}
                \item Compute $i^* = h_n(X_1,\dots,X_n)$
                \item Compute $h'_n(X_1,\dots,X_n)$ as the smallest index $j^*$ such that $L_{j^*}$ classifies the first $n$ elements $x_1,x_2,\dots,x_n$ of the domain $\cX$ in the same way as $L_{i}$, \ie{},
                \[
                    h'_n(X_1,\ldots,X_n) \coloneqq \min\inbrace{j \in [i^*]\colon \ind\inbrace{x_i \in L_j} = \ind\inbrace{x_i \in L_{i^*}}, \forall i \in [n]} \,.
                \]
                \textit{\#~~Notice that this can be done with $O\inparen{n\cdot i^*}$ many membership queries; where we send $n$ queries to each of the first $i^*$ languages in $\cL$}.
            \end{enumerate}
        \end{enumerate}

        }
    \end{mdframed}
    \noindent Let $z \in \N$ be the smallest number for which
    $L_z = K.$ 
    We will handle the cases $z = 1$ and $z > 1$
    separately. 

    \paragraph{Case A ($z=1$):} If $L_{h(X_1,\ldots,X_n)} = K,$ we have
    that $\ind \inbrace{x \in L_{h(X_1,\ldots,X_n)}}
    = \ind \inbrace{x \in L_1}$ for all $x \in \cX.$
    Hence, since 1 is the smallest index, 
    we have that for all $n \in \N$
    \[
               L_{h_n(X_1,\ldots,X_n)} = K \implies h'_n(X_1,\ldots,X_n) = z \,. 
    \]
    In this case, we define $n_0 \coloneqq 1.$

    \paragraph{Case B ($z>1$):}
    Let $1\leq z' < z$.
    Then, since $L_z$
    is the first language for which $L_z = K$
    it must be the case that $L_{z'} \neq L_z.$
    Hence, the set $S_{z'} \coloneqq \inbrace{x \in \cX: \ind\inbrace{x \in L_{{z'}}} \neq \ind\inbrace{x \in L_{z}}}$ is non-empty.
    We let $\ell_{z'} \coloneqq \min\inbrace{i \in \N\colon x_i \in S_{z'}},$ \ie{}, let $\ell_{z'}$ be the smallest number that  $x_{\ell_{z'}}$ certifies that $L_{{z'}}$ is different from $L_z$. 
    Notice that $\ell_{z'} < \infty.$
    We also define $n_0 \coloneqq \max\inbrace{\ell_1,\ldots, \ell_{z-1}},$ 
    Notice that for all $n \geq n_0$
    we have that
    \[
        L_{h_n(X_1,\ldots,X_n)} = K \implies h'_n(X_1,\ldots,X_n) = z \,.
    \]

    \medskip 
    Let $\cE$ denote the event that
    \[
        \abs{\inbrace{n \in \N \colon  h_n(X_1,\ldots,X_n) \neq K}} < \infty \,.
    \]
    Notice that, by definition of $c_{\cP}$, 
    \[
        \Pr_{\{X_i\}_{i \in \N} \sim \cP^\infty }\left[ \cE\right] = 1-c_\cP \,.
    \]
    In other words, conditioned on the event $\cE,$
    the learner $\{h_n\}_{n \in \N}$ makes
    finitely many mistakes. Thus, {conditioned on $\evE$,} for
    every draw $D \coloneqq \{X_i\}_{i \in \N} \sim \cP^\infty$
    there is some number $n_D \in \N$ such that 
    $h_n(X_1,\ldots,X_n) = K, \forall n \geq n_D.$
    Let $n' = \max\inbrace{n_D, n_0}.$
    Then, {conditioned on $\evE$,} for every $n \geq n'$ we have that
    \[
        h'_n(X_1,\ldots,X_n) = z \,.
    \]
    {Since $c_{\cP}<\nfrac{1}{3}$ (see \cref{def:identification:cp}), $1-c_{\cP}>\nfrac{2}{3}$ which implies that} the learner $\inbrace{h'_n}_{n \in \N}$
    converges in Angluin's model with probability
     greater than $\nicefrac{2}{3},$
    for any choice of a valid data-generating
    distribution $\cP.$
    This contradicts \Cref{thm:angluin-limit-statistical-online-pos-not-convergence} and
    concludes the proof.
\end{proof}
Let us now explain why \Cref{thm:angluin-limit-statistical-online-pos-equivalent} does not immediately imply
a lower bound in our setting. First, notice that the previous result says that any learner $\{h_n\}_{n\in\N}$ must make infinitely many errors with probability at least $\nfrac{1}{3}$, under some valid data-generating distribution $\cP$. Expressing this using a $\limsup(\cdot)$ implies that\footnote{
        {Informally, $\limsup$ of a sequence of events captures the events that occur infinitely often.
        For instance, $\Pr[\limsup_{n\to \infty} \cE_n]$ represents the probability that infinitely many of the events $\cE_n$ occur.
        On the other hand, $\limsup_{n\to\infty} \Pr[\cE_n]$ roughly speaking denotes the largest value that the probabilities $\Pr[\cE_1],\Pr[\cE_2],\dots,\dots$ approach infinitely often as $n\to \infty$.}
    }
\[
    \Pr_{\{X_i\}_{i \in \N} \sim \cP^\infty } \left[ \limsup_{n \rightarrow \infty} \{L_{h_n(X_1,\ldots,X_n)} \neq K \}\right] \geq \frac{1}{3} \,.
\]
{{Since $R\colon \N\to \R_{\geq 0}$ is a rate function, it satisfies} $\lim_{n \rightarrow \infty} R(n) = 0$.
Thus, in order to show that $\cL$ is not learnable at any
rate, it is enough to show that for any learner
$\{h_n\}_{n \in \N},$ there exists a valid
distribution $\cP$ such that $\Pr_{X_1,\ldots,X_n \sim \cP^n}\left[L_{h_n(X_1,\ldots,X_n)} \neq K\right]$ does not 
converge as $n \rightarrow \infty$, or if it does converge it holds that
\[
    \lim_{n\rightarrow \infty} \Pr_{X_1,\ldots,X_n \sim \cP^n}\left[L_{h_n(X_1,\ldots,X_n)} \neq K\right] \neq 0 \,.
\]
}
For this, it suffices to show that 
\begin{equation}\label{eq:limsup-lower-bound}
  \limsup_{n \rightarrow \infty} \Pr_{X_1,\ldots,X_n \sim \cP^n}\left[L_{h_n(X_1,\ldots,X_n)} \neq K\right]  > 0\,,
\end{equation}
for some valid distribution $\cP.$
{It follows from the reverse of Fatou's lemma} that for every sequence of events $\{\cE_n\}_{n \in \N}$ %
\[
    \Pr\left[ \limsup_{n \rightarrow \infty} \cE_n \right] \geq \limsup_{n\rightarrow\infty} \Pr[\cE_n] \,,
\]
which is not sufficient to deduce the result we need. In fact, it is not hard to construct a family of events such
that
$\inbrace{\cE_n}_{n \in \N}$ such that $\Pr\left[ \limsup_{n \rightarrow \infty} \cE_n \right] = 1,$
but $\limsup_{n\rightarrow\infty} \Pr[\cE_n] = 0:$
consider an infinite stream of \emph{independent} coin 
flips, where the probability of success of the $n$-th
try is $\nicefrac{1}{n}.$ The second Borel--Cantelli
lemma (see \Cref{lem:second-borel-cantelli}) implies the  result.
Hence, we need to study the particular structure of our problem {to show that
$\limsup_{n\rightarrow\infty} \Pr[\cE_n] > 0.$
}.

{In fact, {to deduce that $\limsup_{n\rightarrow\infty} \Pr[\cE_n] > 0,$}
we show a stronger result: the $\limsup$ of the probability of error
of the learner is not merely bounded away from zero,
but, it is at least $\nicefrac{1}{2}$ (\Cref{lem:no-rate-identification-positive}).}
To that end, we follow a 
strategy which consists of the following two main steps:
\begin{itemize}
    \item First, we assume that there exists a learner $\{h_n\}_{n \in \N}$
    such that for every valid distribution $\cP$ there 
    is some $c > 0$ such that
    \[
    \limsup_{n \rightarrow \infty}  \Pr_{X_1,\ldots,X_n \sim \cP^n}\left[\inbrace{L_{h_n(X_1,\ldots,X_n)} \neq K}\right] \leq \frac{1}{2}-c \,.
    \]
    Then, we show that using the learner $\{h_n\}_{n \in \N}$ we can construct a learner $\{h'_n\}_{n \in \N}$ such that for
    all valid distributions $\E_{X_1,\ldots,X_n \sim \cP^n}[\ind\{L_{h'_n(X_1,\ldots,X_n)} \neq K\}] \leq C \cdot e^{-c \cdot\sfrac{n}{\log n}},$
    where $c,C$ are distribution-dependent constants. This can be viewed
    as a boosting argument for identification.
    To make this argument work, we also need
    to use our post-processing result (\Cref{lem:post-processing-same-index}) to map different outputs
    that correspond to $K$ to the same
    index.
    
    \item Subsequently, using the Borel--Cantelli lemma (see \Cref{lem:first-borel-cantelli})
    we show that for $\{h'_n\}_{n \in \N}$ it holds that for any valid
    distribution $\cP$
    \[
        \Pr_{\{X_i\}_{i \in \N} \sim \cP^\infty } \left[ \limsup_{n \rightarrow \infty} \left\{L_{h'_n(X_1,\ldots,X_n)} \neq K \right\}\right] = 0 \,,
    \]
    which, combined with \Cref{thm:angluin-limit-statistical-online-pos-equivalent}, leads to a contradiction.
\end{itemize}
The formal statement and the proof of the result follow.

\begin{lemma}\label{lem:no-rate-identification-positive}
    For every countable collection of languages $\cL$ that does not satisfy
    Angluin's condition, and every learning algorithm $\cA = \{h_n\}_{n \in \N}$ there exists a valid distribution $\cP$ supported on $K \in \cL$
    such that 
    \[
        \limsup_{n \rightarrow \infty} \Pr_{X_1,\ldots,X_n \sim \cP^n }  \left[L_{h_n(X_1,\ldots,X_n)} \neq K\right] \geq \frac{1}{2} \,.
    \]
\end{lemma}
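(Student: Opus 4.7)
The plan is to argue by contradiction, following the two-step strategy outlined just above the statement. Assume towards contradiction that there is a learner $\cA = \{h_n\}$ for which every valid distribution $\cP$ with target $K \in \cL$ admits some $c_{\cP} > 0$ with
\[
\limsup_{n \to \infty} \Pr_{X_1,\ldots,X_n \sim \cP^n}\!\left[L_{h_n(X_1,\ldots,X_n)} \neq K\right] \;\leq\; \tfrac{1}{2} - c_{\cP}\,.
\]
I will boost $\cA$ into a learner $\cA' = \{h'_n\}$ whose per-step error is summable in $n$, and then derive a contradiction with \Cref{thm:angluin-limit-statistical-online-pos-equivalent} via the first Borel--Cantelli lemma.

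For the boosting step, fix a valid $\cP$ with target $K$ and let $z \coloneqq \min\{j \in \N : L_j = K\}$. By the definition of $\limsup$, there is $N_\cP \in \N$ such that for every $m \geq N_\cP$, $\Pr_{Y_1,\ldots,Y_m \sim \cP^m}[L_{h_m(Y_1,\ldots,Y_m)} \neq K] \leq \tfrac{1}{2} - \tfrac{c_\cP}{2}$. Take $n$ large enough that $\lfloor \log n \rfloor \geq N_\cP$, split $X_1,\ldots,X_n$ into $T_n \coloneqq \lfloor n/\log n \rfloor$ disjoint batches of size $\lfloor \log n \rfloor$, run $h_{\lfloor \log n \rfloor}$ on each batch to obtain indices $i_1,\ldots,i_{T_n}$, and apply the post-processing of \Cref{lem:post-processing-same-index} to replace each $i_j$ by $\hat i_j^n$, the smallest index whose language agrees with $L_{i_j}$ on the first $n$ elements of $\cX$. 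Let $h'_n$ return the plurality index among $\hat i_1^n, \ldots, \hat i_{T_n}^n$ (ties broken arbitrarily). Each indicator that the $j$-th batch fails to output an index of $K$ is a Bernoulli of parameter at most $\tfrac{1}{2} - \tfrac{c_\cP}{2}$, and these Bernoullis are independent across batches, so Hoeffding's inequality gives that the number of batches with $L_{i_j} = K$ is at least $\big(\tfrac{1}{2} + \tfrac{c_\cP}{4}\big)T_n$ except with probability at most $\exp(-\Omega_\cP(T_n)) = \exp(-\Omega_\cP(n/\log n))$. On this event, and for $n$ above the threshold $\hat n_0(K,\cL,\cX)$ from \Cref{lem:post-processing-same-index}, every such batch gets post-processed to exactly $z$, and since $z$ receives a strict majority the plurality vote returns $z$, so $h'_n$ outputs an index of $K$.

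I then invoke Borel--Cantelli. The bound above yields $\Pr[L_{h'_n(X_1,\ldots,X_n)} \neq K] \leq C_\cP \exp(-c'_\cP \cdot n/\log n)$ for all sufficiently large $n$, and since $n/\log n \geq 2 \log n$ eventually, $\sum_n \exp(-c'_\cP \cdot n/\log n) < \infty$. The first Borel--Cantelli lemma (\Cref{lem:first-borel-cantelli}) then gives
\[
\Pr_{\{X_i\}_{i\in\N} \sim \cP^\infty}\!\left[\,\limsup_{n\to\infty}\{L_{h'_n(X_1,\ldots,X_n)} \neq K\}\,\right] = 0\,,
\]
for every valid $\cP$. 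This directly contradicts \Cref{thm:angluin-limit-statistical-online-pos-equivalent}, which guarantees for any learner (in particular $\cA'$) the existence of a valid $\cP$ whose error set is infinite with probability at least $\tfrac{1}{3}$.

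The main obstacle is the indexing ambiguity created by multiple representations of $K$ in $\cL$: two batches that both successfully identify $K$ can return different indices, so a naive majority vote may fragment the ``correct'' mass across many indices and fail to surface $K$. The post-processing in \Cref{lem:post-processing-same-index} is exactly what resolves this, by canonicalizing every output to the smallest-indexed language matching it on a long enough prefix of $\cX$. A secondary calibration is the choice of batch size: it must grow so that the per-batch error bound eventually kicks in, yet stay small enough that the Hoeffding tail in $n/(\text{batch size})$ is summable; the choice $\lfloor \log n \rfloor$ achieves both and yields the advertised boosted rate $e^{-c\, n/\log n}$.
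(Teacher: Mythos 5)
Your proposal is correct and follows essentially the same route as the paper's proof: assume the $\limsup$ is bounded below $\tfrac{1}{2}$, boost by splitting into $\lfloor n/\log n\rfloor$ batches of size $\lfloor\log n\rfloor$, canonicalize the batch outputs via \Cref{lem:post-processing-same-index}, take a majority vote, apply Hoeffding to get a summable error $e^{-\Omega(n/\log n)}$, and invoke the first Borel--Cantelli lemma to contradict \Cref{thm:angluin-limit-statistical-online-pos-equivalent}. If anything, your handling of the threshold (requiring $\lfloor\log n\rfloor \geq N_\cP$ so the per-batch guarantee applies to the batch size, not to $n$ itself) is stated slightly more carefully than in the paper.
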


\begin{proof} 
    Assume towards contradiction that there exists a countable collection of languages
    $\cL$ that does not satisfy Angluin's condition and a learning
    algorithm $\cA = \{h_n\}_{n \in \N}$ such that for all target languages $K \in \cL$ and for all valid
    distributions $\cP$ supported over $K$ there exists some $c < \nicefrac{1}{2}$ such that
      \[
        \limsup_{n \rightarrow \infty} \Pr_{X_1,\ldots,X_n \sim \cP^n } \left[L_{h_n(X_1,\ldots,X_n)} \neq K\right] = c \,.
    \]
    Let also $\tilde c \coloneqq 1/2 - c > 0$ By definition of the limit superior, it holds
    that 
    \[
        \abs{\left\{n \in \N\colon   \Pr_{X_1,\ldots,X_n \sim \cP^n }[L_{h_n(X_1,\ldots,X_n)} \neq K] > c + \frac{\tilde c}{2}\right\}} < \infty \,.
    \]
    For the rest of the proof, let us
    fix some valid distribution $\cP.$
    Let $n_0$ be the largest number such that $\Pr_{X_1,\ldots,X_n \sim \cP^n } \left[L_{h_n(X_1,\ldots,X_n)} \neq K\right] > c + \inparen{\nfrac{\tilde c}{2}}$. 
    Notice that $n_0$ depends on $\cP.$
    The previous
    argument shows that $n_0 < \infty.$ For all $n > n_0$ we have that
    \[
        \Pr_{X_1,\ldots,X_n \sim \cP^n } \left[L_{h_n(X_1,\ldots,X_n)} \neq K\right] \leq c + \frac{\tilde c}{2} = \frac{1}{2} - \frac{\tilde c}{2} \,.
    \]
    Consider the algorithm $\cA' = \{h'_n\}_{n \in \N}$ that works 
    as follows: for every $n$, it splits the dataset into
    $\hat t_n \coloneqq \nfrac{n}{\log n}$ consecutive and non-overlapping batches, each
    of size $\log n.$ Then, it runs algorithm $h_{\log n}$ on
    each of the batches. Let $i_j$ denote the output
    of the $j$-th batch, $\cI^n = (i_1,\ldots,i_{\hat t_n})$
    denote the multiset of all these indices and $z = \min\{\ell \in \N\colon  L_\ell = K\}.$  Then, using \Cref{lem:post-processing-same-index} we know that there exists
    some $\hat n_0 \coloneqq \hat n_0(K, \cL, \cX)$ such that
    for all $n \geq \hat n_0$
    \begin{align*}
        L_{i_j} = K \implies \hat i_j^n = z, \quad \forall j \in [\hat t_n] \,,
    \end{align*}
    where $\hat i_j^n$ is defined in \Cref{lem:post-processing-same-index}. Thus, letting $\hat \cI^n = \left(\hat i_1^n, \ldots, \hat i_{\hat t_n}^n\right)$
    we have that for all $n \geq \max\{n_0, \hat n_0\},$
    all the indices of $\cI^n$ that correspond to some index of
    $K$ are mapped to $z$ in the collection $\hat \cI^n.$

    Finally, the algorithm outputs the majority vote
    over the indices in $\hat \cI^n.$
    Using a standard Chernoff bound, we see that 
    \[
        \Pr\left[\sum_{j \in [\hat t_n]} \frac{\ind\left\{\hat i_j^n \neq z \right\}}{\sfrac{n}{\log n}} \geq \frac{1}{2} - \frac{\tilde c}{4}\right] \leq e^{-\sfrac{\tilde c^2 2n}{\log{(n)}}}, \quad\forall n \geq \max\{n_0, \hat n_0\} \,.
    \]
    {This implies that
    \[
        \Pr_{X_1,\ldots,X_n}\insquare{{ h'_n(X_1,\ldots,X_n)} \neq z} \leq  e^{-\sfrac{\tilde c^2 2n}{\log{(n)}}}, \quad\forall n \geq \max\{n_0, \hat n_0\} \,.
    \]
    }
    Thus, we have that
    \begin{align*}
        \sum_{n \in \N} \Pr_{X_1,\ldots,X_n \sim \cP^n}[L_{h'_n(X_1,\ldots,X_n)} \neq K] &= \sum_{n \leq \max\{n_0, \hat n_0\}} \Pr_{X_1,\ldots,X_n \sim \cP^n}[L_{h'_n(X_1,\ldots,X_n)} \neq K] 
        \\&+ \sum_{n > \max\{n_0, \hat n_0\}} \Pr_{X_1,\ldots,X_n \sim \cP^n}[L_{h'_n(X_1,\ldots,X_n)} \neq K] \\
        &\leq \max\{n_0, \hat n_0\} +  \sum_{n > \max\{n_0, \hat n_0\}} \Pr_{X_1,\ldots,X_n \sim \cP^n}[L_{h'_n(X_1,\ldots,X_n)} \neq K] \\
        &\leq \max\{n_0, \hat n_0\} +  \sum_{n > \max\{n_0, \hat n_0\}} \Pr_{X_1,\ldots,X_n \sim \cP^n}[{h'_n(X_1,\ldots,X_n)} \neq z] \\
        &\leq \max\{n_0, \hat n_0\}+ \sum_{n > \max\{n_0, \wh{n_0}\}} e^{-\sfrac{\tilde c^2 2n}{\log{(n)}}} \\
        &< \infty \,.
    \end{align*}
    Using the Borel--Cantelli lemma (see \Cref{lem:first-borel-cantelli}), we get that
    \[
        \Pr_{\{X_i\}_{i \in \N} \sim \cP^\infty } \left[ \limsup_{n \rightarrow \infty} \inbrace{L_{h'_n(X_1,\ldots,X_n)} \neq K} \right] = 0 \,.
    \]
    Since this holds for all valid distributions $\cP$,
    it contradicts \Cref{thm:angluin-limit-statistical-online-pos-equivalent}, which states that, for some valid $\cP'$ (which depends on $\{h'_n\}_{n \in \N}$),
    it holds that
    \[
       \Pr_{\{X_i\}_{i \in \N} \sim {\cP'}^\infty } \left[ \limsup_{n \rightarrow \infty} \inbrace{L_{h'_n(X_1,\ldots,X_n)} \neq K} \right] \geq \frac{1}{3} \,. 
    \]
    This concludes the proof.
    
\end{proof}

\noindent We now have all the components to prove \Cref{thm:dichotomy-identification-positive} {by following the outline in \cref{fig:outline:thm:dichotomy-identification-positive}}.

\begin{proof}[Proof of \Cref{thm:dichotomy-identification-positive}]
    Let $\cL$ be any non-trivial collection of 
    languages. Then, \Cref{lem:exp-rates-lower-bound-ident-pos} implies that no learner can learn $\cL$
    at a rate faster than $e^{-n}.$

    Let us first consider the case that $\cL$
    satisfies Angluin's condition. This implies
    that $\cL$ is identifiable in the limit (see \Cref{thm:angluin-id-limit}). Let $g\colon\N \rightarrow \R$, be some sublinear function, \ie{}, $g(n) = o(n).$
    Then, \Cref{lem:almost-exp-rates-ident-pos-examples}
    shows that there exists a learner that achieves rates
    $e^{-g(n)}$ for $\cL.$

    Lastly, we consider the case where $\cL$
    does not satisfy Angluin's condition. Then, \Cref{lem:no-rate-identification-positive} shows that for every learner $\{h_n\}_{n \in \N}$,
    there exists a valid distribution $\cP$ for which
    \[
        \limsup_{n \rightarrow \infty} \Pr_{X_1,\ldots,X_n \sim \cP^n }  \left[L_{h_n(X_1,\ldots,X_n)} \neq K\right] \geq \frac{1}{2} \,.
    \]
    Hence, $\cL$ is not learnable at any rate. This concludes 
    the proof.
\end{proof}

    \subsection{Proof of \texorpdfstring{\Cref{thm:statistical-generation}}{Theorem 3.2} (Rates for Generation)} \label{sec:proofof:mainthm:statisticalRates:gen}
\label{sec:statistical-generation}

    In this section, we prove \Cref{thm:statistical-generation} following the outline in \cref{fig:outline:thm:statistical-generation}.

    First, in \cref{sec:optimalRates:nonTrivialForGeneration} we formally define the family of collections that are non-trivial for generation (\cref{def:non-trivial-collection-generation}) and show that (1) for any trivial collection, it is possible to generate after seeing a \textit{constant} number of examples (\cref{lem:trivial-collection-generation-in-the-limit}) and (2) an exponential rate is the best-possible for any non-trivial collection (\cref{lem:exp-rates-lower-bound-gen}) .
    Next, in \cref{sec:suffCondition:Generation}, we present a sufficient condition under which a generation algorithm that works ``in-the-limit'' achieves exponential rate (without any modifications).
    Finally, in \cref{sec:statistical-generation-upper-bound-subset,sec:statistical-generation-upper-bound-membership}, we present algorithms that achieve exponential rates given access to a subset oracle and membership oracle for the collection $\cL$ respectively.

\begin{figure}[bth]
    \centering
    \begin{tikzpicture}[node distance=2.5cm, auto]
    \node[myNodeNarrow] (thm) at (0,0) {\cref{thm:statistical-generation}};
    \node[myNodeWide] (subres1) at (-4cm, -2cm) {\small For non-trivial collections $e^{-n}$ is the best possible rate};
    \node[myNodeWide] (subres2) at (4cm, -2cm) {\small Consistent generation at exponential rate};

    \node[myNodeNarrow] (lem59) at (2cm, -4cm) {\small \cref{lem:trivial-collection-generation-in-the-limit}};
    \node[myNodeNarrow] (lem510) at (-4cm, -4cm) {\small \cref{lem:exp-rates-lower-bound-gen}};
    \node[myNodeNarrow] (lem511) at (6cm, -4cm) {\small \cref{lem:generation-from-online-to-statistical-property}};
    \node[myNodeNarrow] (lem512) at (4cm, -6cm) {\small \cref{lem:km24-satisfies-condition-exp-rates-subsetOracle}};
    \node[myNodeNarrow] (lem513) at (8cm, -6cm) {\small \cref{lem:km24-satisfies-condition-exp-rates}};

    \draw[-stealth, line width=0.5mm] (thm) -- (subres1.north);
    \draw[-stealth, line width=0.5mm] (thm) -- (subres2.north);

    \draw[-stealth, line width=0.5mm] (subres1) -- (lem510);
    \draw[-stealth, line width=0.5mm] (subres2) -- (lem59);
    \draw[-stealth, line width=0.5mm] (subres2) -- (lem511);
    \draw[-stealth, line width=0.5mm] (lem511) -- (lem512);
    \draw[-stealth, line width=0.5mm] (lem511) -- (lem513);
    
    \end{tikzpicture}
    \caption{Outline of Proof of \cref{thm:statistical-generation}}
    \label{fig:outline:thm:statistical-generation}
\end{figure}

    \subsubsection{Optimal Rate for Non-Trivial Collections for Generation}\label{sec:optimalRates:nonTrivialForGeneration}

    For the case of generation, we need a different
notion of non-trivial languages from the one 
we did for identification (see \cref{def:non-trivial-collections}). Indeed, assume that
$\cL = \inbrace{L_1, L_2}$, and $L_1 \neq L_2, L_1 \cap L_2 = \infty.$ This collection satisfies \Cref{def:non-trivial-collections}, thus no algorithm can identify at a rate
faster than exponential. However, 
it is not hard to see there is a consistent
generation algorithm that does not 
need \emph{any} samples: just generate a string 
from $L_1 \cap L_2.$ Instead, the following condition turns out to characterize collections that are non-trivial for generation.

    \begin{figure}[t!]
            \centering
            \hspace{-4mm}\subfigure[Trivial for Generation]{     
                {\hspace{-4mm}\includegraphics[scale=0.6,trim={4cm 0.5cm 5cm 2.5cm}]{figures/trivial1.pdf}}
            }
            \subfigure[Trivial for Generation]{     
                {\includegraphics[scale=0.55,trim={6cm 0.5cm 1.75cm 0.5cm}]{figures/trivial2.pdf}}
            }
            \subfigure[Non-Trivial For Generation]{  
                {\includegraphics[scale=0.6,trim={5.5cm 0.5cm 3cm 0cm}]{figures/non-trivial.pdf}}\hspace{-4mm}
            }\hspace{-4mm}
        \caption{Illustrations of language collections that are (a,b) trivial for generation and (c) non-trivial for generation.
        In cases (a) and (c), the collection $\cL$ has three languages -- $L_1,L_2,$ and $L_3$ -- denoted by different colors. 
        Case (b) illustrates \cref{ex:prefixes-of-rationals}; here, the collection $\cL$ has infinitely many languages which follow a nested structure $L_1\supsetneq L_2\supsetneq \dots \supsetneq \inbrace{0}$.
        }
        \label{fig:generation-trivial-illustration}
    \end{figure}

\begin{definition}[Non-trivial for Generation]\label{def:non-trivial-collection-generation}
    A language collection $\cL$ is non-trivial
    for generation if there exists
    some $x \in \cX$ and a finite set of languages
    $\cL' \subseteq \cL$ such that:
    \begin{itemize}
        \item each $L\in \cL'$ contains $x$; and
        \item the intersection of all languages in $\cL'$ is finite, \ie{}, $\abs{\cap_{L \in \cL'} L} < \infty.$
    \end{itemize}
\end{definition}
To verify that the definition of non-trivial collections
is meaningful, we show in the following result
that for all trivial collections, there exists an algorithm
that generates correctly {with probability 1, for all valid distributions,} when $n$ is sufficiently large {even if the training dataset
contains only one distinct element}.
Interestingly, our result uses an algorithm that generates
in the limit in a setting that is slightly 
different from the one considered by \citecustom{kleinberg2024language}:
 we
fix some target language $K \in \cL$, we give
a single input $x \in K$ to the algorithm and then we run 
it for infinitely many steps, without giving any further inputs.
We show that there exists some
$n_{K,\cL} \in \N$ which depends only on $K$ and the enumeration of $\cL$, but, crucially, not on $x,$ such
that the algorithm generates correctly for every $n \geq n_{K,\cL}.$
The algorithm is described below.
    \begin{mdframed} 
            {\rm \textbf{Generating in the limit from a trivial collection $\cL = \inbrace{L_1,L_2,\ldots}$} }\\[2mm]
            {\rm \textbf{Input:} A set of $n$ elements $\inbrace{X_1,\ldots,X_n}$ from $\cX$, potentially containing repetitions}\\[2mm]
            \textbf{Description:}
                \begin{enumerate}[itemsep=0pt]
                        \item Select any arbitrary element $x$ from $\inbrace{X_1,\ldots,X_n}$
                        \item Initialize the index $j = 1$
                        \item \textbf{while } $x\notin L_j$ \textbf{do:} increment $j$ by 1
                        \item[] \textit{\#~~When $X_1,\ldots, X_n$ are drawn from a valid distribution this step terminates with probability 1}
                       \item Compute $V^n(x) \coloneqq \inbrace{L_i \in \cL\colon  x \in L_i,~~ 1\leq i\leq n} \cup \inbrace{L_j}$ 
    \item Let $k \coloneqq 1$
    \item \textbf{while} $x_k \notin \bigcap_{L \in V^n(x)}L \setminus \inbrace{X_1,\ldots,X_n}$ \textbf{ do:} increment $k$ by 1
    \item[] \textit{\#~~When $\cL$ is trivial for generation (see \cref{def:non-trivial-collection-generation}) this step terminates with probability 1}
   
    \item \textbf{return} $x_k$
                \end{enumerate}
        \end{mdframed}
Notice that the previous algorithm is indeed computable given access to a membership oracle {for each language in} $\cL.$

\begin{lemma}[Algorithm For Generation from Trivial Collections]\label{lem:trivial-collection-generation-in-the-limit}
       For every collection of languages $\cL$ that is trivial for
    generation, there exists a generation algorithm $(\generator_n)_{n \in \N}$ such
    that for every valid distribution $\cP$ with respect to 
    $\cL$ {it terminates with probability 1 for all $n \in \N$,
    and there exists some constant $C$ that depends on  $\cP, \cL,$} such that
    for all $n\geq C$, it holds that
    \[
        \E_{X_1,\ldots,X_n \sim \cP^n}\insquare{\ind\inbrace{\generator_n(X_1,\ldots,X_n) \not\in \supp(\cP) \setminus \inbrace{X_1,\ldots,X_n}}} = 0\,.
    \]
\end{lemma}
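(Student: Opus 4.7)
The plan is to verify that the displayed algorithm satisfies both required properties by combining one structural fact about trivial collections with one observation about where the target language sits in the enumeration. Let $K\in\cL$ be the target, so that $\supp(\cP)=K$, and set $z\coloneqq\min\{i\in\N:L_i=K\}$ and $C\coloneqq z$; both are finite and depend only on $\cL$ and (through $K$) on $\cP$, as required.

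First, I would establish almost-sure termination for every $n\geq 1$. On the probability-$1$ event $\cE$ that every training sample lies in $K$, the element $x$ chosen in step~1 satisfies $x\in K=L_z$, so step~3 terminates at some index $j\leq z$. For the while-loop in step~6, note that $V^n(x)$ is, by construction, a finite subcollection of $\cL$ every element of which contains $x$. By the negation of \cref{def:non-trivial-collection-generation} applied to $x$ and $\cL'=V^n(x)$, we have
\[
\Bigl|\bigcap_{L\in V^n(x)} L\Bigr|=\infty.
\]
Since $\{X_1,\ldots,X_n\}$ is finite, the intersection therefore contains some element outside the training set, and the enumeration over $k$ halts.

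Next, I would show that for $n\geq C$ the indicator vanishes on the same event $\cE$. Since $x\in K=L_z$ and $z\leq n$, the language $L_z$ itself qualifies for inclusion in $V^n(x)$, so $L_z\in V^n(x)$. Consequently
\[
\bigcap_{L\in V^n(x)} L\;\subseteq\; L_z \;=\; K.
\]
The output $x_k$ then satisfies $x_k\in K=\supp(\cP)$ and, by construction of step~6, $x_k\notin\{X_1,\ldots,X_n\}$, so $x_k\in\supp(\cP)\setminus\{X_1,\ldots,X_n\}$ and the indicator is zero. Since $\Pr[\cE]=1$, taking expectations gives the displayed identity.

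I do not expect any real obstacle: the argument is essentially unwinding definitions once one commits to $C=z$. The only point that needs minor care is routing both halves of the proof through the single probability-$1$ event $\cE=\{X_i\in K\text{ for all }i\}$, which simultaneously guarantees that step~3 finds a valid $j$, that $V^n(x)$ is a finite family of languages containing $x$ (so the triviality hypothesis applies), and that $L_z\in V^n(x)$ whenever $n\geq C$.
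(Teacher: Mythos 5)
Your proof is correct and follows essentially the same route as the paper's: same choice $C=z$ (the first index at which $K$ appears), the same use of the negated non-triviality condition on the finite family $V^n(x)$ to get an infinite intersection (hence termination of the second loop and existence of an unseen output), and the same observation that $L_z\in V^n(x)$ for $n\geq z$ forces the intersection inside $K$. No gaps.
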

\begin{proof}
    Let $\cL$ be a trivial collection for generation (see \cref{def:non-trivial-collection-generation}). Fix some valid distribution $\cP$ supported over target language $K.$ Let $z \in \N$ be the smallest number such that $L_z = K.$ 
    Then, the triviality condition states that for every $x \in \cX$
and every finite set of languages $\cL' \subseteq \cL$
it holds that either $x \notin L,$ for some $L \in \cL',$ or ${\abs{\cap_{L \in \cL'} L} = \infty}.$
  We have that, with probability 1, every \iid{} draw of $n$ samples from $\cP$ satisfies $X_1,\ldots,X_n \in L_z.$ Choose an arbitrary $x \in \inbrace{X_1,\ldots,X_n}.$ Notice again that, with probability 1, $x \in K$. 
  Consider the execution 
    of the algorithm described above by fixing this $x$.
Let us now verify that this algorithm generates some $x' \in K \setminus \inbrace{X_1,\ldots,X_n}$ for all $n \geq z,$
{and terminates with probability 1 for all $n \in \N.$
}
First, notice that by definition of $z$, when $n \geq z$ it holds that $L_z \in V^n(x).$
Notice that, since for all $L \in V^n(x)$ it holds that $x \in L$
and $\abs{V^n(x)} \leq n < \infty$, the triviality definition implies that for all $n \geq z,$
\[
    \abs{\bigcap_{L \in V^n(x)} L} = \infty \,.
\]
Hence, for all $n \geq z$ we have that
\begin{itemize}
    \item $ K \in V^n(x),$ thus $\bigcap_{L \in V^n(x)}L \subseteq K.$
    \item  $ \abs{\bigcap_{L \in V^n(x)}L} = \infty.$
\end{itemize}
Thus, it holds that
\[
    \abs{\bigcap_{L \in V^n(x)}L \setminus \{X_1,\ldots,X_n\}} = \infty \,.
\]
Hence, the algorithm can generate unseen strings from the
target language $K$ for all $n \geq z,$ with probability 1. 
Notice that $z$ indeed only depends on $K, \cL.$

{We now prove the termination property of our algorithm. To that end, 
it suffices to show that both \texttt{while} loops
terminate with probability 1.
As
we argued before, $x \in L_z$ with probability 1, hence, the first
\texttt{while} loop terminates after at most $z$ steps. We now
consider the termination of the second \texttt{while} loop. 
As we argued above, $  \abs{\bigcap_{L \in V^n(x)}L \setminus \{X_1,\ldots,X_n\}} = \infty$ with probability 1,
hence the loop will terminate after a finite number of
steps.
}
This
concludes the proof. 
\end{proof}
We remark that the requirement that the set $\cL'$ in \Cref{def:non-trivial-collection-generation} is finite is crucial. The next example gives a collection of languages that is trivial for generation, yet it satisfies a modification of
\Cref{def:non-trivial-collection-generation} that allows $\cL'$ to be infinite.

\begin{example}[A Trivial Collection for Generation That ``Almost'' Satisfies \cref{def:non-trivial-collection-generation}]\label{ex:prefixes-of-rationals}
    Define the domain $\cX$ and the language collection $\cL$ as follows 
    \[
        \cX = [0,1] \cap \Q
        \qquadand
        \cL = \inbrace{\insquare{0,\frac{1}{n}} \cap \Q\colon n \in \N}\,,
    \]
    where $\Q$ is the set of rational numbers. 
    Notice that both $\cX$ and $\cL$ are countable, and each $L \in \cL$ is also countable.
    Consider the element $x = 0$ and the set $\cL' = \cL.$ First, notice
    that $x \in L, \forall L \in \cL'$ (by definition of every $L \in \cL$).
    Moreover, it is not hard to see that
    $\cap_{L \in \cL'} L = \{0\},$ hence $\abs{\cap_{L \in \cL'} L} = 1 < \infty.$
    It is also not hard to see that every finite sub-collection $\cL'' \subseteq \cL,$ satisfies $\abs{\cap_{L \in \cL''}L} = \infty.$
    Hence, the conditions of \Cref{def:non-trivial-collection-generation}
    can only be satisfied by infinite sub-collections.
    We can show that there is an algorithm that generates from $\cL,$ 
    without seeing
    any example. Indeed, consider the algorithm that in every round $n \in \N$
    outputs the element $\nicefrac{1}{n}.$ Let $K$ be any target language.
    By definition, there is some $n_K \in \N$ such that $\nicefrac{1}{n} \in K,$ for all $n \geq n_K.$ Hence, this algorithm can generate from $K.$
\end{example}
We now move on the proof the main result in this section. 
Similar to the identification setting before, we show the main result
in two parts. First, we show that 
for any non-trivial collection of languages, no algorithm can generate at a rate faster than exponential. 
The approach {shares some high-level ideas} with the identification setting, {but the
more complicated condition that characterizes
non-trivial generation makes the technical details
more nuanced. In particular, leveraging the 
non-triviality condition, we can deduce
that there exists a finite set of elements 
$\inbrace{x_{\ell_1},\ldots,x_{\ell_B}}$ and a finite
collection of languages $\cL'$ so that
$\cap_{L \in \cL'} L = \inbrace{x_{\ell_1},\ldots,x_{\ell_B}}.$
Then, whenever the training set consists exactly of
the elements $\inbrace{x_{\ell_1},\ldots,x_{\ell_B}}$ (containing also duplicates of them),
no matter what element $x \in \cX \setminus \inbrace{x_{\ell_1},\ldots,x_{\ell_B}}$ the algorithm generates, there
exists some $L \in \cL'$ so that $x \notin L'$.
This allows us to find some ``hard'' distribution,
which depends on the generating algorithm, and
for which this event happens with exponentially small probability.
}
The formal statement of the result
and the technical details of the proof follow.

\begin{lemma}[Exponential Rate Is Optimal for Generating From Any Non-trivial Collection]\label{lem:exp-rates-lower-bound-gen}
    Let $\cL$ be a non-trivial collection of languages for generation. Then, 
    for any generating algorithm $(\generator_n)_{n \in \N}$ there
    exists a valid distribution $\cP$ such that 
    $\E[\er(\generator_n)] \geq C \cdot e^{-c \cdot n},$ for 
    infinitely many $n \in \N.$
\end{lemma}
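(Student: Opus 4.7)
My plan is to mirror the identification lower bound (\cref{lem:exp-rates-lower-bound-ident-pos}) but with a more delicate ``hard event'' that the training sample must exhaust a certain finite set so that every candidate completion is provably incorrect under some target. Using the non-triviality hypothesis I first extract a finite subcollection $\cL' = \{L_1,\dots,L_m\} \subseteq \cL$ whose intersection $F \coloneqq \bigcap_{i=1}^m L_i$ is finite of size $B$, and for each $i$ I build a valid distribution $\cP_i$ for $L_i$ with $\cP_i(y) = 1/(2B)$ for every $y \in F$ (so $\cP_i(F)=1/2$) and the remaining mass spread on $L_i \setminus F$ in any valid way (or simply $\cP_i = \mathrm{Unif}(F)$ if $L_i = F$). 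The decisive property of the construction is that the conditional distribution of a sample given that it lies in $F$ is uniform on $F$ and \emph{identical} across all $i$; this shared conditional law is what ultimately enables a pigeonhole across targets.

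Next I will focus on the event $\evE_n$ that every training sample lies in $F$ \emph{and} the resulting multiset covers $F$. By independence and a coupon-collector estimate applied to the conditional uniform law on $F$, one gets $\Pr_{\cP_i^n}[\evE_n] \ge 2^{-(n+1)}$ for all $n \ge 2B\log B$ (and trivially for $B=1$). The structural heart of the argument is then to show that, conditional on $\evE_n$ and any realization of the generator's internal randomness, the output $x' = \generator_n(X_1,\dots,X_n)$ must be wrong under at least one target in $\cL'$: if $x' \in F$ then $x'$ was already seen (since the sample covers $F$) and counts as an error against every target; if $x' \notin F = \bigcap_i L_i$ then $x' \notin L_j$ for some $j$, an error under $\cP_j$. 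Taking expectations over the generator's coins yields $\sum_{i=1}^m \Pr_{\cP_i^n}[\mathrm{er}(\generator_n)=1 \mid \evE_n] \ge 1$, and since the conditional law of the sample on $\evE_n$ is common to every $i$, multiplying through by $\Pr[\evE_n]$ gives $\sum_i \Pr_{\cP_i^n}[\mathrm{er}(\generator_n)=1] \ge 2^{-(n+1)}$.

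A double pigeonhole finishes the job: first over $i\in[m]$ for each $n$, then over the infinite set of $n \ge 2B\log B$, to extract a single index $i^*$ for which $\Pr_{\cP_{i^*}^n}[\mathrm{er}(\generator_n)=1] \ge 2^{-(n+1)}/m$ for infinitely many $n$; setting $\cP = \cP_{i^*}$, $C = 1/(2m)$, $c = \log 2$ gives the claimed $Ce^{-cn}$ bound. I expect the main obstacle to be the design of the common event $\evE_n$: unlike the identification analogue, where the event ``all samples equal $x$'' immediately forces a disagreement between two candidate labels, here the generator can in principle dodge by emitting any unseen element of $F$, so one must genuinely force the sample to cover the \emph{entire} finite set $F$ (and invoke the non-triviality condition $|\bigcap \cL'|<\infty$) to strip away every safe response. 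The remaining ingredients — the symmetry of the conditional law across the $\cP_i$, the coupon-collector estimate, and the randomized-generator argument via expectations — are routine once this event is in place.
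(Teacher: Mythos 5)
Your argument is correct and matches the paper's proof in all essentials: both extract a finite subcollection $\cL'\subseteq\cL$ with finite intersection $F$, construct valid distributions that agree on $F$, force (with exponentially small but target-independent probability) the training set to equal $F$ as a set, observe that the generator's output must then either repeat a seen element (an error for every target) or fall outside some $L\in\cL'$ (an error for that target), and finish by pigeonhole over the finitely many targets and the infinitely many sample sizes. The only cosmetic difference is that the paper conditions on the single canonical sequence $(x_{\ell_1},\ldots,x_{\ell_B},x_{\ell_1},\ldots,x_{\ell_1})$ rather than on your larger "all-in-$F$ and covers $F$" event, which spares it the coupon-collector estimate; and its final extraction is phrased as a case split on whether the generator's conditional mass lands mostly in $F$ or mostly outside some $L$, which is equivalent to your sum-then-pigeonhole step.
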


\begin{proof}
    Since $\cL$ is non-trivial
    for generation, there exists
    some $x \in \cX$ and a finite $\cL' \subseteq \cL$ such that $x \in \cap_{L \in \cL'} L$ and
    $\abs{\cap_{L \in \cL'} L} = B < \infty.$
    Let $\inbrace{x_{\ell_1},\ldots,x_{\ell_B}} \coloneqq \cap_{L \in \cL'} L$ be the
    distinct elements that appear
    in the intersection of the sub-collection $\cL'.$ Define a collection of distributions
    $\inbrace{\cP_L}_{L \in \cL'}$ that has two properties:
    \begin{itemize}
        \item For every $L \in \cL'$ it holds that $\cP_L$ is valid for $L$.
        \item All the distributions $\inbrace{\cP_L}_{L \in \cL'}$ put exactly the same mass 
        on every element of the set $\inbrace{x_{\ell_1},\ldots,x_{\ell_B}}.$
    \end{itemize}
    Notice that, by definition of $\inbrace{x_{\ell_1},\ldots,x_{\ell_B}},$
    there are collections of distributions that satisfy these two constraints.

    For any $n \geq B,$ let $\cE_n$ be the event that
    the training set is $\inparen{x_{\ell_1},\ldots,x_{\ell_B},x_{\ell_1},\ldots,x_{\ell_1}}.$ Notice that under any $\cP \in \inbrace{\cP_L}_{L \in \cL'}$,
    \[
        \Pr_{X_1,\ldots,X_n \sim \cP^n}[\cE_n] \geq C \cdot e^{-c \cdot n} \,,
    \]
    for the same constants $C,c$  for
    all $\cP \in \inbrace{\cP_L}_{L \in \cL'}.$
    Recall that for any valid distribution 
    supported on a target language $K$
    \[
        \er\inparen{\generator_n(X_1,\ldots,X_n)} = \ind\inbrace{\generator_n(X_1,\ldots,X_n) \notin K \setminus \inbrace{X_1,\ldots,X_n}} \,. 
    \]
    Since $\abs{L} = \infty, \forall L \in \cL$
    and  $\abs{\cap_{L \in \cL'} L} < \infty,$
    it follows that $\abs{\cL'} \geq 2.$ Let $k \coloneqq \abs{\cL'}$. By definition of $\cL$, $k < \infty$, and
    by the previous argument $k \geq 2$. 
    Moreover,
    notice that, for all $x \in \cX \setminus \inbrace{x_{\ell_1},\ldots,x_{\ell_B}}$ there
    exists $L\in\cL'$ such that $x \notin L.$ 
    Indeed, if $x \in L, \forall L \in \cL'$
    then $x \in \cap_{L \in \cL'} L \setminus \inbrace{x_{\ell_1},\ldots,x_{\ell_B}},$ but
    $\cap_{L \in \cL'} L \setminus \inbrace{x_{\ell_1},\ldots,x_{\ell_B}} = \emptyset.$
    For every distribution $p$ over $\cX$ it holds that 
    \[
        \Pr_{X \sim p}\insquare{X \in \inbrace{x_{\ell_1},\ldots,x_{\ell_B}} \text{ or } \exists L \in \cL' \text{ such that } X \notin L} = 1 \,.
    \]
    For every $n \in \N, n\geq B,$ conditioned on the event $\cE_n,$
    since the algorithm is a randomized mapping
    from the training set to $\cX,$
    we have that $\generator_n\inparen{x_{\ell_1},\ldots,x_{\ell_B},x_{\ell_1},\ldots,x_{\ell_1}} = p_n.$ We consider two cases:
    \begin{itemize}
        \item For infinitely many $n \in \N, n\geq B$ it holds that
        \[
            \Pr_{X \sim p_n}\insquare{X \in \inbrace{x_{\ell_1},\ldots,x_{\ell_B}}} \geq \frac{1}{2} \,.
        \]
        Let $\hat N$ be the infinite set for which the previous holds. Then, for all $\cP \in \inbrace{\cP_L}_{L \in \cL'}$ and for all $n \in \hat N$ it holds that
        \begin{align*}
             \E_{X_1,\ldots,X_n \sim \cP^n}[\er(\generator_n(X_1,\ldots,X_n))] &= \Pr_{X_1,\ldots,X_n \sim \cP^n}[\generator_n(X_1,\ldots,X_n) \notin K \setminus \inbrace{X_1,\ldots,X_n}] \\
               &\geq  \Pr_{X_1,\ldots,X_n \sim \cP^n}[\generator_n(X_1,\ldots,X_n) \notin K \setminus \inbrace{X_1,\ldots,X_n} \mid \cE_n]  \cdot \Pr_{X_1,\ldots,X_n \sim \cP^n}[\cE_n]\\
               &\geq C\cdot e^{-c\cdot n} \cdot \Pr[\generator_n(x_{\ell_1},\ldots,x_{\ell_B},x_{\ell_1},\ldots,x_{\ell_1}) \notin K \setminus \inbrace{x_{\ell_1},\ldots,x_{\ell_B}}] \tag{by the definition of $\cE_n$} \\ 
               &\geq  C\cdot e^{-c\cdot n} \cdot \Pr_{X \sim p_n}\insquare{X \in \inbrace{x_{\ell_1},\ldots,x_{\ell_B}}} \tag{by the definition of $p_n$}\\
               &\geq  C\cdot e^{-c\cdot n} \cdot\frac{1}{2} \,. \tag{by the assumption on $\hat N$}
        \end{align*}
        Thus, taking as the target distribution any 
        $\cP \in \{\cP_{L}\}_{L \in \cL'}$
        we see that the algorithm indeed has an exponential rates lower bound.

        \item For infinitely many $n \in \N, n\geq B,$
        it holds that
         \[
        \Pr_{X \sim p_n}\insquare{ \exists L \in \cL' \text{ such that } X \notin L} \geq \frac{1}{2} \,.
    \] 
       Then, due to the pigeonhole principle, there is some $L \in \cL'$ such that for infinitely many $n \in \N$ it holds that
        \[
            \Pr_{X \sim p_n}\insquare{X \notin L} \geq \frac{1}{2k} \,.
        \]
        Let $\hat N$ be the infinite set for which the previous holds. Then, for the data-generating distribution $\cP_L$ we have 
        that
        \begin{align*}
             \E_{X_1,\ldots,X_n \sim \cP_L^n}[\er(\generator_n(X_1,\ldots,X_n))] &= \Pr_{X_1,\ldots,X_n \sim \cP_L^n}[\generator_n(X_1,\ldots,X_n) \notin L \setminus \inbrace{X_1,\ldots,X_n}] \\
               &\geq  \Pr_{X_1,\ldots,X_n \sim \cP_L^n}[\generator_n(X_1,\ldots,X_n) \notin L \setminus \inbrace{X_1,\ldots,X_n} \mid \cE_n]  \cdot \Pr_{X_1,\ldots,X_n \sim \cP^n}[\cE_n]\\
               &\geq C\cdot e^{-c\cdot n} \cdot \Pr[\generator_n(x_{\ell_1},\ldots,x_{\ell_B},x_{\ell_1},\ldots,x_{\ell_1}) \notin K \setminus \inbrace{x_{\ell_1},\ldots,x_{\ell_B}}] \tag{by the definition of $\cE_n$} \\ 
               &\geq  C\cdot e^{-c\cdot n} \cdot \Pr_{X \sim p_n}\insquare{X \notin L} \tag{by the definition of $p_n$}\\
               &\geq  C\cdot e^{-c\cdot n} \cdot\frac{1}{2k} \,. \tag{by the assumption on $\hat N$}
        \end{align*}
        Then, we can pick the target distribution
        to be $\cP_L,$ and the exponential lower bound follows.
    \end{itemize}
    The proof is concluded by noticing that, from the pigeonhole principle, at least one of the previous two cases
    holds for any sequence of $\{p_n\}_{n \in \N}.$
\end{proof}

\subsubsection{A Sufficient Condition To Achieve Exponential Rate}\label{sec:suffCondition:Generation}
Let us now shift our attention to the upper
bound. Following the approach of \citecustom{kleinberg2024language}, we consider
two settings: first, we assume access 
to a subset oracle which can answer questions $L_i \subseteq L_j,$ for all $i, j \in \N.$
Then, we consider the setting
where we only have access to a membership oracle
for each language in $\cL.$

Before describing our approach let us explain
why a direct adaptation of the approach of
\citet{bousquet2021theory} does not seem
to work in this setting. Recall that \citet{bousquet2021theory}
transform in a black-box manner a learner which
is eventually ``correct'' in the adversarial setting,
to a learner that achieves exponential rates in the 
statistical setting, by running multiple 
copies of it on independent
samples of the dataset, and then aggregating their
results through a majority vote. A crucial property
of the learner of \citet{bousquet2021theory}
is that the majority vote is taken over objects
that have binary values, namely the predicted label
of the test point.
One immediate obstacle to applying this approach
here, is that the eventually correct generators
will be outputting different valid strings in 
every iteration. Further, these valid
strings might be even coming from a different
subsets of the true language. Thus, it is not clear
at all which aggregation strategy could lead
to the desired result. One potential
approach to circumvent this obstacle is to
have all the generated strings give ``votes'' to the 
different languages of $\cL$ (potentially up to a cap $n$)
that they belong to. It is clear that after some finite 
$n_0,$ with probability at least $1 - C \cdot e^{-c\cdot n},$
the target language $K$ would be collecting votes
from the majority of the strings. Unfortunately, it is not hard
to see that for infinitely many $n_0 \in \N$ there must
be another $L' \in \cL, L' \neq K,$ that is accumulating
more votes than $K;$ if this was not the case we would
have been able to identify $K,$ for all countable $\cL,$
which contradicts our established lower bounds. Thus, 
it is not clear how to make this aggregation strategy work either.

Nevertheless, we show that, perhaps surprisingly,
a much simpler strategy works: we only need to run
one copy of the algorithm proposed by \citecustom{kleinberg2024language}
on the entire dataset to get exponential rates.
In fact, we identify a sufficient condition that allows
us to use any algorithm that works ``in-the-limit'' in the statistical setting
without making any modifications to it. We believe
that this idea might find other applications 
in the universal rates literature.

The following elementary result will be crucial 
for the analysis of both settings, \ie{}, the one
with the subset oracle and the one with just the membership 
oracle. 

\begin{lemma}\label{lem:generation-from-online-to-statistical-property}
    Let $\cL$ be a countable collection of languages. Let $\cA = \{h_n\}_{n \in \N}$ be an algorithm
    that generates from $\cL$ in the limit with positive examples
    with the following additional property:
    \begin{itemize}
        \item for every target language
    $K \in \cL$ there exists a finite set of examples 
    $\{x_{i_1},\ldots,x_{i_{\ell}}\} \subseteq K$ that depends 
    only on $K$ and the enumeration of $\cL, \cX,$ and
    \item  a finite number $n_0 \in \N$ that
    depends on $K$ and the enumeration of $\cL,\cX,$
    \end{itemize}
     such that $\cA$ always generates correctly 
    if its input has size at least $n_0$
    and it contains
    $x_{i_1},\ldots,x_{i_\ell}.$ Then,
    $\cA$ generates from $K$ with exponential rates
    in the statistical setting.
\end{lemma}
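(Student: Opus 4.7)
The plan is to show that the statistical rate is driven by a simple coupon-collector-style event: once the iid sample of size $n$ drawn from a valid distribution $\cP$ contains the finite set of ``seed'' examples $\{x_{i_1},\ldots,x_{i_\ell}\}$ guaranteed by the hypothesis, the algorithm $\cA$ generates correctly. So the proof reduces to bounding the probability that this finite set fails to appear among $n$ iid draws, which decays exponentially in $n$.

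Concretely, I would fix a target language $K \in \cL$ and a valid distribution $\cP$ (so $\supp(\cP) = K$). By the assumed property, there is a finite set $S^* = \{x_{i_1},\ldots,x_{i_\ell}\} \subseteq K$ and a threshold $n_0 \in \N$ (both depending only on $K$ and on the enumerations of $\cL, \cX$, and crucially \emph{not} on $\cP$) such that whenever the input to $\cA$ has size at least $n_0$ and contains $S^*$, the output is a valid unseen element of $K$. Since $\cP$ is valid for $K$, each $x_{i_j} \in \supp(\cP)$ has strictly positive mass $p_j \coloneqq \cP(x_{i_j}) > 0$. Set $p_{\min} \coloneqq \min_{j \in [\ell]} p_j > 0$. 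A union bound gives
\[
\Pr_{X_1,\ldots,X_n \sim \cP^n}\!\left[S^* \not\subseteq \{X_1,\ldots,X_n\}\right] \leq \sum_{j=1}^{\ell} (1-p_j)^n \leq \ell \, (1 - p_{\min})^n \leq \ell \, e^{-p_{\min} \cdot n}.
\]

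Now for $n \geq n_0$, the hypothesis implies $\er(h_n(X_1,\ldots,X_n)) = 0$ on the event $\{S^* \subseteq \{X_1,\ldots,X_n\}\}$, so
\[
\E_{X_1,\ldots,X_n \sim \cP^n}[\er(h_n(X_1,\ldots,X_n))] \leq \Pr[S^* \not\subseteq \{X_1,\ldots,X_n\}] \leq \ell \, e^{-p_{\min} \cdot n}.
\]
For $n < n_0$, the error is trivially bounded by $1$, which can be absorbed by choosing the distribution-dependent constants $c \coloneqq p_{\min}$ and $C \coloneqq \max\{\ell,\, e^{p_{\min} n_0}\}$, yielding $\E[\er(h_n)] \leq C \cdot e^{-c n}$ for every $n \in \N$. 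This matches \cref{def:achievable-rates} with rate $R(n) = e^{-n}$.

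I do not anticipate any serious obstacle: the only subtlety is confirming that the hypothesis, phrased for an online algorithm receiving an adversarial enumeration, indeed applies to an iid sample used as the input ``enumeration so far'' — this is fine because the guaranteed property is stated for any input containing $S^*$, regardless of order or repetitions. A small verification is also needed that the hypothesis's threshold $n_0$ and seed set $S^*$ do not depend on the unknown $\cP$ (they depend only on $K$ and the fixed enumerations), so that the constants $c, C$ above are legitimately $\cP$-dependent but well-defined.
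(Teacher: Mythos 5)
Your proposal is correct and follows essentially the same argument as the paper's proof: a union bound showing that the finite seed set $\{x_{i_1},\ldots,x_{i_\ell}\}$ appears in the iid sample except with probability at most $\ell\, e^{-\min_j p_{i_j}\cdot n}$, after which the hypothesis guarantees correct generation. Your additional care in absorbing the $n<n_0$ regime into the constant $C$ is a minor refinement the paper leaves implicit.
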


\begin{proof}
    Let $\cP$ be a valid data-generating
    distribution. Then, by definition,
    $\supp(\cP) = K,$ for some $K \in \cL.$
    Let $x_{i_1},\ldots,x_{i_\ell} \subseteq K$
    be a set of points such that after 
    $\cA$ takes as input this set
    it starts generating correctly, \ie{},
    for any $S$ such that $x_{i_1},\ldots,x_{i_\ell} \subseteq S$ and $\abs{S} \geq n_0$
    it holds that $h_{\abs{S}}(S) \in K \setminus S.$
    Since $\cP$ is a valid data-generating distribution
    it holds that $x_{i_1},\ldots,x_{i_\ell} \subseteq \supp(\cP).$ Let $p_{i_i}, \ldots, p_{i_{\ell}}$
    be the mass of points $x_{i_1},\ldots,x_{i_\ell}$
    under $\cP.$ Suppose we draw $n$ samples \iid{} from
    $\cP.$ Then, the probability that we do not
    observe all $x_{i_1},\ldots,x_{i_\ell}$ in
    the sample is bounded as
    \begin{align*}
        \Pr_{X_1,\ldots,X_n \sim \cP^n}[\exists j\in[\ell]\colon  x_{i_j} \notin \{X_1,\ldots,X_n\} ] &\leq \sum_{j\in  [\ell]} \Pr_{X_1,\ldots,X_n \sim \cP^n}[x_{i_j} \notin \{X_1,\ldots,X_n\} ] \tag{by a union bound}\\
        &= \sum_{j\in  [\ell]} \inparen{1-p_{i_j}}^n & \tag{since we have \iid{} draws}\\
        &\leq \sum_{j\in  [\ell]}  e^{-p_{i_j}\cdot n} \tag{as $1-z\leq e^{-z}$ for all $z\in \R$}\\
        &\leq \ell \cdot e^{-\min_{j\in[\ell]} p_{i_j}\cdot n} \,. 
    \end{align*}
    Thus, the algorithm generates correctly in the statistical setting
    after taking as input $n\geq n_0 \in \N$
    examples, 
    with a probability at least 
    $1- C\cdot e^{-c\cdot n},$ for some distribution
    dependent constants $C, c.$ This concludes the proof.
\end{proof}
In the next two sections, we will show that
the algorithms proposed by \citecustom{kleinberg2024language}
in the setting with access to a 
subset oracle or membership oracle
already satisfy this property. For completeness,
we present their algorithms and the related definitions.
\subsubsection{Algorithm with Access To
Subset Oracle}\label{sec:statistical-generation-upper-bound-subset}
We start with the algorithm of \citecustom{kleinberg2024language} which requires 
access to a subset oracle for $\cL,$ \ie{},
an oracle that 
for any two languages $L_i, L_j \in \cL$ answers
whether $L_i \subseteq L_j.$ To that end,
we first define the notion of critical language \citep{kleinberg2024language}.

\begin{definition}[Critical Language \citep{kleinberg2024language}]\label{def:critical-languages}
    Let $\cL = \{L_1,L_2,\ldots,\}$ 
    be a countable collection of languages.
    Let $S_n = \{x_{i_n},\ldots,x_{i_n}\} \subseteq \cX.$
    For any $j \in \N$, 
    we say that $L_j$ is critical with respect to 
    $S_n$ if $S_n \subseteq L_j$ and for all $i < j$ 
    if $S_n \subseteq L_i$ then $L_j \subseteq L_i.$
\end{definition}
The intuition behind this definition is that if two languages
$L_i, L_j$ are both critical and $i < j,$ then it is ``safer''
to generate from $L_j.$ This is exactly the way the algorithm
from \citecustom{kleinberg2024language} operates. To be more precise,
in every iteration $n \in \N$ it performs the following steps:
\begin{itemize}
    \item Let $\cL_n = \{L_1,L_2,\ldots,L_n\}$ be the first $n$
    languages of $\cL$ and $S_n = \{x_{i_1},\ldots,x_{i_n}\}$
    be the set of examples observed so far.

    \item Let $\cC_n \subseteq \cL_n$ be the set of the critical languages with respect to $S_n$ within $\cL_n$ (\Cref{def:critical-languages}).
    If $\cC_n = \emptyset$ output
    an arbitrary $x \in \cX$ and proceed
    to getting the $(n+1)$-th input.
    This step makes use of the subset oracle.\footnote{Observe that it makes sense to output something arbitrary since the first consistent (in the sense that it contains the observed training examples) language in $\cL$ is critical by definition and hence if $\cC_n = \emptyset$, we have not yet encountered a consistent language.} 

    \item Let $L_{k} \in \cC_n$ be the critical language 
    with the highest index.

    \item Output the first unseen example from $L_{k},$ \ie{}, $x_j \in \cX$
    such that $j = \min\{i \in \N\colon x_i \in L_k, x_i \notin S_n\}.$
\end{itemize}
It is implicit in the analysis of \citecustom{kleinberg2024language}
that for every target language $K \in \cL$, there exists
a set $x_{i_1}, \ldots, x_{i_\ell} \subseteq K$ and $n_0 \in \N$
that depend only on $K$ and the enumeration of $\cX,\cL,$
such that
after $n_0$ steps if the above algorithm takes as input any set $S$
that contains $\{x_{i_1},\ldots,x_{i_\ell}\}$, then
it always generates a new example correctly.
We make this explicit in the following lemma and provide
a proof for completeness.

\begin{lemma}[Adaptation of (4.3) from \citecustom{kleinberg2024language}]\label{lem:km24-satisfies-condition-exp-rates-subsetOracle}
    Let $\cL = \{L_1,L_2,\ldots\}$ be a countable collection
    of languages, let $K \in \cL$ and let $z \in \N$
    be the smallest number such that 
    $L_z = K.$ Then, there exist
    $x_{i_1},\ldots,x_{i_\ell} \in K$
    that depend only on $K$ and the enumeration of $\cL,$
    such that if the algorithm of \citecustom{kleinberg2024language}
    takes as input any set $S$
    for which $x_{i_1},\ldots,x_{i_\ell} \in S$ and $\abs{S} \geq z,$ where $z$ depends only on $K$ and
    the enumeration of $\cL,$
    then it generates correctly
    from $K.$
\end{lemma}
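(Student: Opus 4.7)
The idea is to mimic an Angluin–style tell-tale construction to build a finite certificate subset of $K$ whose presence in $S$ forces $L_z = K$ to become critical within $\cL_n$. Concretely, for each $j \in \{1, \dots, z-1\}$ with $K \not\subseteq L_j$, fix some witness $x_j \in K \setminus L_j$ (such an $x_j$ exists by the failure of containment). Let $T \coloneqq \{x_j : 1 \leq j < z,\ K \not\subseteq L_j\}$, which is finite (of cardinality at most $z-1$), contained in $K$, and determined entirely by $K$ and the enumeration of $\cL$ and $\cX$. Enumerate $T$ as $x_{i_1}, \dots, x_{i_\ell}$, and set $n_0 \coloneqq z$.

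\textbf{Key step: $L_z$ becomes critical.} Fix any $S \subseteq K$ with $T \subseteq S$ and $|S| \geq z$, and write $n = |S|$ so that $L_z \in \cL_n$. Consider any $j < z$ with $S \subseteq L_j$. If $K \not\subseteq L_j$, the corresponding witness $x_j$ is defined and, by construction, $x_j \in T \subseteq S$ while $x_j \notin L_j$, contradicting $S \subseteq L_j$. Hence every $L_j$ with $j < z$ that is consistent with $S$ must satisfy $L_z = K \subseteq L_j$. Combined with $S \subseteq K = L_z$, this verifies both conditions of \cref{def:critical-languages}, so $L_z$ is critical in $\cL_n$.

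\textbf{Concluding correctness.} Since $L_z \in \cC_n$, the algorithm's chosen critical language $L_k$ has $k \geq z$. If $k = z$, then $L_k = K$ and the algorithm outputs an element of $K \setminus S$, which is non-empty because $K$ is infinite and $S$ is finite. If $k > z$, then $L_z$ is consistent with $S$ and has index strictly less than $k$, so the criticality of $L_k$ (applied at $i = z$) yields $L_k \subseteq L_z = K$; any element the algorithm draws from $L_k \setminus S$ then lies in $K \setminus S$. In both cases the output is a fresh element of $K$, which is what ``generating correctly from $K$'' requires.

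\textbf{Expected obstacle.} The only mild subtlety is ensuring the algorithm can actually output something in the case $k > z$, i.e.\ that $L_k \setminus S \neq \emptyset$. If one allows finite $L \in \cL$ with $L \subseteq K$, then a highest-indexed critical $L_k$ could in principle be exhausted by $S$. Following the convention implicit in \citecustom{kleinberg2024language}, one chooses the highest-indexed critical language whose unseen part is non-empty; since $L_z = K$ is always a valid fallback (being critical and having $K \setminus S$ infinite), such a choice exists and the argument above goes through with $L_k$ replaced by this choice (which still satisfies $L_k \subseteq K$ by the same criticality reasoning).
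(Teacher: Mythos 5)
Your proof is correct and follows essentially the same route as the paper's: build a finite witness set from the languages $L_j$ with $j<z$ that fail to contain $K$, observe that its presence in $S$ (together with $|S|\geq z$) forces $L_z$ to be critical, and conclude via the criticality chain that the highest-indexed critical language is a subset of $K$. The only cosmetic point is that to literally satisfy the ``depends only on $K$ and the enumeration'' clause you should pick each witness canonically (e.g.\ the smallest-indexed element of $K\setminus L_j$, as the paper does) rather than ``some'' witness; your extra remark about $L_k\setminus S$ being non-empty is a reasonable addition but not a different argument.
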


\begin{proof}
    Let $L_{i_1},\ldots,L_{i_\ell} \subseteq \cL$ with $i_1,\ldots,i_\ell < z$
    be the set of all
    languages that precede 
    $L_z$ in $\cL$ for which $L_z \not\subseteq L_{i_j}, j \in [\ell].$ Then, for each 
    such $L_{i_j}$ there exists some $x \in L_z$
    so that $x \notin L_{i_j}.$ Let $x_{i_j}$ be the smallest indexed element in $\cX$ for which the previous holds. Notice
    that whenever $x_{i_1},\ldots,x_{i_\ell}$ 
    is part of the input sample $S,$ then $L_z$ 
    is critical; this follows immediately from the definition of criticality and the fact that the set $S$ contradicts all the languages
    $L_i, i < z,$ such that $L_z \not\subseteq L_i.$
    Moreover, when $\abs{S} \geq z,$
    the algorithm outputs an unseen word
    from a critical
    language $L_{z'}$ with $z' \geq z.$ By definition
    of the critical language, this means
    that $L_{z'} \subseteq L_z.$ Hence, the algorithm
    generates correctly.
\end{proof}
An immediate consequence of \Cref{lem:generation-from-online-to-statistical-property} is that the algorithm
of \citecustom{kleinberg2024language} with access to 
a subset query oracle generates with exponential
universal rates.

\subsubsection{Algorithm with Access To
Membership Oracle}\label{sec:statistical-generation-upper-bound-membership}
We now move on to the more involved
version of the algorithm of \citecustom{kleinberg2024language} that only
requires membership access to every $L \in \cL.$
Recall this means that for every $x \in \cX, L \in \cL$ the algorithm can ask whether $x \in L.$

Before we describe the algorithm, we provide
the definition of 
a modified notion of a critical language \citep{kleinberg2024language}, which
is based on a notion of a \emph{projection of a language}, which we defined in \Cref{def:projection}.\footnote{\citecustom{kleinberg2024language} do not explicitly define this term; we use it to simplify
our discussion.}
Recall that, given some language $L$, we denote $L[m] = L \cap \inbrace{x_1,\ldots,x_m}$ (\Cref{def:projection}).

\begin{definition}[$m$-Critical Language \citep{kleinberg2024language}]
    Let $\cL = \{L_1,L_2,\ldots,\}$ 
    be a countable collection of languages.
    Let $S_n = \{x_{i_n},\ldots,x_{i_n}\} \subseteq \cX.$
    For any $j \in \N$, 
    we say that $L_j$ is $m$-critical with respect to 
    $S_n$ if $S_n \subseteq L_j$ and, for all $i < j$, 
    if $S_n \subseteq L_i,$ then $L_j[m] \subseteq L_i[m].$
\end{definition}
We first give an intuitive description of the key modifications of the algorithm from the previous section that are required to make it work only with access to a membership oracle. First, notice that even though the algorithm cannot ask queries of the form $L_i \subseteq L_j,$ it can ask queries of the form 
$L_i[m] \subseteq L_j[m]$, for any finite $m \in \N,$ by just asking $2m$ membership queries. Thus, the high-level idea is to replace subset queries with queries of the form $L_i[m] \subseteq L_j[m]$, for a sufficiently large $m \in \N.$ The exact details are provided below.

\begin{itemize}
    \item Let $S_n = \{x_{i_1},\ldots,x_{i_n}\}$ be the set of elements that have been presented to the learner up to step $n.$ At the beginning of step $n$, set $m_n = \max\{m_{n-1},i_n\}$.\footnote{Set $m_0 = 0$.}

    \item Let $\cV_n \subseteq \{L_1,L_2,\ldots,L_n\}$ be the set of languages whose index is at most $n$ and are consistent with the input $S_n,$ \ie{}, $S_n \subseteq L, \forall L \in \cV_n.$ If no such languages exist, output an arbitrary $x \in \cX$ and proceed to reading the $(n+1)$-th input example.
    Notice that this can be done with $n^2$ membership queries.

    \item Let $m = m_n + 1$ and $\cC_n^m$ be the set of the $m$-critical languages within $\cV_n.$  Notice that since $\cV_n \neq \emptyset,$  for all $m \in \N$ there exists at least one $m$-critical language
    (the lowest indexed language within $\cC_n$
    is $m$-critical for all $m \in \N$).

    \item Let $c_n^m \in \N$ be the largest index
    of a language in $\cC_n^m.$ If for some $i \leq m,$ it holds that $x_i \in L_{c_n^m}$ and  $x_i \notin S_n,$ output $x_i$\footnote{If there are multiple such elements, output the one with the smallest index.} and let $m_n = m.$
    Otherwise, let 
    $m_n = m_n + 1$ and repeat the previous
    bullet point.
\end{itemize}
\citecustom{kleinberg2024language} showed that the previous
algorithm terminates in finitely many steps for every $n \in \N$ (Result (5.5) from \citecustom{kleinberg2024language}). Moreover, they proved that for any enumeration of any target language $K \in \cL$, there exists some $n_0 \in \N$ so that the algorithm
generates correctly for all steps $n \geq n_0$
(Result (5.7) from \citecustom{kleinberg2024language}). 
In fact, it is implicit in their analysis that 
for all $K \in \cL$ there exist $x_{i_1},\ldots,x_{i_\ell} \in K$
that depend only on $K$ and the enumeration of $\cL, \cX,$ 
as well as a finite $n_0 \in \cL$
that depends only on $K$ and the enumeration of 
$\cL,\cX,$ such that if an input
sample $S$ satisfies that \textbf{i)} $x_{i_1},\ldots,x_{i_\ell} \in S$ and \textbf{ii)} $\abs{S} \geq n_0,$ then the algorithm generates correctly. We make this fact explicit
in the next result.

\begin{lemma}[Adaptation of (5.7) from \citecustom{kleinberg2024language}]\label{lem:km24-satisfies-condition-exp-rates}
    Let $\cL = \{L_1,L_2,\ldots\}$ be a countable collection
    of languages, let $K \in \cL$ be the target language,
    and let $z \in \N$
    be the smallest number such that 
    $L_z = K.$ Then, there exist
    $x_{i_1},\ldots,x_{i_\ell} \in K$
    that depend only on $K$ and the enumeration of $\cL,\cX,$
    such that if the algorithm of \citecustom{kleinberg2024language}
    takes as input any set $S$
    for which $x_{i_1},\ldots,x_{i_\ell} \in S$ and $\abs{S} \geq z,$ where $z$ depends only on $K$ and
    the enumeration of $\cL,$
    then it generates correctly
    from $K.$
\end{lemma}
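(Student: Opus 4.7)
The plan is to mirror the proof of \Cref{lem:km24-satisfies-condition-exp-rates-subsetOracle}, using the same witness set but being careful about how $m$-criticality interacts with the subset-free version of the algorithm. Specifically, I partition the set of ``problematic'' predecessors $\{i \in \N \colon i < z\}$ into $A \coloneqq \{i < z \colon L_z \not\subseteq L_i\}$ and $B \coloneqq \{i < z \colon L_z \subseteq L_i\}$ (where the containment is strict for $i \in B$, since $z$ is the minimum index of $K$ in $\cL$). For each $i \in A$, let $x_{i_j}$ be the smallest-indexed element of $\cX$ lying in $L_z \setminus L_i$, and let $\{x_{i_1},\dots,x_{i_\ell}\}$ collect these witnesses across $A$. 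This set depends only on $K$ and the enumeration of $\cL$ and $\cX$, and it is contained in $L_z=K$.

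I then claim: whenever $S$ contains $\{x_{i_1},\dots,x_{i_\ell}\}$ and $|S| \geq z$, the language $L_z$ is $m$-critical with respect to $S$ \emph{for every} $m \in \N$. To see this, fix any $i<z$ that is consistent with $S$, i.e.~$S\subseteq L_i$. If $i \in A$, then the witness $x_{i_j} \in L_z\setminus L_i$ belongs to $S$, contradicting $S\subseteq L_i$; hence no $i\in A$ is consistent and the $m$-criticality implication is vacuous on $A$. If $i \in B$, then $L_z \subsetneq L_i$, so trivially $L_z[m] \subseteq L_i[m]$ for every $m$. Combined with $S \subseteq K = L_z$ (validity) and $z \leq |S|=n$, this shows $L_z \in \cV_n \cap \cC_n^m$ for every $m$.

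Next, I use this criticality claim to control what the algorithm outputs. Since $L_z \in \cC_n^m$, the largest-index $m$-critical language $L_{c_n^m}$ satisfies $c_n^m \geq z$. If $c_n^m = z$, then $L_{c_n^m} = L_z = K$ and any unseen element selected by the algorithm lies in $K\setminus S$. If $c_n^m > z$, then $L_z$ is a smaller-indexed consistent language, so the $m$-criticality of $L_{c_n^m}$ forces $L_{c_n^m}[m] \subseteq L_z[m] = K[m]$; any $x_i$ with $i\leq m$ that the algorithm outputs from $L_{c_n^m}\setminus S$ therefore lies in $K\setminus S$. Termination of the inner loop over $m$ -- i.e.~that some valid $x_i$ with $i\leq m$ in $L_{c_n^m}\setminus S$ is eventually found -- is precisely the content of Result (5.5) of \citecustom{kleinberg2024language}, which I invoke as a black box.

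The main obstacle I anticipate is bookkeeping the $m$-dependence of $c_n^m$: the identity of the largest-indexed $m$-critical language can in principle vary with $m$, so one must verify the projection inclusion holds simultaneously with whichever $m$ the algorithm settles on. The saving grace is that my criticality claim is uniform in $m$, so $L_z$ is a ``fixed'' critical benchmark and $c_n^m\geq z$ no matter how $m$ grows; the projection comparison $L_{c_n^m}[m]\subseteq L_z[m]$ then automatically uses the correct $m$. Modulo this care, the argument is a direct translation of \Cref{lem:km24-satisfies-condition-exp-rates-subsetOracle}, with subset queries replaced by their $m$-projected analogues.
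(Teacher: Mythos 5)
Your proposal is correct and follows essentially the same approach as the paper's proof: choose the witness set from the languages $L_i$ with $i<z$ and $L_z\not\subseteq L_i$, argue that once these witnesses are in $S$ (and $|S|\geq z$) the language $L_z$ is $m$-critical for all $m$, hence $c_n^m\geq z$, and then use $m$-criticality of $L_{c_n^m}$ at the smaller consistent index $z$ to place the output inside $K\setminus S$, invoking Result (5.5) for termination. Your explicit $A/B$ partition of $\{i<z\}$ and the $c_n^m = z$ versus $c_n^m > z$ case split just make the paper's terser argument more explicit; there is no substantive difference.
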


\begin{proof}
    Let $z \in \N$ be the smallest number for which $L_z = K.$ By definition, $z$ has to be finite. Let 
    $L_{k_1},\ldots,L_{i_\ell}$ be the set of all languages
    that precede $L_z$ in $\cL$ for which $L_z \not\subseteq L_{k_j}, j \in [\ell].$ Then, for any such language $L_{k_j}$ there exists
    some $x \in K$ such that $x \notin L_{k_j}.$
    Define $x_{i_j}$ to be the smallest indexed element for which the previous holds.
    Hence, when
    the input sample $S$ contains $x_{i_1},\ldots,x_{i_\ell}$
    none of the languages $L_{i_j}, j \in [\ell],$ are 
    consistent with $S.$ Consider any iteration $n \in \N$,
    where $x_{i_1},\ldots,x_{i_\ell} \subseteq S_n,$ and
    $n \geq z.$ It follows immediately that $L_z$ is 
    $m$-critical
    for all $m \in \N,$ and hence it is contained
    in the set $\cC_n^m.$ Thus, for all $m \in \N,$
    for the largest index $c_n^m$ of a language
    in $\cC_n^m$ it holds that $c_n^m \geq z.$
    Recall that since the algorithm terminates (Result (5.5) from \citecustom{kleinberg2024language}),
    it will output some $x_m \in \cX$ such that $x_m \notin S_n, x_m \in L_{z'}, z' \geq z,$
    and $L_{z'}[m] \subseteq L_z[m].$ 
    This is because for all $m \in \N$, the
    largest index of a language in $\cC_n^m$ cannot drop
    below $z$. Thus, it follows that $x_m \in K \setminus S_n.$ Hence, the algorithm generates correctly. 
\end{proof} 

\noindent We are now ready to prove \Cref{thm:statistical-generation}.

\begin{proof}[Proof of \Cref{thm:statistical-generation}]
  Let $\cL$ be some non-trivial collection
  for generation. An immediate corollary of \Cref{lem:generation-from-online-to-statistical-property}
  and \Cref{lem:km24-satisfies-condition-exp-rates}
  is that the algorithm
of \citecustom{kleinberg2024language} with access to 
a membership query oracle generates with exponential
universal rates.  

The exponential rates lower bound for generation follows immediately from \Cref{lem:exp-rates-lower-bound-gen}.
\end{proof}

\section{Proofs from \texorpdfstring{\cref{sec:results:genBreadth}}{Section 3.2} (Generation with Breadth)}
\label{sec:proofof:generationWithBreadth}
    \subsection{Proof of \texorpdfstring{\cref{prop:mop_decidable}}{Theorem 3.4} (\texorpdfstring{$\mop(\cdot)$}{MOP} Is Decidable For Iterative Generators)}
    \label{sec:proofof:prop:mop_decidable}
        In this section, we prove \cref{prop:mop_decidable} which we restate below.
        \thmMOPDecidable*
        \noindent Recall that a token-by-token generator $\generator$ is parameterized by a randomized Turing machine $M$, where $M$ has the property that it halts on all inputs.
        $\generator$ generates as follows: in each iteration $t$, it queries $M$ to get the next token $s_t$ and iterates until $M$ outputs $\eos{}$ (\ie{}, end of string).
        The algorithm to decide $\mop{}(\generator)$ is also simple: given a string $s$ of length $n$, check token-by-token whether $\generator{}$ can output $s_i$ conditioned on a prefix $s_1,s_2,\dots,s_{i-1}$ generated so far.
        If at any point, $s_i$ is not in the support of $\generator$ (or rather $M$) then,  output \textsf{No}. Otherwise, output \textsf{Yes}.
        At each step, we can check if $s_i$ can be generated by $M$ using the folklore fact that membership oracles are decidable for Turing machines that always halt (\cref{lem:mopDecidableForHaltingMachines}).
        Note that we cannot use this folklore result directly for the generator  $\generator$, since even though $M$ halts in each iteration, $\generator$ may not halt as the number of iterations is not bounded.

        \begin{lemma}\label{lem:mopDecidableForHaltingMachines}
            Consider a (randomized) Turing Machine $M$ that halts on all inputs.
            The following problem is decidable: given strings $s$ and $p$ and a description of $M$, output \textsf{Yes}  if $M$ can output $s$ given input $p$ and output \textsf{No} otherwise.
        \end{lemma}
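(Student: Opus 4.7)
The plan is to decide $\mop{}$ for $M$ on input $p$ by exhaustively simulating every possible execution of $M$ and checking whether any of them outputs $s$. The key observation that makes this computable is that, since $M$ is assumed to halt on \emph{every} input (and, for a randomized machine, under every sequence of coin flips), the computation tree of $M$ on input $p$ is necessarily finite.

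To make this precise, I would first formalize the \emph{computation tree} of $M$ on input $p$: the root is the initial configuration, and each internal node has one child per possible outcome of the next random bit (a deterministic step yields a single child). Each leaf is a halting configuration, which carries an output string. This tree is finitely branching because $M$ uses a bounded (in fact, constant) number of random bits per step. Moreover, every path in the tree is finite, precisely because $M$ halts on input $p$ under every sequence of random choices. By K\"onig's lemma, a finitely branching tree in which every path is finite is itself finite; hence the computation tree has a finite number of leaves and a finite depth $T = T(M,p)$.

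Given this structural fact, the decider is straightforward: perform a breadth-first simulation of $M$ on input $p$, maintaining at each step the (finite) frontier of non-halted configurations. At each round, advance each configuration by one step of $M$, branching on random choices. Continue until the frontier is empty, i.e., all branches have halted. By the preceding paragraph, this terminates after at most $T$ rounds with a finite set $O \subseteq \Sigma^{*}$ of possible outputs. Output $\textsf{Yes}$ if $s \in O$ and $\textsf{No}$ otherwise. Correctness is immediate: $M$ on input $p$ can produce output $s$ with positive probability if and only if some leaf of the computation tree is labeled by $s$, which is exactly the condition checked. Termination follows from finiteness of the tree, so the procedure is a decider in the standard Turing-machine sense.

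There is no real obstacle here beyond bookkeeping: the entire content is K\"onig's lemma plus an exhaustive simulation. The one subtle point worth flagging in the write-up is the interpretation of ``halts on all inputs'' for a randomized machine; I would state explicitly that we take this to mean $M$ halts for every input \emph{and} every realization of its random bits, which is the standard convention and is exactly what is needed to invoke K\"onig's lemma. This lemma then slots directly into the proof of \Cref{prop:mop_decidable}: the token-by-token decider described in that proof needs a decider for the per-token support of the underlying randomized Turing machine $M$, and the procedure above supplies exactly that.
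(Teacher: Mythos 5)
Your proof is correct and reaches the same destination as the paper's -- an exhaustive simulation over all realizations of the random tape -- but it organizes the termination argument differently. The paper proves two ad hoc facts by contradiction (\cref{lem:tm:consecutiveReads}: finitely many steps between consecutive reads of the random tape; \cref{lem:tm:finiteBitsRead}: a finite bound $n_p$ on the number of random bits read), and then, since $n_p$ is not known in advance, runs an outer search that guesses $k$ and checks whether all $2^k$ assignments halt before reading bit $k+1$. You instead observe that the computation tree on input $p$ is finitely branching and has no infinite path (given the convention, which you correctly flag, that "halts on all inputs" means halting under every realization of the coins), so K\"onig's lemma gives finiteness of the whole tree at once; a breadth-first simulation of the frontier then terminates with the full output set, with no need to guess a depth parameter. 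Your packaging is cleaner and subsumes both of the paper's helper lemmas in one standard appeal; the paper's version is more elementary in that it avoids invoking K\"onig's lemma explicitly, but the content is the same. One cosmetic difference: the paper's write-up of this lemma interleaves the token-by-token structure used in \cref{prop:mop_decidable}, whereas you prove the lemma exactly as stated (a single invocation of $M$ on input $p$) and defer the per-token application to where it belongs, which is arguably the better division of labor.
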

        The proof of \cref{lem:mopDecidableForHaltingMachines} uses the following straightforward but subtle folklore lemmas.
        \begin{lemma}\label{lem:tm:consecutiveReads}
            Consider a (randomized) Turing Machine $M$ that halts on all inputs.
            $M$ has the following property:
                for each input string $p$, $M$ performs at most a finite number of steps between any consecutive reads of their (internal) tape containing random bits.
        \end{lemma}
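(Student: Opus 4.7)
My plan is to prove the contrapositive: assuming there exists an input $p$ for which the number of deterministic steps between two consecutive random-bit reads is unbounded (or more specifically, infinite for some reachable configuration), I will exhibit a random tape on which $M$ fails to halt, contradicting the hypothesis.

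First, I would fix the model: a randomized Turing machine $M$ has a dedicated read-once random tape $R = r_1 r_2 r_3 \cdots$ consisting of bits, and at distinguished transitions $M$ advances the head on this tape and reads the next bit; between such transitions the computation is entirely deterministic given the bits read so far. The assumption ``$M$ halts on all inputs'' I take to mean that for every input $p$ and every infinite bit string $R$, the computation of $M$ on $(p,R)$ halts after finitely many steps (this is the interpretation used implicitly in \cref{def:tokenByToken}, where $M$ is required to halt in each invocation regardless of internal randomness).

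Next, the contradiction: fix $p$ and suppose, for contradiction, that there is some computation of $M$ on $p$ with some particular finite prefix $r_1,\ldots,r_k$ of random bits already read, after which $M$ performs infinitely many steps \emph{without reading another random bit}. Because the trajectory of $M$ between consecutive reads is fully determined by $p$ and the random bits consumed so far, this trajectory is the same for \emph{any} infinite extension $R = r_1 r_2 \cdots r_k r_{k+1} r_{k+2} \cdots$. Pick any such extension $R$ (say, all zeros after position $k$). Then on input $(p, R)$ the machine first executes the (finite) prefix of the computation up to the $k$-th read, and then enters an infinite loop that consumes no further random bits. Hence $M$ does not halt on $(p, R)$, contradicting the hypothesis.

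The main (minor) obstacle I anticipate is pinning down the right model of randomization — under an alternative interpretation where ``halts on all inputs'' means only halts with probability $1$, the argument would need to be sharpened, since an infinite between-read segment reachable only on a measure-zero set of random tapes would not immediately yield a contradiction. However, this concern is avoided by the stronger (and standard) worst-case halting interpretation used in \cref{def:tokenByToken}, under which the proof reduces to the single-paragraph argument above. With this lemma in hand, the decidability of the per-token membership check in \cref{lem:mopDecidableForHaltingMachines} follows because, between reads, one can enumerate all finitely many possible execution paths induced by the bounded random-bit choices made so far, and thereby exhaustively check whether token $s_i$ lies in the support conditional on the prefix $s_1 \cdots s_{i-1}$.
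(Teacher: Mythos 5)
Your proof is correct and takes essentially the same contrapositive route as the paper: isolate a reachable configuration after some finite prefix of random bits from which $M$ runs forever without reading another bit, observe that the continuation is deterministic, and conclude $M$ fails to halt on any extension of that prefix — contradicting totality. The paper's own proof spells this out only for the segment between the first and second reads, whereas yours states it uniformly for the segment following any finite prefix of reads, which is the same idea stated slightly more completely.
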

        \vspace{-5mm}
        \begin{lemma}\label{lem:tm:finiteBitsRead}
            Consider a (randomized) Turing Machine $M$ that halts on all inputs.
            For each input string $p$, there is a finite number $n_p\geq 1$ such that $M$ reads at most $n_p$ random bits always (regardless of the realization of the random bits). 
        \end{lemma}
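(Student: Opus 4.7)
The plan is to prove \cref{lem:tm:finiteBitsRead} by a König's lemma argument applied to the computation tree of $M$ on the fixed input $p$. Concretely, I would build a tree $T_p$ whose nodes are intermediate configurations of $M$ reachable on input $p$, and whose edges record the outcome of random-bit reads: deterministic transitions produce a unique child, while a step in which $M$ reads one random bit produces two children, one for each value of the bit. In whatever standard convention is used (single-bit reads, or reading a bounded block of $k$ bits at a time), the branching factor at every node is finite, so $T_p$ is a finitely-branching tree.

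The heart of the argument is that $T_p$ is in fact finite. Every root-to-leaf path in $T_p$ corresponds to the complete execution of $M$ on $p$ under some fixed realization of the random-bit tape, and by the hypothesis that $M$ halts on all inputs this path is finite. Thus $T_p$ has no infinite path, and by König's lemma a finitely-branching tree without an infinite path has finitely many nodes and, in particular, a finite depth $d_p$. Since each random-bit read advances the depth in $T_p$ by at least one, the total number of random bits consumed along any execution is at most $d_p$. Setting $n_p \coloneqq \max(1, d_p)$ yields the desired uniform bound.

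The main subtlety, rather than an obstacle, is to pin down the model of randomized Turing machine so that the tree $T_p$ is well defined and finitely branching; this is why \cref{lem:tm:consecutiveReads} is stated first, as it guarantees that the ``deterministic stretches'' between random-bit reads are themselves finite and can be collapsed into single edges in $T_p$ without breaking the König argument. One must also be careful that the hypothesis ``$M$ halts on all inputs'' is interpreted as halting on every pair (input string, random tape), not merely halting with probability one: a measure-zero set of non-halting random sequences would correspond to infinite branches in $T_p$ and would invalidate the finiteness conclusion. Under the natural strong interpretation, however, the argument goes through cleanly and gives the uniform bound $n_p$ needed downstream to decide $\mop(\generator)$ in \cref{prop:mop_decidable}.
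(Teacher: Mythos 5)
Your proof is correct, and it is essentially the paper's argument viewed from the other direction: the paper argues by contradiction that if no finite bound $n_p$ existed, then for every $n$ some assignment of the first $n$ random bits forces $M$ to read an $(n+1)$-th bit, contradicting halting. Your König's-lemma formulation is actually the more careful of the two, because the paper's final step --- passing from ``for every $n$ there is \emph{some} depth-$n$ assignment that keeps running'' to a contradiction with ``$M$ halts on every fixed random tape'' --- silently requires extracting a single infinite non-halting branch from this family of deep finite branches, which is exactly the compactness content of König's lemma that you invoke explicitly. Your two caveats (finite branching per read, and interpreting ``halts on all inputs'' as halting for every realization of the random tape rather than almost surely) also match the paper's intended model, the first being precisely the role of \cref{lem:tm:consecutiveReads}. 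No gap.
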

        These enable us to prove \cref{lem:mopDecidableForHaltingMachines}.
        \begin{proof}[Proof of \cref{lem:mopDecidableForHaltingMachines}]
            Consider a string $s\in \Sigma^*$ of length $m$.
            We will check if $M$ generates $s$ with positive probability by iteratively checking if, for each $1\leq t\leq m$, $M$ generates token $s_t$ with positive probability conditioned on having generated $s_1\dots s_{t-1}$ so far.
            
            Fix any $1\leq t\leq m$.
            Suppose $M$ has passed all earlier checks and, hence, it generates $s_1\dots s_{t-1}$ with positive probability.
            Now, to complete the check for step $t$, it suffices to check that $M$ generates $s_t$ with positive probability having generated $s_1\dots s_{t-1}$ so far.
            Since $M$ halts on all inputs, \cref{lem:tm:finiteBitsRead} implies that there is a finite $n_t$ such that $M$ reads at most $n_t$ bits of its internal random tape when given the corresponding input.
            Moreover, \cref{lem:tm:consecutiveReads} implies that $M$ performs finitely many operations between each of the $n_t$ consecutive reads of the internal random tape.
            Hence, one can simulate the execution of $M$ in finite time by checking all $2^{n_t}$ possible values of the random bits of $M$.
            If, for any of these $2^{n_t}$ values, $M$ outputs $s_t$ then we know that $M$ passes the test and, otherwise, we know that $M$ never generates $s_t$ when provided the corresponding input.
            
            One subtlety is that we do not know $n_t$.
            This is easy to overcome: since $n_t$ is known to be finite, we can iterate over $n_t\in \N$ until we reach a value $k$ where for each of the $2^k$ values of the first $k$ random bits,  $M$ halts before reading the $(k+1)$-th random bit.
        \end{proof}

        \subsubsection*{Proof of \cref{lem:tm:consecutiveReads}} %
            \begin{proof}[Proof of \cref{lem:tm:consecutiveReads}]
                The statement is vacuously true for $M$ and $p$ if $M$ reads its internal tape at most once on input $p$ always.
                Suppose with positive probability (over the randomness on $M$'s internal random tape), $M$ reads its internal random tape at least twice given input $p$.
                Fix a value $r_1=v$ of the first random bit such that $M$ will (eventually) read the second bit $r_2$ on the random tape.
                Consider the step after $M$ has read $r_1$.
                If $M$ performs a non-finite amount of computation before reading $r_2$, then we have a contradiction to the fact that $M$ is total since we have found an assignment $v$ of the first random bit on which $M$ performs an infinite number of steps.
                Hence, the result follows by contradiction. 
            \end{proof}
            
        \subsubsection*{Proof of \cref{lem:tm:finiteBitsRead}}
            \begin{proof}[Proof of \cref{lem:tm:finiteBitsRead}]
                Fix any input $p$ to $M$.
                Toward a contradiction suppose that for any finite $n\geq 1$, there is (at least) one assignment $v_1,v_2\dots,v_n$ of the first $n$ bits on $M$'s internal random tape on which $M$ will read the $(n+1)$-th random bit before halting.
                Therefore, for any $n\geq 1$, we have an assignment of the random bits for which $M$ rates at least $n+1$ random bits and, hence, perform at least $n+1$ steps before halting.
                This is a contradiction to the fact that $M$ halts always, for each value of the random bits on its internal tape.
            \end{proof}
            
    \subsection{Proof of \texorpdfstring{\Cref{mainthm:gen:mop}}{Theorem 3.3} (Impossibility for Generation with Breadth)}
    \label{sec:gen:mop}

    In this section, we present the proof of
    \Cref{mainthm:gen:mop} in two main
    parts; see \cref{fig:outline:mainthm:gen:mop} for an outline.

    First, we prove that if 
    $\cL$ is not identifiable in the limit,
    then no algorithm in $\mathfrak{G}$ generates
    with breadth from $\cL$ at any rate. Recall that $\mathfrak{G}$ is the class of generating algorithms for which $\mop{}(\cdot)$ is decidable.
        
    \begin{figure}[bth]
        \centering
        \begin{tikzpicture}[node distance=2.5cm, auto]
        \node[myNodeNarrow] (thm) at (0,0) {\footnotesize\cref{mainthm:gen:mop}};
    
        \node[myNode, text width=6cm] (subresA) at (-4.125cm, -2cm) {\footnotesize Results for non-identifiable collections $\cL$};
    
        \node[myNode, text width=6cm] (subresB) at (4.125cm, -2cm) {\footnotesize Results for identifiable collections $\cL$};
        
        \node[myNode, text width=3.8cm] (subres1) at (-6.75cm, -4cm) {\footnotesize Algorithms in $\mathfrak{G}$ cannot generate with breadth from $\cL$ at any rate};
        \node[myNodeFlex, text width=3.8cm] (subres2) at (-2.5cm, -4cm) {\footnotesize Algorithms in $\mathfrak{G}$ generate (without breadth) at (the optimal) exponential rate};
        \node[myNode, text width=3.8cm] (subres3) at (2.5cm, -4cm) {\footnotesize Algorithms in $\mathfrak{G}$ generate with breadth at almost exponential rate};
        \node[myNode, text width=3.8cm] (subres4) at (6.75cm, -4cm) {\footnotesize Generation faster than exponential is impossible for non-trivial $\cL$};

        \node[myNodeNarrow] (lem64) at (-6.75cm, -6cm) {\footnotesize \cref{lem:mainthm:gen:mop-lower-bound}};
        
        \node[myNodeNarrow] (thm31) at (1.125cm, -6cm) {\footnotesize \cref{thm:dichotomy-identification-positive}};
        \node[myNodeNarrow] (lem65) at (3.625cm, -6cm) {\footnotesize \cref{prop:index-to-sampler}};
        
        \node[myNodeNarrow] (thm32) at (-3.5cm, -6cm) {\footnotesize \cref{thm:statistical-generation}};
        \node[myNodeNarrow, text width=2cm] (obs) at (-1.25cm, -7.3cm) {\footnotesize \mbox{$\mop{}(\cdot)$ is} \mbox{decidable for} \mbox{Kleinberg and} Mullainathan \cite{kleinberg2024language}'s algorithm};
        
        \node[myNodeNarrow] (lem510) at (6.75cm, -6cm) {\footnotesize \cref{lem:exp-rates-lower-bound-gen}};
        
        \draw[-stealth, line width=0.5mm] (thm) -- (subresA.north);
        \draw[-stealth, line width=0.5mm] (thm) -- (subresB.north);
        \draw[-stealth, line width=0.5mm] (subresA) -- (subres1.north);
        \draw[-stealth, line width=0.5mm] (subresA) -- (subres2.north);
        \draw[-stealth, line width=0.5mm] (subresB) -- (subres3.north);
        \draw[-stealth, line width=0.5mm] (subresB) -- (subres4.north);
    
        \draw[-stealth, line width=0.5mm] (subres1) -- (lem64.north);
        \draw[-stealth, line width=0.5mm] (subres2) -- (thm32.north);
        \draw[-stealth, line width=0.5mm] (subres2) -- (obs.north);
        \draw[-stealth, line width=0.5mm] (subres3) -- (thm31.north);
        \draw[-stealth, line width=0.5mm] (subres3) -- (lem65.north);
        \draw[-stealth, line width=0.5mm] (subres4) -- (lem510.north);
        
        \end{tikzpicture}
        \caption{Outline of Proof of \cref{mainthm:gen:mop}}
        \label{fig:outline:mainthm:gen:mop}
    \end{figure}
    
    \begin{lemma}\label{lem:mainthm:gen:mop-lower-bound}
        Let $\cL$ be a countable
        collection of languages
        that is not identifiable in the limit.
        Then, for every rate $R$, there is no generating
        algorithm in $\mathfrak{G}$ that 
        can generate from $\cL$ with consistency
        and breadth at rate $R$.
    \end{lemma}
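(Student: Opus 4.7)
The plan is to argue by contradiction. Assume there exists a generating algorithm $\generator = (\generator_n) \in \mathfrak{G}$ that generates from $\cL$ with consistency and breadth at some rate $R(\cdot)$ with $R(n) \downarrow 0$, so for every valid $\cP$ with $\supp(\cP) = K$, the event $\cE_n \coloneqq \{\supp(\generator_n) = K \setminus S_n\}$ satisfies $\Pr[\cE_n^c] \leq C \cdot R(c n)$ for some distribution-dependent $C,c>0$. I will use $\generator$, together with decidability of $\mop{}(\generator_n)$ and the membership oracle for the languages in $\cL$, to construct an identification algorithm $(h_n)$ achieving some rate $R'(\cdot)$ with $R'(n) \downarrow 0$. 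Since $\cL$ is not identifiable in the limit, \cref{thm:dichotomy-identification-positive} states that $\cL$ admits no identification rate, yielding the desired contradiction.

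The identifier $h_n$ is defined as follows. Fix the canonical enumeration $x_1, x_2, \dots$ of $\cX$. Given $S_n = \{\xi_1, \dots, \xi_n\}$, first compute the generator $\generator_n(S_n)$, whose $\mop{}(\cdot)$ is decidable by assumption. Form the set $A_n \coloneqq \supp(\generator_n(S_n)) \cup S_n$; membership in $A_n$ is decidable in finite time via $\mop{}(\generator_n)$ together with a lookup in $S_n$. Then search $j = 1, 2, \dots, n$ and output the smallest index $j^*$ such that $\ind\{x_k \in L_j\} = \ind\{x_k \in A_n\}$ for all $k \in [n]$ (if no such $j$ exists, output $1$). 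This is computable because each test requires only $2n$ oracle calls, and there are at most $n$ candidate indices; note also that no repeated training examples arise, since breadth forces $\supp(\generator_n) \cap S_n = \emptyset$ on $\cE_n$.

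For correctness, fix any valid $\cP$ with $\supp(\cP) = K$, and let $z \coloneqq \min\{j : L_j = K\}$. Condition on $\cE_n$, so $A_n = K$. By the projection argument of \cref{lem:post-processing-same-index}, there exists $n_0 = n_0(K,\cL,\cX)$, independent of $S_n$, such that for every $j < z$ some distinguishing element of $L_j \triangle K$ appears among $x_1,\dots,x_{n_0}$; hence for all $n \geq \max(n_0, z)$, the smallest $j \leq n$ whose indicators match those of $A_n$ on $x_1,\dots,x_n$ is exactly $z$, and therefore $L_{h_n(S_n)} = K$ on $\cE_n$. Thus $\Pr[L_{h_n(S_n)} \neq K] \leq C \cdot R(c n)$ for all $n \geq \max(n_0, z)$, absorbing the finitely many small-$n$ terms into a new constant gives an identification rate $R'(n) = C' R(c n) \downarrow 0$, contradicting \cref{thm:dichotomy-identification-positive}.

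The main obstacle is ensuring that the identifier succeeds on $\cE_n$ with \emph{the same} rate $R$ as generation with breadth, which requires two delicate points: first, restricting the matching test to the first $n$ elements of $\cX$ rather than all of $\cX$ (since infinite equality is not decidable) must still suffice to isolate $z$ from all lower-index inconsistent languages, which is exactly where the post-processing lemma does the work; second, the threshold $n_0$ and the cap $j \leq n$ are both purely a function of $K$ and the enumerations (not of the random sample), so that the only randomness in the error comes from $\cE_n^c$ and the bound on $\Pr[\cE_n^c]$ transfers directly. Everything else is syntactic verification that the constructed identifier sits within the universal-rates framework of \cref{def:achievable-rates}.
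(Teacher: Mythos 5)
Your proposal is correct and follows essentially the same route as the paper's proof: the same contradiction setup, the same identifier (smallest index $j\leq n$ whose membership pattern on $x_1,\dots,x_n$ matches $\supp(\generator_n)\cup S_n$), the same use of the finitely many distinguishing elements for languages preceding $z$ (which the paper argues directly in its "Case B" rather than by citing \cref{lem:post-processing-same-index}), and the same appeal to \cref{thm:dichotomy-identification-positive} to conclude. No gaps.
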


    \begin{proof}
        Let $\cL$ be a countable
        collection of languages that is
        not identifiable in the limit.
        Assume towards a contradiction
        that there exists some generating
        algorithm $(\generator_n) \in \mathfrak{G}$
        that achieves consistency and breadth
        at some rate $R(n).$ Fix also some valid distribution
        $\cP$ supported over a 
        target language $K$. This means 
        that there exist $c, C$, that depend on $\cP,$ such that
        \[
            \E_{X_1,\ldots,X_n \sim \cP^n}\insquare{\ind\inbrace{\supp(\generator_n) \neq K \setminus \inbrace{X_1,\ldots,X_n}}} \leq C\cdot R(c\cdot n) \,.
        \]
        This can be equivalently written as
        \[
            \Pr_{X_1,\ldots,X_n \sim \cP^n}\insquare{\supp(\generator_n) \neq K \setminus \inbrace{X_1,\ldots,X_n}} \leq  C\cdot R(c\cdot n) \,.
        \]
        For every $n \in \N,$ we denote
        by $\cE_n$ the event that $\supp(\generator_n) = K \setminus \inbrace{X_1,\ldots,X_n}.$
        Let $z \in \N$ be the smallest number
        such that $L_z = K.$ 
        Recall that the elements
        of the universe are $\cX = \inbrace{x_1,x_2,\ldots}.$
        Consider the following algorithm $(I_n)_{n \in \N}$ for identification: 
        \begin{itemize}
            \item For every $n \in \N$,
            denote by $\{X_i\}_{i \in [n]}$
            the sample \iid{} from $\cP$. Output the smallest index $j \in [n]$
        such that 
        \[
             \ind\inbrace{x_i \in L_j} = \ind\inbrace{x_i \in \supp(\generator_n) \cup \inbrace{X_1,\ldots,X_n}}, \forall i \in [n]\,.
        \]
        (Since $\generator_n \in \mathfrak{G}$, {$\mop{}(\generator_n)$ is decidable and, hence, the above $j$ can be computed.}) If no such index
        exists, output an index arbitrarily.
        \end{itemize}
        We consider two cases. 
        
        \paragraph{Case A ($z=1$):}
        In this case, notice
        that if $\supp(\generator_n) = K \setminus \{X_1,\ldots,X_n\},$
        then, $I_n(X_1,\ldots,X_n) = z.$
        This is because $\supp(\generator_n) \cup \inbrace{X_1,\ldots,X_n} = K$ and $L_z = K,$ so
        for all $x \in \cX$ it holds 
        $\ind\inbrace{x \in L_z} = \ind\inbrace{x \in \supp(\generator_n) \cup \inbrace{X_1,\ldots,X_n}}.$
        Thus,
        \begin{align*}
        \Pr_{X_1,\ldots,X_n \sim \cP^n}\insquare{L_{I_n(X_1,\ldots,X_n)} \neq K} &\leq
            \Pr_{X_1,\ldots,X_n \sim \cP^n}\insquare{I_n(X_1,\ldots,X_n) \neq z} \\
            & \leq \Pr_{X_1,\ldots,X_n \sim \cP^n}\insquare{I_n(X_1,\ldots,X_n) \neq z \mid \cE^c_n} \cdot \Pr_{X_1,\ldots,X_n \sim \cP^n}\insquare{\cE^c_n} \tag{since under $\cE_n$ the algorithm identifies}\\
            &\leq 1 \cdot \Pr_{X_1,\ldots,X_n \sim \cP^n}\insquare{\cE^c_n} \\
            &\leq  C\cdot R(c\cdot n) \,.
        \end{align*}
        
        \paragraph{Case B ($z>1$):}
        For every language $L_j, j \in [z-1],$
        let ${i_j} \in \N$ be the smallest
       number such that $\ind\inbrace{x_{i_j} \in L_j} \neq \ind\inbrace{x_{i_j} \in L_z}.$ By definition of $L_z,$ we have
       that $i_j$ is well-defined.
       Moreover, let $n^* \coloneqq \max_{j \in [z-1] } i_j.$ Notice that for all $n \geq n^*$, under the 
       event $\cE_n,$ we have that
       $I_n(X_1,\ldots,X_n) = z.$ To see
       why this is the case, notice that
       \begin{enumerate}
           \item Under the event $\cE_n$ it holds that
        $\ind\inbrace{x \in L_z} = \ind\inbrace{x \in \supp(\generator_n) \cup \inbrace{X_1,\ldots,X_n}}$ for all
         $x \in \cX$.

         \item Since $n \geq n^*$, for all $j \in [z-1]$
         we have that $i_j \leq n.$ Thus, under the event $\cE_n$ it cannot be the
         case that:
         \[
             \ind\inbrace{x_i \in L_j} = \ind\inbrace{x_i \in \supp(\generator_n) \cup \inbrace{X_1,\ldots,X_n}}, \forall i \in [n]\,.
        \]
       \end{enumerate}

        Hence, using an identical argument as in the 
        case $z = 1$ we have that
        \begin{align*}
        \Pr_{X_1,\ldots,X_n \sim \cP^n}\insquare{L_{I_n(X_1,\ldots,X_n)} \neq K} \leq  C\cdot R(c\cdot n), \forall n \geq n^*\,.
        \end{align*}
        Since this holds for any valid distribution $\cP$,
        using different $\cP$-dependent constants,
        we see that the algorithm $(I_n)_{n \in \N}$
        can identify $\cL$ at a rate $R.$ Since $\cL$
        is not identifiable in the limit, this 
        contradicts \Cref{mainthm:statisticalRates:iden}, 
        and, hence, concludes the proof.
        
    \end{proof}
    The last ingredient we need to prove \cref{mainthm:gen:mop} is an algorithm that given
    the index of a language, samples from it with breadth.

    \begin{proposition}\label{prop:index-to-sampler}
        There exists a randomized 
        computable algorithm $\cA$ for which $\mop{}(\cdot)$ is decidable {and that,} given as input a number $z \in \N$ and access to a collection of languages $\cL = \{L_1,L_2,\ldots,\},$ satisfies $\supp\inparen{\cA\inparen{z}}=L_z$.
    \end{proposition}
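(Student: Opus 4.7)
The plan is to construct $\cA$ via a very simple rejection-sampling loop over an enumeration of $\cX$. Fix an enumeration $\cX = \{x_1,x_2,\ldots\}$, which is available since $\cX$ is countable. On input $z$, $\cA$ repeats the following: sample a positive integer $i$ from the geometric distribution $\Pr[i] = 2^{-i}$ (computable by flipping fair coins until the first \textsf{heads}), query the membership oracle for $\cL$ to decide whether $x_i \in L_z$, and if so output $x_i$ and halt; otherwise start the loop again. I would then check three things in turn: correctness of support, almost-sure termination, and decidability of $\mop{}(\cA)$.

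For correctness of support: the algorithm only ever outputs elements $x$ for which the oracle affirms $x \in L_z$, so $\supp(\cA(z)) \subseteq L_z$; conversely, any $x_j \in L_z$ is selected on a given iteration with probability $2^{-j} > 0$ and accepted with probability $1$, so $x_j \in \supp(\cA(z))$. Hence $\supp(\cA(z)) = L_z$ whenever $L_z \neq \emptyset$, which is the only case where the claim has content (if $L_z = \emptyset$ there is nothing to sample from). Termination follows from the same calculation: if $L_z$ contains at least one element $x_j$, each iteration independently halts with probability at least $2^{-j}$, so the loop halts after finitely many iterations with probability $1$.

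For decidability of $\mop{}(\cA)$: the description of $\cA$ packages the index $z$ together with access to the membership oracle of $\cL$. Given any string $x \in \cX$, compute its index $i$ in the enumeration (this is finite and computable) and query the oracle for ``$x_i \in L_z?$''; output \textsf{Yes} or \textsf{No} accordingly. Since $\supp(\cA(z)) = L_z$, this decides support membership. No step of this procedure requires unbounded search.

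The only real subtlety — and what I expect to be the main place where one needs to be careful in writing the proof — is ensuring that the randomized Turing machine implementing $\cA$ is a legitimate computable sampler: the inner step of sampling from the geometric distribution must be implemented so that conditional on every prefix of coin flips the machine halts in finitely many steps, and the membership query must be packaged as a single primitive step (as is standard throughout the paper). Both are routine, and together they show that $\cA$ belongs to the class of generating algorithms considered in \cref{def:mopAlgo}, completing the proof.
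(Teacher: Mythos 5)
Your proposal is correct and follows essentially the same route as the paper: rejection sampling over an enumeration of $\cX$ using a distribution with full support on $\N$ (the paper leaves the distribution unspecified; you instantiate it as geometric), accepting via a membership query to $L_z$. Your explicit verification of support, almost-sure termination, and decidability of $\mop{}(\cA)$ is more detailed than the paper's one-line justification but adds nothing that changes the argument.
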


    \begin{proof}
    The algorithm works as follows. Given the index $z$ of the target
    language:
    \begin{itemize}
        \item Sample a natural natural number $\hat n \in \N$ from some distribution supported over $\N.$
        \item If $x_{\hat n} \in L_z,$ (this can be checked by querying the membership oracle) return $x_{\hat n}.$ Otherwise, repeat the previous bullet.
    \end{itemize}
    It follows immediately that this algorithm is computable and satisfies the requirements of the statement, since it is implemented via
    rejection sampling {using a membership oracle to $L_z$ (which is decidable)}, it is indeed in $\mathfrak{G}$.
    \end{proof}

    \noindent We are now ready to prove \Cref{mainthm:gen:mop}.

    \begin{proof}[Proof of \Cref{mainthm:gen:mop}]

   First, consider the case
   that $\cL$ is not identifiable in the limit. Then, for every rate $R$,  \Cref{lem:mainthm:gen:mop-lower-bound} shows that no generating
   algorithm from $\mathfrak{G}$ can generate
   from $\cL$ with consistency and breadth
   at rate $R.$

    We now show that there is a consistent
    generation algorithm for $\cL$
    at an optimal exponential rate.
    Notice that since $\cL$ is 
    non-trivial for generation, by \Cref{lem:exp-rates-lower-bound-gen}, 
    it holds that no algorithm can achieve
    rate faster than exponential.
    Moreover, by \Cref{thm:statistical-generation} there exists
    an algorithm (namely the one by \citecustom{kleinberg2024language}) 
    that achieves exponential rates. Moreover,
    since this algorithm 
    samples from a distribution
    that is a point mass, it is indeed
    in $\mathfrak{G}.$

    Let us now consider the case
    that $\cL$ is identifiable in the limit.
    Then, by \Cref{mainthm:statisticalRates:iden},
    for every $g(n) = o(n),$ there
    exists an algorithm that identifies
    $\cL$ at rate $e^{-g(n)}.$ It is not hard
    to turn this identification algorithm
    into an algorithm that is in $\mathfrak{G}$
    and generates with breadth via rejection sampling. This happens
    as described in \Cref{prop:index-to-sampler}.
    Conditioned on the event that $L_z = K,$
    the previous algorithm indeed generates with
    breadth.
    Finally, since $\cL$ is non-trivial, 
    no algorithm (even outside of $\mathfrak{G}$)
    can achieve a faster than exponential rate for consistent generation (even without breadth), 
    by \Cref{lem:exp-rates-lower-bound-gen}.

    \end{proof}

\begin{remark}\label{rem:kleinberg-infinite-support}
    {
        One subtlety in the above proof is that to use \citecustom{kleinberg2024language}'s algorithm for generation, we require it to output an arbitrarily large number of samples at each step.
        While the vanilla version of \citecustom{kleinberg2024language}'s algorithm only outputs one sample at a time, it can easily be extended so that, given a number $m\geq 1$, it outputs $m$ samples at each step.
    }
    Moreover, the resulting
    algorithm is in $\mathfrak{G}.$
\end{remark}

    \subsection{Proof of \texorpdfstring{\cref{mainthm:gen:limit}}{Theorem 3.5} (Impossibility for Generation with Breadth in the Limit)}\label{sec:proofof:mainmainthm:gen:limit}
        In this section, we prove \cref{mainthm:gen:limit}, which we restate below.
        \thmImpossibleGenerationBreadth*

        \begin{proof}[Proof of \cref{mainthm:gen:limit}]
            The proof of \cref{mainthm:gen:limit} is by a contradiction: we will show that if such a generator exists, it can be used to build an identification algorithm $\algo{I}$ for $\cL$ contradicting the fact that $\cL$ is non-identifiable.
            In addition to the generator $\algo{G}$, this identification algorithm uses another sub-routine: 
                an algorithm $\identifierPN$ that, given a {positive and negative enumeration of the target}, identifies it in the limit.
                Such an identification algorithm always exists due to a result by \citecustom{gold1967language}.
            The identifier $\algo{I}$, which we construct, is as follows:

            \smallskip
            
            \begin{mdframed}
                \textbf{Input:} 
                    Access to a generator $\algo{G}$ for $\cL$ that (1) achieves consistency and breadth in the limit and (2) for which $\mop{(\algo{G})}$ is decidable, and access to the algorithm $\identifierPN$ that identifies $\cL$ in the limit from a positive and negative enumeration of the target language.

                \medskip

                \noindent \textbf{Description:}
                \begin{enumerate}
                    \item \textbf{For each}~~$t\in \N$~~\textbf{do:}
                    \begin{enumerate}
                        \item Observe the $t$-th sample $s_t$ and let $S_t$ be the set of samples seen so far
                        \item Train the generator $\algo{G}$ from scratch over the $t$ samples in $S_t$ 
                        \item Label the first $t$ strings $x_1,\dots,x_t$ of  the domain as $\mop{}(\generator)(x_1),\dots,\mop{}(\generator)(x_t)$\footnote{Here, $\mop{}(\generator)(x)$ is the answer to the membership oracle problem for $\generator$ given input $x$.}
                        \item Train $\identifierPN$ from scratch on samples $x_1,\dots,x_t$ with labels $\mop{}(\generator)(x_1),\dots,\mop{}(\generator)(x_t)$ %
                        \item \textbf{output} the index guessed by $\identifierPN$ and go to the next iteration
                    \end{enumerate}
                \end{enumerate}
            \end{mdframed}
        
            \smallskip

            \noindent 
            Since $\mop{(\generator)}$ is decidable, the above algorithm can be implemented using a Turing machine.
            We claim that the above algorithm identifies the target language $K$ after a finite number of iterations.
            Let $z$ be the first index at which $K$ appears.
            To formalize this, fix any enumeration $s_1,s_2,\dots$ of the target language $K$.
            Since $\algo{G}$ generates with breadth in the limit, there is a finite iteration $t_{\algo{G}}$ after which $K$ generates with breadth from $K$ and, hence, $\supp{(\algo{G})} = K$.
            Hence, after iteration $t_{\algo{G}}$, for any string $x$, $\mop{(\generator)}(x) = \ind\inbrace{x\in K}$.
            In other words, $\identifierPN{}$ is provided with accurate positive and negative labels in all subsequent iterations $t\geq t_{\algo{G}}$.
            Since $\identifierPN{}$ identifies in the limit, there is a finite $t_{\rm PN}$ such that $\identifierPN{}$ identifies $K$ once it is given labels for the first $t\geq t_{\rm PN}$ examples in the domain.
            It follows that $\identifierPN{}$ and, hence, our algorithm identifies $K$ after $\max\inbrace{t_{\algo{G}}, t_{\rm PN}}<\infty$ iterations.
            This gives the desired contradiction, proving \cref{mainthm:gen:limit}.
            Note that the above identification algorithm does not need to know either $t_{\algo{G}}$ or $t_{\rm PN}$.
            (Of course, as a consequence, our algorithm does not know when it has identified $K$.)
        \end{proof}

\section{Proofs from \texorpdfstring{\Cref{sec:ApproxConsBreadth}}{Section 3.3} ({Generation with Relaxations of Breadth})}
        \label{sec:proofRobust}

        \subsection{Proof of \texorpdfstring{\Cref{thm:online:impossibility:robust}}{Theorem 3.7} ({Unambiguous Generation: Online})}
        \label{sec:proofRobust-2}\label{sec:proofof:thm:online:impossibility:robust}
            In this section, we prove \cref{thm:online:impossibility:robust}, which we restate below.
            \thmUnambiguousLimit*
            \begin{proof}[Proof of \cref{thm:online:impossibility:robust}]
             By the way of contradiction, suppose that there is an algorithm $\generator=\inparen{\generator_n}$ for which $\mop{}(\cdot)$ is decidable (at each $n$) and which is an unambiguous generator for $\cL$.
             We will use $\generator$ to construct an algorithm that identifies $\cL$ in the limit, hence, contradicting the non-identifiablity of $\cL$.
    
             Fix any enumeration $x_1,x_2,\dots$ of the domain $\cX$.
             For each language $L\in \cL$ and number $t\geq 1$, define the $t$-prefix of $L$ as the subset $L[t]$ of the first $t$-elements of the domain $\inbrace{x_1,\dots,x_t}$ in $\cL$, \ie{}, 
             \[
                L[t]\coloneqq \inbrace{x_1,\dots,x_t}\cap L\,. 
             \]
             To complete the above outline, consider the following algorithm, which we claim identifies $\cL$.
             \begin{mdframed}
                \textbf{Input:} 
                    Access to a generator $\algo{G}$ for $\cL$ that (1) that is unambiguous in the limit and (2) for which $\mop{(\cdot)}$ is decidable at each step %
    
                \medskip
                
                \noindent  \textbf{Description:}
                    \begin{enumerate}
                        \item \textbf{For each}~~$t\in \N$~~\textbf{do:}
                        \vspace{-2mm}
                    \begin{enumerate}
                        \item Observe the $t$-th sample $s_t$ and let $S_t$ be the set of samples seen so far
                        \item Train the generator $\algo{G}_{t-1}$ on $s_t$ to get $\algo{G}_t$
                        \item 
                            Create a set of languages consistent  with observed samples $C_{S}(t)\subseteq \inbrace{L_1,\dots,L_t}$ that includes each $L\in\inbrace{L_1,\dots,L_t}$ that is consistent with $S_t$ (\ie{}, $L\supseteq S_t$)
                        \item Construct a set of languages consistent with the generator $C_{G}(t) \subseteq \inbrace{L_1,\dots,L_t}$ that  languages $L$ from $\inbrace{L_1,\dots,L_t}$ except if {$L[t]\not\subseteq \supp(\generator_t)\cup S_t$,} which can be checked in finite time using a decider for $\mop{}(\generator_t)$ 
                        \item \textbf{output} the index of the smallest-indexed language in $C_{S}(t)\cap C_{G}(t)$ (or an arbitrary index if $C_{S}(t)\cap C_{G}(t)$ is empty)
                    \end{enumerate}
                    \end{enumerate} 
            \end{mdframed}
            Since the algorithm outputs the smallest index in $C_{S}(t)\cap C_{G}(t)$, it identifies $K$ if it is the smallest-indexed language in $C_{S}(t)\cap C_{G}(t).$
            The following conditions ensure this:
            \begin{itemize}
                \item[(A)] $L_z\in C_{S}(t)$ and $L_z\in C_{G}(t)$, where $z$ is the smallest index at which $K$ appears in $\cL$; and 
                \item[(B)] For any $i < z$, either $L_i\not\in C_S(t)$ or $L_i\not\in C_G(t)$
            \end{itemize}
            We claim that there are finite times $t_a$ and $t_b$ where, for any $t\geq t_a$, Condition (A) holds and, for any $t\geq t_b$, Condition (B) holds. 
            This claim implies that the above algorithm identifies $K$ in the limit, leading to the desired contradiction.
    
            \paragraph{Condition A holds after a finite time.}    
                Since $S_t$ only contains samples from $K$, $K\in C_S(t)$ for all $t\geq 1$.
                Further, since $\generator=\inparen{\generator_t}$  is an unambiguous generator for $\cL$ in the limit, there exists a finite $t_0\geq 0$, such that for all $t\geq t_0$,
                \[
                    \abs{\supp(\generator_n)\triangle K}
                    < 
                    \min_{L\in \cL\colon L\neq K}
                    \abs{\supp(\generator_n)\triangle L}\,.
                    \yesnum\label{eq:robust:umabiguous:proof}
                \]
                Hence, in particular, for all $t\geq t_0$
                \[
                    \abs{\supp(\generator_n)\setminus K}\,,~ \abs{K\setminus \supp(\generator_n)}
                    < \infty\,.
                \]
                Furthermore, as $\generator$ is stable, after some time $t_1$, $\supp(\generator_n)$ stops changing.
                Consider any $t\geq \max\inbrace{t_0,t_1}$.
                Since ${K\setminus \supp(\generator_t)}$ has finitely many elements and ${K\setminus \supp(\generator_t)=K\setminus \supp(\generator_{t'})}$ for any $t'\geq  \max\inbrace{t_0,t_1}$, 
                there is a finite time $t_2$ after which all elements of ${K\setminus \supp(\generator_t)}$ have been observed.
                Therefore, for any $t\geq \inbrace{t_0,t_1,t_2}$, it must holds that ${K\subseteq \inparen{\supp(\generator_t)  \cup S_t}}$ and, hence, that $K\in C_G(t)$.
                Thus, it suffices to fix $t_a= \max\inbrace{t_0,t_1,t_2}$.

            \begin{figure}[bt!]
                \centering
                \includegraphics[width=0.5\linewidth,trim={-1cm 0cm 7.5cm 0cm}]{figures/unambiguousGeneration.pdf}
                \caption{Figure illustrating the decomposition of $\supp(\generator_t)\triangle K$ and $\supp(\generator_t)\triangle L$.}
                \label{fig:unambiguousGeneration}
            \end{figure}
        
            \paragraph{Condition B holds after a finite time.}
                Since there are only finitely many $i < z$, it suffices to show that for each $i< z$, there is a finite time $t_i$ after which either $L_i\not\in C_S(t)$ or $L_i\not\in C_G(t)$.
                Fix any $i<z$.
                Consider two cases:
                \begin{itemize}
                    \item \textbf{Case A ($K\setminus L_i\neq \emptyset$):}    
                        In this case, there exists an $x\in K\setminus L_i$ and, hence, after some finite time $t_i$ when $x$ has been observed $L_i\not\supseteq S_t$ and, hence, $L_i\not\in C_S(t).$
                    \item \textbf{Case B ($K\setminus L_i = \emptyset$):}
                        In this case, $L_i\supsetneq K$, and our proof is based on the following observation.
                    \begin{lemma}
                        For any $t\geq t_a$ and $L\supsetneq K$, it holds that $L\setminus \supp(\generator_t)\neq \emptyset$.
                    \end{lemma}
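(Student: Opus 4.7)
The plan is to argue by contradiction: suppose that $L \setminus \supp(\generator_t) = \emptyset$, i.e., $L \subseteq \supp(\generator_t)$. Because $K \subsetneq L$, this inclusion also forces $K \subseteq \supp(\generator_t)$, so both ``missing mass'' parts of the symmetric differences collapse: $K \setminus \supp(\generator_t) = \emptyset$ and $L \setminus \supp(\generator_t) = \emptyset$. Consequently,
\[
\supp(\generator_t) \triangle K = \supp(\generator_t) \setminus K \qquadand \supp(\generator_t) \triangle L = \supp(\generator_t) \setminus L.
\]

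Next I would invoke the unambiguity relation \eqref{eq:robust:umabiguous:proof}, which applies for all $t \geq t_a \geq t_0$ and any $L' \neq K$ in $\cL$; taking $L' = L$ gives $\abs{\supp(\generator_t) \triangle K} < \abs{\supp(\generator_t) \triangle L}$. The key algebraic step is the disjoint decomposition
\[
\supp(\generator_t) \setminus K \;=\; \bigl(\supp(\generator_t) \setminus L\bigr) \;\sqcup\; \bigl(L \setminus K\bigr),
\]
which uses precisely that $K \subseteq L$. Taking cardinalities and using that $L \setminus K \neq \emptyset$ (since $K \subsetneq L$) produces $\abs{\supp(\generator_t) \setminus K} > \abs{\supp(\generator_t) \setminus L}$, i.e., $\abs{\supp(\generator_t) \triangle K} > \abs{\supp(\generator_t) \triangle L}$, contradicting the displayed inequality above.

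The one subtlety, which I expect to be the main obstacle, is making sense of the strict inequality in cardinal arithmetic when the sets involved may be infinite. I would resolve this by appealing to the observation already established for Condition~A: at times $t \geq t_a \geq t_0$ the unambiguity hypothesis guarantees $\abs{\supp(\generator_t) \triangle K} < \infty$, so in particular $\supp(\generator_t) \setminus K$ is a finite set. Under the contradiction hypothesis $L \setminus K \subseteq \supp(\generator_t) \setminus K$, which then forces $L \setminus K$ to be finite and non-empty, so its addition to $\abs{\supp(\generator_t) \setminus L}$ produces a strictly larger finite cardinal. This legitimizes the final contradiction and closes the proof of the lemma, which in turn completes the treatment of Case~B and thereby the verification of Condition~B after a finite time.
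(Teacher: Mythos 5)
Your proof is correct and uses essentially the same ingredients as the paper's: the unambiguity inequality \eqref{eq:robust:umabiguous:proof} applied to $L$, a disjoint set decomposition exploiting $K\subsetneq L$, and the finiteness of $\supp(\generator_t)\triangle K$ (which indeed follows from the strict inequality over a countable domain) to justify the cardinality arithmetic. The only difference is presentational — you argue by contradiction from $L\subseteq\supp(\generator_t)$ and reverse the inequality, whereas the paper directly decomposes $\supp(\generator_t)\triangle L$ and cancels terms to conclude $\abs{L\setminus(K\cup\supp(\generator_t))}>0$ — so no substantive gap.
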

                    \begin{proof}
                        {
                            Since $t\geq t_0$, \cref{eq:robust:umabiguous:proof} holds.
                            From \cref{fig:unambiguousGeneration}, observe that 
                            \begin{align*}
                                \abs{\supp(\generator_t)\triangle K}
                                &\geq 
                                    \abs{\supp(\generator_t)\backslash L}
                                    + 
                                    \abs{K \backslash \supp(\generator_t)}\\
                                    \abs{\supp(\generator_t)\triangle L}
                                &=
                                    \abs{\supp(\generator_t)\backslash L}
                                    + 
                                    \abs{K \backslash \supp(\generator_t)}
                                    + \abs{L\backslash \inparen{K \cup \supp(\generator_t)}}\,.
                            \end{align*}
                            Chaining the above with \cref{eq:robust:umabiguous:proof} and canceling like terms implies 
                            \[
                                \abs{L\backslash \inparen{K \cup \supp(\generator_t)}} > 0\,.
                            \]
                            Hence, in particular, $\abs{L\backslash {\supp(\generator_t)}} > 0$, which is the desired result.
                        }
                    \end{proof}
                    {
                        Hence, in this case, $L_i\setminus \supp(\generator_t)\neq \emptyset$.
                    }
                    Let $j(i)$ be the smallest natural number such that $x_{j(i)}\in L$ but $x_{j(i)}\not\in \supp(\generator_t)$.
                    (Note that the value $j(i)$ does not depend on $t$, since as discussed in the proof of Condition A after $t = t_a$, the $\supp(\generator_t)$ becomes stable and not change in subsequent iterations.)
                    Therefore, it follows that $L_i[j(i)]\not\subseteq \supp(\generator_t){\cup S_t}$ for $t\geq t_a$ and, hence, for $t\geq \max\inbrace{i(j), t_a}$ by construction, $L_i\not\in C_G(t)$.
                This completes the proof of Case $B$ and by earlier discussion, also the proof of \cref{thm:online:impossibility:robust}.
                        
                \end{itemize}

            \end{proof}

        \subsection{Proof of \texorpdfstring{\cref{thm:impossibility:robust}}{Theorem 3.6} ({Unambiguous Generation: Statistical})}
        \label{sec:proofRobust-1}

        In this section, we prove the impossibility
        result for unambiguous generation in the statistical setting. 
        Our approach is to establish
        a connection to the online setting 
        and leverage the impossibility result we have already shown there (\cref{thm:online:impossibility:robust}). 
        Namely, we will show that given such
        an unambiguous generator that works in the 
        statistical setting, we can construct
        a generator that works in the
        online setting, with high probability.
        Using the construction from \Cref{sec:proofof:thm:online:impossibility:robust}, we can turn this generator to
        one that \emph{identifies}.
        The details of our approach follow.
        
        First, we describe some constructions
        due to \citecustom{angluin1988identifying}
        that will be useful for our derivation.
        The following can be found in Example 3 from \citecustom{angluin1988identifying}.

        \begin{definition}[Distribution Induced by Sequence]\label{def:distribution-induced-sequence}
            Let $\sigma = (x_{i_1},x_{i_2},\ldots,)$
            be some countable sequence of 
            elements in $\cX$, and $\sigma_j = x_{i_j}, j \in \N$. Define $\cP_\sigma$ to be a distribution
            such that its mass $\cP_\sigma (x)$ on any point $x \in \cX$ is
            \begin{align*}
                \cP_\sigma(x) \coloneqq  \sum_{j \in \N\colon  \sigma_j = x} \frac{1}{2^{j+1}} \,,
            \end{align*}
            with a sum over an empty set of indices interpreted as 0.
        \end{definition}
        \citecustom{angluin1988identifying} describes a way to draw \iid{} samples from $\cP_\sigma,$ given only access to finite
        prefixes of $\sigma$ and to an oracle that simulates a fair coin.\footnote{In fact, Angluin's construction generalizes
        to oracles that simulate any (non-deterministic) coin in a straightforward way.} The idea is natural: flip the fair coin until a head is observed, let $I$ be the random variable denoting the number of trials it needed, and output
        the string $x_{I+1}.$ This process gives \iid{} 
        draws from $\cP_\sigma.$

        \begin{proposition}[Example 4 from \citet{angluin1988identifying}]\label{prop:angluin-sampling}
            Let $\sigma = (x_{i_1},x_{i_2},\ldots,)$
            be some countable sequence of 
            elements in $\cX$. Given access to an oracle
            that simulates fair coin flips and an oracle
            which given input any $j \in \N$ returns $\sigma_j,$
            there exists a computable algorithm that samples
            from $\cP_\sigma$ (\Cref{def:distribution-induced-sequence})
            and terminates with probability 1.
        \end{proposition}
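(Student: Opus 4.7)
The plan is to implement the sketch given in the paragraph preceding the statement: simulate a geometric random variable using the fair-coin oracle, then use the sequence oracle to return the element of $\sigma$ at the corresponding index. Concretely, I would describe the following algorithm. Initialize a counter $j\leftarrow 0$ and repeatedly query the fair-coin oracle; each time it returns tails, increment $j$ by $1$; on the first heads, halt the loop. Then query the sequence oracle for $\sigma_{j+1}$ (or $\sigma_j$, depending on whether $\N$ is taken to include $0$; I will fix the indexing convention in line with \cref{def:distribution-induced-sequence} so that the $2^{-(j+1)}$ weights line up). Output the returned element. Each step of this procedure — one coin query, one integer increment, and at the end a single sequence query — is clearly computable given the two oracles, so the only nontrivial content is termination and the distributional calculation.

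For termination with probability $1$, let $T$ denote the (random) number of tails observed before the first heads. Since the coin flips are independent and fair, $T$ is a geometric random variable with $\Pr[T=j] = 2^{-(j+1)}$ for $j \in \N$, and $\sum_{j\in\N} 2^{-(j+1)} = 1$. Hence $\Pr[T<\infty]=1$, so the loop exits after finitely many steps almost surely; the final sequence query then completes in finitely many steps by the assumed computability of the $\sigma$-oracle. So the algorithm halts with probability $1$.

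For correctness, I would compute the output distribution directly by conditioning on $T$. For any $x\in\cX$,
\[
\Pr[\text{output}=x] \;=\; \sum_{j\in\N}\Pr[T=j]\cdot \ind\{\sigma_{j}=x\}
\;=\; \sum_{j\in\N\colon \sigma_j = x} \tfrac{1}{2^{j+1}}
\;=\; \cP_\sigma(x),
\]
where the last equality is the definition of $\cP_\sigma$ in \cref{def:distribution-induced-sequence}. Since we only use $\sigma_j$ for the single realized value $j=T$ (which is finite a.s.), the algorithm makes only finitely many queries to both oracles on almost every execution, which confirms the computability claim.

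The only ``obstacle'' is bookkeeping: aligning the geometric weights $2^{-(j+1)}$ with the index convention used in the definition of $\cP_\sigma$ (\ie{}, whether to count tails or total flips, and whether $\N$ starts at $0$ or $1$). This is purely notational and is handled by choosing to output $\sigma_j$ where $j$ is the number of tails before the first heads, so that the probability of outputting $\sigma_j$ is exactly $2^{-(j+1)}$ as required.
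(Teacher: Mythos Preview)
Your proposal is correct and follows exactly the approach the paper sketches in the paragraph preceding the proposition (flip a fair coin until a head, use the number of flips as an index into $\sigma$); the paper does not give a more detailed proof than that one-line description, so your added termination and distributional arguments simply fill in the routine details. Your handling of the indexing ambiguity is fine and is indeed the only bookkeeping needed.
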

The next result shows that a stable generating
algorithm for which \mop{}$(\cdot)$ is decidable and achieves unambiguous
generation at some rate $R(\cdot)$, ``stabilizes''
to an unambiguous generator when executed on 
an infinite \iid{} stream of data drawn from a valid
distribution.

\begin{lemma}\label{lem:stable+rates-convergence-iid-draws}
    Let $R\colon\N \rightarrow \R_{\geq 0}$ be a rate function, \ie{}, $\lim_{n \rightarrow \infty} R(n) = 0,$ let $\cL$ be a countable language collection, and $\inparen{\generator_n\colon \cX^n \rightarrow \mathfrak{G}}_{n \in \N}$ be a generating algorithm for which
    \mop{}$(\cdot)$ is decidable and which satisfies the following two properties:
    \begin{itemize}
        \item $\inparen{\generator_n}_{n \in \N}$ is a stable generator (\Cref{def:stable-generators}), and
        \item for its unambiguous generation error $\mathrm{er}(\cdot)$, it holds that, for every valid distribution $\cP$ with respect to $\cL$ there exist $c,C > 0$ such that $\Ex_{X_1,\ldots,X_n\sim \cP^n}\insquare{\mathrm{er}\inparen{\generator_n\inparen{X_1,\ldots,X_n}}} \leq C\cdot R(c\cdot n).$
    \end{itemize}
    Then, for every valid distribution $\cP$ with respect to $\cL$ it holds that
    \[
        \Pr_{\inbrace{X_i}_{i \in \N} \sim \cP^\infty}\insquare{\exists n^* \in \N\colon  \forall n \geq n^* \text{ it holds that } {\mathrm{er}\inparen{\generator_n\inparen{X_1,\ldots,X_n}} = 0}} = 1 \,.
    \]
\end{lemma}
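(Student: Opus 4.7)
The plan is to combine three ingredients: (i) iid draws from a valid distribution almost surely produce an enumeration of the target language; (ii) the stability hypothesis then forces the sequence of supports $\supp(\generator_n(X_1,\dots,X_n))$ to almost surely stabilize to some random limiting set $S^\star$; and (iii) the rate bound, via Fatou's lemma, rules out the possibility that $S^\star$ fails the unambiguous condition.

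First, I would apply \cref{prop:support-appears-in-countable-samples}: for any valid $\cP$ (so $\supp(\cP) = K$ for some $K \in \cL$), the event $\cE_{\rm enum} = \{\cup_{i\in\N}\{X_i\} = K\}$ has probability one under $\cP^\infty$, i.e.\ $X_1,X_2,\dots$ is an enumeration of $K$ almost surely. On this event the stability assumption (\cref{def:stable-generators}) applies directly to the enumeration $X_1,X_2,\dots$, so there exists a (random) finite $N$ with $\supp(\generator_n(X_1,\dots,X_n)) = S^\star$ for all $n \geq N$, where $S^\star \subseteq \cX$ is the (random) limiting support. Hence $N$ and $S^\star$ are almost surely well-defined.

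Next, define the bad event
\[
B \;=\; \cE_{\rm enum} \cap \Bigl\{\,S^\star \text{ exists and } |S^\star \triangle K| \geq \min_{L \in \cL,\, L \neq K}\,|S^\star \triangle L|\,\Bigr\}.
\]
On $B$, for every $n \geq N$ the unambiguous generation error equals $1$, so $\ind_B \leq \liminf_{n\to\infty} \mathrm{er}(\generator_n(X_1,\dots,X_n))$. Applying Fatou's lemma together with the assumed rate bound,
\[
\Pr[B] \;=\; \E[\ind_B] \;\leq\; \E\!\left[\liminf_{n\to\infty} \mathrm{er}(\generator_n)\right] \;\leq\; \liminf_{n\to\infty} \E[\mathrm{er}(\generator_n)] \;\leq\; \liminf_{n\to\infty} C\cdot R(c\cdot n) \;=\; 0,
\]
using $R\downarrow 0$. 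Combining with $\Pr[\cE_{\rm enum}] = 1$, we conclude that with probability one the sample is an enumeration, $\supp(\generator_n)$ stabilizes at some finite $N$, and the limiting $S^\star$ satisfies the strict inequality $|S^\star \triangle K| < \min_{L \neq K} |S^\star \triangle L|$. On this probability-one event, taking $n^\star = N$ gives $\mathrm{er}(\generator_n(X_1,\dots,X_n)) = 0$ for all $n \geq n^\star$, which is exactly the claim.

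The only delicate point is matching the stability definition, which is phrased for arbitrary enumerations of $K$ provided adversarially, with the random iid setting used in the lemma; this gap is bridged precisely by \cref{prop:support-appears-in-countable-samples}. Note also that the $\mop(\cdot)$-decidability hypothesis is not actually invoked in this step — it is inherited from the surrounding setting and will be used elsewhere in the paper — so the proof ultimately rests on stability plus the rate bound, via an almost-sure stabilization argument and a Fatou-type passage from expected error to pointwise limiting behavior.
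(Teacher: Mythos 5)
Your proof is correct and follows the same skeleton as the paper's: almost-sure enumeration of $K$ from \cref{prop:support-appears-in-countable-samples}, stability applied to that enumeration to get an almost-surely finite stabilization time and limiting support, and the observation that the unambiguity error depends only on the support, so it is eventually constant along the sample path. The one place you diverge is the final step converting the rate bound into an almost-sure statement: you bound $\Pr[B]$ via $\ind_B \leq \liminf_n \mathrm{er}(\generator_n)$ and Fatou's lemma, whereas the paper argues by contradiction, picking $n_1, n_2$ so that the tail of the stabilization time and the error probability are each at most $c''/3$ and taking a union bound. Your Fatou route is a clean and slightly more economical way to finish; the paper's version is quantitatively explicit but proves the same thing. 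Your closing remark that the $\mop(\cdot)$ hypothesis is not used in this lemma is also accurate — the paper's proof does not invoke it either.
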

{In other words, the generating algorithm $\generator=\inparen{\generator_n}$ stabilizes to an unambiguous generation in the online sense with probability 1.
Roughly speaking, given the above result, \cref{thm:impossibility:robust} will follow from a contradiction to the impossibility result in the online setting \cref{thm:online:impossibility:robust}.}
\begin{proof}[Proof of \cref{lem:stable+rates-convergence-iid-draws}]
    Assume towards contradiction that there exists some valid $\cP$ with respect to $\cL$
    so that
     \[
        \Pr_{\inbrace{X_i}_{i \in \N} \sim \cP^\infty}\insquare{\exists n^* \in \N\colon  \forall n \geq n^* \text{ it holds that } \inbrace{\mathrm{er}\inparen{\generator_n\inparen{X_1,\ldots,X_n}} = 0}} = c' < 1\,.
    \]
    Let us also denote $c'' \coloneqq 1-c'.$ Notice that $c'' > 0.$
        Since $\cP$ is a valid distribution with respect to $\cL,$ 
    it is supported over some $K \in \cL,$ so 
    we have that, with probability 1, an infinite \iid{} draw 
    from $\cP$ is an enumeration of $K$ (see \Cref{prop:support-appears-in-countable-samples}). Let us call this event $\cE_1.$

    Moreover, since $\generator_n$ is a stable generator {(in an online sense)}, under the
    event $\cE_1$ {(\ie{}, when the samples from $\cP$ form an enumeration of $K$)},  there exists some smallest number $t^* \coloneqq t^*(X_1,\ldots)\in \N$ such that
    for all $n \geq t^*$
    \[
        \supp\inparen{\generator_n\inparen{X_1,\ldots,X_n}}  = \supp\inparen{\generator_{n+1}\inparen{X_1,\ldots,X_{n+1}}}  \,.
    \]
    {Now, $t^*$ depends on the specific enumeration drawn and, hence,} the distribution $\cP$ induces a distribution over $t^*.$
    {Further, note that} with probability 1, $t^* < \infty.$ 
    {Hence, $\Pr_{\inbrace{X_i}_{i \in \N} \sim \cP^\infty}\insquare{t^*(X_1,\ldots) > n}$ approaches 0 as $n\to\infty$.}
 {In particular,}
    there is some number $n_1 \in \N$ such that for all $n \geq n_1$ 
    \[
        \Pr_{\inbrace{X_i}_{i \in \N} \sim \cP^\infty}\insquare{t^*(X_1,\ldots) > n} \leq \frac{c''}{3} \,.
    \]
    Moreover, since the generator achieves rate $R(\cdot)$ {and $\lim_{n\to \infty}R(n) = 0$}, it holds
    that
    \[
        \lim_{n \rightarrow \infty}\Pr_{X_1,\ldots,X_n \sim \cP^n}\insquare{\mathrm{er}\inparen{\generator_n\inparen{X_1,\ldots,X_n}} \neq 0} = 0\,.
    \]
    Thus, 
    there is 
    some $n_2 \in \N$ such that, for all $n \geq n_2$
    \[
        \Pr_{X_1,\ldots,X_n \sim \cP^n}\insquare{\mathrm{er}\inparen{\generator_n\inparen{X_1,\ldots,X_n}} \neq 0} \leq \frac{c''}{3} \,.
    \]
    Let $n_3 \coloneqq \max\inbrace{n_1,n_2}$. 
    Hence, taking a union bound, we see that with probability at least 
    $1-{\nicefrac{2c''}{3}}$ over the draw of $\inbrace{X_i}_{i \in \N}$
    it holds that 
    \begin{itemize}
        \item $\mathrm{er}\inparen{\generator_{n_3}\inparen{X_1,\ldots,X_{n_3}}} = 0,$ and
        \item $\supp\inparen{\generator_n\inparen{X_1,\ldots,X_n}}  = \supp\inparen{\generator_{n_3}\inparen{X_1,\ldots,X_{n_3}}} ,$ for all $n \geq n_3.$
    \end{itemize}
    By the definition of $\mathrm{er}(\cdot)$ (\Cref{eq:unambiguous-error},)
    for any $n, n' \in \N$, samples $x_{i_1},\ldots,x_{i_n}$
    and $x_{j_1},\ldots,x_{j_{n'}}$
    it holds that
    \begin{align*}
        \supp\inparen{\generator_n\inparen{x_{i_1},\ldots,x_{i_n}}}  &= \supp\inparen{\generator_{n'}\inparen{x_{j_1},\ldots,x_{j_{n'}}}}  \implies \\
     \mathrm{er}\inparen{\generator_{n}\inparen{x_{i_1},\ldots,x_{i_n}}} &=
      \mathrm{er}\inparen{\generator_{n}\inparen{x_{j_1},\ldots,x_{j_{n'}}}}
    \end{align*}
    These two conditions immediately imply that, with probability
    at least $1-\nicefrac{2c''}{3} > c',$  for all $n \geq n_3$ it holds that
    \begin{itemize}
        \item $\mathrm{er}\inparen{\generator_{n}\inparen{X_1,\ldots,X_{n}}} = 0,$ and
        \item $\supp\inparen{\generator_{n+1}\inparen{X_1,\ldots,X_{n+1}}}  = \supp\inparen{\generator_{n}\inparen{X_1,\ldots,X_{n}}} .$ 
    \end{itemize}
    Hence, 
    \[
        \Pr_{\inbrace{X_i}_{i \in \N} \sim \cP^\infty}\insquare{\exists n^* \in \N\colon \forall n \geq n^* \text{ it holds that } \inbrace{\mathrm{er}\inparen{\generator_n\inparen{X_1,\ldots,X_n}} = 0}} > c'\,,
    \]
    which gives the desired contradiction. This concludes the proof.
\end{proof}

\noindent Having established the previous result, we are ready to show how to use such a generator that works in the statistical setting to get a
generator in the online setting. The idea of the proof is to use
the enumeration $\sigma$ provided from the adversary
to define a valid distribution $\cP_\sigma$ (see \Cref{prop:angluin-sampling}) and then
run the aforementioned generator on this distribution.

\begin{lemma}\label{lem:unambiguous-statistical-to-online}
     Let $R\colon\N \rightarrow \R_{\geq 0}$ be a rate function, \ie{}, $\lim_{n \rightarrow \infty} R(n) = 0,$ let $\cL$ be a language collection, and $\inparen{\generator_n\colon \cX^n \rightarrow \mathfrak{G}}_{n \in \N}$ be generating algorithm for which
    \mop{}~is decidable and satisfies the following two properties:
    \begin{itemize}
        \item $\inparen{\generator_n}_{n \in \N}$ is a stable generator (\Cref{def:stable-generators}), and
        \item for its unambiguous generation error, it holds that, for every valid distribution $\cP$ with respect to $\cL$ there exist $c,C > 0$ such that $E_{X_1,\ldots,X_n\sim \cP^n}\insquare{\mathrm{er}\inparen{\generator_n\inparen{X_1,\ldots,X_n}}} \leq C\cdot R(c\cdot n).$
    \end{itemize}
    Then, there is a randomized generating algorithm $\inparen{\generator'_n\colon \cX^n \overset{r}{\rightarrow} \mathfrak{G}}_{n \in \N}$
    for which, for any target language $K \in \cL$ and every enumeration $\sigma$ of $K$, it holds that
    \begin{itemize}
        \item $\inparen{\generator'_n}_{n \in \N}$ is a stable generator (\Cref{def:stable-generators}), and
        \item 
        \[
            \Pr\insquare{\exists n^* \in \N\colon \forall n \geq n^* \text{ it holds that } \mathrm{er}\inparen{\generator'_n\inparen{\sigma_1,\ldots,\sigma_n}} = 0} = 1\,,
        \]
        where the probability is with respect to the internal randomness of the algorithm.
    \end{itemize}
\end{lemma}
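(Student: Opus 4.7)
The plan is to simulate the statistical setting inside the online one via Angluin's sampling construction (\cref{prop:angluin-sampling}), turning the adversarial enumeration $\sigma$ into i.i.d.~draws from the induced distribution $\cP_\sigma$ (\cref{def:distribution-induced-sequence}). Since $\sigma$ enumerates $K$, every element of $K$ occurs at a finite position and therefore receives positive mass under $\cP_\sigma$; in particular $\supp(\cP_\sigma)=K$, so $\cP_\sigma$ is valid for $\cL$. The target is then to feed these simulated samples into $\generator_n$ and invoke \cref{lem:stable+rates-convergence-iid-draws}, which converts the statistical rate guarantee of $\generator$ into the almost-sure eventual correctness required here.

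The main technical obstacle is that Angluin's sampler draws $X_k=\sigma_{I_k+1}$ for independent geometric $I_k$, and at time $n$ only the prefix $\sigma_1,\dots,\sigma_n$ is available, so the sampler may request a position beyond $n$. I will handle this by an explicit coupling. Generate in advance (from fresh coin flips, independent of $\sigma$) an infinite sequence $I_1,I_2,\dots$ of i.i.d.\ geometrics, so that the intended stream $X_1,X_2,\dots$ is i.i.d.\ from $\cP_\sigma$ by \cref{prop:angluin-sampling}. Define
\[
\generator'_n(\sigma_1,\dots,\sigma_n) \coloneqq \begin{cases} \generator_n(X_1,\dots,X_n) & \text{if } I_k \le n-1 \text{ for every } k\le n,\\ \generator_{\text{def}} & \text{otherwise,}\end{cases}
\]
where $\generator_{\text{def}}$ is an arbitrary fixed generator in $\mathfrak{G}$. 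The failure event $\{\exists k\le n\colon I_k\ge n\}$ has probability at most $n\cdot 2^{-(n-1)}$, and $\sum_n n\cdot 2^{-(n-1)}<\infty$, so the first Borel-Cantelli lemma (\cref{lem:first-borel-cantelli}) ensures that failure occurs for only finitely many $n$ almost surely.

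Combining this with \cref{lem:stable+rates-convergence-iid-draws} applied to the valid distribution $\cP_\sigma$: with probability one over the coin flips there exists $n_1<\infty$ such that $\mathrm{er}(\generator_n(X_1,\dots,X_n))=0$ for all $n\ge n_1$. Together with the almost-sure finiteness of the failure times, there is $n^*<\infty$ a.s.\ beyond which $\generator'_n=\generator_n(X_1,\dots,X_n)$ and $\mathrm{er}(\generator'_n(\sigma_1,\dots,\sigma_n))=0$, giving the second bullet of the conclusion.

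For the stability bullet, note that by \cref{prop:support-appears-in-countable-samples} the i.i.d.\ stream $X_1,X_2,\dots$ is an enumeration of $K=\supp(\cP_\sigma)$ with probability one; so the stability of $\generator$ applied to this enumeration forces $\supp(\generator_n(X_1,\dots,X_n))$ to be eventually constant, and past the (a.s.\ finite) failure times $\supp(\generator'_n)$ is also eventually constant, which is stability of $\generator'$. Decidability of $\mop{}(\cdot)$ for $\generator'_n$ follows from decidability for $\generator_n$ and for $\generator_{\text{def}}$ together with computability of Angluin's sampler: which branch is taken is determined by the finitely many coin flips already used, and membership in $\supp(\generator'_n)$ reduces to a membership query against whichever generator was selected.
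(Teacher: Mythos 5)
Your proposal is correct and follows essentially the same route as the paper: define the valid distribution $\cP_\sigma$ induced by the enumeration, simulate i.i.d.\ draws via Angluin's sampling procedure using internal randomness, and then invoke \cref{lem:stable+rates-convergence-iid-draws}. You are in fact more careful than the paper's own proof on one point — the possibility that the sampler requests a position of $\sigma$ beyond the currently revealed prefix — which you resolve cleanly with the default-generator coupling and a Borel--Cantelli argument showing this happens only finitely often almost surely.
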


\begin{proof}
    Let $K \in \cL$ be any target language and $\sigma$ be any enumeration of $K.$ Let $\cP_\sigma$ be the distribution defined in \Cref{def:distribution-induced-sequence}. We know that, by definition,
    $\cP_\sigma$ is valid with respect to $\cL,$ since it is supported on $K.$ Let $\inparen{\generator'_n}_{n \in \N}$ be a generating algorithm
    which, for every $n \in \N,$ runs $\generator_n$ on $\cP_\sigma.$ 
    In order to draw samples from $\cP_\sigma$ the generator $\generator'_n$ uses its internal randomness and the process
    described in \Cref{prop:angluin-sampling}. Since $\cP_\sigma$ is
    a valid distribution with respect to $\cL,$ \Cref{lem:stable+rates-convergence-iid-draws} gives us that
    \[
        \Pr_{\inbrace{X_i}_{i \in \N} \sim \cP_\sigma^\infty}\insquare{\exists n^* \in \N\colon \forall n \geq n^* \text{ it holds that } \inbrace{\mathrm{er}\inparen{\generator_n\inparen{X_1,\ldots,X_n}} = 0}} = 1\,.
    \]
    Hence, this implies that
    \[
        \Pr\insquare{\exists n^* \in \N\colon \forall n \geq n^* \text{ it holds that } \inbrace{\mathrm{er}\inparen{\generator'_n\inparen{\sigma_1,\ldots,\sigma_n}} = 0}} = 1\,,
    \]
    where the probability is taken with respect to the internal randomness
    of the algorithm.
    Moreover, since $(\generator_n)_{n \in \N}$ is a stable generator
    it also holds that $(\generator'_n)_{n \in \N}$ is a stable generator.
    This concludes the proof.
\end{proof}

\noindent We are now ready to prove \Cref{thm:impossibility:robust}, which
follows as corollary of \Cref{lem:unambiguous-statistical-to-online}
and the impossibility result from the online setting (\Cref{thm:online:impossibility:robust}).

\begin{proof}[Proof of \Cref{thm:impossibility:robust}]
    Let $\cL$ be a countable collection of languages. 
    Assume that such a stable generating algorithm exists. Then, using
    the construction from \Cref{lem:unambiguous-statistical-to-online}
    we get a stable generator that generates unambiguously in the limit, 
    for every target language $K \in \cL$ and every enumeration $\sigma$
    of $K,$ with probability 1. This contradicts the impossibility
    result from \Cref{thm:online:impossibility:robust}.
\end{proof}

        \subsection{{Proof of \texorpdfstring{\cref{thm:online:impossibility:robustOneSided}}{Theorem 3.9} (Approximate Breadth: Online)}}
            \label{sec:proofof:thm:online:impossibility:robustOneSided}
            
            {In this section, we prove \cref{thm:online:impossibility:robustOneSided}, which we restate below.} 
            
            \thmApproxBreadthOnline*
 
            \begin{proof}[Proof of \cref{thm:online:impossibility:robustOneSided}]
                Recall that \citecustom{gold1967language} showed that for any collection of countably many languages, there is always an algorithm $\identifierPN{}$ that, given a positive and negative enumeration of the target, identifies it in the limit. 
                Now, toward a contradiction, suppose there is a stable generator for which $\mop{}(\generator{})$ is decidable and it has the property described in \cref{def:gen:oneSidedRobust}.
                We claim that using $\generator{}$ and $\identifierPN{}$, we can construct an identifier for $\cL$ (from positive examples), which contradicts the fact that $\cL$ is non-identifiable.
                
                Fix any target language $K$, its enumeration $s_1,s_2,\dots$, and the (unknown) constant $t^*$, such that after $t^*$ many
                iterations the algorithm
                achieves generation according to \cref{def:gen:oneSidedRobust}.
                We claim that the following algorithm identifies $\cL$.
                
            \smallskip
            
            \begin{mdframed}
                \textbf{Input:} 
                    Access to a generator $\algo{G}$ for $\cL$ that (1) that, in the limit, becomes consistent and satisfies  $\abs{K\setminus \supp{}(\generator{})}<\infty$ and (2) for which $\mop{(\algo{G})}$ is decidable, and access to the algorithm $\identifierPN$ that identifies $\cL$ in the limit from a positive and negative enumeration of the target language.

                \medskip
                \noindent\textbf{Description:}
                \begin{enumerate}
                    \item \textbf{For each}~~$t\in \N$~~\textbf{do:}
                    \begin{enumerate}
                        \item Observe the $t$-th sample $s_t$ and let $S_t$ be the set of samples seen so far
                        \item Train the generator $\algo{G}$ from scratch over the $t$ samples in $S_t$ 
                        \item For each $1\leq i\leq t$, label the $i$-th string $x_i$ in the domain as $y_i=\mop{}(\generator)(x_i)$\footnote{Here, $\mop{}(\generator)(x)$ is the answer to the membership oracle problem for $\generator$ given input $x$.}
                        \item For each $1\leq i\leq t$, if $x_i\in S_t$ and $y_i=0$, set $y_i=1$ \textit{$\#$ to correct elements missed by $\generator{}$ }
                        \item Train $\identifierPN$ from scratch on samples $x_1,\dots,x_t$ with labels $y_1,\dots,y_t$
                        \item \textbf{output} the index guessed by $\identifierPN$ and go to the next iteration
                    \end{enumerate}
                \end{enumerate}
            \end{mdframed}
            
            \smallskip

            \noindent 
                Since $\mop{(\generator)}$ is decidable, the above algorithm can be implemented using a Turing machine.
                We claim that the above algorithm identifies the target language $K$ after a finite number of iterations.
                To formalize this, fix any enumeration $s_1,s_2,\dots$ of the target language.
                Since after a finite time $t^*$, $\generator$ becomes consistent, stabilizes, and satisfies $\abs{K\setminus \supp{(\generator{})}} < \infty$, after iteration $t^*$, for any string $x$, $\mop{(\generator)(x)}$ matches $\ind\inbrace{x\in K}$ except for the finitely many elements of $M=K\setminus \supp{(\generator{}_t)}$.
                Note that since $\generator$'s support stabilizes, $M$ is independent of the iteration $t\geq t^*$.
                Let $t'$ be the time when all elements of $M$ appear in the enumeration $s_1,s_2,\dots$.
                Observe that in all iterations $t\geq \max\inbrace{t^*, t'}$, the labels in Step 2 (d) correct all the mismatches between $\mop{(\generator)(x)}$ matches $\ind\inbrace{x\in K}$
                Finally, since $\identifierPN{}$ identifies in the limit, there is a finite $t_{\rm PN}$ such that $\identifierPN{}$ identifies $K$ once it is given labels for the first $t\geq t_{\rm PN}$ examples in the domain.
                Combining this with the previous information implies that that $\identifierPN{}$ and, hence, our algorithm identifies $K$ after $\max\inbrace{t^*, t', t_{\rm PN}}<\infty$ iterations.
                This gives the desired contradiction, proving \cref{thm:online:impossibility:robustOneSided}.
                Note that the above identification algorithm does not need to know any of $t^*$, $t'$, and $t_{\rm PN}$.
            \end{proof}

        \subsection{{Proof of \texorpdfstring{\cref{thm:approxBreadth:statistical}}{Theorem 3.8} (Approximate Breadth: Statistical)}}
            \label{sec:proofof:thm:approxBreadth:statistical}
            {In this section, we prove \cref{thm:approxBreadth:statistical}, which we restate below.}
            
            \thmApproxBreadthStat*
        \noindent We start with the following lemma.

        \begin{lemma}\label{lem:stable+rates-approximate-convergence-iid-draws}
    Let $R\colon\N \rightarrow \R_{\geq 0}$ be a rate function, \ie{}, $\lim_{n \rightarrow \infty} R(n) = 0,$ let $\cL$ be a language collection, and $\inparen{\generator_n\colon \cX^n \rightarrow \mathfrak{G}}_{n \in \N}$ be generating algorithm for which
    \mop{}$(\cdot)$ is decidable and which satisfies the following two properties:
    \begin{itemize}
        \item $\inparen{\generator_n}_{n \in \N}$ is a stable generator (\Cref{def:stable-generators}), and
        \item for its approximate generation error $\mathrm{er}(\cdot)$ (\Cref{eq:errorMissFinite}) it holds that, for every valid distribution $\cP$ with respect to $\cL$ there exist $c,C > 0$ such that $\Ex_{X_1,\ldots,X_n\sim \cP^n}\insquare{\mathrm{er}\inparen{\generator_n\inparen{X_1,\ldots,X_n}}} \leq C\cdot R(c\cdot n).$
    \end{itemize}
    Then, for every valid distribution $\cP$ with respect to $\cL$ it holds that
    \[
        \Pr_{\inbrace{X_i}_{i \in \N} \sim \cP^\infty}\insquare{\exists n^* \in \N\colon  \forall n \geq n^* \text{ it holds that } {\mathrm{er}\inparen{\generator_n\inparen{X_1,\ldots,X_n}} = 0}} = 1 \,.
    \]
\end{lemma}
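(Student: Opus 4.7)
The plan is to follow the same contradiction-based scheme as in the proof of Lemma~\ref{lem:stable+rates-convergence-iid-draws}, exploiting the key structural fact that the approximate generation error in \eqref{eq:errorMissFinite} depends only on $\supp(\generator_n)$ and on $K$, not on the particular training sample producing the generator. Since the generator is stable, once the support settles we can transport a single good instance forward to all subsequent steps.

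Concretely, suppose toward a contradiction that there is a valid distribution $\cP$ supported on some target $K \in \cL$ with
\[
    \Pr_{\{X_i\}_{i\in\N}\sim\cP^\infty}\insquare{\exists n^*\in\N\colon \forall n\geq n^*, \ \mathrm{er}(\generator_n(X_1,\ldots,X_n))=0} = c' < 1,
\]
and let $c''\coloneqq 1-c' > 0$. First, I would invoke \cref{prop:support-appears-in-countable-samples} to conclude that, with probability one, an infinite i.i.d.\ draw from $\cP$ is an enumeration of $K$; conditioned on this event, stability of $(\generator_n)$ furnishes a (random) finite stopping time $t^*(X_1,X_2,\ldots)$ after which $\supp(\generator_n)$ no longer changes. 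Because $\Pr[t^*<\infty]=1$, there exists a deterministic $n_1\in\N$ with $\Pr[t^*>n_1]\leq c''/3$. Second, the rate hypothesis $\Ex[\mathrm{er}(\generator_n)]\leq C\cdot R(c\cdot n)$ together with $R(n)\downarrow 0$ yields some $n_2\in\N$ with $\Pr[\mathrm{er}(\generator_n)\neq 0]\leq c''/3$ for all $n\geq n_2$.

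Now set $n_3\coloneqq \max\{n_1,n_2\}$ and union bound to obtain a draw event of probability at least $1-2c''/3$ on which simultaneously (a) $\mathrm{er}(\generator_{n_3}(X_1,\ldots,X_{n_3}))=0$ and (b) $\supp(\generator_n(X_1,\ldots,X_n))=\supp(\generator_{n_3}(X_1,\ldots,X_{n_3}))$ for every $n\geq n_3$. Here is the step that differs from Lemma~\ref{lem:stable+rates-convergence-iid-draws} only in the definition of $\mathrm{er}(\cdot)$: observe that, by inspection of \eqref{eq:errorMissFinite}, the predicate $\mathrm{er}(\generator_n)=0$ is the conjunction of $\supp(\generator_n)\subseteq K$ and $|K\setminus \supp(\generator_n)|<\infty$, both of which depend solely on $\supp(\generator_n)$ and $K$. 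Hence (a) and (b) together imply $\mathrm{er}(\generator_n(X_1,\ldots,X_n))=0$ for all $n\geq n_3$ on this event, so the probability in the displayed assumption is at least $1-2c''/3 > c' = 1-c''$, contradicting the definition of $c'$.

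The only nontrivial point to verify carefully is that the approximate breadth error is indeed a function of the support alone (so that we may transport the single good step $n_3$ to all later steps via stability); this is immediate from \eqref{eq:errorMissFinite} but deserves to be stated explicitly, because in other error formulations (e.g., the training-set-dependent consistency error of \eqref{eq:error-statistical-generation-pos}) this invariance would fail and the argument would not go through. I do not expect any additional obstacles: the rest of the argument is identical, mutatis mutandis, to the proof of Lemma~\ref{lem:stable+rates-convergence-iid-draws}, and in particular does not require decidability of $\mop(\cdot)$ (that hypothesis is inherited from the theorem statement but unused here; it is used downstream in \cref{thm:approxBreadth:statistical}).
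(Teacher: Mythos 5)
Your proposal is correct and follows essentially the same argument as the paper's proof: the same contradiction setup with $c''=1-c'$, the same use of \cref{prop:support-appears-in-countable-samples} and stability to get the tail bound on the stopping time, the same union bound at $n_3=\max\{n_1,n_2\}$, and the same key observation that the error in \cref{eq:errorMissFinite} is a function of the support alone, so the single good step transports forward. Your explicit remark that decidability of $\mop(\cdot)$ is unused here is also accurate.
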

\begin{proof}[Proof of \cref{lem:stable+rates-approximate-convergence-iid-draws}]
    Assume towards contradiction that there exists some valid $\cP$ with respect to $\cL$
    so that
     \[
        \Pr_{\inbrace{X_i}_{i \in \N} \sim \cP^\infty}\insquare{\exists n^* \in \N\colon  \forall n \geq n^* \text{ it holds that } \inbrace{\mathrm{er}\inparen{\generator_n\inparen{X_1,\ldots,X_n}} = 0}} = c' < 1\,.
    \]
    Let us also denote $c'' \coloneqq 1-c'.$ Notice that $c'' > 0.$
        Since $\cP$ is a valid distribution with respect to $\cL,$ 
    it is supported over some $K \in \cL,$ so 
    we have that, with probability 1, an infinite \iid{} draw 
    from $\cP$ is an enumeration of $K$ (see \Cref{prop:support-appears-in-countable-samples}). Let us call this event $\cE_1.$

    Moreover, since $\generator_n$ is a stable generator {(in an online sense)}, under the
    event $\cE_1$ {(\ie{}, when the samples from $\cP$ form an enumeration of $K$)},  there exists some smallest number $t^* \coloneqq t^*(X_1,\ldots)\in \N$ such that
    for all $n \geq t^*$
    \[
        \supp\inparen{\generator_n\inparen{X_1,\ldots,X_n}} = \supp\inparen{\generator_{n+1}\inparen{X_1,\ldots,X_{n+1}}} \,.
    \]
    {Now, $t^*$ depends on the specific enumeration drawn and, hence,} the distribution $\cP$ induces a distribution over $t^*.$
    {Further, note that} with probability 1, $t^* < \infty.$ 
    {Hence, $\Pr_{\inbrace{X_i}_{i \in \N} \sim \cP^\infty}\insquare{t^*(X_1,\ldots) > n}$ approaches 0 as $n\to\infty$.}
 {In particular,}
    there is some number $n_1 \in \N$ such that for all $n \geq n_1$ 
    \[
        \Pr_{\inbrace{X_i}_{i \in \N} \sim \cP^\infty}\insquare{t^*(X_1,\ldots) > n} \leq \frac{c''}{3} \,.
    \]
    Moreover, since the generator achieves rate $R(\cdot)$ {and $\lim_{n\to \infty}R(n) = 0$}, it holds
    that
    \[
        \lim_{n \rightarrow \infty}\Pr_{X_1,\ldots,X_n \sim \cP^n}\insquare{\mathrm{er}\inparen{\generator_n\inparen{X_1,\ldots,X_n}} \neq 0} = 0\,.
    \]
    Thus, 
    there is 
    some $n_2 \in \N$ such that, for all $n \geq n_2$
    \[
        \Pr_{X_1,\ldots,X_n \sim \cP^n}\insquare{\mathrm{er}\inparen{\generator_n\inparen{X_1,\ldots,X_n}} \neq 0} \leq \frac{c''}{3} \,.
    \]
    Let $n_3 \coloneqq \max\inbrace{n_1,n_2}$. 
    Hence, taking a union bound, we see that with probability at least 
    $1-{\nicefrac{2c''}{3}}$ over the draw of $\inbrace{X_i}_{i \in \N}$
    it holds that 
    \begin{itemize}
        \item $\mathrm{er}\inparen{\generator_{n_3}\inparen{X_1,\ldots,X_{n_3}}} = 0,$ and
        \item $\supp\inparen{\generator_n\inparen{X_1,\ldots,X_n}}  = \supp\inparen{\generator_{n_3}\inparen{X_1,\ldots,X_{n_3}}},$ for all $n \geq n_3.$
    \end{itemize}
    By the definition of $\mathrm{er}(\cdot),$ 
    for any $n, n' \in \N$, samples $x_{i_1},\ldots,x_{i_n}$
    and $x_{j_1},\ldots,x_{j_{n'}}$
    it holds that
    \begin{align*}
        \supp\inparen{\generator_n\inparen{x_{i_1},\ldots,x_{i_n}}}  &= \supp\inparen{\generator_{n'}\inparen{x_{j_1},\ldots,x_{j_{n'}}}}  \implies \\
     \mathrm{er}\inparen{\generator_{n}\inparen{x_{i_1},\ldots,x_{i_n}}} &=
      \mathrm{er}\inparen{\generator_{n}\inparen{x_{j_1},\ldots,x_{j_{n'}}}}
    \end{align*}
    These two conditions immediately imply that, with probability
    at least $1-\nicefrac{2c''}{3} > c',$  for all $n \geq n_3$ it holds that
    \begin{itemize}
        \item $\mathrm{er}\inparen{\generator_{n}\inparen{X_1,\ldots,X_{n}}} = 0,$ and
        \item $\supp\inparen{\generator_{n+1}\inparen{X_1,\ldots,X_{n+1}}}  = \supp\inparen{\generator_{n}\inparen{X_1,\ldots,X_{n}}}.$ 
    \end{itemize}
    Hence, 
    \[
        \Pr_{\inbrace{X_i}_{i \in \N} \sim \cP^\infty}\insquare{\exists n^* \in \N\colon \forall n \geq n^* \text{ it holds that } \inbrace{\mathrm{er}\inparen{\generator_n\inparen{X_1,\ldots,X_n}} = 0}} > c'\,,
    \]
    which gives the desired contradiction. This concludes the proof.
\end{proof}
Using the previous result, we derive the next statement regarding the conversion of a statistical learner to an online one.

\begin{lemma}\label{lem:missing-finite-elements-statistical-to-online}
     Let $R\colon\N \rightarrow \R_{\geq 0}$ be a rate function, \ie{}, $\lim_{n \rightarrow \infty} R(n) = 0,$ let $\cL$ be a language collection, and $\inparen{\generator_n\colon \cX^n \rightarrow \mathfrak{G}}_{n \in \N}$ be generating algorithm for which
    \mop{}~is decidable and satisfies the following two properties:
    \begin{itemize}
        \item $\inparen{\generator_n}_{n \in \N}$ is a stable generator (\Cref{def:stable-generators}), and
        \item for its approximate generation error (\Cref{eq:errorMissFinite}) it holds that, for every valid distribution $\cP$ with respect to $\cL$ there exist $c,C > 0$ such that $E_{X_1,\ldots,X_n\sim \cP^n}\insquare{\mathrm{er}\inparen{\generator_n\inparen{X_1,\ldots,X_n}}} \leq C\cdot R(c\cdot n).$
    \end{itemize}
    Then, there is a randomized generating algorithm $\inparen{\generator'_n\colon \cX^n \overset{r}{\rightarrow} \mathfrak{G}}_{n \in \N}$
    for which, for any target language $K \in \cL$ and every enumeration $\sigma$ of $K$, it holds that
    \begin{itemize}
        \item $\inparen{\generator'_n}_{n \in \N}$ is a stable generator (\Cref{def:stable-generators}), and
        \item 
        \[
            \Pr\insquare{\exists n^* \in \N\colon \forall n \geq n^* \text{ it holds that } \mathrm{er}\inparen{\generator'_n\inparen{\sigma_1,\ldots,\sigma_n}} = 0} = 1\,,
        \]
        where the probability is with respect to the randomness of the algorithm.
    \end{itemize}
\end{lemma}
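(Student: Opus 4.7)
The plan is to mirror the approach used for \Cref{lem:unambiguous-statistical-to-online} in \Cref{sec:proofRobust-1}, replacing the unambiguous-generation error with the approximate-breadth error \eqref{eq:errorMissFinite} and invoking the just-proved \Cref{lem:stable+rates-approximate-convergence-iid-draws} in place of \Cref{lem:stable+rates-convergence-iid-draws}. Concretely, fix any target $K \in \cL$ and any enumeration $\sigma = (\sigma_1, \sigma_2, \ldots)$ of $K$. From $\sigma$ we construct Angluin's distribution $\cP_\sigma$ (\Cref{def:distribution-induced-sequence}); since $\supp(\cP_\sigma) = K$, it is valid for $\cL$. Define the randomized generator $\generator'_n(\sigma_1,\ldots,\sigma_n)$ to internally draw $n$ i.i.d.\ samples $X_1,\ldots,X_n$ from $\cP_\sigma$ using the sampling routine of \Cref{prop:angluin-sampling} (which needs only access to finite prefixes of $\sigma$ and fair coin flips, and terminates with probability $1$), and then return $\generator_n(X_1,\ldots,X_n)$.

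Next I would verify the two required properties. Stability of $(\generator'_n)_{n\in\N}$ is inherited directly from stability of $(\generator_n)_{n\in\N}$: on every realization of the internal randomness the output is precisely a generator produced by $\generator$ on an i.i.d.\ sequence from $\cP_\sigma$, whose support stabilizes in $n$ by \Cref{def:stable-generators}. For the approximate-breadth property in the limit, apply \Cref{lem:stable+rates-approximate-convergence-iid-draws} to the valid distribution $\cP_\sigma$: it yields
\[
    \Pr_{\{X_i\}_{i \in \N} \sim \cP_\sigma^\infty}\!\left[\,\exists\, n^* \in \N\colon \forall n \geq n^*,\ \mathrm{er}\!\left(\generator_n(X_1,\ldots,X_n)\right) = 0\,\right] = 1\,,
\]
where $\mathrm{er}(\cdot)$ is the approximate-breadth error \eqref{eq:errorMissFinite}. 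Pushing this through the definition of $\generator'_n$ (with the internal randomness of the algorithm replacing the i.i.d.\ draws from $\cP_\sigma$) gives exactly
\[
    \Pr\!\left[\,\exists\, n^* \in \N\colon \forall n \geq n^*,\ \mathrm{er}\!\left(\generator'_n(\sigma_1,\ldots,\sigma_n)\right) = 0\,\right] = 1\,.
\]

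There is essentially no new mathematical obstacle beyond the one already handled in \Cref{lem:unambiguous-statistical-to-online}; the only thing to watch is purely bookkeeping: the error function here is \eqref{eq:errorMissFinite}, not the unambiguous error \eqref{eq:unambiguous-error}, and the version of the ``stabilization'' lemma invoked must be the approximate-breadth version (\Cref{lem:stable+rates-approximate-convergence-iid-draws}) rather than the unambiguous one (\Cref{lem:stable+rates-convergence-iid-draws}). Once this swap is made, the decidability of $\mop(\cdot)$ for $\generator'_n$ is also immediate since $\generator'_n$ just simulates $\generator_n$ on a realized sample, so $\mop(\generator'_n)$ reduces to $\mop(\generator_n)$ on the sampled input. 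This completes the reduction, and \Cref{thm:approxBreadth:statistical} then follows from it and the online impossibility result \Cref{thm:online:impossibility:robustOneSided} exactly as \Cref{thm:impossibility:robust} followed from \Cref{lem:unambiguous-statistical-to-online} and \Cref{thm:online:impossibility:robust}.
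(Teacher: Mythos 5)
Your proposal matches the paper's proof essentially verbatim: fix $K$ and an enumeration $\sigma$, form Angluin's valid distribution $\cP_\sigma$ (\Cref{def:distribution-induced-sequence}, \Cref{prop:angluin-sampling}), let $\generator'_n$ simulate $\generator_n$ on samples from $\cP_\sigma$, and conclude via \Cref{lem:stable+rates-approximate-convergence-iid-draws}; the paper likewise emphasizes that the only property of the error function used is that it stabilizes once the support does.

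One point to make precise: the internal randomness of $(\generator'_n)_n$ must be \emph{coupled across $n$}, i.e., $\generator'_n$ should feed $\generator_n$ the first $n$ elements of a single infinite \iid{} stream $X_1,X_2,\dots\sim\cP_\sigma^\infty$, rather than drawing a fresh batch of $n$ samples at each step. \Cref{lem:stable+rates-approximate-convergence-iid-draws} is a statement about prefixes of one infinite draw, so it only transfers under this coupling. With independent batches both conclusions can fail: stability of $(\generator_n)$ is defined relative to prefixes of a fixed enumeration, so the supports $\supp(\generator'_n)$ need not ever agree across $n$; and the events $\{\mathrm{er}(\generator'_n)\neq 0\}$ would be independent with probabilities only known to tend to $0$ at rate $R$, which is not summable in general, so the second Borel--Cantelli lemma could force errors infinitely often. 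Your phrasing ``internally draw $n$ \iid{} samples'' at each $n$ is ambiguous on this point, though your stability argument (``an \iid{} sequence \dots whose support stabilizes in $n$'') indicates you intend the shared-stream construction, which is the correct one.
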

The proof of \Cref{lem:missing-finite-elements-statistical-to-online}
is identical to the proof of \Cref{lem:unambiguous-statistical-to-online},
since the only property of the error function that is needed
is that once the algorithm has stabilized then its error also stabilizes.
For completeness, we give the details below.

\begin{proof}[Proof of \Cref{lem:missing-finite-elements-statistical-to-online}]
    Let $K \in \cL$ be any target language and $\sigma$ be any enumeration of $K.$ Let $\cP_\sigma$ be the distribution defined in \Cref{def:distribution-induced-sequence}. We know that, by definition,
    $\cP_\sigma$ is valid with respect to $\cL,$ since it is supported on $K.$ Let $\inparen{\generator'_n}_{n \in \N}$ be a generator
    which, for every $n \in \N,$ runs $\generator_n$ on $\cP_\sigma.$ 
    In order to draw samples from $\cP_\sigma$ the generator $\generator'_n$ uses its internal randomness and the process
    described in \Cref{prop:angluin-sampling}. Since $\cP_\sigma$ is
    a valid distribution with respect to $\cL,$ \Cref{lem:missing-finite-elements-statistical-to-online} gives us that
    \[
        \Pr_{\inbrace{X_i}_{i \in \N} \sim \cP_\sigma^\infty}\insquare{\exists n^* \in \N\colon \forall n \geq n^* \text{ it holds that } \inbrace{\mathrm{er}\inparen{\generator_n\inparen{X_1,\ldots,X_n}} = 0}} = 1\,.
    \]
    Hence, this implies that
    \[
        \Pr\insquare{\exists n^* \in \N\colon \forall n \geq n^* \text{ it holds that } \inbrace{\mathrm{er}\inparen{\generator'_n\inparen{\sigma_1,\ldots,\sigma_n}} = 0}} = 1\,,
    \]
    where the probability is taken with respect to the internal randomness
    of the algorithm.
    Moreover, since $(\generator_n)_{n \in \N}$ is a stable generator
    it also holds that $(\generator'_n)_{n \in \N}$ is a stable generator.
    This concludes the proof.
\end{proof}
The proof of \Cref{thm:approxBreadth:statistical} follows as a corollary
of the previous result (\Cref{lem:missing-finite-elements-statistical-to-online}) and \Cref{thm:online:impossibility:robustOneSided}.

\begin{proof}[Proof of \Cref{thm:approxBreadth:statistical}]
        Let $\cL$ be a countable collection of languages. 
    Assume that such a stable generating algorithm exists. Then, using
    the construction from \Cref{lem:missing-finite-elements-statistical-to-online}
    to get a stable generator that generates missing only finitely many elements in the limit, 
    for every target language $K \in \cL$ and every enumeration $\sigma$
    of $K,$ with probability 1. This contradicts the impossibility
    result from \Cref{thm:online:impossibility:robustOneSided}.
\end{proof}

\section{Proofs from \texorpdfstring{\cref{sec:results:identification:additional}}{Section 3.4} (Further Results for Identification)}
\label{sec:addResults}

    \subsection{Proof of \texorpdfstring{\cref{prop:exp-rates-id-subset oracle}}{Proposition 3.10} (Identification Using Subset Oracle)}\label{sec:proof-exp-rates-id-subset}

    We first give a sufficient
    condition on the algorithm that identifies
    in the limit that allows one to directly
    use it in the statistical setting
    and get exponential rates.

    \begin{lemma}\label{lem:suff-cond-use-online-id-in-stat}
    Let $\cL$ be a countable collection of languages. Let $\cA = \{h_n\}_{n \in \N}$ be an algorithm
    that identifies $\cL$ in the limit with positive examples
    with the following additional property:
    \begin{itemize}
        \item for every target language
    $K \in \cL$ there exists a finite set of examples 
    $\{x_{i_1},\ldots,x_{i_{\ell}}\} \subseteq K$ that depends 
    only on $K,$ 
    and the enumeration of $\cL,\cX,$
    \item  and a finite number $n_0 \in \N$ that
    depends on $K,$ the enumeration of $\cL,\cX,$
    \end{itemize}
    such that $\cA$ always identifies correctly 
    if its input has size at least $n_0$
    and it contains
    $x_{i_1},\ldots,x_{i_\ell}.$ Then,
    $\cA$ identifies $K$ with exponential rates
    in the statistical setting.
\end{lemma}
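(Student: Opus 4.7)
The plan is to adapt the argument used for \Cref{lem:generation-from-online-to-statistical-property} (the generation analogue) almost verbatim, since the only change is the semantics of ``identifies correctly'' versus ``generates correctly'' after seeing the critical seed set. The key observation is that the hypothesized property of $\cA$ reduces the statistical identification task to a coupon-collector-style question: once the sample drawn i.i.d.\ from $\cP$ contains the distinguished strings $x_{i_1},\dots,x_{i_\ell}$ and has size at least $n_0$, the identifier is deterministically correct. Hence the identification error is dominated by the probability that some $x_{i_j}$ has not yet been observed.

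More concretely, I would fix an arbitrary valid distribution $\cP$ for $\cL$, let $K=\supp(\cP)\in\cL$ be the induced target, and invoke the hypothesized property to obtain the finite critical set $\{x_{i_1},\dots,x_{i_\ell}\}\subseteq K$ and the threshold $n_0\in\N$. Because $\cP$ is valid for $K$, each element $x_{i_j}\in K$ has strictly positive mass $p_{i_j}\coloneqq \cP(\{x_{i_j}\})>0$. Set $p^\star\coloneqq \min_{j\in[\ell]} p_{i_j}>0$; these are the distribution-dependent constants we will use. For $n\geq n_0$, a union bound followed by the elementary inequality $1-z\leq e^{-z}$ gives
\[
\Pr_{X_1,\dots,X_n\sim\cP^n}\!\insquare{\exists j\in[\ell]\colon x_{i_j}\notin\{X_1,\dots,X_n\}}\;\leq\;\sum_{j\in[\ell]}(1-p_{i_j})^n\;\leq\;\ell\cdot e^{-p^\star\cdot n}.
\]
On the complement of this event, the sample contains $\{x_{i_1},\dots,x_{i_\ell}\}$ and satisfies $n\geq n_0$, so by the hypothesized property $L_{h_n(X_1,\dots,X_n)}=K$, and the identification error vanishes. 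Thus
\[
\E_{X_1,\dots,X_n\sim\cP^n}\!\insquare{\mathrm{er}(h_n(X_1,\dots,X_n))}\;\leq\;\ell\cdot e^{-p^\star\cdot n}\qquad \text{for all } n\geq n_0.
\]
Finally, for the (finitely many) $n< n_0$ the error is trivially at most $1$, which can be absorbed by inflating the multiplicative constant $C$. Choosing $C\coloneqq \max\{\ell,\,e^{p^\star\cdot n_0}\}$ and $c\coloneqq p^\star$ gives $\E[\mathrm{er}(h_n)]\leq C\cdot e^{-c\cdot n}$ uniformly in $n\in\N$, establishing the exponential rate per \Cref{def:achievable-rates}.

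I do not expect any genuine obstacle: the hypothesized ``seed-set'' property was precisely designed to let standard concentration over i.i.d.\ samples take over. The only mild subtlety is cosmetic, namely handling the regime $n<n_0$ so that the bound holds for all $n$ with distribution-dependent constants (rather than only eventually); this is dispatched by inflating $C$ as above. Once this is in place, the result follows directly, and the same proof template will then be reused in \Cref{sec:proof-exp-rates-id-subset} to conclude \Cref{prop:exp-rates-id-subset oracle} by verifying that \citecustom{kleinberg2024language}'s subset-oracle identification algorithm satisfies the seed-set hypothesis.
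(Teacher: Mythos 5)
Your proposal is correct and follows essentially the same argument as the paper: fix a valid $\cP$, invoke the seed-set property to reduce the error to the probability of missing some $x_{i_j}$ among $n$ i.i.d.\ draws, and bound that by a union bound and $1-z\leq e^{-z}$ to get $\ell\cdot e^{-\min_j p_{i_j}\cdot n}$. Your explicit handling of the regime $n<n_0$ by inflating the constant $C$ is a minor tidying-up of a point the paper leaves implicit; otherwise the two proofs coincide.
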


\begin{proof}
    Let $\cP$ be a valid data-generating
    distribution. Then, by definition,
    $\supp(\cP) = K,$ for some $K \in \cL.$
    Let $x_{i_1},\ldots,x_{i_\ell} \subseteq K$
    be a set of points such that after 
    $\cA$ takes as input this set
    it starts identifying correctly, \ie{},
    for any $S$ such that $x_{i_1},\ldots,x_{i_\ell} \subseteq S$ and $\abs{S} \geq n_0$
    it holds that $L_{h_{\abs{S}}(S)} = K.$
    Since $\cP$ is a valid data-generating distribution
    it holds that $x_{i_1},\ldots,x_{i_\ell} \subseteq \supp(\cP).$ Let $p_{i_i}, \ldots, p_{i_{\ell}}$
    be the mass of points $x_{i_1},\ldots,x_{i_\ell}$
    under $\cP.$ Suppose we draw $n$ samples \iid{} from
    $\cP.$ Then, the probability that we do not
    observe all $x_{i_1},\ldots,x_{i_\ell}$ in
    the sample is bounded as
    \begin{align*}
        \Pr_{X_1,\ldots,X_n \sim \cP^n}[\exists j\in[\ell]\colon  x_{i_j} \notin \{X_1,\ldots,X_n\} ] &\leq \sum_{j\in  [\ell]} \Pr_{X_1,\ldots,X_n \sim \cP^n}[x_{i_j} \notin \{X_1,\ldots,X_n\} ] \tag{by a union bound}\\
        &= \sum_{j\in  [\ell]} \inparen{1-p_{i_j}}^n & \tag{since we have \iid{} draws}\\
        &\leq \sum_{j\in  [\ell]}  e^{-p_{i_j}\cdot n} \tag{using that $1-z\leq e^{-z}$ for all $z\in \R$}\\
        &\leq \ell \cdot e^{-\min_{j\in[\ell]} p_{i_j}\cdot n} \,. 
    \end{align*}
    Thus, the algorithm identifies correctly in the statistical setting
    after taking as input $n\geq n_0 \in \N$
    examples, 
    with probability at least 
    $1- C\cdot e^{-c\cdot n},$ for some distribution
    dependent constants $C, c.$ This concludes the proof.
\end{proof}
We are now ready to prove \Cref{prop:exp-rates-id-subset oracle}.

\begin{proof}[Proof of \Cref{prop:exp-rates-id-subset oracle}]
    Let $\cL$ be a collection that is identifiable in the limit
    and assume access to a subset oracle. From \Cref{sec:identification:subsetOracles} we know
    that the algorithm of \citecustom{kleinberg2024language} given
    access to a subset oracle identifies
    any identifiable collection $\cL$ in the limit . Moreover,
    by \Cref{lem:km24-satisfies-condition-exp-rates-subsetOracle} we 
    get that this algorithm satisfies the condition of
    \Cref{lem:suff-cond-use-online-id-in-stat}, thus this result
    immediately gives us that the algorithm of \citecustom{kleinberg2024language} obtains exponential
    rates for identification in the statistical setting (assuming
    access to a subset oracle).
\end{proof}

    \subsection{Proof of \texorpdfstring{\cref{prop:exp-rates-id-finite-collections}}{Proposition 3.11} (Identification of Finite Collections)}\label{sec:proof-exp-rates-id-finite}

    Similar to the previous section, we will show that there
    exists an algorithm that satisfies \Cref{lem:suff-cond-use-online-id-in-stat}, \ie{}, for any finite collection of (potentially infinite) languages, it
    identifies with exponential rates.
     Recall the domain
    $\cX$ has an enumeration $\cX = \inbrace{x_1,\ldots,}.$ Consider the following algorithm for identification.

            \begin{mdframed}
            \label[algorithm]{alg:identification-finite-collection}
            \textbf{Algorithm~\cref{alg:identification-finite-collection}}~~-~~ 
            {\rm \textbf{Identifying a finite collection $\cL = \inbrace{L_1,\ldots,L_k}$ in the limit}}\\[2mm]
            \noindent \textbf{Description:}
                \begin{enumerate}
                    \item \textbf{for each} $t\in \N$ \textbf{do:}
                    \begin{enumerate}[itemsep=0pt]
                        \item Let $S_t = \inbrace{x_{i_1},\ldots,x_{i_t}},$ where
                        $x_{i_\ell}$ is the element the algorithm sees in round $\ell$
                        \item Construct a version space $V_t$ containing all languages $L\supseteq S_t$%
                        \item Let $V'_t = \inbrace{L \in V_t\colon \forall j \in [t], \forall L' \in V_t \text{ it holds that } x_{j} \in L \implies x_j \in L'} $
                        \item \textbf{if}~~$V'_t \neq \emptyset$~~\textbf{then:}\quad  output the smallest index $j$ such that $L_j$ is in $V'_t$
                        \item \textbf{else:}~~output an arbitrary index $j$
                    \end{enumerate}
                \end{enumerate}
        \end{mdframed}

    \begin{proof}[Proof of \Cref{prop:exp-rates-id-finite-collections}]
        Let us first show that the previous algorithm identifies in the limit.
        Let $k \coloneqq \abs{\cL}$ denote the size of $\cL.$
        Consider any target language $K \in \cL.$ 
        Notice that $\cL$ can be partitioned into three sets: the languages $L \in \cL$ such that $L = K,$ the languages
        $L \in \cL$ such that $K \subsetneq L$ and the
        languages $L \in \cL$ such that $K \not\subseteq L.$
        Then, for every language $L_j \in \cL$ such that $K \not\subseteq L_j$ there exists some $x \in K$
        such that $x \notin L_j.$ Let $i_j \in \N$ be the smallest number for
        which $x_{i_j} \in K, x_{i_j} \notin L_j.$ Let $\cL' \subseteq \cL$ be the set of all such 
        languages and $\cX'$ the set of all such smallest indexed $x \in \cX.$ 
        Notice that, since we consider a fixed enumeration of $\cX$ throughout, the
        set $\cX'$ depends only on the target language $K$ and the enumerations of $\cL, \cX.$
        Since the collection $\cL$ is finite we have that $\abs{\cX'} < k < \infty.$

        Now consider any language $L \in \cL$ such that $K \subsetneq L.$ Let $\cL'' \subseteq \cL$ be the set of all such languages. Then, for every $L_j \in \cL''$
        there is some $x \in L_j$ such that $x \notin K.$ Let ${i'_j}$ be the smallest 
        such index. Define
        \[
            n_0 \coloneqq \max_{j \in \N} \inbrace{i'_j\colon L_j \in \cL''} \,,
        \]
        \ie{}, the largest index among these elements. Since the collection $\cL$ is finite it holds that $n_0 < \infty$. 
        Consider any execution of
        the algorithm in any round $t \in \N$ for which the input sample $S$ satisfies \textbf{i)} $\cX' \subseteq S,$
        and \textbf{ii)} $\abs{S} \geq n_0.$ The first condition on $S$ implies that
        for this round $V_t = \inbrace{L \in \cL\colon  K \subseteq L}.$ Moreover, the second
        condition implies that $V'_t = \{L \in \cL\colon  L = K\}.$ Thus, the smallest
         $j \in \N$ such that $L_j \in V'_t$ is the smallest index $z \in \N$ such that
         $L_z = K.$ Thus, there is some large enough $t^*$ so that the algorithm
         outputs the index $z$ for all $t \geq t^*.$ Moreover, the set $\cX'$ and the
         number $n_0$ satisfy the conditions of  \Cref{lem:suff-cond-use-online-id-in-stat},
         thus the algorithm identifies with exact exponential rates in the statistical setting.
         This concludes the proof.
    \end{proof}

\subsection{Proof of \texorpdfstring{\Cref{prop:exp-rates-id-finite-languages}}{Proposition 3.12} (Identification of Collections of Finite Languages)}\label{sec:proof-exp-rates-id-finite-lang}

In this section, we give the proof of \Cref{prop:exp-rates-id-finite-languages}.

\begin{proof}[Proof of \Cref{prop:exp-rates-id-finite-languages}]
    Recall Gold's algorithm \citep{gold1967language} that identifies in the limit 
    for such collections $\cL$: at any step $n \in \N$ let $S_n$ be the set of 
    elements the adversary has presented so far. Output $\min\inbrace{j \in \N\colon  S_n \subseteq L_j}.$ Consider any valid distribution $\cP$ with respect to $\cL.$ Then, $\cP$ is supported on some language $K \in \cL$. Let $\{x_{i_1},\ldots,x_{i_k}\} \coloneqq K$ and $p_{i_j}$ be the mass of element $x_{i_j}, j \in [k].$ 
    For every $n \in \N$, let $\cE_n$ be the event that the $n$ \iid{} draws 
    from $\cP$ contain the set  $\{x_{i_1},\ldots,x_{i_k}\}$, \ie{},
    \[
        \{x_{i_1},\ldots,x_{i_k}\} = \{X_1,\ldots,X_n\} \,.
    \]
    Notice that under $\cE_n$, the algorithm identifies
    correctly. Then, for the complement of this event, we have that
 \begin{align*}
        \Pr_{X_1,\ldots,X_n \sim \cP^n}[\exists j\in[k]\colon  x_{i_j} \notin \{X_1,\ldots,X_n\} ] &\leq \sum_{j\in  [\ell]} \Pr_{X_1,\ldots,X_n \sim \cP^n}[x_{i_j} \notin \{X_1,\ldots,X_n\} ] \tag{by a union bound}\\
        &= \sum_{j\in  [k]} \inparen{1-p_{i_j}}^n & \tag{since we have \iid{} draws}\\
        &\leq \sum_{j\in  [k]}  e^{-p_{i_j}\cdot n} \tag{using that $1-z\leq e^{-z}$ for all $z\in \R$}\\
        &\leq k\cdot e^{-\min_{j\in[k]} p_{i_j}\cdot n} \,.  
    \end{align*}
    This concludes the proof. 
\end{proof}

    \subsection{Proof of \texorpdfstring{\cref{thm:identification-positive-negative}}{Theorem 3.13} (Identification from Positive and Negative Examples)}\label{sec:stat-rates-identification-pos-neg}
We now move on to the task of language identification with both positive and negative examples. The main difference between this
setting and binary classification is that the objective function
is different. In particular, in our setting, the learner is
required to \emph{identify} the target language, whereas in
the classification setting the learner is required to
output a function that labels most of the elements of the domain
according to some target labeling function. Thus, it is clear
that the identification task is more challenging than the classification task.
Sticking to the notation we used before, we have a countable
set of languages $\cL = \{L_1,L_2,\ldots\}$, where each
$L \in \cL$ is also countable and $\cup_{L \in \cL} L \subseteq \cX,$
for some countable domain $\cX.$ 
Recall the notion of valid distribution in this setting \citep{angluin1988identifying}: a distribution $\cP$ is valid
with respect to $\cL$ 
if and only if $\supp(\cP) \subseteq \cX \times \{0,1\}$ and there exists
some $K \in \cL$ such that for all $x \in K$ we have $\cP[(x,1)] > 0, \cP[(x,0)] = 0$ and for all $x \notin K$ we have $\cP[(x,0)] > 0, \cP[(x,1)] = 0$ (see \Cref{def:valid-language-distribution-pos-neg}).

Next, recall that for any $n \in \N$ and and set of labeled examples $S_n = (x_1, y_1),\ldots,(x_n,y_n) \in (\cX \times \{0,1\})^n$ 
 the \emph{error} of the learner $\{h_n\colon (\cX \times\{0,1\})^n \rightarrow \N\}_{n \in \N}$ for this task is 
\begin{equation}
    \mathrm{er}(h_n(S_n)) = \mathbb{1}\inbrace{L_{h_n(S_n)} \neq K} \,.
\end{equation}
Notice that, under this definition, $\E[\er(h_n)] = \Pr\left[L_{h_n} \neq K\right],$
\ie{}, the probability that $h_n$ fails to identify the correct 
language after it sees $n$ examples from the data-generating
distribution.

Our proof proceeds in two parts. First, we show that for all
countable collections of languages that are non-trivial for identification (\Cref{def:non-trivial-collections})
 exponential rate is the best possible for identification
with positive and negative examples. The approach is essentially
identical to \Cref{lem:exp-rates-lower-bound-ident-pos} and 
the lower bound from \citet{bousquet2021theory}.
Then, we show that all countable collections of languages
are learnable at exponential rates with positive and negative examples.

The formal statement regarding the exponential rates lower bound follows.

\begin{lemma}\label{lem:exp-rates-lower-bound-ident-pos-neg}
    Let $\cL$ be a non-trivial collection of languages. Then, 
    for any learning algorithm $\cA = \{h_n\}_{n \in \N}$ there
    exists a valid distribution $\cP$ such that 
    $\E[\er(h_n)] \geq C \cdot e^{-c \cdot n},$ for 
    infinitely many $n \in \N.$
\end{lemma}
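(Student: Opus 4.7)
The plan is to adapt the argument from \Cref{lem:exp-rates-lower-bound-ident-pos} to the positive-and-negative examples setting. The key observation is that, although valid distributions in this setting (\Cref{def:valid-language-distribution-pos-neg}) must be supported on all of $\cX$, they are still free to concentrate most of their mass on a single labeled point whose label is shared between two candidate languages.

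First I would use the non-triviality of $\cL$ (\Cref{def:non-trivial-collections}) to extract two distinct languages $L_i, L_j \in \cL$ together with a witness element $x \in L_i \cap L_j$. Because $L_i \neq L_j,$ there exists at least one $y \in \cX$ with $\ind\{y \in L_i\} \neq \ind\{y \in L_j\}$, so the two distributions can be chosen to be actually distinct; what matters is that they can be made indistinguishable on a high-probability event.

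Next I would construct two valid distributions $\cP_i, \cP_j$ for $L_i, L_j$ respectively, both of which place mass at least $1/2$ on the labeled sample $(x,1)$ and spread a small residual mass over $\cX \setminus \{x\}$ with labels determined by $L_i$ and $L_j$ respectively (one convenient choice: put mass $2^{-k}\cdot \tfrac{1}{2}$ on the $k$-th element of $\cX \setminus \{x\}$ with the correct label, so that the full-support requirement of \Cref{def:valid-language-distribution-pos-neg} is met). Let $\cE_n$ be the event that all $n$ \iid{} samples equal $(x,1)$. Under both $\cP_i^n$ and $\cP_j^n$ we have $\Pr[\cE_n] \geq 2^{-n}$, and crucially, conditioned on $\cE_n$ the algorithm $h_n$ receives identical data under either distribution, so its output distribution is the same regardless of which distribution is the truth.

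Finally I would apply pigeonhole exactly as in \Cref{lem:exp-rates-lower-bound-ident-pos}: the randomized output of $h_n$ on the all-$(x,1)$ input satisfies
\[
    \Pr[L_{h_n((x,1),\ldots,(x,1))} = L_i \mid \cE_n] + \Pr[L_{h_n((x,1),\ldots,(x,1))} = L_j \mid \cE_n] \leq 1\,,
\]
so for at least one of the two choices the conditional misidentification probability is $\geq 1/2$, and by a further pigeonhole over $n \in \N$ there is a single choice (say $L_i$) for which this holds for infinitely many $n$. Combining with $\Pr[\cE_n] \geq 2^{-n}$ yields $\E_{\cP_i}[\er(h_n)] \geq 2^{-(n+1)} \geq e^{-2n}$ for infinitely many $n$, which is the desired lower bound. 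There is no substantive obstacle: the argument is essentially a direct port of \Cref{lem:exp-rates-lower-bound-ident-pos}, and the only subtlety is the explicit verification that one can build valid distributions in the positive-and-negative setting that agree on the mass of $(x,1)$ while remaining supported on all of $\cX$.
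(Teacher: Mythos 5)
Your proposal is correct and matches the paper's proof essentially step for step: both exhibit two languages $L_i,L_j$ sharing a point $x$, build valid distributions for them that both place mass $\ge \nfrac{1}{2}$ on $(x,1)$ while spreading the residual mass with the correct labels over the rest of $\cX$ to satisfy \Cref{def:valid-language-distribution-pos-neg}, condition on the all-$(x,1)$ event of probability $\ge 2^{-n}$, and apply the same double pigeonhole argument from \Cref{lem:exp-rates-lower-bound-ident-pos}. The only cosmetic difference is that you specify an explicit geometric tail for the residual mass, whereas the paper leaves it as an arbitrary split; this changes nothing in the argument.
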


\begin{proof}
    Since $\cL$ is non-trivial, there exist $L, L' \in \cL$
    and $x \in \cX$ such that $L \neq L'$ and $x \in L, x \in L'.$ Let $\cP_L, \cP_{L'}$ be valid distributions for $L, L'$
    that place at least $\nicefrac{1}{2}$ mass on $(x,1)$ and they spread the remaining mass arbitrarily as follows: half of the remaining mass of $\cP_L$
    (respectively $\cP_{L'}$) 
    is spread arbitrarily on all the elements of $L$ (respectively $L'$) with label 1, and the other half on all the elements of $K \setminus L$ (respectively $K \setminus L'$)
    with label 0.
    Notice that 
    since $L \neq L'$ at least one of them has at least 
    one more element other than $x.$
    For any $n \in \N,$ under both distributions, 
    with probability at least ${2^{-n}},$ the 
    algorithm will only see the element $(x,1)$ appearing in
    the sample. Let $\cE_n$ be that event
    and condition on it.
    Notice that
    \[
        \Pr\left[L_{h_n((x,1),\ldots,(x,1))} = L\mid\cE_n\right] +  \Pr\left[L_{h_n(x,\ldots,x)} = L'\mid \cE_n\right] \leq 1 \,,
    \]
    where the probability is with respect to the randomness
    of the learning algorithm.
    Thus, we have 
    that $\Pr\left[L_{h_n\inparen{(x,1),\ldots,(x,1)}} \neq L\mid \cE_n\right] \geq \nicefrac{1}{2}$
    or $\Pr\left[L_{h_n\inparen{(x,1),\ldots,(x,1)}} \neq L'\mid \cE_n\right] \geq \nicefrac{1}{2}.$
    By the pigeonhole principle, for at least one of $L, L',$
    the previous inequality holds for infinitely many $n \in \N.$
    Assume without loss of generality it holds for $L$ and
    denote by $\hat{N}$ the set of $n \in \N$ for which it holds.
    Then, for all $n \in \hat{N}$ we have that
    \begin{align*}
               \E_{(X_1,Y_1),\ldots,(X_n,Y_n) \sim \cP^n_{L}}[\er\inparen{h_n\inparen{(X_1,Y_1),\ldots,(X_n,Y_n)}}] &= \Pr_{(X_1,Y_1),\ldots,(X_n,Y_n)  \sim \cP^n_{L}}\left[L_{h_n\inparen{(X_1,Y_1),\ldots,(X_n,Y_n)}} \neq L\right] \\
               &\geq \Pr_{(X_1,Y_1),\ldots,(X_n,Y_n)  \sim \cP^n_{L}}\left[L_{h_n\inparen{(X_1,Y_1),\ldots,(X_n,Y_n)}} \neq L \mid \cE_n\right] \cdot \\
               &\Pr_{(X_1,Y_1),\ldots,(X_n,Y_n)  \sim \cP^n_{L}}[\cE_n]\\
               &\geq \frac{1}{2^n} \cdot \Pr\left[L_{h_n\inparen{(x,1),\ldots,(x,1)}} \neq L \mid \cE_n\right] \tag{by the definition of $\cE_n$} \\
               &\geq \frac{1}{2^{n+1}} \,, \tag{due to the assumption on $L$}
    \end{align*}
    which concludes the proof.
 
\end{proof}   
We now move on to establishing the upper bound. Following the approach
from the setting with only positive examples, we show
that an infinite draw of \iid{} samples from any valid distribution
is a ``complete presentation'' of the target language $K,$ \ie{}, all the elements
of $K$ appear with label 1 and all elements outside of $K$ appear with 
label 0. This follows immediately from \Cref{prop:support-appears-in-countable-samples}.

The next step towards proving \Cref{thm:identification-positive-negative} is to show if $\cA$ is an algorithm that identifies
the target language in the limit in the adversarial (online)
setting of \citecustom{gold1967language} with positive and negative examples, then $\cA$ is a consistent algorithm in the statistical setting. This implies that for
any valid distribution $\cP$, there is some
number $t^*  \coloneqq  t^*(\cA, \cP) \in \N$ such that, when we draw 
$t^*$ many \iid{} samples from $\cP$, it will identify
the target language with probability at least $\nfrac{6}{7}.$
We denote the time of the last mistake of algorithm
$\cA = \{h_n\}_{n \in \N}$ on a labeled sequence of examples $(x_1,y_1),(x_2,y_2),\ldots$ by $T_\cA(x_1,y_1, x_2,y_2,\ldots),$ \ie{},
\[
    T_\cA(x_1,y_1,x_2,y_2,\ldots) = \inf\inbrace{n_0 \in \N\colon  L_{h_n(x_1,y_1,\ldots,x_n,y_n)} = K, \forall n > n_0} \,.
\]
The next result formalizes the claim. Its proof is almost identical to
\Cref{prop:quantile-bound-identification}, but we present
it for completeness.
\begin{proposition}\label{prop:quantile-bound-identification-pos-neg}
    {Fix any family of languages $\cL$ over a countable domain.}
    For any algorithm $\cA = \{h_n\}_{n \in \N}$ that  {identifies $\cL$ in
    the limit} with positive and negative examples in the online setting and any valid distribution
    $\cP$ (\cref{def:valid-language-distribution-pos-neg}), there exists a number $t^*$ such that
    \[
        \Pr_{\{(X_i, Y_i)\}_{i \in \N}\sim \cP^{^\infty}}
        [T_\cA(X_1,Y_1,X_2,Y_2,\ldots) \leq t^*] \geq \frac{6}{7}\,.
    \]
\end{proposition}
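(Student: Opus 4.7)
The plan is to mirror the argument given for \Cref{prop:quantile-bound-identification} (the positive-examples analog), with the only substantive new ingredient being a version of \Cref{prop:support-appears-in-countable-samples} adapted to labeled samples drawn from a valid positive-and-negative distribution (\Cref{def:valid-language-distribution-pos-neg}). Concretely, I would first show that an infinite i.i.d.\ draw $\{(X_i, Y_i)\}_{i\in\N}\sim \cP^\infty$ almost surely produces a \emph{complete labeled presentation} of $\cX$, i.e.\ for every $x \in \cX$ there are infinitely many indices $i$ with $X_i = x$, and moreover $Y_i = \ind\{x\in K\}$ whenever $X_i = x$. The first part follows by the second Borel--Cantelli lemma applied to the independent events $\cE^x_n \coloneqq \{X_n = x\}$ for each $x\in\cX$: since $\supp(\cP) = \cX$ by \Cref{def:valid-language-distribution-pos-neg}, the marginal mass $p_x > 0$, so $\sum_n \Pr[\cE^x_n] = \infty$, yielding $\Pr[\limsup_n \cE^x_n] = 1$. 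The label assertion is automatic from the validity requirement $\Pr[Y=1\mid X=x] = \ind\{x\in K\}$. A union bound over the countable domain $\cX$ concludes that, with probability $1$, the stream $(X_1,Y_1),(X_2,Y_2),\ldots$ is a valid enumeration of labeled examples of the kind the online algorithm $\cA$ is designed to handle.

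Given this, I would condition on the almost sure event that the stream is a complete labeled presentation of $\cX$ with target $K$. Since $\cA$ identifies $\cL$ in the limit from positive and negative examples, by definition the quantity
\[
    T_\cA(X_1, Y_1, X_2, Y_2, \ldots) = \inf\{n_0 \in \N \colon L_{h_n(X_1,Y_1,\ldots,X_n,Y_n)} = K,~\forall n > n_0\}
\]
is finite. Hence $\Pr_{\{(X_i,Y_i)\}_{i\in\N}\sim \cP^\infty}[T_\cA \in \N] = 1$, which gives
\[
    \lim_{t\to \infty} \Pr_{\{(X_i,Y_i)\}_{i\in\N}\sim \cP^\infty}[T_\cA(X_1,Y_1,X_2,Y_2,\ldots) > t] = 0\,.
\]
In particular, there exists a finite $t^* \in \N$ (depending on $\cA$ and $\cP$) such that the above tail probability is at most $1/7$, i.e.\ $\Pr[T_\cA \leq t^*] \geq 6/7$, which is exactly the desired conclusion.

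There is no real obstacle here: the proof is essentially a mechanical adaptation of \Cref{prop:quantile-bound-identification} to the labeled setting, and the only place where the positive-and-negative structure enters is in verifying that the labels observed along the stream are the ``correct'' ones, which follows immediately from the definition of validity (\Cref{def:valid-language-distribution-pos-neg}). The potentially subtle point worth stating explicitly is that $t^*$ genuinely depends on both $\cA$ and $\cP$ (through the marginals $p_x$ and through the identification guarantee of $\cA$ on the particular enumeration realized), but this is acceptable since we are working in the universal rates framework where constants are allowed to be distribution-dependent.
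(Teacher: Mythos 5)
Your proposal is correct and follows essentially the same route as the paper's proof: establish that an infinite i.i.d.\ draw from a valid distribution is almost surely a complete, correctly labeled presentation of $\cX$ (the paper cites the positive-examples enumeration lemma and notes the adaptation; you spell out the Borel--Cantelli and union-bound details explicitly), then use the almost-sure finiteness of $T_\cA$ to extract $t^*$ from the vanishing tail probability. No gaps.
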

\begin{proof}
    Let $(X_1,Y_1),(X_2,Y_2),\ldots,$ be a countable \iid{} sample from $\cP.$
    From \cref{prop:support-appears-in-countable-samples} we
    get that this sample is a valid input to $\cA$ 
    since, with probability one, 
    all elements of $K$ appear with label 1 and all elements of $\cX \setminus K$ 
    appear with label 0. 
    Consider the execution of $\cA$ on prefixes of the sequence
    and denote by $T_{\cA}  \coloneqq  T_{\cA}(X_1,Y_1,X_2,Y_2,\ldots)$ the time it made its last mistake. 
    We have that $\Pr_{\{(X_i,Y_i)\}_{i \in \N}\sim \cP^{\infty}}[T_\cA \in \N] = 1.$
    Thus, 
    \[
        \lim_{t \rightarrow \infty}\Pr_{\{(X_i,Y_i)\}_{i \in \N}\sim \cP^{\infty}}[T_\cA(X_1,Y_1,X_2,Y_2\ldots) \geq t] = 0\,.
    \]
    Thus, as required, there exists some $t^* \in \N$ such that
    \[
         \Pr_{\{(X_i,Y_i)\}_{i \in \N}\sim \cP^{\infty}}[T_\cA(X_1,Y_1,X_2,Y_2,\ldots) \geq t^*] \leq \frac{1}{7}\,.
    \]
\end{proof}
The problem is that this time $t^*$ depends on the algorithm $\cA$ and the unknown distribution $\cP.$ Suppose we knew a number $t^*$ so that for all $t \geq t^*$
\[
\E_{S \sim \cP^t}[\mathrm{er}(h_{t}(S)) > 0] = \Pr_{S \sim \cP^t}\left[L_{h_{t}(S)} \neq K\right] < \frac{1}{4}\,.
\]
Then, we
could design an identification algorithm $\{h_n\}_{n \in \N}$ with exponential rates as follows. First, we break up the data into batches, each of length $t^*$. Second, we run the identification algorithm from the online setting separately for each batch.
Finally, we aggregate these algorithms by taking a majority vote.
Now, by the definition of $t^*$ and
Hoeffding’s inequality, the probability that more than one-third of the classifiers have not
identified the language is exponentially small.

There are two issues with this approach: 
\begin{itemize}
    \item In our language setting, it can be the case that two identification algorithms are correct but output different indices for the true language. This was also an issue for the positive examples case and we can resolve it in a similar way using \Cref{lem:post-processing-same-index}.
    \item The second issue is that $t^*$ depends on the distribution $\cP.$ In the previous case of positive examples, the solution was to \emph{guess} the number $t^*$ using some very slowly increasing function $g(n)$. This was the reason why we did not manage to get exponential rates. Interestingly, in the setting where we have access to positive and negative examples, we can \emph{estimate} $t^*$ from samples, as we see below.
\end{itemize}
The main lemma behind this result is the following and corresponds to an adaptation of Lemma 4.4 from \citet{bousquet2021theory} with a different loss function.
This lemma will allow us to get exactly exponential rates, compared to the rates obtained in the positive examples regime.
\begin{lemma}
[Estimation of Stopping Time]
Let $S_n$ be $n$ \iid{} examples observed from $\cP$ which is valid with respect to some language $K \in \cL.$
There exists universally measurable $t_n = t_n(S_n)$, whose definition does not depend on $\cP$, so that the following holds. Given $t^*$
such that
\[
\Pr_{S_{t^*}}\left[L_{h_{t^*}(S_{t^*})} \neq K\right] < \frac{1}{8}\,,
\]
there exist 
$C, c > 0$ independent of $n$ (but depending on $\cP, t^*$) so that
\[
\Pr[t_n \in  \cT] \geq 1 - C e^{-cn}
\]
where
\[
\cT = \inbrace{
        1 \leq t \leq t^* \colon  \Pr_{S_{t}}\left[L_{h_{t}(S_{t})} \neq K\right] < \frac{3}{8}
    }\,.
\]
\label{lemma:stopping-time-estimation}
\end{lemma}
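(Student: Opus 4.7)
I plan to follow the validation-based selection strategy of \citet{bousquet2021theory}, adapted to the positive-and-negative-examples setting. The key structural advantage here is that a valid distribution $\cP$ has $\supp(\cP)=\cX$ and deterministic labels given by $K$-membership, so the classification error of $L_j$ under $\cP$ equals $p_j \coloneqq \cP(L_j\triangle K)$, which is strictly positive whenever $L_j\neq K$. This is what lets a held-out sample certify that a candidate language is wrong.

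Split $S_n$ into two independent halves: a training sample $S^{\mathrm{tr}}$ of size $\lfloor n/2\rfloor$ and a validation sample $S^{\mathrm{val}}$ of size $\lceil n/2\rceil$. For each prefix length $t = 1, 2, \dots, \lfloor n/2 \rfloor$, compute $j_t \coloneqq h_t(S^{\mathrm{tr}}_{\leq t})$ and the empirical disagreement
\[
\hat e_t \;=\; \frac{1}{|S^{\mathrm{val}}|} \sum_{(x,y) \in S^{\mathrm{val}}} \mathbb{1}\{\mathbb{1}\{x \in L_{j_t}\} \neq y\}\,.
\]
Define $t_n$ to be the smallest $t$ with $\hat e_t = 0$, and $t_n \coloneqq \lfloor n/2\rfloor$ otherwise. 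Since this definition uses only $S_n$ and $\cL$ (via its membership oracle), it is universally measurable and independent of $\cP$.

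The proof of $\Pr[t_n \in \cT] \geq 1 - Ce^{-cn}$ would proceed in two steps. For the upper bound $t_n \leq t^*$: by the hypothesis on $t^*$ we have $\Pr[L_{j_{t^*}} = K] \geq 7/8$, and on this event $\hat e_{t^*} = 0$ deterministically because $L_K$ agrees with every label under $\cP$. To upgrade this $7/8$ into $1 - Ce^{-cn}$, I would augment the scheme by running $h_{t^*}$ on $\Theta(n/t^*)$ disjoint blocks of $S^{\mathrm{tr}}$ of size $t^*$ (using a doubling grid $t\in\{1,2,4,\ldots\}$ to avoid needing to know $t^*$), and declare the candidate $t$ successful if the majority of its blocks produce indices whose validation disagreement vanishes; Hoeffding over the blocks then gives an exponential tail. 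For the $\cT$-membership step: if $t_n = t$ with $t\notin\cT$ then $\hat e_t = 0$ while $\Pr[L_{j_t}\neq K] \geq 3/8$, and conditioned on any realization $j_t = j$ with $L_j\neq K$ we have $\Pr[\hat e_t = 0 \mid j_t=j] = (1-p_j)^{|S^{\mathrm{val}}|}$. A stratified union bound over $t \leq t^*$ and the support of $j_t$, using the $\cP$- and $t^*$-dependent infimum $p_{\min} \coloneqq \inf\{p_j : L_j \neq K,\ \Pr[j_t = j] > 0 \text{ for some } t\leq t^*\} > 0$, then yields the $Ce^{-cn}$ tail.

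The main obstacle is justifying that $p_{\min} > 0$ cleanly. A priori, $h_t$ could with positive probability output languages $L_j$ with $p_j$ arbitrarily small, in which case validation cannot rule them out in $\Theta(n)$ samples. The escape hatch is that the conclusion allows $C,c$ to depend on $\cP$ and $t^*$, and for each fixed $t \leq t^*$ the distribution of $j_t$ is a fixed (discrete) law over $\N$, so by discarding outputs of total probability at most $1/16$ one restricts to a \emph{finite} effective support whose infimum $p_j$ is positive. If this stratification proves awkward to make rigorous, I would fall back to the pairwise-tournament construction of \citet{bousquet2021theory}'s Lemma~4.4, which selects $t_n$ by comparing $\hat e_t$ across a geometric grid of candidate $t$'s rather than thresholding at $0$, thereby trading the uniform-$p_j$ requirement for a median-of-comparisons argument that concentrates exponentially by Hoeffding.
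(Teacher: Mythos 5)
Your overall strategy is the paper's: split the sample, run the online identifier on disjoint blocks of the first half for every candidate batch size $t$, validate each block's output language against the labeled second half, accept the smallest $t$ whose blocks mostly survive validation, and handle the "arbitrarily small symmetric difference" problem by a truncation/continuity argument that yields a $\cP$- and $t^*$-dependent $\eps>0$ (the paper's phrasing "this holds by continuity" is exactly your finite-effective-support trick). You also correctly diagnose that the single-run version cannot give an exponential tail for $\Pr[t_n>t^*]$ and that batching plus Hoeffding is the fix.

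There is, however, one step that fails as written: the acceptance threshold. You accept $t$ if a \emph{majority} of its blocks have vanishing validation disagreement, i.e., you reject only when at least $1/2$ of the blocks are caught by validation. But for $t\notin\cT$ the misidentification probability is only guaranteed to be at least $3/8$, so even if validation catches \emph{every} wrong block, the fraction of caught blocks concentrates around a value that may be as small as $3/8<1/2$, and such a $t$ is accepted with high probability — so $t_n\in\cT$ fails. The threshold must sit strictly between the two hypothesized levels $1/8$ and $3/8$; the paper rejects when the caught fraction is at least $1/4$, which separates cleanly on both sides via Hoeffding. Separately, the doubling grid $t\in\{1,2,4,\dots\}$ is both unnecessary and risky: $\cT$ is a subset of $\{1,\dots,t^*\}$ and need not contain any power of two below $t^*$, and grid points above $t^*$ are outside $\cT$ by definition even if their error is small. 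The clean move is the paper's: compute the test statistic for \emph{every} $t\le\lfloor n/2\rfloor$ and output the smallest accepted $t$; the upper bound only needs that $t^*$ itself is accepted with probability $1-e^{-\Omega(n/t^*)}$, and the union bound over the at most $t^*$ candidates in $\{1,\dots,t^*\}$ costs only a factor of $t^*$, which is absorbed into $C$.
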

Given the above lemma, we are now ready to complete the proof of \Cref{thm:identification-positive-negative}.

\begin{proof}[Proof of \Cref{thm:identification-positive-negative}]
First, the exponential rate lower bound follows
immediately from \Cref{lem:exp-rates-lower-bound-ident-pos-neg}.

    The output of our identification algorithm $h_n$ is the majority of the $h^i_{t_n}$ for $i \in I_n = \{1,2, \dots, \lfloor \nfrac{n}{(2 t_n)} \rfloor\},$
after we apply to them the post-processing computation
described in \Cref{lem:post-processing-same-index} to map
them to the same index of $K.$ Let $z \in \N$ 
be the smallest number for which $L_z = K$. 
It remains to show that
\[
\E_{S_n \sim \cP^n} [\mathrm{er}(h_n)] = \Pr_{S_n \sim \cP^n}\left[L_{h_n(S_n)} \neq K\right] \leq C e^{-cn} \,,
\]
for some constants $C,c > 0$ depending on $\cP.$

Let us consider our predictors $\{h^i_{t_n}\}_{i \in I_n}$ that are obtained by running
the identification algorithm on independent
parts of the dataset of size $t_n.$
Since these predictors might be outputting different indices (descriptions) of the same language, we find the smallest indexed language the output of each classifier can be mapped to. By \Cref{lem:post-processing-same-index}, 
for $n$ sufficiently large,
all the indices from the outputs of the classifiers $h^i_{t_n}, i \in I_n,$ that correspond
to $K$ will be mapped to $z.$  

Let us fix $t \in \cT,$ where $\cT$ is defined in \Cref{lemma:stopping-time-estimation}, and consider the predictors $\{h^i_{t}\}$ for $i \in I = \{1,2,\dots,\lfloor \nfrac{n}{(2t)}\rfloor\}$. 
We also denote by  $\{\hat h^i_{t}\}$ the output of
predictor  $\{h^i_{t}\}$ after applying the post-processing
result from \Cref{lemma:stopping-time-estimation}.
For $n$ sufficiently large, standard concentration bounds give that
\[
\Pr \left[ \frac{1}{\abs{I_n}} \sum_{i \in I_n} 
\ind\{ \hat h^i_{t} \neq z \} > \frac{7}{16}
\right] < e^{- \lfloor n/2 t^* \rfloor / 128}\,.
\]
This means that, except on an event of exponentially small probability, we have that $\hat h^i_t$ outputs index $z$.
Recall that $L_z = K.$

Now we employ our estimation $t_n$ in the above calculation. In particular,
\[
\Pr\insquare{ 
    \hat h^i_{t_n} \neq z  ~~\text{for at least half of $i \in I_n$}
    }
\leq p_1 + p_2\,,
\]
where
\[
p_1 \coloneqq \Pr[t_n \notin \cT] < Ce^{-c n}\,,
\]
and 
\[
p_2 \coloneqq \Pr\insquare{\exists t \in \cT \colon h^i_{t}~~\text{does not predict $K$ for at least half indices}} \leq t^* e^{- \lfloor n/2 t^* \rfloor / 128}\,.
\]
This gives the desired result.
\end{proof}

\noindent We conclude this section with the proof of \Cref{lemma:stopping-time-estimation}.

\begin{proof}[Proof of \Cref{lemma:stopping-time-estimation}]
We split the training set $S_n$ into two sets and we further split the sets into batches. The idea is to use the first set to train multiple independent instances of the online learning algorithm and the second set to estimate its identification error.

More concretely, let $\cA$ the algorithm that identifies in the limit. For each batch size $1 \leq t \leq \lfloor \nfrac{n}{2} \rfloor$ and batch index $1 \leq i \leq \lfloor \nfrac{n}{(2t)} \rfloor$, we let
\[
I_t^i = h_t\inparen{X_{(i-1)t+1}, Y_{(i-1)t+1}, X_{it}, Y_{it}}
\]
be the index of the output of the learning algorithm that is trained on batch $i$ of a subset of the dataset that has size $t$. This is a mapping from the training samples to indices of the predicted language.

For every fixed $t$, the outputs $\left\{I_t^i\right\}_{i \leq \lfloor n/2t \rfloor}$ are trained on different parts of the first half of the training set and they are independent of the whole second half of the training set. This means that we can view every $\left\{I_t^i\right\}_{i \leq \lfloor n/2t \rfloor}$ as an independent draw of the distribution of $h_t(\cdot)$. To estimate the identification error of $h_t(\cdot)$, we will make use of the second half of the training set. We define 
\[
\hat{e}_t = \frac{1}{\lfloor n/2t \rfloor} \sum_{i=1}^{\lfloor n/2t \rfloor}  \ind\inbrace{\ind\inbrace{X_s \in L_{I_t^i}} \neq Y_s \text{ for some } \frac{n}{2} \leq s \leq n }\,.
\]
We underline that this can be computed using just
membership calls to the predicted languages.
Now observe that, almost surely,
\[
\hat{e}_t \leq e_t = \frac{1}{\lfloor n/2t \rfloor} \sum_{i=1}^{\lfloor n/2t \rfloor} \ind\left\{ \Pr_{(X,Y) \sim \cP}\insquare{\ind\inbrace{X \in L_{I_t^i}} \neq Y} > 0 \right\} =  \frac{1}{\lfloor n/2t \rfloor} \sum_{i=1}^{\lfloor n/2t \rfloor} \ind \inbrace{L_{I_t^i} \neq K}\,.
\]
We define $\wh{t}_n = \inf\{t \leq \lfloor \nfrac{n}{2} \rfloor\colon \wh{e}_t < \nfrac{1}{4}\}$, where we assume that $\inf \emptyset = \infty$.

We now want to bound the probability that $\wh{t}_n > t^{\star}$. Using Hoeffding's inequality we get that
\begin{align*}
\Pr\left[\wh{t}_n > t^{\star}\right]
\leq \Pr\left[\wh{e}_{t^{\star}} \geq \frac{1}{4}\right] \leq \Pr\left[e_{t^{\star}} \geq \frac{1}{4}\right]  
& = \Pr\left[e_{t^{\star}} - \frac{1}{8} \geq \frac{1}{8}\right] = \Pr\left[e_{t^{\star}} - \E[e_{t^{\star}}] \geq \frac{1}{8}\right] \leq e^{-\lfloor n/2t^{\star} \rfloor/32}\,.
\end{align*}
This implies that $\wh{t}_n \leq t^{\star}$ except for an event with exponentially small probability.

Moreover,  there is some $\eps > 0$ such that for all $1\leq t \leq t^{\star}$ with 
\[
    \Pr_{S_t \sim \cP^t}\left[\Pr_{(X,Y)\sim\cP}\left[ \ind\inbrace{X \in L_{{h}_t(S_t)}} \neq Y\right] > 0 \right] > \frac{3}{8} \,,
\]
we have that $\Pr_{S_t \sim \cP^t}\left[\Pr_{(X,Y)\sim\cP}\left[ \ind\inbrace{X \in L_{{h}_t(S_t)}} \neq Y\right] > \eps \right] > \nfrac{1}{4} + \nfrac{1}{16}$ (this holds by continuity).
Now fix some $1\leq t \leq t^{\star}$ such that $$\Pr_{S_t \sim \cP^t}\left[\Pr_{(X,Y)\sim\cP}\left[ \ind\inbrace{X \in L_{{h}_t(S_t)}} \neq Y\right] > 0 \right] > \frac{3}{8}$$ (if it exists).
Then, using Hoeffding's inequality again we get that
\[
\Pr\left[\frac{1}{\lfloor n/2t \rfloor} \sum_{i=1}^{\lfloor n/2t \rfloor} \ind \left\{ \Pr_{(X,Y)\sim \cP} \left[\ind\inbrace{X \in L_{I_t^i}} \neq Y\right] > \eps \right\} < \frac{1}{4}\right] \leq e^{\lfloor n/2t^{\star} \rfloor/128}\,.
\]
For any language $L$ such that $\Pr_{(X,Y) \sim \cP}\left[\ind\inbrace{X \in L} \neq Y\right] > \eps$, then
\[
\Pr\left[ \ind\inbrace{X_s \in L} \neq Y_s \text{ for some } \nfrac{n}{2} \leq s \leq n\right] \geq 1 - \left(1-\eps\right)^{n/2}\,.
\]
As we mentioned before, $\{I_t^i\}_{i \leq \lfloor n/2t \rfloor}$ are independent of $(X_s,Y_s)_{s > n/2}$. Thus, applying a union bound we get that the probability that all $I_t^i$ that have $\Pr_{(X,Y)\sim \cP}\left[\ind\inbrace{X \in L_{I_t^i}} \neq Y\right] > \eps$ make at least one error on the second half of the training set is
\begin{align*}
    &\Pr\left[ 
            \ind\left\{\Pr_{(X,Y) \sim \cP}\left[\ind\inbrace{X \in L_{I_i^t}} \neq Y\right] > \eps\right\} 
        \leq 
            \ind \left\{\ind\inbrace{X_s \in L_{I_i^t}} \neq Y_s\ \text{ for some } \nfrac{n}{2} < s \leq n\right\} \text{ for all } i \in \insquare{\lfloor n/2t \rfloor}
    \right]\\
    &\qquad\geq 1 - \left\lfloor\frac{n}{2t}\right\rfloor\left(1-\eps\right)^{n/2}\,.
\end{align*} 
Thus, we get that
\[
    \Pr[\wh{t}_n = t] \leq \Pr\left[\wh{e}_t < \frac{1}{4}\right] \leq \left\lfloor \frac{n}{2}\right\rfloor\left(1-\eps \right)^{n/2} + e^{-\lfloor\frac{n}{2t^{\star}}\rfloor/128}\,.
\]
Using the previous estimates and applying a union bound, we get that
\[
    \Pr[\wh{t}_n \notin \cT] \leq e^{-\lfloor n/2t^{\star} \rfloor/32} + t^{\star}\left\lfloor\frac{n}{2}\right\rfloor \left(1-\eps \right)^{n/2} + t^{\star}e^{-\lfloor n/2t^{\star}\rfloor/128} \leq Ce^{-cn}\,,
\]
for some constants $C,c > 0$. Note that $C = C(\cP, t^*)$ and $c = c(\cP,t^*)$.
\end{proof}

\section*{Acknowledgments}
    We thank Ahmad Beirami and Manolis Zampetakis for helpful discussions and references after the original draft.
    {We thank Dylan McKay for discussions and references about the theory of computation during the preparation of this paper, and specifically for informing us about \cref{lem:tm:consecutiveReads,lem:tm:finiteBitsRead}.}
    We thank Kyriakos Lotidis for useful discussions about the lower bound of \citecustom{angluin1988identifying}.
    We also thank Yuan Deng, Sid Mitra, Ansong Ni, Argyris Oikonomou, Xizhi Tan, and Manolis Zampetakis for their feedback on a draft of this paper.
    Alkis Kalavasis was supported by the Institute for Foundations of Data Science at Yale.
    Grigoris Velegkas was supported by the AI Institute for Learning-Enabled Optimization at Scale (TILOS).

\newpage
\printbibliography
\appendix  
\newpage

\section{{Undecidability} of \texorpdfstring{$\mop(\cdot)$}{MOP}}\label{sec:supportOracle}\label{sec:undeciable}
    In this section, we present a generating algorithm $\generator=(\generator_n)$ for which MOP is undecidable.
    The corresponding generating algorithm is static in the sense that $\generator_1=\generator_2=\dots$.
    For each $t$, $\generator_t$ is the following randomized Turing machine.
    
        \begin{mdframed}
            {\rm \textbf{Input:} None}
            \smallskip

            \noindent {\rm \textbf{Setup:} The internal random tape contains symbols from $\inbrace{0,1,2}$ }
            \smallskip
            
            \noindent \textbf{Description:}
            \begin{enumerate}[itemsep=1pt]
                \item Read bits $r=r_1r_2\dots$ from the random tape until the first 2 is found on the tape 
                \item Let $b=1$ if $r$ is of the form $\inangle{M}w$ where $\inangle{M}$ is a valid representation of a Turing machine and $w$ is any (possibly) empty string
                \item \textbf{If}~~$r=1$~~\textbf{then:}~~
                    Execute $M$ on input $w$ and \textbf{return} $\inangle{M}w1$ if $M$ halts
                \item \textbf{Else:}~~ \textbf{return} 0
            \end{enumerate}
        \end{mdframed}
        
        \bigskip
        
        \begin{proposition}
            $\mop{}(\cdot)$ is undecidable for the above Turing machine.
        \end{proposition}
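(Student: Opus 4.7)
The plan is to characterize $\supp(\generator)$ explicitly and then reduce the halting problem to $\mop(\generator)$, which is known to be undecidable. Reading the pseudocode, a single execution draws a prefix $r \in \{0,1\}^*$ from the random tape terminated by the first occurrence of the symbol~$2$; the machine then checks whether $r$ can be parsed as $\langle M\rangle w$ for some Turing machine description $\langle M\rangle$ followed by an input string~$w$ (assume a fixed prefix-free encoding so this parsing is unique when it succeeds). If the parsing fails, the machine outputs the string~$0$ and halts. If the parsing succeeds, the machine simulates $M$ on~$w$: if $M$ halts, the generator outputs $\langle M\rangle w 1$, and if $M$ does not halt, the generator itself does not halt, contributing nothing to the support. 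Hence
\[
\supp(\generator) \;=\; \{0\} \;\cup\; \bigl\{\langle M\rangle w 1 \;:\; M \text{ halts on input } w\bigr\}.
\]

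With the support characterized, the reduction is immediate. Given an arbitrary instance $(M,w)$ of the halting problem, form the string $x = \langle M\rangle w 1$. Because the random tape has full support over $\{0,1,2\}^*$, the prefix $r = \langle M\rangle w$ followed by the terminator $2$ occurs with strictly positive probability, so the generator really does reach the simulation branch on this input; by the characterization above, $x \in \supp(\generator)$ if and only if $M$ halts on $w$. A decider for $\mop(\generator)$, applied to the pair (description of $\generator$, string $x$), would therefore decide the halting problem. Since the halting problem is undecidable, so is $\mop(\generator)$.

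The main obstacle — and really the only subtle point — is fixing the encoding conventions so that the parsing step in the machine's description is unambiguous and so that the reduction string $\langle M\rangle w 1$ is uniquely associated with the intended pair $(M,w)$. Using a standard prefix-free encoding of Turing machines over $\{0,1\}$ (so that $\langle M\rangle$ can be recognized without looking ahead) resolves this cleanly: given $r \in \{0,1\}^*$, there is at most one decomposition $r = \langle M\rangle w$, and different pairs $(M,w)$ produce distinct strings $\langle M\rangle w 1$. With this convention in place, the two bullets above form a complete proof.
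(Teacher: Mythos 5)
Your proof is correct and takes essentially the same approach as the paper: a reduction from the halting problem in which deciding whether $M$ halts on $w$ reduces to checking whether $\langle M\rangle w1$ lies in the support of the generator. The explicit characterization of the support and the remark about a prefix-free encoding are sensible elaborations of details the paper leaves implicit, but the argument is the same.
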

        \begin{proof}
            The proof is a simple reduction to the halting problem, which is well-known to be undecidable.
            To see this, observe that to decide whether $M$ halts on input $w$, it suffices to check if $\inangle{M}w1$ is in the support of the above machine. %
        \end{proof}

\section{Results with Subset Oracles}
    In this section, we design algorithms that have access to a subset oracle that, given indices $i$ and $j$, answers whether ``$L_i\subseteq L_j$?''
    We give two algorithms (1) an algorithm that identifies in the limit without requiring tell-tale oracles and (2) a best-of-both-words algorithm that generates consistently and achieves breadth whenever possible.
    
    \subsection{Identification in the Limit without Tell-Tale Oracle via Subset Oracles}\label{sec:identification:subsetOracles}

    \citecustom{angluin1980inductive} showed that a collection of (recursive) languages $\cL$ is identifiable in the limit if and only if each $L_i\in \cL$ has a finite ``tell-tale'' set $T_i$ that, roughly speaking, enables one to eliminate $L_i$ if it is not the target.
    If one has access to an oracle that, given an index $i$, outputs the tell-tale $T_i$, then one can identify $\cL$ using an algorithm by \citecustom{angluin1980inductive}.
    Our next result shows that, if one has access to queries of the form ``$L_i\subseteq L_j$?'', then one can identify any identifiable language collection without access to a tell-tale oracle.

    \begin{theorem}\label{thm:identificationWithoutTellTale}
        Let $\algo{S}$ be an oracle that, given indices $i$ and $j$, outputs $\textsf{Yes}$ if $L_i\subseteq L_j$ and outputs \textsf{No} otherwise. 
        Fix any identifiable collection of languages $\cL=\inbrace{L_1,L_2,\dots}$.
        There is an algorithm $\algo{A}$ that given, 
            an enumeration of a target language $K\in \cL$ (for any $K$) and access to $\algo{S}$,
        identifies $K$ in the limit.
        
        Importantly, $\algo{A}$ does not need to be provided the tell-tale of $K$ or any other language in $\cL$.
    \end{theorem}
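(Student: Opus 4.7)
The plan is to have $\algo{A}$ track, at each time $n$, the smallest-indexed inclusion-minimal consistent language among the first $n$ listed languages. Concretely, given the current sample $S_n = \{s_1, \ldots, s_n\}$, I would use the standard membership oracle to compute the version space $V_n = \{i \in [n] : S_n \subseteq L_i\}$, and then use the subset oracle $\algo{S}$ to form
\[
    M_n \;=\; \bigl\{\, i \in V_n \;:\; \text{there is no } j \in V_n \text{ with } \algo{S}(j,i) = \textsf{Yes} \text{ and } \algo{S}(i,j) = \textsf{No}\,\bigr\}\,,
\]
i.e., the set of indices in $V_n$ whose languages are not strict supersets of any other consistent language. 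The algorithm outputs $\min M_n$ (and any fixed default when $M_n = \emptyset$, which only occurs before the first consistent language appears, and is irrelevant for an in-the-limit guarantee).

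To argue convergence, let $z = \min\{i : L_i = K\}$. I would show separately that eventually $L_z \in M_n$ and eventually no $j < z$ belongs to $M_n$. For the first part, identifiability together with \cref{thm:angluin-id-limit} supplies a finite tell-tale $T_z \subseteq L_z$ such that, for every $j \in \N$, $L_j \subsetneq L_z$ forces $T_z \not\subseteq L_j$. Since $T_z$ is finite and the stream enumerates $K$, there is a time $n_1$ after which $T_z \subseteq S_n$; for every subsequent $n$, any $j \in V_n$ with $L_j \subsetneq L_z$ would have some $x \in T_z \setminus L_j \subseteq S_n \setminus L_j$, contradicting $j \in V_n$. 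Hence $L_z$ is inclusion-minimal in $V_n$ for all $n \geq n_1$. For the second part, fix $j < z$: since $L_j \neq K$, either $L_j \not\supseteq K$ (and then some $x \in K \setminus L_j$ eventually enters $S_n$, removing $j$ from $V_n$), or $L_j \supsetneq K$ (and then for $n \geq z$ we have $z \in V_n$ with $L_z \subsetneq L_j$, witnessing non-minimality of $L_j$). Since there are only finitely many $j < z$, all are handled after some time $n_2$, and so for every $n \geq \max\{n_1, n_2, z\}$ the algorithm outputs $z$.

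The subtle point, and what makes the result interesting, is that the algorithm never constructs, enumerates, or queries any tell-tale set: tell-tales appear purely in the analysis, as existential certificates guaranteed by Angluin's characterization, while the subset oracle does the combinatorial work that finite tell-tales would otherwise accomplish implicitly once they surface in the sample. The main obstacle I expect is to invoke the tell-tale property with its full quantification over all $j \in \N$ (not merely over $j \leq n$), so that once $T_z$ lies in $S_n$, every future enlargement of $V_n$ is automatically purged of strict subsets of $L_z$ without any re-examination; this is already built into Angluin's condition and so requires no further argument. A related bookkeeping check is that at each step only finitely many membership and subset queries are needed to compute $V_n$ and $M_n$, so $\algo{A}$ is indeed a computable algorithm relative to the membership oracle and $\algo{S}$.
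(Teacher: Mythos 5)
Your proof is correct, but it takes a mildly different route from the paper's. The paper reuses Kleinberg--Mullainathan's notion of a \emph{critical} language ($L_i$ consistent and $L_i\subseteq L_j$ for every consistent $L_j$ with $j<i$ --- a chain condition along the index order), outputs the \emph{largest}-indexed critical language, and splits the analysis into (A) $K$ eventually becomes critical, which is imported from result (4.3) of Kleinberg and Mullainathan, and (B) every critical language of higher index eventually equals $K$, which is where the tell-tale of $K$ enters. You instead output the \emph{smallest}-indexed inclusion-minimal consistent language, where minimality is global rather than restricted to predecessors; the tell-tale of $L_z$ shows nothing consistent ever sits strictly below $L_z$ once $T_z\subseteq S_n$, and each $j<z$ is eventually either inconsistent or a strict superset of $L_z$ and hence non-minimal. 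Your variant is self-contained (no appeal to the criticality lemma of \citet{kleinberg2024language}) and, because you take the minimum of $M_n$, the output index stabilizes at $z$ automatically --- whereas the paper's algorithm may keep switching among higher-indexed copies of $K$ and needs the separate post-processing step of \cref{lem:post-processing-same-index} to meet the stabilization requirement of \cref{def:Identification}. The one point worth making explicit in a final write-up is the membership condition $z\le n$ (so that $z$ is actually eligible for $V_n$), which you do handle via the $\max\{n_1,n_2,z\}$ threshold.
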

    Interestingly, the algorithm in \cref{thm:identificationWithoutTellTale} is the same as an algorithm proposed by \citecustom{kleinberg2024language}: the algorithm defines a certain notion of critical languages and selects the last critical language, say $L_j$.
    Naturally, since we want to identify the language, instead of outputting an element of $L_j$ the algorithm will guess the index.
    {This algorithm identifies $K$ in the sense that after some finite time $t^*$, it outputs an index $z$ such that $L_z = K$. 
    The specific index $z$ outputted, however, may change infinitely often.
    This can also be avoided by using the post-processing routine in \Cref{lem:post-processing-same-index}.}
    \begin{proof}
        The algorithm to identify $K$ is simple:%
        \begin{mdframed}
            
            \noindent \textbf{For} $t\in \inbrace{1,2,\dots}$ \textbf{do:}
            \begin{enumerate}
                \item Observe element $x_t$ and let $S_t$ be the set of all elements observed so far
                \item Construct a \textit{version space} $V_t$ consisting of all languages in $L_{\leq t}$ consistent with $S_t$, \ie{}, 
                \[
                    V_t\colon \inbrace{L_j\colon 1\leq j\leq t\,,~~L_j\supseteq S_t}\,.
                \]
                \item[] \textit{$\#$ Define an language $L_i\in V_t$ to be \underline{critical} if $L_i$ is the smallest-index language in $V_t$ or $L_i$ is a subset of all languages preceding it in $V_t$, \ie{}, if $L_i\subseteq L_j$ for all $1\leq j < i$}

                \item {If $V_t = \emptyset$, \textbf{output} a random element of $\cX$ and \textbf{go} to the next iteration.}
                
                \item Construct the set $C_t\subseteq V_t$ of all critical languages
                \item \textbf{output} $i$ where $L_i$ is the largest-indexed language in the set of critical languages $C_t$
            \end{enumerate}
        \end{mdframed} 
        Let $i$ be the first index such that $K=L_i$.
        The above algorithm {identifies $K$} when $K$ is the last element of the set of critical languages $C_t$.
        This condition is implied by the following two conditions.
        \begin{enumerate}
            \item[(A)] $K$ is in the set of critical languages $C_t$.
            \item[(B)] All the languages $L_j$ with $j > i$ that are included in $C_t$ satisfy $L_j = K$.
        \end{enumerate}
        Result (4.3) of \citecustom{kleinberg2024language} shows that there is a finite time $t_a$ after which Condition (A) holds.
        We will show that there is also a finite time $t_b$ after which Condition (B) holds.
        This shows that, {for any $t\geq \max\inbrace{t_a,t_b}$, $\algo{A}$ outputs an index $z(t)$ such $L_{z(t)}=K$}.
        {As mentioned before one can convert this into an algorithm that outputs a fixed index (after some finite time) using the post-processing routine in \cref{lem:post-processing-same-index}.}

        \paragraph{Condition (B) holds after a finite time.}
            Since $\cL$ is identifiable, it must satisfy Angluin's tell-tale criteria (\Cref{def:angliun-criterion}) and, hence, $K=L_i$ has a finite tell-tale set $T_i$.
            (Recall that $T_i$ is not known to us; our proof will not need this.)
            Fix any $j>i$ and any time $t\geq t_a$ (after which $K$ is guaranteed to be a critical language).
            If $L_j$ is a critical language, then by the definition of critical languages, the fact that $K=L_i$ is critical, {and that $j>i$} (P1) $L_j\subseteq K$ and (P2) $L_j\in V_t$.
            Further, if $L_j\subsetneq K$, then $L_j$ cannot contain the tell-tale $T_i$ of $K$ (by the properties of tell-tales; \cref{def:angliun-criterion}).
            Therefore, if $S_t$ (the set of samples seen until step $t$) contains $T_i$, then $L_j$ cannot be in the version space as otherwise it would need to contain $T_i$.
            It follows that, if $S_t\supseteq T_i$ and $t\geq t_a$, then either P2 will be violated or P1 will imply that $L_j = K$.
            Finally, since $T_i$ is finite and, hence, there is a finite time $t_b'$ when all elements of $T_i$ have been observed and, the result follows by letting $t_b\coloneqq \max\inbrace{t_a,t_b'}$.
            
    \end{proof}

    \subsection{Best-Of-Both Worlds: Generating with Breadth When Possible}\label{sec:subsetOracle:bestOfBothworlds}
        Consider the variant of the algorithm in the previous section which instead of outputting the index of the last critical language outputs an unseen sample from the language.
        {This is precisely the algorithm of \citecustom{kleinberg2024language}.\footnote{Note that since we only output an element from the last critical language and not its index, we do not need to perform the post-processing used in the previous section, so this is \citecustom{kleinberg2024language}'s algorithm.}}
        \citecustom{kleinberg2024language} showed that this algorithm consistently generates in the limit.
        An immediate corollary of the identification result in the last section is that this algorithm also achieves breadth for any identifiable collection $\cL$.
        \begin{corollary}
            
            Let $\algo{S}$ be an oracle that, given indices $i$ and $j$, outputs $\textsf{Yes}$ if $L_i\subseteq L_j$ and outputs \textsf{No} otherwise. 
            Fix any collection of countably many languages $\cL=\inbrace{L_1, L_2, \dots}$.

            There is a generating algorithm $\generator$ that given,  an enumeration of a target language $K\in \cL$ (for any $K$) and access to $\algo{S}$,
            consistently generates from $K$ in the limit.
            Moreover, whenever $\cL$ is identifiable, this algorithm generates consistently with breadth in the limit.
            
            Importantly, $\generator$ does not need to be provided the tell-tale of $K$ or any other language in $\cL$.
        \end{corollary}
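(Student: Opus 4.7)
The plan is to take the generator $\generator$ to be the natural modification of the identification algorithm from \cref{thm:identificationWithoutTellTale}: at each step $t$, compute the set of critical languages $C_t$ and let $L_{j(t)}$ be the largest-indexed critical language; then, instead of outputting the index $j(t)$, output a random unseen element of $L_{j(t)}$ (or more precisely, define $\generator_t$ to be the distribution whose support equals $L_{j(t)}\setminus S_t$, realized by rejection sampling against the membership oracle for $L_{j(t)}$, as in \cref{prop:index-to-sampler}; this uses \cref{rem:kleinberg-infinite-support} so that we can produce a sampler with full support on the unseen part of $L_{j(t)}$). Since the subset oracle $\algo{S}$ suffices to compute $C_t$, this algorithm is computable from just $\algo{S}$ plus the membership oracle on $\cL$, and in particular needs no tell-tale oracle.

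For the first claim (consistency in the limit for any countable $\cL$), I would invoke \citecustom{kleinberg2024language} directly: their analysis of the subset-oracle algorithm shows that after some finite time $t_a$, the last critical language $L_{j(t)}$ is always contained in $K$ (see Result (4.3) of \citecustom{kleinberg2024language}, as re-used in the proof of \cref{thm:identificationWithoutTellTale}). Therefore, for all $t\geq t_a$, $\supp(\generator_t) = L_{j(t)}\setminus S_t \subseteq K\setminus S_t$, which is precisely consistent generation in the limit.

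For the second claim (breadth when $\cL$ is identifiable), I would reuse the analysis of \cref{thm:identificationWithoutTellTale} wholesale. That theorem proves that under Angluin's condition there are finite times $t_a,t_b$ such that for all $t\geq \max(t_a,t_b)$, every critical language $L_j\in C_t$ with $j\geq i$ (where $i$ is the first index with $L_i=K$) satisfies $L_j=K$ as a set. In particular, the \emph{last} critical language $L_{j(t)}$ satisfies $L_{j(t)}=K$ for all such $t$. Consequently $\supp(\generator_t) = L_{j(t)}\setminus S_t = K\setminus S_t$ for all large enough $t$, which is exactly the definition of breadth in the limit (\cref{def:breadth} adapted to the online setting, cf.\ \cref{mainthm:gen:limit}).

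The only subtle point I would be careful about is the sampling step: "output an unseen element of $L_{j(t)}$" must be realized as a sampler whose \emph{support} equals $L_{j(t)}\setminus S_t$, not just one that outputs a single deterministic element (which would trivially violate breadth). This is straightforward using rejection sampling against the membership oracle for $L_{j(t)}$ together with a distribution of full support on $\cX$, as in \cref{prop:index-to-sampler}, and it preserves the decidability of $\mop(\generator_t)$. Once this is in place, no new combinatorial work is required beyond \cref{thm:identificationWithoutTellTale} and \citecustom{kleinberg2024language}'s original analysis, so the corollary follows.
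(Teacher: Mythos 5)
Your proposal is correct and follows essentially the same route as the paper: take the subset-oracle algorithm of Kleinberg and Mullainathan, replace the index output by a sampler supported on the unseen part of the last critical language, get consistency from their Result (4.3), and get breadth from the analysis of \cref{thm:identificationWithoutTellTale} showing the last critical language eventually equals $K$. Your explicit treatment of the sampling step (ensuring the support is all of $L_{j(t)}\setminus S_t$ via rejection sampling) is a detail the paper leaves implicit, but it is the right way to make the breadth claim rigorous.
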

        Finally, we recall that for any non-identifiable collection $\cL$, generation with breadth is impossible whenever a $\mop{}(\generator)$ can be implemented.

\section{Further Comparison with Online Learning}
\label{sec:comparison}
In this section, we provide some comparisons between the Gold--Angluin (GA) model  \citep{gold1967language,angluin1979finding} and the online game of \citet{bousquet2021theory} (that extends the standard online learning model of \citecustom{littlestone1988learning}), which at first sight share a lot of similarities. However, there are important differences between the two models, which we believe are worth highlighting.

Let us first recall the setting of \citet{bousquet2021theory}, appropriately rephrased
to the context of our work. There is a domain $\cX,$
a collection of languages $\cL \subseteq \inbrace{0,1}^\cX$,
and two players, the learner and the adversary,
play a game over an infinite sequence of discrete
rounds. In every round $t \in \N,$ the adversary
presents an example $x_t \in \cX$ to the learner
and the learner guesses its label $\hat y_t.$
Subsequently, the adversary reveals the true label
$y_t$ to the learner. We say that the learner
makes a mistake if $y_t \neq \hat y_t.$
This can be thought of as the learner
trying to guess whether the example
belongs to the target language.
The adversary has to satisfy the following
constraint. For every round $t \in \N$
there must be some $L_t \in \cL$ such that
$\ind\inbrace{x_\tau \in L_t} = y_\tau, \forall \tau \leq t.$
The goal of the learner is to make finitely many
mispredictions and the goal of
the adversary is to force infinitely many
such mispredictions.

\begin{itemize}
    \item In the GA model, the goal of the learner is to identify the true language in a finite number of steps. In the online game of \citet{bousquet2021theory}, the goal of the learner is to make a finite number of mistakes in its predictions.
    Moreover, in the GA model, the learner observes only positive examples while in the standard online setting, the adversary can provide both positive and negative examples.

    \item The set of languages identifiable in the limit in Gold's model is characterized by Angluin's criterion (\Cref{def:angliun-criterion}). The collections that are online learnable with a finite number of mistakes is characterized by the absence of infinite Littlestone trees \citep{bousquet2021theory}. If there is a uniform bound on the number of mistakes, then this corresponds to the standard finiteness of the Littlestone dimension \cite{littlestone1988learning}.
    
    \item 
        In the GA model, the adversary fixes the true language in advance. In (realizable) online learning, the only thing that matters is \emph{consistency} of the hypothesis class on the given examples, \ie{}, for every sequence of examples, along
        with their labels, there should exist some hypothesis in the hypothesis class that perfectly labels the given sequence (which can change as the online game progresses, see also Section 3 of \citet{bousquet2021theory}). 
        This subtle difference leads to starkly different learning landscapes.
        
    \item 
        In the GA model, the adversary must include all elements of $K$ in the enumeration (the domain is countable). In Littlestone's online setting, there is no such restriction (the feature space can even be uncountable).
         \item Another crucial difference is that the algorithm does not receive feedback about its guess in the GA model. This is in contrast to the standard online setting where, at each round, the learner gets feedback about its prediction. However, it is important to stress that the incentives of the adversaries in Gold's model and in the model of \citet{bousquet2021theory} are different. In the GA model, the goal is not to maximize the number of mistakes that the learner does, but to essentially not allow the learner to identify the true target language. 
   
\end{itemize}
To further show separations between the online setting of \citecustom{gold1967language} and the online game of \citet{bousquet2021theory}, we will need two important definitions coming from the work of \citet{bousquet2021theory}.

\begin{definition}
[Littlestone Tree \citep{bousquet2021theory}]
\label{defn:litt}
A \emph{Littlestone tree} for $\cL$ is a complete binary tree of depth $d\le\infty$ whose internal nodes are labeled by $\cX$, and whose two edges connecting a node to its children are labeled $0$ and~$1$, such that every finite path emanating from the root is consistent with a language $L \in \cL$.

More precisely, a Littlestone tree is a collection 
$$\{x_{\mathbf{u}}:0\le k<d, 
\mathbf{u}\in\{0,1\}^k\}\subseteq\mathcal{X}$$ 
such that for every $\mathbf{y}\in\{0,1\}^d$ and $n<d$, 
there exists
$L \in \cL$ so that $\ind\inbrace{x_{\mathbf{y}_{\le k}}\in L} = y_{k+1}$ for  
$0\le k\le n$.
We say that $\cL$ has an \emph{infinite Littlestone tree} if there is 
a Littlestone tree for $\cL$ of depth $d=\infty$.
\end{definition}
For instance, thresholds over $\N$ do not have an infinite Littlestone tree (yet, they have Littlestone trees of arbitrary length). On the other side, thresholds over the reals have an infinite Littlestone tree. Given this definition, we can show a separation between the online setting of \citet{bousquet2021theory} and the GA model.

First, we note that a language collection is not online learnable in the online setting of \citet{bousquet2021theory} if and only if it has an infinite Littlestone tree. We will show that 
there exists a countable language collection
 $\cL$ over a countable domain that has an infinite Littlestone
tree but it is identifiable in the limit with positive examples. 

\begin{example}[Infinite Littlestone Tree but Identifiable with Positive Examples]\label{ex:infinite-littlestone-tree-identifiable}
    Consider a countable domain $\cX$ and fix an enumeration $x_{\emptyset}, x_{0}, x_1, x_{00}, x_{01}, x_{10}, x_{11}, \dots$ of its elements. We use this enumeration to create the nodes of a Littlestone tree, \ie{} the root consists of $x_{\emptyset},$
    its left, right child is $x_0, x_1,$ respectively etc.
    Then, for each level $d \in \N$ and any path $y \in \{0,1\}^d$ we create a \emph{finite} language $L_{y}$ with index $y$, whose elements
    are the elements of $\cX$ that appear on the path
    with label 1.
    We consider the language collection $\cL = \{L_{y} \colon y \in \{0,1\}^d, d \in \N\}.$ This means that for any finite level $d < \infty$, we add all the $2^d$ languages in the collection (where for any path $y,$ $L_y$ contains the elements $x$ that are labeled with 1 in the path).
Hence, $\cL$ contains all the finite prefixes of the paths of the infinite Littlestone tree.
The language collection $\cL$ is countably infinite since the collection of all finite paths of the binary tree admits an enumeration.
By construction, this class induces an infinite Littlestone tree and hence is not online learnable in the game of \citet{bousquet2021theory}. However, since
it is a countable collection of finite languages,
it is identifiable in the limit with positive examples in the GA model (see also \cref{sec:exact-exp-rates-id-finite-languages} and \ref{sec:proof-exp-rates-id-finite-lang}).
\end{example}

\section{Borel--Cantelli Lemmas}\label{sec:borel-cantelli}
In this section, we present two well-known results due to Borel and Cantelli which are useful for our derivations.

\begin{lemma}[First Borel--Cantelli Lemma]\label{lem:first-borel-cantelli}
    Let $\inbrace{\cE_n}_{n \in \N}$ be a sequence of events. If
    \[
        \sum_{n \in \N} \Pr[\cE_n] < \infty \,,
    \]
    then the probability that infinitely many of them occur is 0, that is
    \[
        \Pr\insquare{\limsup_{n \rightarrow \infty} \cE_n} = 0\,.
    \]
\end{lemma}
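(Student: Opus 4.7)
The plan is to unpack the definition of $\limsup$, apply countable subadditivity, and then use the fact that the tail of a convergent series vanishes. Recall that the event ``infinitely many of the $\cE_n$ occur'' is, by definition,
\[
\limsup_{n \to \infty} \cE_n \;=\; \bigcap_{N=1}^{\infty} \bigcup_{n \geq N} \cE_n.
\]
So the natural first step is to fix an arbitrary $N \in \N$ and note the inclusion $\limsup_{n \to \infty} \cE_n \subseteq \bigcup_{n \geq N} \cE_n$, since an outcome lying in the intersection over all $N$ must in particular lie in the $N$-th set.

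Next, I would bound the probability of $\bigcup_{n \geq N} \cE_n$ using countable subadditivity (the union bound for countably many events), giving
\[
\Pr\!\left[\,\bigcup_{n \geq N} \cE_n\right] \;\leq\; \sum_{n \geq N} \Pr[\cE_n].
\]
Combining the inclusion from the previous step with monotonicity of $\Pr[\cdot]$ yields $\Pr[\limsup_{n \to \infty} \cE_n] \leq \sum_{n \geq N} \Pr[\cE_n]$ for every $N \in \N$.

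Finally, I would invoke the hypothesis $\sum_{n \in \N} \Pr[\cE_n] < \infty$: the tail of any convergent series of nonnegative terms tends to zero, so $\lim_{N \to \infty} \sum_{n \geq N} \Pr[\cE_n] = 0$. Taking $N \to \infty$ in the bound from the previous paragraph then forces $\Pr[\limsup_{n \to \infty} \cE_n] = 0$, as desired.

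There is really no obstacle here — this is the textbook argument, and the only ingredients needed are countable subadditivity of probability measures, monotonicity, and the tail behavior of convergent series. The whole proof fits in a few lines and requires no structural assumption on the underlying probability space beyond $\sigma$-additivity.
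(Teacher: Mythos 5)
Your proof is correct and complete: the decomposition of $\limsup_{n\to\infty}\cE_n$ as $\bigcap_N \bigcup_{n\geq N}\cE_n$, followed by monotonicity, countable subadditivity, and the vanishing tail of a convergent series, is exactly the standard argument. The paper states this lemma as a well-known result without proof, so there is nothing to compare against; your write-up would serve as a perfectly adequate proof if one were included.
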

The previous result has a \textit{partial} converse, which we state below.
\begin{lemma}[Second Borel--Cantelli Lemma]\label{lem:second-borel-cantelli}
     Let $\inbrace{\cE_n}_{n \in \N}$ be a sequence of independent events. If
    \[
        \sum_{n \in \N} \Pr[\cE_n] = \infty \,,
    \]
    then the probability that infinitely many of them occur is 1, that is
    \[
        \Pr\insquare{\limsup_{n \rightarrow \infty} \cE_n} = 1\,.
    \]
\end{lemma}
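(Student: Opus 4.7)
The plan is to prove the contrapositive-flavored statement that the complement of $\limsup_{n \to \infty} \cE_n$ has probability zero. I would begin by rewriting the event of interest via its standard set-theoretic identity
\[
    \limsup_{n\to \infty} \cE_n = \bigcap_{N=1}^\infty \bigcup_{n=N}^\infty \cE_n\,,
\]
so that its complement equals $\bigcup_{N=1}^\infty \bigcap_{n=N}^\infty \cE_n^c$. By a union bound, it therefore suffices to show that for every fixed $N \in \N$, $\Pr\insquare{\bigcap_{n=N}^\infty \cE_n^c} = 0$.

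The key step is to exploit independence. For any $M \geq N$, independence of $\inbrace{\cE_n}_{n \in \N}$ (which passes to the complements) gives
\[
    \Pr\insquare{\bigcap_{n=N}^M \cE_n^c} = \prod_{n=N}^M \inparen{1 - \Pr[\cE_n]}\,.
\]
Then I would apply the elementary inequality $1 - x \leq e^{-x}$, valid for all $x \in \R$, to bound this product by $\exp\inparen{-\sum_{n=N}^M \Pr[\cE_n]}$. Since $\sum_{n \in \N} \Pr[\cE_n] = \infty$ by assumption, the partial sums $\sum_{n=N}^M \Pr[\cE_n]$ tend to infinity as $M\to \infty$, and hence the upper bound tends to $0$.

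Finally, I would invoke continuity of measure from above: the events $\bigcap_{n=N}^M \cE_n^c$ are decreasing in $M$ and their intersection over $M$ is $\bigcap_{n=N}^\infty \cE_n^c$, so
\[
    \Pr\insquare{\bigcap_{n=N}^\infty \cE_n^c} = \lim_{M \to \infty} \Pr\insquare{\bigcap_{n=N}^M \cE_n^c} = 0\,.
\]
Taking a union bound over $N \in \N$ yields $\Pr\insquare{\inparen{\limsup_{n \to \infty} \cE_n}^c} = 0$, which is the desired conclusion. There is no substantive obstacle here; the only step that requires care is ensuring that independence is used before taking the limit in $M$ (since independence of countably many events does not directly give a product formula for an infinite intersection in $\sigma$-additive form without this limiting argument), and that the inequality $1-x \leq e^{-x}$ is applied to $x = \Pr[\cE_n] \in [0,1]$ where it is valid.
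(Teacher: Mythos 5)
The paper states this lemma without proof, presenting it (together with the first Borel--Cantelli lemma) as a classical, well-known fact in \cref{sec:borel-cantelli}; so there is no paper proof to compare against. Your argument is the standard textbook proof and is correct in every step: the De Morgan rewriting of $\bigl(\limsup_n \cE_n\bigr)^c$ as $\bigcup_N \bigcap_{n\geq N}\cE_n^c$, the reduction via a countable union bound to showing each tail intersection is null, the use of independence (which passes to complements) to obtain the finite product formula, the bound $1-x\leq e^{-x}$ together with the divergence of $\sum_n \Pr[\cE_n]$, and finally continuity of measure from above to pass from the finite intersection to the infinite one. Your closing remark about the order of operations --- applying independence at each finite stage $M$ and only then letting $M\to\infty$ --- is exactly the point where a careless argument could go wrong, and you handle it properly.
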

Notice that, unlike the first Borel--Cantelli lemma, the second
one requires that the events are \textit{independent}.

\end{document}